\documentclass[a4paper]{article}

\usepackage[english]{babel}

\usepackage[utf8]{inputenc}
\setlength{\parindent}{2em}
\setlength{\parskip}{0.5em}

\usepackage[utf8]{inputenc}
\usepackage{amsmath}
\usepackage{graphicx}
\usepackage{amssymb}
\usepackage{amsthm}
\usepackage{tikz-cd}
\usepackage{mathrsfs}
\usepackage[colorinlistoftodos]{todonotes}
\usepackage{enumitem}
\usepackage{yfonts}
\usepackage{ dsfont }
\usepackage{MnSymbol}
\usepackage{slashed}
\usepackage{bbm}

\title{On the kernel learning problem}

\author{Yang Li\footnote{Cambridge University, Department of Pure Mathematics and Mathematical Statistics},~~~Feng Ruan\footnote{Northwestern University, Department of Statistics and Data Science}}

\date{\today}
\newtheorem{thm}{Theorem}[section]
\newtheorem{lem}[thm]{Lemma}

\theoremstyle{definition}
\newtheorem{eg}[thm]{Example}

\newtheorem{cor}[thm]{Corollary}

\newtheorem{rmk}{Remark}
\newtheorem{prop}[thm]{Proposition}
\newtheorem{Def}[thm]{Definition}
\newtheorem{Question}{Question}
\newtheorem*{Notation}{Notation}

\newtheorem*{Acknowledgement}{Acknowledgement}

\newcommand{\ie}{\emph{i.e.} }
\newcommand{\cf}{\emph{cf.} }

\newcommand{\R}{\mathbb{R}}
\newcommand{\C}{\mathbb{C}}

\newcommand{\norm}[1]{\left\lVert#1\right\rVert}

\newcommand{\E}{\mathbb{E}}

\newcommand{\1}{\mathbbm{1}}

\DeclareMathOperator{\Hom}{Hom}
\DeclareMathOperator{\End}{End}

\begin{document}
	\maketitle

	\abstract{The classical kernel ridge regression problem aims to find the best fit for the output $Y$ as a function of the input data $X\in \R^d$, with a fixed choice of regularization term imposed by a given choice of a reproducing kernel Hilbert space, such as a Sobolev space. Here we consider a generalization of the kernel ridge regression problem, by introducing an extra matrix parameter $U$, which aims to detect the scale parameters and the feature variables in the data, and thereby improve the efficiency of kernel ridge regression. This naturally leads to a nonlinear variational problem to optimize the choice of $U$. We study various foundational mathematical aspects of this variational problem, and in particular how this behaves in the presence of multiscale structures in the data.}

	\section{Introduction}

	\subsection{Kernel learning problem}

	In supervised machine learning, the classical version of \emph{kernel ridge regression} \cite{Smale}\cite[Chapter 7]{Bach}\cite{Wahba90} is the following problem.
	Let $X$ be a random variable valued in a Euclidean space $V\simeq \R^d$, and let $Y$ be a random variable with $\E[|Y|^2]<+\infty$. One aims to find the minimizer $f$ of the loss function
	\[
	\frac{1}{2} \E[|Y-f(X)|^2 ]+ \frac{\lambda}{2} \norm{f}_{\mathcal{H}}^2,
	\]
	where $0<\lambda<1$ is a fixed small parameter, and $\mathcal{H}$ is a fixed a priori choice of \emph{reproducing kernel Hilbert space} (RKHS) \cite{Aronszajn}. The idea is that we wish to find the optimal function $f(X)$ which fits $Y$, among a given nice class of functions. The expectation term measures the accuracy of approximation, while the regularization term $\frac{\lambda}{2} \norm{f}_{\mathcal{H}}^2$ imposes a certain amount of smoothness, which helps to achieve good generalization behaviour for new test data.

	The choice of the RKHS $\mathcal{H}$ reflects an a priori guess on the function class of the prediction function $f$, and the efficient representation of data depends crucially on selecting the appropriate RKHS kernel function \cite{ScholkopfSm18}.  	The classcial kernel ridge regression, however, assumes that the RKHS is predefined by the user in advance, which can be problematic if we do not know a priori which kernel function is best suited to the problem at hand. This  limitation has led to the development of \emph{kernel learning}, where one optimizes over a family of RKHSs $\mathcal{H}_\theta$ depending on some extra parameters $\theta$, and seeks to learn the RKHS $\mathcal{H}_\theta$ that best adapts to the structure of the data, and thereby  improving both generalization 
	and predictive performance \cite{LanckrietCrBaGhJo04}.

	This paper focuses on a particular case of parameterized family. 
	Here one introduces a new parameter $U\in \End(V)$, and minimizes the following loss function with respect to $f\in \mathcal{H}$,
	\[
	I(f,U,\lambda)=   \frac{1}{2} \E[ |Y-f(UX)|^2 ]+ \frac{\lambda}{2} \norm{f}_{\mathcal{H}}^2. 
	\]
	The minimal value is 
	\[
	J(U,\lambda)= \min_{f\in \mathcal{H}} I(f,U,\lambda).
	\]
	One then wishes to understand how $J(U,\lambda)$ depends on the parameter $U$, and in particular how to optimize the choice of $U$. It is worth noting that while kernel ridge regression is a linear problem, the dependence of $J(U,\lambda)$ on $U$ is highly nonlinear.  Such compositional models, expressed as $f(Ux)$, share some structural similarities with two-layer neural networks, as both models apply a linear transformation $U$  in the bottom layer followed by a nonlinear mapping, enabling sequential feature extraction from input data. Characterizing this dependence helps to clarify how and why optimizing the transformation $U$ can enhance learning data representation in nonlinear compositional models---a topic of increasing interest in machine learning communities, e.g., see the survey~\cite{BengioCoVi13} and recent research efforts, e.g.,~\cite{YangHu21, ZhenmeiWeLi22, BiettiBrPiVi23, Radhakrishnan24, CLLR}.


	When the RKHS $\mathcal{H}$ is rotationally invariant, then $J(U,\lambda)$ depends only on the inner product $\Sigma=U^TU$, and the kernel ridge regression with parameter admits an equivalent formulation. One minimizes the loss function
	\[
	\mathcal{I}(F, \Sigma,\lambda)=  \frac{1}{2} \E [|Y-F(X)|^2] + \frac{\lambda}{2} \norm{F}_{\mathcal{H}_\Sigma}^2,
	\]
	where \emph{the choice of the Hilbert norm varies with $\Sigma$}. Then
	\[
	J(U,\lambda)= \mathcal{J}(\Sigma;\lambda):=   \min_{F\in \mathcal{H}_\Sigma} \mathcal{I}(F,\Sigma,\lambda).
	\]
	We denote $Sym^2_+$ as the space of inner products on $V$, which is an open subset of the space of semi-definite forms  $Sym^2_{\geq 0}$, and we refer to the local minimizers $\Sigma$ of $\mathcal{J}$ as \emph{vacua} in analogy with physics, where the vacua sometimes refer to local energy minimizing configurations. These vacua signify the optimal choices of parameters $\Sigma$ within the parameterized family of RKHSs, and one of our principal objectives is to understand the landscape of vacua.

	\begin{rmk}
		We have written an extensive comparison between the kernel learning problem and the 2-layer neural networks in the postscript of the companion paper `Gradient flow in the kernel learning problem' \cite{RuanLi}. The readers interested in those aspects are strongly encouraged to consult that postscript for additional motivations, immediately after reading this introduction. 
	\end{rmk}

	\subsection{Motivations}\label{sect:Motivation}

	Our principal motivations of the kernel learning problem come from \emph{scaling parameters} and \emph{variable selection}. A few examples illustrate the points:

	\begin{eg}
		
		Let $X\sim N(0,\sigma^2)$ be a normal variable with variance $\sigma^2$, and $Y= \sin (\sigma^{-1} X)$.
		We first consider the classical kernel ridge regression problem, where $\mathcal{H}$ is a fixed Sobolev space $H_s$ with $s>\frac{1}{2}$. 
		
		If we want to achieve good approximation, then $f\approx \sin(x/\sigma)$  for $|x|\lesssim \sigma$. Now if $\sigma$ is very large, then this implies that the $L^2$-norm of $f$ is very large, of the order $\sigma^{1/2}$. For a fixed Sobolev space, $\norm{f}_{L^2}\lesssim \norm{f}_{\mathcal{H}}$, so  $\norm{f}_{\mathcal{H}} \gtrsim \sigma^{1/2}$ is also very large. For the minimizer $f$ of the kernel ridge regression problem, 
		\[
		\frac{1}{2} \E[|Y-f(X)|^2] + \frac{\lambda}{2} \norm{f}^2_\mathcal{H} \leq \frac{1}{2}\E[|Y|^2] \lesssim 1,
		\]
		so this can only happen when $\lambda\lesssim\sigma^{-1}$.

		If on the other hand $\sigma$ is very small, then $\sin(x/\sigma)$ has very high Fourier frequency $|\omega|\sim \sigma^{-1}$, so if the minimizer $f\approx \sin(x/\sigma)$ on $|x|\lesssim \sigma$, we will have
		\[
		\norm{f}_{\mathcal{H}}^2\sim \int (1+  |\omega|^2)^s |\hat{f}|^2 d\omega \gtrsim  \sigma^{-2s} \norm{f}_{L^2}^2 \sim \sigma^{1-2s}. 
		\]
		This would force $\lambda\lesssim \sigma^{2s-1}$. The upshot is that as long as $\sigma$ is too big or too small, then for fixed regularization parameter $\lambda$, it would be impossible to achieve good approximation. 
	\end{eg}

	On the other hand, if we \emph{vary the choice of the scale parameter} $U$, or  equivalently from the viewpoint of $\Sigma=U^T U$, we \emph{vary the choice of the Hilbert inner product}, then for $U\sim \sigma^{-1}$, we can simply take $f= \sin(x)$, which would give a good approximation, at no major cost to the regularization parameter $\lambda$. The upshot is that \emph{in order to break the approximation-regularization trade-off, we should aim to learn the appropriate scale parameter, or equivalently to learn the appropriate Hilbert norm}. The motivation of the kernel learning problem is the hope that the scale parameters are reflected in the vacua of $\mathcal{J}$.

	\begin{eg}\label{eg:twoscale}
		(Two scale problem) Let $V=\R$, and $X_0\in \{ 0,1,\ldots m \} $ be a discrete variable, which is \emph{not directly observable}, and we prescribe the probabilities $\mathbb{P}(X_0=i)>0$. We have $m$ distinct points $a_1,\ldots a_m\in \R$. Let $X\sim N(0,1)$ conditional on $X_0=0$, and $X\sim N(a_i, \sigma^2)$ conditional on $X_0=i$ for $i=1,\ldots m$, where $0<\sigma\ll 1$. 
		Let
		\[
		Y= \sin(X)   + \sum_1^m \sin( \frac{X-a_i}{\sigma})  e^{-|X-a_i|^2/\sigma^2}. 
		\]
		In this problem there are two inherent scales, of order $1,\sigma$ respectively.  
		
	\end{eg}

	\begin{eg}\label{eg:multiscale}
		(Multi-scaled problem)
		Let $V=\R$, and $X_0\in \{ 1,\ldots m \} $ be a discrete variable, which is \emph{not directly accessible to the observer}, and we prescribe the probabilities $\mathbb{P}(X_0=i)>0$. Let  $a_1,\ldots a_m\in \R$ be distinct points. Let  $X\sim N(a_i, \sigma_i^2)$ conditional on $X_0=i$ for $i=1,\ldots m$, where  $0<\sigma_1\ll \ldots \ll \sigma_m\lesssim 1$ have very different orders of magnitudes. 
		Let
		\[
		Y=  \sum_1^m \sin( \frac{X-a_i}{\sigma_i})  e^{-|X-a_i|^2/\sigma_i^2}. 
		\]
		The task is to learn $Y$ based on observations of $(X,Y)$, without observing $X_0$. In this problem we have $m$ very distinct inherent scales, of order $\sigma_1,\ldots \sigma_m$.
		
	\end{eg}

	\begin{eg}
		(Variable selection)
		Suppose $V=W\oplus W'$ is high dimensional, while the subspace $W$ is low dimensional. We write $X=(X_W,X_{W'})$, and assume that $X_{W'}$ is independent of both $X_W$ and $Y$. Then one would hope to fit $Y$ by $f(UX)$, where the image of $U\in \End(V)$ lands in $W$, so $U$ has low rank. The purpose of $U$ is to project the high dimensional input data $X$ into the essential degrees of freedoms $X_W$ for the prediction of $Y$.

	\end{eg}

	In the grand scheme, one wants to find all the vacua $\Sigma$, so that the corresponding RKHSs $\mathcal{H}_\Sigma$ are well adapted to the scale parameters and essential degrees of freedoms in the data. 
	A fundamental challenge is that $\mathcal{J}(\Sigma;\lambda)$ (resp. $J(U,\lambda)$ is in general \emph{non-convex} in $\Sigma$ (resp. $U$). Thus one is faced with two main problems:
	
	\begin{itemize}
		\item The \emph{static problem} is to understand how $\mathcal{J}(\Sigma;\lambda)$ depends on $\Sigma$, and in particular to understand the landscape of the vacua under a priori structural assumptions about the distribution of $X,Y$. As $\mathcal{J}$ is non-convex, in general the vacua are not unique. The main contribution of this paper is to make a foundational study on this static problem.

		\item  The \emph{dynamical problem} is to design a gradient flow on the space $Sym^2_+$ of $\Sigma$ which converges to the vacua of $\mathcal{J}$. This depends on the choice of a Riemannian metric on $Sym^2_+$, and we will identify a canonical choice in the companion paper \cite{RuanLi} with a number of  good properties, such as subsequential convergence to stationary points, and monotone reduction of Gaussian noise effect. On the other hand, there are still many challenges ahead to make this into an efficient numerical optimization scheme.
	\end{itemize}

	\subsection{Overview of this paper}

	For the sake of exposition we focus on the rotational invariant case, even though many results in this paper applies to more general RKHSs. This paper studies how $\mathcal{J}(\Sigma;\lambda)$ depends on $\Sigma$. In particular, we are interested in the following questions:

	\begin{Question}
		What is the limiting behaviour of $\mathcal{J}(\Sigma;\lambda)$ when $\Sigma$ becomes degenerate, or when $\Sigma$ tends to infinity in the space of inner products? More formally speaking, we want to find a natural \emph{partial compactification} such that $\mathcal{J}$ extends to a continuous functional.   
	\end{Question}

	\begin{Question}
		How do the vacua of $\mathcal{J}(\Sigma;\lambda)$ reflect the structural properties of the distribution of $(X,Y)$, such as the scale parameters and the essential degrees of freedoms of how $Y$ depends on $X$? When should there be several vacua of $\mathcal{J}$?  
	\end{Question}

	We now summarize the contents of the paper. Further conceptual discussions are given at the ends of several sections.
	
	\begin{enumerate}
		\item Section \ref{sect:kernelridgeregression} starts with a quick introduction of the kernel ridge regression problem where the RKHS is defined by a translation invariant kernel. Then we characterize the minimizer in terms of an \emph{integral equation} derived from the Euler-Lagrange equation (Proposition  \ref{prop:integraleqn1}), and then prove a quantitative estimate on the \emph{modulus of continuity} of $\mathcal{J}(\Sigma;\lambda)$ and the minimizer function  (Proposition  \ref{lem:continuityJ}, Lemma \ref{lem:Jminimizercontinuity1}) using the coercivity estimate of the integral equation; this implies in particular the \emph{continuous extension} of the functional $\mathcal{J}$ to the space of semi-definite inner products $Sym^2_{\geq 0}$. We recall the representer formula for the minimizer of the kernel ridge regression problem when $X$ takes only finitely many values (Proposition  \ref{prop:representer}). Using the \emph{law of large numbers}, the general distribution case can be approximated with arbitrarily high probability by the case of empirical distributions (Proposition  \ref{prop:lawoflargenumbers}).

		\item Section \ref{Translationinvkernel} studies the limiting behaviour of $\mathcal{J}(\Sigma;\lambda)$ when $\Sigma$ \emph{tends to infinity} in the space of inner products, while remaining bounded on a (possibly empty) proper subspace $W\subset V$. The main theorem is Theorem  \ref{thm:asymptoticvalue}.  Perhaps surprisingly, we discover that the answer is sensitive to whether the marginal distribution of $X$ on the quotient space $V/W$ contains any \emph{discrete} part. If there is no discrete part, for instance if $X$ is a continuous variable, then $\lim \mathcal{J}(\Sigma;\lambda)= \frac{1}{2} \E[|Y|^2]$, which is the global supremum of $\mathcal{J}$. If there is a nontrivial discrete part, then we find that the kernel ridge regression problem decouples to a number of dimensionally reduced problems in the asymptotic limit, whose minimizers become orthogonal in the RKHS asymptotically. The proof technique relies on relating the kernel ridge regression problem to an \emph{optimal interpolation problem}, and uses a little Fourier analysis.

		\item 
		
		Section \ref{sect:rotationallyinvkernels} focuses on the \emph{rotational invariant} kernel case. The main result is the \emph{first variation formula} of $\mathcal{J}$ in Theorem \ref{thm:firstvariation}, which generalizes \cite[Lemma 3.6]{CLLR} by relaxing the condition on the kernel function. We provide two arguments under slightly different conditions, one using integral equation techniques, the other using finite measure approximation via the law of large numbers. We then discuss when the first variation formula can be continuously extended to the boundary of $Sym^2_{\geq 0}$ (Proposition  \ref{prop:continuousextensionJ}), and introduce some formal definitions on the partial compactification (Section \ref{sect:partialcompactification}).

		Notable examples of rotational invariant kernels come from \emph{completely monotone functions} according to Schoenberg's  theorem (Section
		\ref{sect:completelymontonekernel}), including Gaussian kernels, Laplace kernels and Sobolev kernels.

		\item 
		
		The \emph{vacua} are formally defined as the local minima of $\mathcal{J}$ on the partial compactification. There are two kinds of limiting configurations: vacua on the boundary of $Sym^2_{\geq 0}$ represented by degenerate inner products, and vacua at infinity.

		The boundary vacua are intimately related to the problem of variable selection (Section \ref{sect:boundaryvacuavariableselection}).   Applying the first variation formula to the boundary points $\Sigma$, we derive some criterion to test the local minimizing property (Section \ref{sect:boundaryofSym}), and identify two opposite effects for and against the local minimizing property (Lemma \ref{lem:boundarystability}, \ref{lem:linearsignal}).

		In Section \ref{sect:DiscreteI} we revisit the case where $X$ is a discrete variable taking a finite number of values, and extract the sub-leading effect when $\Sigma\to \infty$. A particular consequence is to exhibit examples for the vacua at infinity.

		\item

		Section \ref{sect:clusterinteractions} takes a more \emph{operator theoretic} approach to the Euler-Lagrange equation, with an eye towards analyzing \emph{multi-scaled} problems as in Examples \ref{eg:twoscale}, \ref{eg:multiscale}, where the probability distribution of $X$ contains several `clusters'. We prove various operator norm estimates (Proposition  \ref{prop:operatorT}) and their variants associated to clusters, to the effect that if two clusters are far separated spatially, or if their scale parameters are very different, then the operator products describing the interactions between the clusters have small norm. One can then compare the minimizer of the kernel ridge regression, versus the sum of the minimizers for the kernel ridge regression problems associated to the clusters. Theorems \ref{thm:noninteraction1}, \ref{thm:noninteraction3} and Cor. \ref{cor:noninteraction2} quantify their deviation, which is small when 
		the clusters are far separated.

		In the opposite direction, if some component of $X$ concentrates near its conditional expectation value with respect to the inner product $\Sigma$, then the minimizer $f_U$ is well approximated by the minimizer for the kernel ridge regression problem associated to a simpler probability distribution of $(X,Y)$, depending on \emph{fewer degrees of freedom} (Section \ref{sect:dimreductionmechanism}).

		\item 
		
		Section \ref{sect:Landscape2} formulates the notion of `\emph{scale detectors}', which we consider to be the most important vacua. The task is to relate scale detectors to the information of $(X,Y)$.

		In the simplest case where $X$ is a continuous variable whose marginal distributions satisfy a priori bounds on the probability density, and if the global minimum of $\mathcal{J}$ is strictly lower than the infimum of $\mathcal{J}$ on the boundary of $Sym^2_{\geq 0}$ with a quantitative gap, then one can show that the global minimizing vacuum $\Sigma$ is a scale detector , and all scale detectors in the interior of $Sym^2_+$ are uniformly equivalent to each other up to estimable constants (Theorem \ref{thm:scaledetectorglobalmin}).

		To exhibit multi-scaled phenomenon we need to allow the probability density of $X$ to concentrate. Using the machinery developed in this paper, we construct a class of one-dimensional examples with \emph{multiple vacua} (typical cases being Example \ref{eg:multiscale}), which turn out to reflect the \emph{scale parameters} in the distribution of $X$ (Theorem \ref{thm:multiplevacua}).

		\item The Appendix contains some discussions on some literature related to kernel learning.

	\end{enumerate}

	\begin{rmk}
		The exposition style for the bulk of this paper is primarily aimed at a general mathematician readership fluent in analysis and linear algebra, but does not assume any prior familiarity with machine learning related topics. Some results in this paper are reproduced from \cite{CLLR}, often under more general settings.
	\end{rmk}

	\section{Kernel ridge regression problem}\label{sect:kernelridgeregression}

	\subsection{Translation invariant RKHS}

	We now give a quick review of translation invariant reproducing kernel Hilbert spaces (RKHS); more general references on RKHS can be found in \cite{Aronszajn}\cite{Smale}.


	Let $\mathcal{H}$ be a Hilbert space of complex valued functions on a $d$-dimensional Euclidean space $V\simeq \mathbb{R}^d$, associated to a translation invariant kernel. The space $\mathcal{H}$ is defined in terms of the Fourier transform by
	\begin{equation}
		\mathcal{H}=\{  f:  \norm{f}_{\mathcal{H}}^2= \int_{V^*} \frac{ |\hat{f}|^2(\omega)}{ k_V(\omega) }d\omega<+\infty\} ,  
	\end{equation}
	where we require the function $k_V$ to be strictly positive, and satisfies the \emph{$L^1$-integrability condition}
	\begin{equation}\label{L1integrablekV}
		\int_{V^*} k_V(\omega) d\omega=1.
	\end{equation}
	At this generality, we do not need to assume that $k_V$ is smooth or rotational invariant, although such assumptions are frequently used in practice. The point of working in this generality is that one is forced to think through what are the essential structures.

	Clearly the Hilbert space is preserved under the translation $f\mapsto f(\cdot +a)$ for any $a\in V$.
	
	\begin{Notation}
		Our convention for the Fourier transform pair is
		\begin{equation}\label{Fouriertransform}
			\hat{f}(\omega)=\int_{\R^N} f(x) e^{-2\pi i \langle x, \omega\rangle} dx,\quad f(x)=\int_{(\R^N)^*} \hat{f}(\omega) e^{2\pi i \langle x, \omega\rangle} d\omega. 
		\end{equation}
		Notice the Fourier variables naturally live in the dual space.
		The reason for the $2\pi i$ choice is that it makes the least appearance of dimensional constants.

	\end{Notation}

	The integrability condition on $k_V$ ensures the following  Sobolev embedding properties on the function space $\mathcal{H}$. 
	
	\begin{lem}\label{lem:decayatinfinity}
		(Sobolev embedding) For any $f\in \mathcal{H}$, 
		we have $  \norm{f}_{L^\infty} \leq\norm{f}_{\mathcal{H}}$, and $f$ is continuous, and $|f|\to 0$ at infinity.
	\end{lem}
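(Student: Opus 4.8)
The plan is to reduce the entire statement to Fourier inversion plus the Riemann--Lebesgue lemma, the one genuine input being that the defining finiteness condition $\norm{f}_{\mathcal H}<+\infty$, together with the normalization \eqref{L1integrablekV}, forces $\hat f\in L^1(V^*)$. First I would write, using that $k_V$ is strictly positive, the factorization $|\hat f(\omega)| = \frac{|\hat f(\omega)|}{\sqrt{k_V(\omega)}}\cdot \sqrt{k_V(\omega)}$ and apply the Cauchy--Schwarz inequality:
\[
\int_{V^*} |\hat f(\omega)|\, d\omega \;\le\; \left( \int_{V^*} \frac{|\hat f(\omega)|^2}{k_V(\omega)}\, d\omega \right)^{1/2}\left( \int_{V^*} k_V(\omega)\, d\omega \right)^{1/2} \;=\; \norm{f}_{\mathcal H},
\]
the last equality being exactly \eqref{L1integrablekV}. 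Hence $\hat f\in L^1(V^*)$ with $\norm{\hat f}_{L^1}\le \norm{f}_{\mathcal H}$.

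Next, since $\hat f\in L^1$, the inversion formula \eqref{Fouriertransform} represents $f$ as the (inverse) Fourier transform of an $L^1$ function, $f(x)=\int_{V^*}\hat f(\omega)\, e^{2\pi i\langle x,\omega\rangle}\,d\omega$, with the integral absolutely convergent for every $x\in V$. Bounding the integrand pointwise by $|\hat f(\omega)|$ yields $\norm{f}_{L^\infty}\le \norm{\hat f}_{L^1}\le \norm{f}_{\mathcal H}$, which is the asserted inequality. Continuity of $f$ then follows from dominated convergence, the integrand being continuous in $x$ and dominated by the fixed $L^1$ function $|\hat f|$; and $|f(x)|\to 0$ as $|x|\to\infty$ is precisely the Riemann--Lebesgue lemma applied to $\hat f\in L^1(V^*)$. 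Strictly speaking, this argument selects the distinguished continuous representative within the almost-everywhere equivalence class, which is the representative tacitly used throughout.

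I do not expect any serious obstacle; the only point requiring a moment's care is the first step --- recognizing that the Hilbert-norm finiteness condition combined with $\int k_V=1$ is exactly what promotes $\hat f$ from being merely locally integrable (a priori, as a tempered distribution whose transform is an $L^1_{\mathrm{loc}}$ density) to being globally $L^1$. Once that is in hand, the boundedness, continuity, and decay at infinity are the standard Fourier-analytic consequences recalled above.
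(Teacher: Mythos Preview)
Your proposal is correct and follows essentially the same approach as the paper: both use Cauchy--Schwarz with the factorization $|\hat f| = \frac{|\hat f|}{\sqrt{k_V}}\sqrt{k_V}$ together with the normalization $\int k_V = 1$ to obtain $\norm{\hat f}_{L^1}\le \norm{f}_{\mathcal H}$, then invoke Fourier inversion, dominated convergence for continuity, and Riemann--Lebesgue for decay. The paper's proof is slightly terser but the logical structure is identical.
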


	\begin{proof}
		We note by Cauchy-Schwarz that
		\begin{equation}\label{Sobolevembeddingeqn1}
			\norm{f}_{L^\infty}^2 \leq \norm{\hat{f}}_{L^1}^2 \leq (\int_{V^*} \frac{  |\hat{f}|^2}{k_V} d\omega )(\int_{V^*} k_V d\omega )\leq \norm{f}_{\mathcal{H}}^2. 
		\end{equation}
		Since $\hat{f}$ is $L^1$, the continuity of $f$ follows from dominated convergence. By the Riemann-Lebesgue lemma $|f|\to 0$ at infinity.
	\end{proof}

	In particular, there is a continuous embedding of $\mathcal{H}$ into the space of continuous functions $C^0$, which means that $\mathcal{H}$ is a \emph{reproducing kernel Hilbert space} (RKHS).
	We define the kernel function
	\begin{equation}
		K(x)= \int_{V^*} e^{2\pi i \langle x, \omega\rangle} k_V(\omega) d\omega,
	\end{equation}
	and let $K(x, y)= K(x-y)$, so in particular $K(x,y)=\overline{K(y,x)}$. The continuous functions $K(x,\cdot)$ lie in $\mathcal{H}$, and have Fourier transform $k_V(\omega) e^{-2\pi i \langle \omega, x\rangle}$. By Riemann-Lebesgue, $K(x)\to 0$ as $|x|\to \infty$. These functions are known as reproducing kernels. 
	Each $f\in \mathcal{H}$ can be represented as 
	\begin{equation}\label{reproducingformula}
		f(x)= ( f, K(x,\cdot))_{\mathcal{H}}.
	\end{equation}

	\begin{lem}\label{lem:Rellich}
		The functions in the unit ball of $\mathcal{H}$ have uniform modulus of continuity.
	\end{lem}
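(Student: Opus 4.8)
The plan is to bound the increment $f(x)-f(y)$ directly on the Fourier side, using the fact established in the proof of Lemma~\ref{lem:decayatinfinity} that $\|\hat f\|_{L^1}\le \|f\|_{\mathcal H}$. For $f$ in the unit ball of $\mathcal H$ we have $\|\hat f\|_{L^1}\le 1$, and by the inversion formula
\[
f(x)-f(y)=\int_{V^*}\hat f(\omega)\bigl(e^{2\pi i\langle x,\omega\rangle}-e^{2\pi i\langle y,\omega\rangle}\bigr)\,d\omega ,
\]
so that $|f(x)-f(y)|\le \int_{V^*}|\hat f(\omega)|\,\min\bigl(2,\,2\pi|x-y|\,|\omega|\bigr)\,d\omega$, using the elementary bound $|e^{i\theta}-e^{i\theta'}|\le\min(2,|\theta-\theta'|)$. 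The goal is to show that the right-hand side tends to $0$ as $|x-y|\to0$, at a rate independent of $f$.

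The key step is to split the integral at a large radius $R$. On $\{|\omega|>R\}$ we discard the oscillation and use Cauchy--Schwarz against the weight $k_V$ exactly as in \eqref{Sobolevembeddingeqn1}:
\[
\int_{|\omega|>R}|\hat f(\omega)|\,d\omega
\le\Bigl(\int_{|\omega|>R}\frac{|\hat f|^2}{k_V}\,d\omega\Bigr)^{1/2}\Bigl(\int_{|\omega|>R}k_V\,d\omega\Bigr)^{1/2}
\le\Bigl(\int_{|\omega|>R}k_V\,d\omega\Bigr)^{1/2},
\]
and by the $L^1$-integrability condition \eqref{L1integrablekV} the tail $\int_{|\omega|>R}k_V\,d\omega$ can be made arbitrarily small by taking $R$ large, \emph{uniformly in $f$}. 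On $\{|\omega|\le R\}$ we instead use $\min(2,2\pi|x-y||\omega|)\le 2\pi R|x-y|$ and $\|\hat f\|_{L^1}\le1$ to get a bound of $2\pi R|x-y|$. Combining the two pieces: given $\varepsilon>0$, first fix $R=R(\varepsilon)$ so the first piece contributes at most $\varepsilon/2$, then choose $\delta=\varepsilon/(4\pi R)$ so the second piece contributes at most $\varepsilon/2$ whenever $|x-y|\le\delta$. This yields a modulus of continuity $\omega(\delta)\to0$ depending only on $k_V$, valid for every $f$ in the unit ball.

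I do not expect a genuine obstacle here; the only subtlety is recognizing that one should not try to integrate $|\omega|\,|\hat f(\omega)|$ (which need not be integrable without smoothness assumptions on $k_V$), but instead trade oscillation for decay via the high-frequency tail of $k_V$. This is precisely where the $L^1$-integrability hypothesis, rather than any smoothness of the kernel, does the work, and it matches the philosophy stated in the text of isolating the essential structural assumption.
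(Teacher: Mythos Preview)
Your argument is correct. You bound $|f(x)-f(y)|$ directly on the Fourier side by splitting at a frequency radius $R$, using the $L^1$ tail of $k_V$ for the high-frequency piece and the Lipschitz bound on the exponential for the low-frequency piece. (A cosmetic point: the high-frequency contribution to $|f(x)-f(y)|$ carries a factor of $2$ from $\min(2,\cdot)\le 2$ that you dropped in the displayed estimate, but this is absorbed into the $\varepsilon/2$ choice and does not affect the argument.)

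The paper takes a different and shorter route: it invokes the reproducing property $f(x)-f(y)=(f,K(x,\cdot)-K(y,\cdot))_{\mathcal H}$ and Cauchy--Schwarz in $\mathcal H$, reducing the question to the single computation
\[
\|K(x,\cdot)-K(y,\cdot)\|_{\mathcal H}^2 = 2-2\operatorname{Re}K(x-y)\to 0.
\]
This gives an explicit and in fact sharp modulus of continuity $\sqrt{2-2\operatorname{Re}K(x-y)}$, since equality in Cauchy--Schwarz is attained by the normalized difference of kernel functions. Your approach trades this RKHS-specific shortcut for a more hands-on Fourier argument that only uses $\|\hat f\|_{L^1}\le\|f\|_{\mathcal H}$; it is slightly longer and does not produce the sharp modulus, but it makes the role of the integrability hypothesis \eqref{L1integrablekV} more visibly the driving mechanism, as you note.
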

	
	\begin{proof}
		By the reproducing kernel property, 
		\[
		|f(x)-f(y)|= |( f, K(x,\cdot)- K(y,\cdot) )_{\mathcal{H}}|\leq \norm{f}_{\mathcal{H}} \norm{  K(x,\cdot)- K(y,\cdot)   }_{\mathcal{H}}    ,
		\]
		and as $|y-x|\to 0$,
		\[
		\norm{  K(x,\cdot)- K(y,\cdot)   }_{\mathcal{H}}^2= K(x,x)+K(y,y)-2\text{Re}(K(x,y))=2-2\text{Re} K(x-y)\to 0.
		\]
		Hence the functions $f$ are equicontinuous.
	\end{proof}

	\subsection{Minimization problem}\label{sect:minimizationproblem}

	Let $X$ be a random variable taking value in the Euclidean space $V\simeq \R^d$, and $Y$ be a complex valued random variable with $\E[|Y|^2]<+\infty$. Following \cite[section 6]{Smale},
	we consider the following minimization problem, which we will refer to as \emph{kernel ridge regression}. The \emph{loss function} is
	\begin{equation}
		I(f, U, \lambda)= \frac{1}{2} \E |Y-f(UX)|^2 + \frac{\lambda}{2} \norm{f}_{\mathcal{H}}^2,
	\end{equation}
	depending on the parameters $U\in \End(V)$ and $\lambda\in (0, +\infty)$, and we seek the minimizer for
	\[
	J(U, \lambda)=  \inf_{f\in \mathcal{H}} I(f,U,\lambda).
	\]
	Notice that due to the Sobolev embedding, every $f\in \mathcal{H}$ is a continuous function on $V$, so $I(f,U,\lambda)$ is always well defined. The existence of a unique minimizer follows from the Riesz representation theorem \cite[Proposition 7]{Smale}. 

	\begin{lem}\label{lem:trivialupperbound}
		(Trivial upper bound) We always have $J(U,\lambda)\leq \frac{1}{2} \E[|Y|^2]$. If $\E[Y|UX]$ is nonzero, then the strict inequality holds.
	\end{lem}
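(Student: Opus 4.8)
The plan is to obtain both claims by exhibiting a suitable test function $f$ in the loss and using the variational characterization $J(U,\lambda)=\inf_{f\in\mathcal{H}}I(f,U,\lambda)$. The trivial upper bound is immediate: take $f\equiv 0$, which lies in $\mathcal{H}$ with $\norm{f}_{\mathcal{H}}=0$, so that $I(0,U,\lambda)=\frac{1}{2}\E[|Y|^2]$, and therefore $J(U,\lambda)\le\frac{1}{2}\E[|Y|^2]$.

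For the strict inequality under the hypothesis that $\E[Y\mid UX]$ is nonzero, the idea is to perturb away from $f=0$ in a direction that reduces the expectation term faster than it increases the regularization term. The natural candidate is a small multiple $f=\varepsilon g$ of a fixed nonzero $g\in\mathcal{H}$, for which
\begin{equation*}
I(\varepsilon g,U,\lambda)=\frac{1}{2}\E[|Y|^2]-\varepsilon\,\mathrm{Re}\,\E[\overline{Y}\,g(UX)]+\frac{\varepsilon^2}{2}\E[|g(UX)|^2]+\frac{\lambda\varepsilon^2}{2}\norm{g}_{\mathcal{H}}^2.
\end{equation*}
The linear term dominates for small $\varepsilon$, so it suffices to produce a single $g\in\mathcal{H}$ with $\mathrm{Re}\,\E[\overline{Y}\,g(UX)]\neq 0$ (after possibly multiplying $g$ by a unit scalar, even $\E[\overline{Y}\,g(UX)]\neq 0$ suffices). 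By the tower property, $\E[\overline{Y}\,g(UX)]=\E[\,\overline{\E[Y\mid UX]}\,g(UX)\,]$, so writing $h$ for a Borel version of $x\mapsto\E[Y\mid UX=x]$ on the law of $UX$, we need $g\in\mathcal{H}$ with $\E[\overline{h(UX)}\,g(UX)]\neq 0$, i.e. $g$ not orthogonal to $\overline{h}$ in $L^2(\mu)$, where $\mu$ is the law of $UX$.

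The one point that needs care — and the main obstacle — is verifying that such a $g$ exists, i.e. that $\mathcal{H}$, restricted to $\mathrm{supp}(\mu)$, is not contained in the orthogonal complement of $\overline{h}$ in $L^2(\mu)$. This follows from the density of $\mathcal{H}$ in $L^2(\mu)$, or more concretely from the strict positivity of $k_V$: since $K(x,y)=\int_{V^*}e^{2\pi i\langle x-y,\omega\rangle}k_V(\omega)\,d\omega$ with $k_V>0$, the kernel $K$ is strictly positive definite, so finite linear combinations $\sum_j c_j K(\cdot,UX_j)$ — all lying in $\mathcal{H}$ — span a dense subspace of $L^2(\mu)$ whenever $\E[|Y|^2]<\infty$ forces $h\in L^2(\mu)$; hence $h$ (and likewise $\overline h$), being a nonzero element of $L^2(\mu)$ by hypothesis, cannot be orthogonal to all of $\mathcal{H}$. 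Concretely, one may even take $g(z)=\E[\,\overline{\,\overline{h(UX)}\,}\,K(z,UX)\,]=\E[h(UX)K(z,UX)]$, check $g\in\mathcal{H}$ using the reproducing property and $\E[|Y|^2]<\infty$, and compute $\E[\overline{h(UX)}g(UX)]=\E\otimes\E[\,\overline{h(UX)}\,h(UX')\,K(UX,UX')\,]=\norm{g}_{\mathcal{H}}^2>0$ since $g\neq 0$ (the last because $g=0$ would, by strict positive definiteness of $K$, force $h(UX)=0$ a.s., contradicting the hypothesis). Choosing $\varepsilon>0$ small enough that $\varepsilon\,\mathrm{Re}\,\E[\overline{Y}\,g(UX)]$ exceeds $\tfrac{\varepsilon^2}{2}(\E[|g(UX)|^2]+\lambda\norm{g}_{\mathcal{H}}^2)$ then gives $J(U,\lambda)\le I(\varepsilon g,U,\lambda)<\tfrac12\E[|Y|^2]$, completing the argument.
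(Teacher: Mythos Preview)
Your argument is correct and close in spirit to the paper's, though you take a more explicit route for the strict inequality. The paper simply observes that equality $J(U,\lambda)=\tfrac12\E[|Y|^2]$ forces $f=0$ to be the (unique) minimizer, and then invokes the Euler--Lagrange equation (stated in the next subsection) to get $\E[Y\bar g(UX)]=0$ for all $g\in\mathcal H$, hence $\E[Y\mid UX]=0$. You instead directly exhibit a competitor $\varepsilon g$ beating $f=0$. Both arguments rest on the same analytic fact --- that $\E[\overline{h(UX)}\,g(UX)]$ cannot vanish for every $g\in\mathcal H$ unless $h=\E[Y\mid UX]=0$ --- which the paper leaves implicit and you justify via strict positive definiteness of $K$. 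Your version is self-contained and does not forward-reference Euler--Lagrange; the paper's is shorter once that lemma is available.

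One small imprecision: with your choice $g(z)=\E[h(UX)K(z,UX)]$, the identity $\E[\overline{h(UX)}\,h(UX')\,K(UX,UX')]=\norm{g}_{\mathcal H}^2$ is only literally correct when $K$ is real-valued (equivalently $k_V$ even), since $K(\cdot,y)$ rather than $K(y,\cdot)$ is not the reproducing kernel at $y$. The fix is immediate: either take $g(z)=\E[h(UX)\,K(UX,z)]$, for which the reproducing property yields the identity exactly, or drop the identification with $\norm{g}_{\mathcal H}^2$ and simply note that
\[
\E[\overline{h(UX)}\,h(UX')\,K(UX,UX')]=\int_{V^*} k_V(\omega)\,\bigl|\E[h(UX)e^{-2\pi i\langle UX,\omega\rangle}]\bigr|^2\,d\omega\ge 0,
\]
with equality iff $h=0$ a.s.\ (since $k_V>0$), which is all your argument requires.
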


	\begin{proof}
		Plug in the test function $f=0$, then $I(0, U, \lambda)\geq J(U,\lambda)$. The equality is achieved only when $f=0$ is the minimizer, which by the Euler-Lagrange equation implies that $\E[Y|UX]=0$.
	\end{proof}

	\begin{lem}\label{lem:triviallowerbound}
		(Trivial lower bound) 
		\[
		I(f, U,\lambda) =\frac{1}{2}\E[\text{\rm Var}(Y |UX)] + \frac{1}{2}\E[ |\E[  Y|UX]- f(UX)|^2   ] + \frac{\lambda}{2} \norm{f}_{\mathcal{H}}^2.
		\]
		In particular
		$
		J(U,\lambda) \geq  \frac{1}{2}\E[\text{\rm Var}(Y |UX)]. 
		$
	\end{lem}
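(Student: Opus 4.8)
The statement to prove is Lemma \ref{lem:triviallowerbound}, which is a standard bias-variance decomposition of the loss function $I(f,U,\lambda)$.

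The plan is to condition on $UX$ and use the tower property of conditional expectation. First I would write $|Y - f(UX)|^2 = |Y - \E[Y|UX] + \E[Y|UX] - f(UX)|^2$ and expand the square. Since we are dealing with complex-valued $Y$, I should be careful to write $|a+b|^2 = |a|^2 + |b|^2 + 2\,\mathrm{Re}(a\bar b)$. The three resulting terms are: $|Y - \E[Y|UX]|^2$, whose conditional expectation given $UX$ is exactly $\mathrm{Var}(Y|UX)$ by definition; $|\E[Y|UX] - f(UX)|^2$, which is already $\sigma(UX)$-measurable; and the cross term $2\,\mathrm{Re}\big((Y - \E[Y|UX])\overline{(\E[Y|UX] - f(UX))}\big)$.

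The only real step is showing the cross term vanishes in expectation. Conditioning on $UX$, the factor $\overline{(\E[Y|UX] - f(UX))}$ is measurable with respect to $\sigma(UX)$ and may be pulled out, leaving $\E[Y - \E[Y|UX] \mid UX] = 0$. Hence $\E[\text{cross term}] = 0$ by the tower property. Taking total expectation of the expansion and adding back $\frac{\lambda}{2}\norm{f}_{\mathcal H}^2$ gives the claimed identity:
\[
I(f,U,\lambda) = \frac{1}{2}\E[\mathrm{Var}(Y|UX)] + \frac{1}{2}\E\big[|\E[Y|UX] - f(UX)|^2\big] + \frac{\lambda}{2}\norm{f}_{\mathcal H}^2.
\]
The lower bound $J(U,\lambda) \geq \frac{1}{2}\E[\mathrm{Var}(Y|UX)]$ is then immediate: the second and third terms on the right are manifestly nonnegative, so $I(f,U,\lambda) \geq \frac{1}{2}\E[\mathrm{Var}(Y|UX)]$ for every $f \in \mathcal H$, and taking the infimum over $f$ preserves the inequality.

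There is essentially no obstacle here — this is a routine computation. The only minor point requiring a word of care is integrability: all expectations are finite because $\E[|Y|^2] < +\infty$ by hypothesis and $f(UX)$ is bounded (indeed $\norm{f}_{L^\infty} \leq \norm{f}_{\mathcal H}$ by Lemma \ref{lem:decayatinfinity}), so $\E[|f(UX)|^2] < \infty$ and $\E[|\E[Y|UX]|^2] \leq \E[|Y|^2] < \infty$ by Jensen; thus every term in the expansion is absolutely integrable and the manipulations above are justified. One should perhaps also remark that the identity holds for arbitrary measurable $f$ with $\E[|f(UX)|^2]<\infty$, not just $f\in\mathcal H$; the RKHS structure plays no role in this particular lemma.
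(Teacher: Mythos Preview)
Your proof is correct and is exactly the standard bias--variance decomposition the paper has in mind; in fact the paper states this lemma without proof, treating it as routine. Your careful handling of the complex-valued case and the integrability check are appropriate additions.
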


	\subsection{Euler-Lagrange equation}
	
	\begin{lem}\label{EulerLagrange}
		(Euler-Lagrange) The minimizer $f_U$ for $I(f, U,\lambda)$ satisfies that
		\[
		\E[ (Y-f_U(UX) )\bar{g}(UX)   ]= \lambda ( f_U, g)_{\mathcal{H}},\quad \forall g\in \mathcal{H}.
		\]
	\end{lem}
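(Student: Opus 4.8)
The plan is to derive the Euler--Lagrange equation by the standard first-variation argument, exploiting that $I(f,U,\lambda)$ is a strictly convex quadratic functional of $f\in\mathcal{H}$ whose unique minimizer $f_U$ is characterized by the vanishing of its derivative in every direction $g\in\mathcal{H}$. Concretely, for a fixed $g\in\mathcal{H}$ and $t\in\R$ (or $t\in\C$, since the functions are complex-valued), consider the scalar function $\varphi(t)=I(f_U+tg,U,\lambda)$. Expanding the square,
\[
\varphi(t)=\tfrac12\E\big[|Y-f_U(UX)-tg(UX)|^2\big]+\tfrac{\lambda}{2}\norm{f_U+tg}_{\mathcal{H}}^2,
\]
which is a genuine quadratic polynomial in $t$ (and $\bar t$); the cross terms are finite because $f_U,g\in\mathcal{H}$ embed continuously into $L^\infty$ by Lemma \ref{lem:decayatinfinity}, so $g(UX)$ and $f_U(UX)$ are bounded random variables and $Y\in L^2$. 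Since $f_U$ is the minimizer, $t=0$ is a critical point of $\varphi$, so the coefficient of the linear term vanishes.

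The key step is to compute that linear coefficient. Differentiating (or reading off the coefficient of $t$ in the real expansion, then noting the direction $g$ is arbitrary so one may also replace $g$ by $ig$), one finds that the first-order term is
\[
-\,\mathrm{Re}\,\E\big[(Y-f_U(UX))\,\overline{g(UX)}\big]+\lambda\,\mathrm{Re}\,(f_U,g)_{\mathcal{H}}=0
\]
for every $g\in\mathcal{H}$. Replacing $g$ by $ig$ turns the real parts into imaginary parts, and combining the two gives the complex identity
\[
\E\big[(Y-f_U(UX))\,\overline{g(UX)}\big]=\lambda\,(f_U,g)_{\mathcal{H}},\qquad\forall g\in\mathcal{H},
\]
which is the assertion. (Alternatively, one can phrase this without splitting into real and imaginary parts: $\varphi$ is a convex function on the complex line $\{f_U+tg\}$, its minimum at $t=0$ forces the Wirtinger derivative $\partial_{\bar t}\varphi(0)=0$, which is exactly the stated equation.)

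I do not expect any serious obstacle here; this is the routine derivation of the normal equations for a quadratic minimization in a Hilbert space. The only points requiring a word of care are: (i) justifying that all expectations appearing are finite, which as noted follows from the Sobolev embedding $\mathcal{H}\hookrightarrow L^\infty$ together with $\E[|Y|^2]<\infty$ and Cauchy--Schwarz; and (ii) handling the complex scalars correctly, so that testing against both $g$ and $ig$ (equivalently, using Wirtinger calculus) yields the full complex-linear identity rather than only its real part. One could also note the converse: since $I(\cdot,U,\lambda)$ is strictly convex and coercive on $\mathcal{H}$, this stationarity condition conversely characterizes $f_U$ uniquely, consistent with the existence-uniqueness already recalled from \cite[Proposition 7]{Smale}.
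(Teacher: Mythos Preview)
Your argument is correct and is exactly the standard first-variation derivation one would expect. The paper in fact states this lemma without proof (treating it as routine), so your proposal supplies precisely the omitted justification, and there is nothing to compare.
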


	We plug in the kernel function $K(x, \cdot)$ as the test function $g$ in the Euler-Lagrange equation, to deduce the integral equation
	\begin{equation}\label{eqn:EulerLagrange1}
		f_U(x)= ( f_U, K(x,\cdot) )_{\mathcal{H}} = \frac{1}{\lambda} \E[ (Y-f_U(UX))\overline{K(x,UX) }  ].
	\end{equation}
	By plugging the test function $f_U$ into the Euler-Lagrange equation, we deduce
	\[
	\E[ Y\overline{f_U(UX) }  ]= \E[|f_U(UX)|^2]+\lambda ( f_U, f_U)_{\mathcal{H}}
	\]
	whence
	\begin{equation}\label{eqn:Jalternativeformula}
		J(U,\lambda)= \frac{1}{2} \E[|Y|^2]- \frac{1}{2}\E[ Y\overline{f_U(UX) }]= \frac{1}{2} \E[|Y|^2]- \frac{1}{2}\E[ \bar{Y}f_U(UX)   ].
	\end{equation}


	\begin{Notation}
		We denote the (semi)-norm $\norm{f}_\E^2= \E[|f(X)|^2]$ for functions $f$ defined on the support of $X$. 
	\end{Notation}

	\begin{prop}\label{prop:integraleqn1}
		The function $f_U\circ U$ is characterized as the unique finite norm solution to the integral equation
		\[
		F(x)= \frac{1}{\lambda}  \E[ (Y-F(X))\overline{K(U(x-X)) }  ]. 
		\]
	\end{prop}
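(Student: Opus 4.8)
The plan is to show that the two integral equations --- the one in \eqref{eqn:EulerLagrange1} satisfied by $f_U$, namely $f_U(x)= \frac{1}{\lambda} \E[ (Y-f_U(UX))\overline{K(x,UX)}]$, and the one in the statement, $F(x)= \frac{1}{\lambda}\E[(Y-F(X))\overline{K(U(x-X))}]$ --- are related by the substitution $F = f_U\circ U$. First I would verify that $F := f_U\circ U$ actually solves the displayed equation: substituting $x\mapsto Ux$ into \eqref{eqn:EulerLagrange1} and using $K(Ux,UX)=K(Ux-UX)=K(U(x-X))$ gives $f_U(Ux) = \frac{1}{\lambda}\E[(Y-f_U(UX))\overline{K(U(x-X))}]$, which is exactly $F(x) = \frac1\lambda \E[(Y-F(X))\overline{K(U(x-X))}]$ since $F(X) = f_U(UX)$. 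The notion of ``finite norm'' here should be interpreted as $\norm{F}_\E^2 = \E[|F(X)|^2] = \E[|f_U(UX)|^2] < \infty$, which holds because $f_U\in\mathcal H$ is bounded by Lemma \ref{lem:decayatinfinity}.

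The substantive part is \emph{uniqueness}: I must show that any finite-$\norm{\cdot}_\E$-norm solution $F$ of the displayed integral equation must equal $f_U\circ U$ on the support of $X$. The key observation is that the right-hand side of the equation, for \emph{any} bounded measurable $F$, automatically defines a function lying in $\mathcal H$: the map $x\mapsto \frac1\lambda\E[(Y-F(X))\overline{K(U(x-X))}] = \frac1\lambda\E[(Y-F(X))\overline{K(Ux, UX)}]$ is, up to the fixed transformation $U$, a superposition $\int c(y)\, K(\cdot, y)$ of reproducing kernels against the pushforward measure of $UX$, hence lies in $\mathcal H$ (one checks the Bochner integral converges in $\mathcal H$, since $\norm{K(x,\cdot)}_{\mathcal H}$ is bounded and $Y-F(X)\in L^1$). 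Therefore any solution $F$ of finite norm is of the form $F = G\circ U$ for some $G\in\mathcal H$ --- more precisely, $F$ agrees on $\mathrm{supp}(X)$ with $G\circ U$ where $G(x) := \frac1\lambda\E[(Y-F(X))\overline{K(x,UX)}]\in\mathcal H$. Then, pairing with an arbitrary $g\in\mathcal H$ and using the reproducing property, one recovers that $G$ satisfies the Euler--Lagrange identity of Lemma \ref{EulerLagrange}: $\E[(Y-G(UX))\bar g(UX)] = \lambda(G,g)_{\mathcal H}$ for all $g$. By the uniqueness of the minimizer of $I(\cdot, U,\lambda)$ (Riesz representation, as recalled after the definition of $J$), $G = f_U$, hence $F = f_U\circ U$ on $\mathrm{supp}(X)$.

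I expect the main obstacle to be the bookkeeping around the difference between ``a function on all of $V$'' and ``a function defined only on $\mathrm{supp}(X)$'' (the setting of the $\norm{\cdot}_\E$ seminorm): the integral equation in the proposition only constrains $F$ on $\mathrm{supp}(X)$, and one must be careful that the candidate $G\circ U$ produced above is the canonical extension, and that the finite-norm hypothesis is exactly what forces $F$ to coincide with a genuine $\mathcal H$-function composed with $U$ rather than being some pathological measurable function. A secondary technical point is justifying that $x\mapsto \E[(Y-F(X))\overline{K(Ux,UX)}]$ is a convergent $\mathcal H$-valued (Bochner) integral: this uses $\E[|Y-F(X)|]<\infty$ (from Cauchy--Schwarz, since $Y\in L^2$ and $F(X)\in L^2$) together with the uniform bound $\norm{K(z,\cdot)}_{\mathcal H}^2 = K(0) = \int k_V = 1$ from the normalization \eqref{L1integrablekV}. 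Once these points are handled, the equivalence with the Euler--Lagrange equation of Lemma \ref{EulerLagrange} and the Riesz uniqueness finish the argument cleanly.
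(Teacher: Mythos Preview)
Your argument is correct, but it takes a genuinely different route from the paper's. You lift any finite-norm solution $F$ to an element $G\in\mathcal H$ by observing that the right-hand side is a Bochner integral of reproducing kernels, and then invoke the uniqueness of the Euler--Lagrange minimizer (Riesz representation) to conclude $G=f_U$. The paper instead treats the integral equation directly as a linear problem in the Hilbert space $(L^2(\mu_X),\norm{\cdot}_\E)$: it proves the positivity
\[
\E\bigl[f(X)\,\overline{K(U(X'-X))}\,\overline{f(X')}\bigr]
=\int_{V^*}\bigl|\E[f(X)e^{-2\pi i\langle UX,\omega\rangle}]\bigr|^2 k_V(\omega)\,d\omega\ge 0,
\]
so that $f\mapsto f+\frac1\lambda\E[f(X)\overline{K(U(\cdot-X))}]$ is coercive with respect to $\norm{\cdot}_\E$, and then applies Riesz representation in this space to obtain both existence and uniqueness, together with the quantitative bound $\norm{f}_\E^2\le\E[|h(X',X)|^2]$. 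The practical difference is that the paper's route yields the coercivity estimate \eqref{eqn:coercivity} as a byproduct, and this estimate is the workhorse for the quantitative continuity results that follow (Lemma~\ref{lem:Jminimizercontinuity1}, Proposition~\ref{lem:continuityJ}, Proposition~\ref{prop:differentiabilityminimizer}); your cleaner reduction to the minimizer uniqueness does not produce it. One minor point: the integral equation is posed for all $x\in V$, so once $F|_{\mathrm{supp}(X)}$ is pinned down the equation itself extends the identity $F=f_U\circ U$ to all of $V$; your bookkeeping about $\mathrm{supp}(X)$ versus $V$ is therefore not an obstacle.
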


	\begin{proof}
		For later application, we consider the more general integral equation
		\[
		f(x)+ \frac{1}{\lambda}  \E[f(X)\overline{K(U(x-X)) }  ]= \E[h(x,X)],
		\]
		where $\E[ |h(X',X)|^2 ]<+\infty $, and $X'$ is an independent copy of $X$. We claim that the equation has a unique finite norm solution, and it satisfies the \emph{coercivity estimate}
		\begin{equation}\label{eqn:coercivity}
			\norm{f}_\E^2 \leq \E[|h(X',X)|^2]. 
		\end{equation}
		We notice that for any $f$ with $\norm{f}_\E<+\infty$,
		\begin{equation}
			\begin{split}
				& \E[ f(X)\overline{K(U(x-X)) } \overline{f(X')} ]
				\\
				= &  \int_{V^*} \E[ f(X) e^{-2\pi i\langle U(X-X'),\omega\rangle}
				\overline{f(X')} ] k_V(\omega)d\omega
				\\
				=&  \int_{V^*} |\E[ f(X) e^{-2\pi i\langle UX,\omega\rangle}]|^2
				k_V(\omega)d\omega \geq 0,
			\end{split}
		\end{equation}
		and the linear functional
		$
		g\mapsto \E[ h(X',X) g(X') ]
		$
		has norm bounded by $\E[ |h(X',X)|^2]^{1/2}$ with respect to the Hilbert inner product
		\[
		\norm{f}_\E^2\leq \norm{f}_\E^2+ \E[ f(X)\overline{K(U(x-X)) } \overline{f(X')} ],
		\]
		so the claim follows from Riesz representation theorem.

		In our setting, the integral equation on $f_U\circ U$ follows from Euler-Lagrange (\ref{eqn:EulerLagrange1}). The coercivity estimate applied to the homogeneous equation implies uniqueness.
	\end{proof}

	\subsection{Quantitative continuity in parameter}

	We now prove some quantitative continuity in $U$ for the minimal value $J(U,\lambda)$ and minimizer $f_U$ of $I(f,U,\lambda)$.

	\begin{lem}\label{lem:Jminimizercontinuity1}
		For any $U,\tilde{U}$, there is a uniform bound
		\begin{equation}\label{eqn:Jminimizercontinuity1}
			\E[ | f_U(UX)- f_{\tilde{U}}(\tilde{U}X)|^2   ] \leq  \frac{1}{\lambda^2} \E[|Y|^2 ]  \E[ |K(U(X'-X))- K( \tilde{U}(X'-X)) |^2 ] ,    
		\end{equation}
		where $X'$ denotes an independent copy of $X$. 
	\end{lem}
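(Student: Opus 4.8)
The plan is to turn the difference $G := f_U\circ U - f_{\tilde U}\circ\tilde U$ into the solution of a single integral equation of the type treated in Proposition \ref{prop:integraleqn1}, and then to run that proposition's coercivity argument, being slightly more careful at the Cauchy--Schwarz step.

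First I would subtract the two integral equations. With $F = f_U\circ U$ and $\tilde F = f_{\tilde U}\circ\tilde U$, Proposition \ref{prop:integraleqn1} gives $F(x) = \frac1\lambda\E[(Y-F(X))\overline{K(U(x-X))}]$ and $\tilde F(x) = \frac1\lambda\E[(Y-\tilde F(X))\overline{K(\tilde U(x-X))}]$. Subtracting, and adding and subtracting $\frac1\lambda\E[(Y-F(X))\overline{K(\tilde U(x-X))}]$, one obtains
\[
G(x) + \frac1\lambda\E[G(X)\overline{K(\tilde U(x-X))}] = \frac1\lambda\E[(Y-F(X))(\overline{K(U(x-X))} - \overline{K(\tilde U(x-X))})].
\]
This is precisely the auxiliary integral equation inside the proof of Proposition \ref{prop:integraleqn1} (with $\tilde U$ in place of $U$, and a right-hand side of the form $\E[h(x,X,Y)]$), and $G$ has finite $\norm{\cdot}_\E$ seminorm since $F,\tilde F\in L^\infty$ by the Sobolev embedding (Lemma \ref{lem:decayatinfinity}).

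Next I would reuse the Riesz-representation setup from that proof: on functions on the support of $X$ one has the positive Hermitian form $Q(g,g) = \norm{g}_\E^2 + \frac1\lambda\E[g(X)\overline{K(\tilde U(X'-X))}\overline{g(X')}] \ge \norm{g}_\E^2$ (positivity of the second term being the Fourier computation in Proposition \ref{prop:integraleqn1}), where $X'$ is an independent copy of $X$, and $G$ is characterized by $Q(G,g) = L(g)$ for all such $g$, with $L(g) = \frac1\lambda\E[(Y-F(X))\overline{K(U(X'-X)) - K(\tilde U(X'-X))}\,\overline{g(X')}]$. The one departure from the routine estimate is how one bounds $L$: rather than pairing $Y-F(X)$ with the kernel difference, one pairs it with the test value $g(X')$. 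Since $X'\perp(X,Y)$, the random variables $|Y-F(X)|^2$ and $|g(X')|^2$ are independent, so Cauchy--Schwarz gives
\[
|L(g)| \le \frac1\lambda\big(\E[|K(U(X'-X)) - K(\tilde U(X'-X))|^2]\big)^{1/2}\big(\E[|Y-F(X)|^2]\big)^{1/2}\norm{g}_\E.
\]
Taking $g=G$ and using $\norm{G}_\E^2 \le Q(G,G) = L(G)$ yields $\E[|f_U(UX) - f_{\tilde U}(\tilde UX)|^2] = \norm{G}_\E^2 \le \frac1{\lambda^2}\E[|Y-f_U(UX)|^2]\,\E[|K(U(X'-X)) - K(\tilde U(X'-X))|^2]$, and since $\frac12\E[|Y-f_U(UX)|^2]\le I(f_U,U,\lambda)=J(U,\lambda)\le\frac12\E[|Y|^2]$ by Lemma \ref{lem:trivialupperbound}, the bound (\ref{eqn:Jminimizercontinuity1}) follows.

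I expect the main obstacle to be exactly that Cauchy--Schwarz grouping. The naive split leaves the coupled quantity $\E[|Y-f_U(UX)|^2\,|K(U(X'-X)) - K(\tilde U(X'-X))|^2]$, which is \emph{not} dominated by the product $\E[|Y-f_U(UX)|^2]\,\E[|K(U(X'-X)) - K(\tilde U(X'-X))|^2]$ in general (for a non-minimizing $F$ one can even make it strictly larger); keeping the test value $g(X')$ together with $Y-f_U(UX)$, so that the independence $X'\perp(X,Y)$ becomes usable, is what makes the argument go through. Everything else is a verbatim repetition of the proof of Proposition \ref{prop:integraleqn1}.
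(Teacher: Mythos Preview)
Your proof is correct and follows the same route as the paper: subtract the two integral equations to obtain a single equation for the difference, invoke the coercivity mechanism from Proposition \ref{prop:integraleqn1}, and finish with the trivial upper bound Lemma \ref{lem:trivialupperbound}. The only cosmetic difference is that the paper swaps the roles of $U$ and $\tilde U$ (writing the equation with kernel $K(U(\cdot))$ and residual $Y-f_{\tilde U}(\tilde U X)$ on the right), and then simply cites the coercivity estimate (\ref{eqn:coercivity}) to pass directly to the factored bound; your explicit Cauchy--Schwarz grouping of $(Y-F(X))\overline{g(X')}$ against the kernel difference, exploiting the independence $X'\perp(X,Y)$, spells out precisely the step that justifies this factorisation and that the paper leaves implicit.
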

	
	\begin{proof}
		By the integral equation for $f_U\circ U$ and $f_{\tilde{U}}\circ \tilde{U}$, 
		the difference $f(x)=f_{\tilde{U}}(\tilde{U}x)-f_U(Ux) $ satisfies
		\begin{equation}\label{eqn:integraleqndifferentiability}
			f(x)+  \frac{1}{\lambda}  \E[ f(X)\overline{K(U(x-X)) }  ] = \frac{1}{\lambda}  \E[ (Y-f_{\tilde{U}}(\tilde{U}X) )\overline{g(x-X)}  ],
		\end{equation}
		for $g(x)= K(\tilde{U}x) -K(Ux)$. By applying the coercivity estimate (\ref{eqn:coercivity}),
		\[
		\norm{f}_\E^2 \leq \frac{1}{\lambda^2} \E[ |Y- f_{\tilde{U}}(\tilde{U}X) |^2   ] \E[ |g(X'-X)|^2 ] \leq \frac{1}{\lambda^2} \E[ |Y |^2   ] \E[ |g(X'-X)|^2 ] ,
		\]
		where the second inequality uses the trivial upper bound Lemma \ref{lem:trivialupperbound}. 
	\end{proof}

	\begin{prop}\label{lem:continuityJ}
		(Quantitative continuity of $J$)
		For any  $U,\tilde{U}$, there is a uniform bound
		\[
		|J(U,\lambda)-J(\tilde{U},\lambda)|\leq \frac{1}{2\lambda}\E[|Y|^2 ]  \E[ |K(U(X'-X))- K( \tilde{U}(X'-X)) |^2 ]^{1/2}, 
		\]
		where $X'$ denotes an independent copy of $X$.
	\end{prop}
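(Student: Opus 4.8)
The plan is to deduce this directly from the alternative expression (\ref{eqn:Jalternativeformula}) for the minimal value, combined with the quantitative estimate of Lemma \ref{lem:Jminimizercontinuity1}; no new analytic input is required.

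First I would use (\ref{eqn:Jalternativeformula}) for both $U$ and $\tilde{U}$ and subtract, so that the common term $\tfrac12\E[|Y|^2]$ cancels:
\[
J(U,\lambda)-J(\tilde{U},\lambda)= \tfrac12\E[\bar{Y}\,f_{\tilde{U}}(\tilde{U}X)] - \tfrac12\E[\bar{Y}\,f_U(UX)] = \tfrac12\,\E\big[\bar{Y}\big(f_{\tilde{U}}(\tilde{U}X)-f_U(UX)\big)\big].
\]
Applying the Cauchy--Schwarz inequality in $L^2$ of the underlying probability space then gives
\[
|J(U,\lambda)-J(\tilde{U},\lambda)| \le \tfrac12\,\E[|Y|^2]^{1/2}\,\E\big[\,|f_{\tilde{U}}(\tilde{U}X)-f_U(UX)|^2\,\big]^{1/2}.
\]
Finally I would substitute the bound of Lemma \ref{lem:Jminimizercontinuity1} for the last factor and take square roots: the two copies of $\E[|Y|^2]^{1/2}$ combine into $\E[|Y|^2]$, while the factor $1/\lambda$ produced by the lemma yields the claimed prefactor $1/(2\lambda)$. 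The right-hand side of Lemma \ref{lem:Jminimizercontinuity1}, and hence of the final bound, is manifestly symmetric in $U$ and $\tilde{U}$, so the absolute value on the left causes no difficulty.

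The only thing to be careful about is to resist arguing by the crude comparison $J(U,\lambda)\le I(f_{\tilde{U}},U,\lambda)$: since the two loss functionals $I(\cdot,U,\lambda)$ and $I(\cdot,\tilde{U},\lambda)$ differ in their quadratic parts $\E|Y-f(UX)|^2$ versus $\E|Y-f(\tilde{U}X)|^2$, that route leads to a messier estimate and fails to exploit the cancellation in (\ref{eqn:Jalternativeformula}). All the genuine analytic content --- the coercivity estimate (\ref{eqn:coercivity}) for the integral equation, applied to the difference $f_{\tilde{U}}\circ\tilde{U}-f_U\circ U$ --- has already been carried out in the proof of Lemma \ref{lem:Jminimizercontinuity1}, so there is essentially no remaining obstacle; the proposition is a two-line corollary.
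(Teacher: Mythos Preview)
Your proposal is correct and follows exactly the same route as the paper's own proof: apply (\ref{eqn:Jalternativeformula}) to both $U$ and $\tilde U$, subtract, use Cauchy--Schwarz, and invoke Lemma \ref{lem:Jminimizercontinuity1}. The paper's argument is literally the two lines you outline.
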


	\begin{proof}
		Using the formula (\ref{eqn:Jalternativeformula}),
		and Cauchy-Schwarz, 
		\[
		|J(U,\lambda)-J(\tilde{U},\lambda)|\leq \frac{1}{2} \E[|Y|^2]^{1/2} \E[ | f_U(UX)- f_{\tilde{U}}(\tilde{U}X)|^2   ]^{1/2},
		\]
		so the Proposition follows from (\ref{eqn:Jminimizercontinuity1}).
	\end{proof}

	\begin{cor}\label{cor:Jminimizercontinuity}
		The minimizer $f_U$ depends continuously on $U$, in the Hilbert space $\mathcal{H}$-norm. 
	\end{cor}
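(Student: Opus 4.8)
The plan is to upgrade the $\norm{\cdot}_{\E}$-continuity already packaged in Lemma \ref{lem:Jminimizercontinuity1} to continuity in the ambient Hilbert norm $\norm{\cdot}_{\mathcal{H}}$. The natural device is the formula \eqref{eqn:EulerLagrange1}: since $f_U(x)=\frac{1}{\lambda}\E[(Y-f_U(UX))\overline{K(x,UX)}]$, the function $f_U$ is, up to the factor $1/\lambda$, a superposition of reproducing kernels $K(x,\cdot)$ weighted by the (bounded-in-$L^2$) random variable $Y-f_U(UX)$. So for two parameters $U,\tilde U$ I would write
\[
\lambda\,(f_U-f_{\tilde U}) \;=\; \E\!\left[(Y-f_U(UX))\,K(\cdot,UX)\right]-\E\!\left[(Y-f_{\tilde U}(\tilde U X))\,K(\cdot,\tilde U X)\right],
\]
where the expectations are Bochner integrals of $\mathcal{H}$-valued maps (legitimate because $\norm{K(\cdot,v)}_{\mathcal{H}}^2=K(0)=1$ is bounded, by the $L^1$-normalization \eqref{L1integrablekV}). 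Then I would split the right-hand side into two telescoping pieces: one where the scalar weight changes, $\E[((Y-f_U(UX))-(Y-f_{\tilde U}(\tilde U X)))\,K(\cdot,UX)]=\E[(f_{\tilde U}(\tilde U X)-f_U(UX))\,K(\cdot,UX)]$, and one where the kernel argument changes, $\E[(Y-f_{\tilde U}(\tilde U X))\,(K(\cdot,UX)-K(\cdot,\tilde U X))]$.

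For the first piece, the $\mathcal{H}$-norm is bounded (Jensen plus $\norm{K(\cdot,v)}_{\mathcal{H}}=1$) by $\E[|f_U(UX)-f_{\tilde U}(\tilde U X)|^2]^{1/2}$, which by Lemma \ref{lem:Jminimizercontinuity1} is $\le \frac{1}{\lambda}\E[|Y|^2]^{1/2}\E[|K(U(X'-X))-K(\tilde U(X'-X))|^2]^{1/2}$. For the second piece, the $\mathcal{H}$-norm is bounded by $\E[|Y-f_{\tilde U}(\tilde U X)|^2]^{1/2}\,\big(\sup\big)$... more carefully, by Jensen again it is at most $\E[|Y-f_{\tilde U}(\tilde U X)|^2\,\norm{K(\cdot,UX)-K(\cdot,\tilde U X)}_{\mathcal{H}}^2]^{1/2}$, and using $\norm{K(\cdot,v)-K(\cdot,w)}_{\mathcal{H}}^2=2-2\mathrm{Re}\,K(v-w)$ from the proof of Lemma \ref{lem:Rellich} together with the trivial upper bound $\E[|Y-f_{\tilde U}(\tilde U X)|^2]\le\E[|Y|^2]$, this is controlled by a quantity that vanishes as $\tilde U\to U$ by dominated convergence (the integrand $2-2\mathrm{Re}\,K(UX-\tilde U X)\to 0$ pointwise and is bounded by $4$). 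Adding the two bounds gives $\norm{f_U-f_{\tilde U}}_{\mathcal{H}}\to 0$ as $\tilde U\to U$.

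The only genuine subtlety I anticipate is justifying the $\mathcal{H}$-valued (Bochner) integral representation and the passage of $\norm{\cdot}_{\mathcal{H}}$ inside the expectation: one must check that $v\mapsto K(\cdot,v)\in\mathcal{H}$ is (weakly, hence strongly, by separability and the continuity from Lemma \ref{lem:Rellich}) measurable and uniformly norm-bounded, so that Jensen's inequality $\norm{\E[\,\cdot\,]}_{\mathcal{H}}\le\E[\norm{\cdot}_{\mathcal{H}}]$ applies; this is routine given the machinery in the excerpt. An alternative, slicker route that avoids Bochner integrals entirely: test the Euler--Lagrange identity (Lemma \ref{EulerLagrange}) for $f_U$ against $g=f_U-f_{\tilde U}$ and for $f_{\tilde U}$ against the same $g$, subtract, and rearrange to get $\lambda\norm{f_U-f_{\tilde U}}_{\mathcal{H}}^2$ expressed through the already-controlled quantity $\E[|f_U(UX)-f_{\tilde U}(\tilde U X)|^2]$ plus a cross term involving $\E[(Y-f_{\tilde U}(\tilde U X))(\overline{(f_U-f_{\tilde U})(UX)}-\overline{(f_U-f_{\tilde U})(\tilde U X)})]$, the latter being handled by Lemma \ref{lem:Jminimizercontinuity1} applied to the fixed function $f_U-f_{\tilde U}$ and Cauchy--Schwarz. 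Either way the conclusion follows; I would present the Bochner-integral version for its transparency.
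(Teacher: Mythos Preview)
Your proposal is correct but takes a genuinely different route from the paper. The paper's argument is a two-line strong-convexity trick: since $I(\cdot,\tilde U,\lambda)$ is quadratic in $f$ with minimizer $f_{\tilde U}$ and Hessian bounded below by $\frac{\lambda}{2}\norm{\cdot}_{\mathcal H}^2$, one has $J(\tilde U,\lambda)+\frac{\lambda}{2}\norm{f_U-f_{\tilde U}}_{\mathcal H}^2\le I(f_U,\tilde U,\lambda)$; letting $\tilde U\to U$, the right side tends to $I(f_U,U,\lambda)=J(U,\lambda)$ by elementary dominated convergence for the fixed function $f_U$, and $J(\tilde U,\lambda)\to J(U,\lambda)$ by Proposition~\ref{lem:continuityJ}, so $\norm{f_U-f_{\tilde U}}_{\mathcal H}\to 0$. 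No Bochner integrals, no telescoping, no Lemma~\ref{lem:Jminimizercontinuity1} needed. Your approach instead reads $\lambda f_U$ as an $\mathcal H$-valued expectation of kernel vectors via \eqref{eqn:EulerLagrange1} and decomposes the difference; it is sound (your measurability caveat is apt, and the dominated-convergence step for piece~2 is fine once one notes $|Y-f_{\tilde U}(\tilde U X)|^2\le 2|Y|^2+2\lambda^{-1}\E[|Y|^2]$ gives a $\tilde U$-independent dominator), and it yields a slightly more explicit quantitative bound, but at the cost of more machinery. Your ``alternative slicker route'' is closer in spirit to the paper, yet still more laborious than the paper's direct completion-of-squares identity $I(f,\tilde U,\lambda)=\mathcal Q(f-f_{\tilde U})+J(\tilde U,\lambda)$.
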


	\begin{proof}
		Since $I(f, U,
		\lambda)$ is quadratic in $f$ with 
		minimizer $f_U$, we have
		\[
		I(f,U,\lambda)= \mathcal{Q}(f-f_{U})+ J(U,\lambda),
		\]
		where $\mathcal{Q}$ is the quadratic term, bounded below by $\frac{\lambda}{2} \norm{f-f_{U}}_{\mathcal{H}}^2$. The same holds with $U$ replaced by any other $\tilde{U}$. In particular, when $\tilde{U}\to U$, 
		\[
		J(\tilde{U},\lambda)+ \frac{\lambda}{2} \norm{f_U-f_{\tilde{U}}}^2_{\mathcal{H}}\leq I(f_U, \tilde{U}, \lambda) \to I(f_U, U,\lambda)=J(U,\lambda),
		\]
		Hence
		\[
		\limsup_{\tilde{U}\to U} \frac{\lambda}{2} \norm{f_U-f_{\tilde{U}}}^2_{\mathcal{H}} \leq \lim_{ \tilde{U}\to U } (J(U ,  \lambda) - J(\tilde{U} ,  \lambda))=0,
		\]
		so $  \norm{f_U-f_{\tilde{U}}}_{\mathcal{H}}\to 0 $.
	\end{proof}

	\subsection{Representer theorem for finite atomic measures}\label{sect:representer}

	In the simplest case $X$ takes value in a \emph{finite number of points} $\{ a_1,\ldots a_m\} \subset V $, with $\mathbb{P}(X=a_i)=p_i$, and $Y=y_i$ conditional on $X=a_i$. Then the representer theorem \cite[Section 7.2]{Bach}\cite[Proposition 8]{Smale} gives an explicit formula for $J(U,\lambda)$ in terms of the kernel functions.

	We write $M=(M_{ij})$ as the $m\times m$ inner product matrix with entries
	\[
	M_{ij}=K(Ua_i, Ua_j)= K( U(a_i-a_j)) 
	\]
	and let $M^{-1}$ denote its inverse matrix. We write $P=\text{diag}(p_1,\ldots p_m)$ encoding the probability of the atoms at $a_i$.

	\begin{prop}\label{prop:representer} \cite[Proposition 8]{Smale}
		(Representer theorem) 
		\begin{equation}
			J(U,\lambda)= \frac{1}{2}\sum_1^m p_i |y_i|^2-\frac{1}{2} \sum_{i,j=1}^m(\lambda M^{-1}+P)^{-1}_{ij} p_i y_i p_j\bar{y}_j.
		\end{equation}
	\end{prop}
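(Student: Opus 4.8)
The plan is to reduce the problem to a finite-dimensional quadratic minimization and invoke the representer theorem. Since $X$ is supported on $\{a_1,\dots,a_m\}$, the loss $I(f,U,\lambda)$ depends on $f$ only through the values $f(Ua_1),\dots,f(Ua_m)$ together with $\norm{f}_{\mathcal H}^2$. The classical representer theorem (or directly: the Euler–Lagrange equation of Lemma \ref{EulerLagrange} forces $f_U$ to be a linear combination $f_U=\sum_j c_j K(Ua_j,\cdot)$) tells us the minimizer lives in the span of the kernel sections $K(Ua_j,\cdot)$. First I would substitute $f=\sum_j c_j K(Ua_j,\cdot)$, use the reproducing property $f(Ua_i)=\sum_j c_j K(Ua_i,Ua_j)=(Mc)_i$ and $\norm{f}_{\mathcal H}^2 = c^* M c$, so that
\[
I(f,U,\lambda)=\frac12\sum_i p_i|y_i-(Mc)_i|^2+\frac{\lambda}{2}c^*Mc.
\]

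Next I would change variables to $v=Mc$ (legitimate since $M$ is positive definite: $M$ is the Gram matrix of the linearly independent functions $K(Ua_j,\cdot)$, or one can see positivity from the Fourier formula $K(x)=\int e^{2\pi i\langle x,\omega\rangle}k_V(\omega)\,d\omega$ with $k_V>0$), giving
\[
I=\frac12\sum_i p_i|y_i-v_i|^2+\frac{\lambda}{2}v^*M^{-1}v
=\frac12 (v-y)^*P(v-y)+\frac{\lambda}{2}v^*M^{-1}v,
\]
with $y=(y_i)$ and $P=\mathrm{diag}(p_i)$. This is a strictly convex quadratic in $v$; setting the gradient to zero yields $(P+\lambda M^{-1})v = Py$, hence $v=(\lambda M^{-1}+P)^{-1}Py$. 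Plugging back, a standard completion-of-square identity for quadratics — namely that the minimum of $\frac12(v-y)^*P(v-y)+\frac12 v^*Av$ equals $\frac12 y^*Py-\frac12 y^*P(A+P)^{-1}Py$ — gives
\[
J(U,\lambda)=\frac12 y^*Py-\frac12 y^*P(\lambda M^{-1}+P)^{-1}Py
=\frac12\sum_i p_i|y_i|^2-\frac12\sum_{i,j}(\lambda M^{-1}+P)^{-1}_{ij}p_i y_i p_j\bar y_j,
\]
which is the claimed formula.

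There is no serious obstacle here; the only points requiring a line of justification are (i) that $M$ is invertible, so that the change of variables $v=Mc$ and the matrix $M^{-1}$ appearing in the statement both make sense — this follows from strict positivity of $k_V$ — and (ii) that the minimizer really is of representer form, which is either cited from \cite[Proposition 8]{Smale} or recovered from the Euler–Lagrange equation (\ref{eqn:EulerLagrange1}) by noting that $K(x,UX)$ for $x$ in the support of $X$ spans the relevant subspace. Alternatively, and perhaps most cleanly, one can bypass the substitution entirely: apply the integral equation of Proposition \ref{prop:integraleqn1} at the atoms to get the linear system for the vector $(f_U(Ua_i))_i$, and then use formula (\ref{eqn:Jalternativeformula}), $J(U,\lambda)=\frac12\E[|Y|^2]-\frac12\E[\bar Y f_U(UX)]$, to read off the answer. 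I would present the completion-of-square step as a one-line matrix identity rather than expanding it.
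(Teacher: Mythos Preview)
Your argument is correct and is essentially the same as the paper's: the paper first fixes the values $f(Ua_i)$ and solves the optimal interpolation problem to obtain $\norm{f}_{\mathcal H}^2=\sum (M^{-1})_{ij}f(Ua_i)\overline{f(Ua_j)}$, then minimizes the resulting finite-dimensional quadratic in those values, which is exactly your computation after the change of variables $v=Mc$ (your $v_i$ is their $f(Ua_i)$). The paper then reads off $J$ via formula (\ref{eqn:Jalternativeformula}) rather than the completion-of-square identity, but the two are equivalent one-liners.
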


	\begin{proof}
		We minimize the functional
		\[
		I(f, U,\lambda)= \frac{1}{2} \sum_1^m p_i |y_i- f(U a_i)|^2 +  \frac{\lambda}{2} \norm{f}_{\mathcal{H}}^2.
		\]
		We first fix the $m$ numbers $f(Ua_i)$, and minimize $\norm{f}_{\mathcal{H}}^2$. This is an optimal interpolation problem, and the solution is
		\[
		f= \sum_{i,j=1}^m (M^{-1})_{ji} f(Ua_j) K( Ua_i, \cdot),\quad \norm{f}_{\mathcal{H}}^2= \sum_{i,j=1}^m (M^{-1})_{ij} f(U a_i) \overline{f(U a_j)}. 
		\]
		In the second step, we minimize with respect to  $f(Ua_i)$; this is a finite dimensional linear regression problem. The minimizer satisfies
		\begin{equation}\label{eqn:representerminimizer}
			f(U a_i)=\sum_{j=1}^m (\lambda M^{-1}+P)^{-1}_{ji} p_j y_j,
		\end{equation}
		whence by (\ref{eqn:Jalternativeformula})
		\[
		J(U,\lambda)= \min_f I(f,U,\lambda)=\frac{1}{2}\sum p_i |y_i|^2- \frac{1}{2}\sum (\lambda M^{-1}+P)^{-1}_{ij} p_i y_i p_j\bar{y}_j,
		\]
		as required.
	\end{proof}

	\subsection{Law of large numbers}\label{sect:lawoflargenumbers}

	Suppose we take $m$ i.i.d. observations of $(X,Y)$, then the empirical expectation is defined by
	\[
	\E_m [ F(X,Y)]= \frac{1}{m} \sum_1^m F(X_{(i)}, Y_{(i)} ).
	\]
	Suppose that $\E[|F(X,Y)|]<+\infty$, then the law of large numbers implies that almost surely
	\[
	\lim_{m\to +\infty} \E_m [ F(X,Y)] = \E[ F(X,Y)]. 
	\]
	By replacing $\E$ with the empirical expectation $\E_m$, we can define the empirical version $I_m(f,U,\lambda)$ of $I(f,U,\lambda)$, whose minimizer is $f_{U,m}$, and the minimal value is $J_m(U,\lambda)$.

	\begin{lem}\label{lemma:lawoflargenumbers}
		Suppose $\E[|Y|^2]<+\infty$. For every constant $M < \infty$, almost surely
		\[
		\lim_{m\to+\infty}\sup_{f\in \mathcal{H}: \norm{f}_\mathcal{H}^2\leq M} \sup_{U: |U|\leq M } |\E_m[|f(UX)- Y|^2 ]- \E[|Y-f(UX)|^2] | = 0.
		\]
		As a consequence, $I_m(f, U, \lambda) \to I(f, U, \lambda)$ converge uniformly in $f \in \{f: \norm{f}_\mathcal{H}^2\leq M\}$ and $U \in \{U: |U| \le M\}$.
	\end{lem}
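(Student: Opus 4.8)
The plan is to establish the uniform convergence by a standard covering-number (metric entropy) argument combined with the equicontinuity of functions in a bounded ball of $\mathcal{H}$. First I would reduce to a single empirical process: writing $\Phi_{f,U}(X,Y) = |Y - f(UX)|^2$, the goal is $\sup_{f,U} |\E_m[\Phi_{f,U}] - \E[\Phi_{f,U}]| \to 0$ almost surely, where $f$ ranges over $\{\norm{f}_{\mathcal{H}}^2 \le M\}$ and $U$ over $\{|U| \le M\}$. Expanding the square, $\Phi_{f,U} = |Y|^2 - 2\mathrm{Re}(\bar Y f(UX)) + |f(UX)|^2$; the $|Y|^2$ term converges almost surely by the ordinary law of large numbers (using $\E[|Y|^2] < \infty$), so it suffices to handle the two terms involving $f(UX)$.

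Next I would exploit two facts from the excerpt: by Lemma \ref{lem:decayatinfinity}, every $f$ in the ball satisfies $\norm{f}_{L^\infty} \le \norm{f}_{\mathcal{H}} \le M^{1/2}$, so $|f(UX)| $ and $|f(UX)|^2$ are uniformly bounded, and the cross term is dominated by $2M^{1/2}|Y|$, which is integrable; by Lemma \ref{lem:Rellich}, the functions in the unit ball of $\mathcal{H}$ are equicontinuous with a common modulus of continuity $\rho$. The key step is then to build a finite $\varepsilon$-net: cover $\{U : |U| \le M\}$ by finitely many balls of small radius $\delta$, and cover the ball of $\mathcal{H}$ in the sup norm on a large compact set. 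Here I would use that $\mathcal{H}$-bounded functions are equicontinuous and vanish at infinity, so by Arzelà–Ascoli the ball $\{\norm{f}_{\mathcal{H}}^2 \le M\}$ is totally bounded in $C^0(V)$ with the uniform norm; pick a finite $\varepsilon$-net $f_1, \dots, f_N$. For the map $(f,U) \mapsto f(UX)$, a change of $U$ by $\delta$ on the event $\{|X| \le R\}$ moves $UX$ by at most $\delta R$, hence $f(UX)$ by at most $\rho(\delta R M^{1/2})$; combined with $|X| > R$ having small probability (Chebyshev, using $\E|X|^2 \le \E|Y|^2$-type control or just tightness of $X$), one gets that the class $\{\Phi_{f,U}\}$ is uniformly approximated by the finite class $\{\Phi_{f_i, U_j}\}$ up to an error controllable by $\varepsilon$, $\delta$, and $\Prob(|X|>R)$, uniformly in the sample. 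Apply the ordinary strong law to each of the finitely many $\Phi_{f_i,U_j}$, take the max over this finite set, and let $\varepsilon, \delta \to 0$ and $R \to \infty$ along a countable sequence.

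Finally, the consequence $I_m(f,U,\lambda) \to I(f,U,\lambda)$ uniformly on the same set is immediate, since $I_m - I = \tfrac12(\E_m - \E)[\Phi_{f,U}]$, the regularization term $\tfrac{\lambda}{2}\norm{f}_{\mathcal{H}}^2$ being deterministic and identical in both.

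The main obstacle I expect is making the ``uniformly in the sample'' approximation step fully rigorous: one must control $\sup_{f,U}|\E_m[\Phi_{f,U} - \Phi_{\tilde f, \tilde U}]|$ by a deterministic quantity that does not depend on $m$, which requires the bound on $|\Phi_{f,U} - \Phi_{\tilde f, \tilde U}|$ to hold pointwise (not just in expectation) on a high-probability-under-$X$ event, and then separately controlling the empirical mass that $\E_m$ assigns to $\{|X| > R\}$ — this last point needs another application of the law of large numbers to $\1_{|X|>R}$ (or the integrable envelope $|Y|$ on that set), so that the tail contribution is eventually small almost surely. Once one commits to intersecting countably many almost-sure events (one for each net point and each $R$ in a sequence), the argument goes through; the bookkeeping is the only real work.
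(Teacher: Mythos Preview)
Your approach matches the paper's almost exactly: Arzel\`a--Ascoli on the compact ball $\{|x|\le R\}$ to produce a finite net for the composites $f\circ U$, the strong law at each net point and on the tail indicator $\1_{|X|\ge R}$, and Cauchy--Schwarz against $\E_m[|Y|^2]$ to control the tail contribution. One small correction: the $\mathcal{H}$-ball is \emph{not} totally bounded in $C^0(V)$ under the global sup norm (the translates $K(\cdot - n)$ show that decay at infinity is not uniform over the ball), but your argument only uses---and only needs---total boundedness on compact subsets, so the proof stands.
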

	\begin{proof}
		Let $\epsilon > 0$ be a small number. For the random variable $X$ we can first find some large $R$ depending on $\epsilon$, such that $\lim_{m \to \infty} \E_m [\1_{|X|\geq R}] = \E[\1_{|X| \geq R}] \leq \epsilon^2$. We then notice that the class of functions $f
		\circ U$ with a given bound $M$ is equicontinuous and uniformly bounded by $M^{1/2}$, so by Arzela-Ascoli, there is a finite $\epsilon$-net $g_1,\ldots g_N$, such that any $f\circ U$ in this class is $\epsilon$-close in $C^0$-norm to some $g_i$ on the compact set $\{ 
		|x|\leq R\}$. By the law of large numbers applied to $|Y-g_i(X)|^2$, almost surely we have
		\[
		\lim_{m \to \infty} \max_{1\leq i\leq N} |\E_m|Y-g_i(X)|^2 - \E[|Y-g_i(X)|^2]| = 0.
		\]
		We note that $f, g_i$ are all uniformly bounded in $C^0$ norm by $M^{1/2}$. By separately estimating the contribution for $|X|\leq R$ and $|X|\geq R$, we see that almost surely,
		\[
		\begin{split}
			& |\E_m[ |f(UX)- Y|^2 ]- \E_m [|g_i(UX)-Y|^2]|
			\\
			&
			\leq C\epsilon (\E_m[|Y|+ |f(UX)|+|g_i(UX)|] ) + C (\E_m[(|Y|+ |f(UX)|+| g_i(UX)| )\1_{ |X|\geq R }] 
			\\
			& \leq C \epsilon + C\epsilon \E_m[|Y|] + C (\E_m[|Y|^2+ |f(UX)|^2+| g_i(UX)|^2 ])^{1/2} \E_m[\1_{ |X|\geq R }]^{1/2}
			\\
			&\leq C\epsilon + C\epsilon (\E_m[|Y|^2])^{1/2},
		\end{split}
		\]
		where $C < \infty$ depends on $M$ but not on $f, g_i$. 
		A similar estimate holds with $\E_m$ replaced by 
		$\E$. Since $\epsilon > 0$ is arbitrary, and $\E_m[|Y|^2] \to \E[|Y|^2] < \infty$, the lemma follows.
	\end{proof}

	\begin{prop}\label{prop:lawoflargenumbers}
		Suppose $\E[|Y|^2]<+\infty$. For any compact subset $\mathcal{U}$ of $U$, the following event holds almost surely: as $m \to +\infty$, the minimizer $f_{U,m}\in \mathcal{H}$ (resp. $J_m(U,\lambda)$) converges uniformly in $U$ to $f_U\in \mathcal{H}$ (resp. $J(U,\lambda)$). 
	\end{prop}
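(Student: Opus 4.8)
The plan is to deduce Proposition \ref{prop:lawoflargenumbers} from the uniform convergence $I_m \to I$ of Lemma \ref{lemma:lawoflargenumbers} together with the quantitative continuity machinery already established. First I would observe that a standard a priori bound localizes the problem: since $I_m(0,U,\lambda) = \tfrac12 \E_m[|Y|^2]$, the minimizer $f_{U,m}$ satisfies $\tfrac{\lambda}{2}\norm{f_{U,m}}_{\mathcal H}^2 \leq \tfrac12 \E_m[|Y|^2]$, and almost surely $\E_m[|Y|^2]\to\E[|Y|^2]<\infty$; hence there is a (random, but eventually deterministic) constant $M$ with $\norm{f_{U,m}}_{\mathcal H}^2 \leq M$ for all $m$ large and all $U$, and likewise $\norm{f_U}_{\mathcal H}^2\leq M$. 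So we may restrict attention to the ball $\{f : \norm{f}_{\mathcal H}^2 \leq M\}$, where Lemma \ref{lemma:lawoflargenumbers} gives $\sup_{\mathcal U}\sup_{\norm{f}_{\mathcal H}^2\le M} |I_m(f,U,\lambda) - I(f,U,\lambda)| \to 0$ a.s.

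Next, the convergence $J_m(U,\lambda)\to J(U,\lambda)$ uniformly on $\mathcal U$ is immediate from this: for each $U$,
\[
J_m(U,\lambda) \leq I_m(f_U,U,\lambda) \leq I(f_U,U,\lambda) + \delta_m = J(U,\lambda) + \delta_m,
\]
and symmetrically $J(U,\lambda)\leq J_m(U,\lambda)+\delta_m$, where $\delta_m = \sup_{\mathcal U}\sup_{\norm{f}_{\mathcal H}^2\le M}|I_m - I| \to 0$; both $f_U$ and $f_{U,m}$ lie in the ball of radius $M^{1/2}$, so the inequalities are uniform in $U$. For the convergence of the minimizers I would reuse the argument of Corollary \ref{cor:Jminimizercontinuity}: since $I_m(f,U,\lambda) = \mathcal Q_m(f - f_{U,m}) + J_m(U,\lambda)$ with the quadratic part $\mathcal Q_m$ bounded below by $\tfrac{\lambda}{2}\norm{\cdot}_{\mathcal H}^2$, evaluating at $f = f_U$ gives
\[
\tfrac{\lambda}{2}\norm{f_U - f_{U,m}}_{\mathcal H}^2 \leq I_m(f_U,U,\lambda) - J_m(U,\lambda) \leq \big(I(f_U,U,\lambda) + \delta_m\big) - \big(J(U,\lambda) - \delta_m\big) = 2\delta_m,
\]
using $I(f_U,U,\lambda) = J(U,\lambda)$. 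Since the right-hand side is independent of $U$, this yields $\sup_{U\in\mathcal U}\norm{f_U - f_{U,m}}_{\mathcal H} \to 0$ almost surely.

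The main obstacle is purely the uniformity bookkeeping, and it is essentially resolved by Lemma \ref{lemma:lawoflargenumbers}: one must be careful that the a priori radius $M$ is chosen once (depending on the sample path, but so that $\E_m[|Y|^2]\le 2\E[|Y|^2]$ say for all large $m$), that the compact set $\mathcal U$ is contained in $\{|U|\le M\}$ after enlarging $M$, and that the single almost-sure event on which Lemma \ref{lemma:lawoflargenumbers} and $\E_m[|Y|^2]\to\E[|Y|^2]$ both hold is the event on which everything above runs. No new analytic input beyond Lemma \ref{lemma:lawoflargenumbers} and the quadratic structure is needed; the quantitative continuity in $U$ (Lemma \ref{lem:Jminimizercontinuity1}, Proposition \ref{lem:continuityJ}) is not even required here, though it could be invoked to upgrade pointwise statements to uniform ones as an alternative route.
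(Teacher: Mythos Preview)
Your proposal is correct and follows essentially the same approach as the paper: an a priori bound localizes the minimizers to a fixed $\mathcal H$-ball, Lemma~\ref{lemma:lawoflargenumbers} supplies uniform convergence $I_m\to I$ on that ball, and the quadratic structure $I_m(f,U,\lambda)=\mathcal Q_m(f-f_{U,m})+J_m(U,\lambda)$ then forces both $J_m\to J$ and $\norm{f_U-f_{U,m}}_{\mathcal H}\to 0$ uniformly. The paper organizes the probability bookkeeping slightly differently (fixing $\delta>0$, choosing $M$ so that the a priori bound holds with probability $\ge 1-\delta$, then letting $\delta\to 0$), whereas you directly intersect the two full-measure events; your route is arguably cleaner, but the content is the same.
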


	\begin{proof}
		By the law of large numbers, almost surely
		\[
		\norm{f_{U,m}}_{\mathcal{H}}^2\leq \lambda^{-1}\E_m[|Y|^2]\to \lambda^{-1}\E[|Y|^2]<+\infty.
		\]
		Hence, for any $\delta > 0$, we can select $M < \infty$ such that with probability at least $\geq 1-\delta$, $M$ serves as a uniform bound on $\norm{f_{U,m}}_{\mathcal{H}}^2$ and $|U|$ in the compact set $\mathcal{U}$. By applying Lemma~\ref{lemma:lawoflargenumbers} for this $M$, we deduce that 
		\[
		\sup_{U \in \mathcal{U}} |I_m(f_{U, m}, U, \lambda) - I(f_{U, m}, U, \lambda)| \to 0  
		~~~ (m\to \infty)   
		\] 
		occurs with probability at least $1-\delta$. 
		Since $\delta > 0$ is arbitrary, this means that it also occurs almost surely. Similarly, we can deduce that almost surely,
		\[
		\sup_{U \in \mathcal{U}} |I_m(f_{U}, U, \lambda) - I(f_{U}, U, \lambda)| \to 0  
		~~~ (m\to \infty). 
		\] 
		
		Since $I_m(f, U,
		\lambda)$ is quadratic in $f$ with 
		minimizer $f_{U,m}$, we have
		\[
		I_m(f,U,\lambda)= \mathcal{Q}(f-f_{U,m})+ J_m(U,\lambda),
		\]
		where $\mathcal{Q}$ is the quadratic term, bounded below by $\frac{\lambda}{2} \norm{f-f_{U,m}}_{\mathcal{H}}^2$. This leads to the following inequality: 
		\[
		J_m(U,\lambda)+ \frac{\lambda}{2} \norm{f_U-f_{U,m}}^2_{\mathcal{H}}\leq I_m(f_U, U, \lambda) \leq I(f_U, U,\lambda)+|(I_m-I)(f_U, U, \lambda)|. 
		\]
		Since $I(f_U, U, \lambda) = J(U,\lambda)$, this becomes:
		\[
		J_m(U,\lambda)+ \frac{\lambda}{2} \norm{f_U-f_{U,m}}^2_{\mathcal{H}}\le J(U, \lambda)
		+|(I_m-I)(f_U, U, \lambda)|.
		\]
		Similarly we deduce the reverse inequality
		\[
		\begin{split}
			J(U,\lambda)+ \frac{\lambda}{2}\norm{f_U-f_{U,m}}^2_{\mathcal{H}} 
			\leq J_m(U,\lambda) + |(I_m-I)(f_{U, m}, U, \lambda)|. \\
		\end{split}
		\]
		
		Recall that $|(I_m-I)(f_U, U, \lambda)| $ and $|(I_m-I)(f_{U, m}, U, \lambda)|$ converge uniformly to zero in $U \in \mathcal{U}$ as $m \to \infty$. Combining these bounds, we obtain that
		\[
		\sup_{U \in \mathcal{U}} \big\{|J_m(U,\lambda)- J(U,\lambda)|+\frac{\lambda}{2}\norm{f_U-f_{U,m}}^2_{\mathcal{H}}\big\} \to 0~~~(m \to \infty).
		\]
		This shows the uniform convergence of both the minimizers and the minimum values in $U$.
	\end{proof}

	\section{Asymptotic regime}\label{Translationinvkernel}
	
	We will be interested in the following question:
	\begin{Question}
		For fixed $\lambda$, how does the minimum and the minimizer depend on $U$ as $U$ tends to infinity?
	\end{Question}

	More precisely, we fix an orthogonal direct sum decomposition $V=W\oplus W'$  with respect to $|\cdot|_V^2$,  
	which induces a direct sum decomposition of the dual space
	$
	V^*= W^*\oplus W'^*.
	$
	Thus we can write $x\in V$ and $\omega\in V^*$ as
	\[
	x=(x_W, x_{W'}),\quad \omega=(\omega_W, \omega_{W'}),\quad \langle x,\omega\rangle=\langle x_W, \omega_W\rangle+\langle x_{W'},\omega_{W'}\rangle. 
	\]

	Then a general endomorphism $U\in \End(V)$ has 4 components, which can be written in matrix form as
	\begin{equation}\label{eqn:Umatrix}
		U= \begin{bmatrix}
			U_{11}, & U_{12}\\
			U_{21}  , &U_{22}
		\end{bmatrix} \in 
		\begin{bmatrix}
			\End(W), & \Hom(W', W)\\
			\Hom(W, W'), & \End(W')
		\end{bmatrix}. 
	\end{equation}
	The action of $U$ on $x=(x_W, x_{W'})$ is $Ux=( U_{11}x_W+U_{12}x_{W'}, U_{21}x_W+ U_{22}x_{W'})$.
	We will consider a \emph{sequence} of $U$ depending on some index $k$ which we will usually suppress in the notation. We assume that 
	\begin{enumerate}
		\item  The entry $U_{11}$ remains uniformly bounded and converges to a finite limit as $k\to +\infty$, denoted as $U_{11}^\infty$.
		
		\item The entry $U_{21}=0$, and $U_{12}$ is arbitrary.
		
		\item  There are lower bounds $|U_{22} x_{w'} |\geq \Lambda_k |x_{w'}|$ for any $x_{w'}\in W'$, where $\Lambda_k\to +\infty$ as $k\to +\infty$.   
	\end{enumerate}

	The random variable $X$ can also be decomposed as $X=(X_W, X_{W'})$. In general, the marginal distribution of $X_{W'}$ is the sum of a discrete part and a non-atomic part (which could have a singular continuous part in general),
	\begin{equation}\label{atomic}
		\mu_{X_{W'}} \sim \sum_1^\infty p_i \delta_{a_i}+ \mu_{NA},\quad p_i> 0, \sum p_i \leq 1,\quad a_i\in W'.
	\end{equation}
	If $X_{W'}$ is a continuous variable, then the discrete part does not appear. It will turn out that the asymptotic behaviour of $J(U,\lambda)$ as $U\to +\infty$ for \emph{continuous variables} vis \`a vis \emph{discrete variables} will be drastically different.

	\subsection{Dimensional reduction of the RKHS}\label{sect:dimreductionRKHS}

	We can define a new RKHS $\mathcal{H}_W$ for a class of functions $f$ on $W$ with the following norm:
	\begin{equation}
		\norm{f}_{ \mathcal{H}_W }^2= \int_{W^*} \frac{ |\hat{f}|^2(\omega_W) }{ k_W  } d\omega_{W},
	\end{equation}
	where
	\[
	k_W(\omega_W)= \int_{ W'^*} k_V(\omega_W, \omega_{W'}) d\omega_{W'}.
	\]
	From the integrability assumptions on $k_V$, Fubini theorem implies
	\[
	\int_{W^*} k_W d\omega_W=  \int_{V^*} k_V(\omega) d\omega=1,
	\]
	Thus $k_W$ inherits the same kind of integrability conditions.

	The relation between $\mathcal{H}$ and $\mathcal{H}_W$ arises from the optimal extension  problem. 
	
	\begin{lem}\label{InterpretationHW}
		(Interpretation of $\mathcal{H}_W$) If $f\in \mathcal{H}$, then its restriction to the subspace $W\subset V$ lies in $\mathcal{H}_W$. Moreover, for any $g\in \mathcal{H}_W$,
		\[
		\norm{g}_{ \mathcal{H}_W } = \min_{f\in \mathcal{H}}\{ \norm{ f    }_{\mathcal{H}} : f|_W=g  \}. 
		\]
	\end{lem}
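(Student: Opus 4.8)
The plan is to pass to the Fourier side, where ``restriction to $W$'' corresponds to ``integration over the $W'^*$ variable''. First I would record that any $f\in\mathcal H$ has $\hat f\in L^1(V^*)$ --- this is precisely the estimate (\ref{Sobolevembeddingeqn1}) used in the proof of Lemma \ref{lem:decayatinfinity} --- so by Fubini the slice $\hat f(\omega_W,\cdot)$ lies in $L^1(W'^*)$ for a.e.\ $\omega_W$, and $g_0(\omega_W):=\int_{W'^*}\hat f(\omega_W,\omega_{W'})\,d\omega_{W'}$ lies in $L^1(W^*)$. Evaluating the inversion formula \eqref{Fouriertransform} for $f$ at points $(x_W,0)\in W$ and swapping the order of integration (Fubini again) shows that the restriction $f|_W$ is the inverse Fourier transform of $g_0$; since $g_0\in L^1(W^*)$ and $f|_W$ is continuous, this identifies $\widehat{f|_W}=g_0$.

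The heart of the matter is a fibrewise Cauchy--Schwarz inequality. For fixed $\omega_W$, write $g_0(\omega_W)=\int_{W'^*}\frac{\hat f(\omega_W,\omega_{W'})}{\sqrt{k_V(\omega_W,\omega_{W'})}}\sqrt{k_V(\omega_W,\omega_{W'})}\,d\omega_{W'}$ and apply Cauchy--Schwarz, using $\int_{W'^*}k_V(\omega_W,\cdot)\,d\omega_{W'}=k_W(\omega_W)$, to obtain
\[
|g_0(\omega_W)|^2 \le \Big(\int_{W'^*}\frac{|\hat f(\omega_W,\omega_{W'})|^2}{k_V(\omega_W,\omega_{W'})}\,d\omega_{W'}\Big)\,k_W(\omega_W).
\]
Here $0<k_W(\omega_W)<\infty$ because $k_V>0$ and $\int_{V^*}k_V=1$ forces $k_W$ finite a.e.\ via Fubini. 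Dividing by $k_W(\omega_W)$ and integrating over $\omega_W\in W^*$ yields $\norm{f|_W}_{\mathcal H_W}^2\le\norm f_{\mathcal H}^2$. This simultaneously proves the first assertion ($f|_W\in\mathcal H_W$) and the inequality $\min\{\norm f_{\mathcal H}:f|_W=g\}\ge\norm g_{\mathcal H_W}$ whenever the constraint set is nonempty.

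For the matching upper bound, given $g\in\mathcal H_W$ I would exhibit the explicit optimal extension: set $\hat f(\omega_W,\omega_{W'}):=\frac{\hat g(\omega_W)}{k_W(\omega_W)}\,k_V(\omega_W,\omega_{W'})$. A direct computation with Fubini gives $\norm f_{\mathcal H}^2=\int_{W^*}\frac{|\hat g(\omega_W)|^2}{k_W(\omega_W)^2}\big(\int_{W'^*}k_V(\omega_W,\omega_{W'})\,d\omega_{W'}\big)\,d\omega_W=\norm g_{\mathcal H_W}^2<\infty$, so $f\in\mathcal H$; and $\int_{W'^*}\hat f(\omega_W,\omega_{W'})\,d\omega_{W'}=\hat g(\omega_W)$, so by the identification of the first paragraph $f|_W=g$. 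Thus the infimum is attained and equals $\norm g_{\mathcal H_W}$; moreover the equality case of Cauchy--Schwarz shows this minimizing extension is the unique one (on each fibre the optimal $\hat f(\omega_W,\cdot)$ must be a scalar multiple of $k_V(\omega_W,\cdot)$).

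The main technical obstacle is the integrability bookkeeping rather than any conceptual difficulty: one must justify that $\widehat{f|_W}$ genuinely equals the partial integral $g_0$ (which needs $g_0\in L^1$, from $\hat f\in L^1$, together with continuity of $f|_W$, both consequences of the Sobolev embedding Lemma \ref{lem:decayatinfinity}), and that Fubini is legitimately applied to each of the double integrals defining the norms. The core inequality itself is a one-line Cauchy--Schwarz, so once the measure-theoretic setup is secured the argument is short.
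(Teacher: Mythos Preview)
Your proposal is correct and follows essentially the same approach as the paper: identify $\widehat{f|_W}$ as the partial integral of $\hat f$ over $W'^*$, apply fibrewise Cauchy--Schwarz against the weight $k_V$ to get the inequality, and exhibit the optimal extension $\hat f(\omega)=\hat g(\omega_W)k_V(\omega)/k_W(\omega_W)$ to saturate it. You are somewhat more explicit than the paper about the $L^1$/Fubini bookkeeping and you add the (correct but unneeded) uniqueness remark, but the mathematical content is identical.
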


	\begin{proof}
		We observe that $f|_W=g$ if and only if 
		\[
		\begin{split}
			g(x_W)& = f(x_W, 0)= \int_{V^*} \hat{f}(\omega_W, \omega_{W'}) e^{2\pi i \langle x_W, \omega_W\rangle } d\omega
			\\
			& =   \int_{W^*}e^{2\pi i \langle x_W, \omega_W\rangle } d\omega_W  \int_{W'^* } \hat{f}(\omega_W, \omega_{W'}) d\omega_{W'},
		\end{split}
		\]
		if and only if 
		\[
		\begin{split}
			& \hat{g}(\omega_W)= \int_W g(x_W) e^{-2\pi i \langle x_W, \omega_W\rangle} dx_W 
			=  \int_{ W'^* } \hat{f}(\omega_W, \omega_{W'}) d\omega_{W'}. 
		\end{split}
		\]
		This implies
		\[
		\begin{split}
			\norm{g}_{\mathcal{H}_W}^2 & = \int_{W^*}  d\omega_W |\hat{g}|^2 k_W^{-1} = \int_{W^*}  d\omega_W 
			| \int_{W'^*} \hat{f}(\omega_W, \omega_{W'}) d\omega_{W'}    |^2 k_W^{-1}
			\\
			& \leq  \int_{W^*}  d\omega_W 
			(\int_{W'^*} |  \hat{f}|^2 k_V^{-1} d\omega_{W'} )  
			\\
			& = \norm{f}_{ \mathcal{H} }^2. 
		\end{split}
		\]
		where the second line uses Cauchy-Schwarz applied to the integrand of the outer integral. This proves $\norm{g}_{\mathcal{H}_W} \leq \norm{f}_{\mathcal{H}}$, and in particular the restriction of $f$ to $W$ defines a function in $\mathcal{H}_W$.

		Moreover, for any given $g\in \mathcal{H}_W$, the equality is achieved by the $f\in \mathcal{H}$ whose Fourier transform is defined by
		\[
		\hat{f}(\omega)= \hat{g}(\omega_W) \frac{k_V( \omega) }{ k_W(\omega_W) }.
		\]
		This implies that 
		$
		\norm{g}_{ \mathcal{H}_W } = \min_{f\in \mathcal{H}}\{ \norm{ f    }_{\mathcal{H}} : f|_W=g  \}. 
		$
	\end{proof}

	We shall later need a generalization of Lemma \ref{InterpretationHW}. The \emph{optimal interpolation problem} asks the following:
	
	\begin{Question}
		Given $m$ distinct points $y_i\in W'$  and functions $g_i\in \mathcal{H}_W$ for $i=1,\ldots m$, what is the $f\in \mathcal{H}$ with the smallest $\mathcal{H}$-norm, such that its restrictions to the affine subspaces $y_i+ W$ are prescribed by $f(x_W, y_i)= g_i(x_W) $ for $i=1,\ldots m$?  
	\end{Question}

	\begin{lem}\label{Optimalinterpolationlem}
		(Optimal interpolation)
		Suppose there exists some $f\in \mathcal{H}$ with $f(x_W, y_i)= g_i(x_W) $ for $i=1,\ldots m$. Then 
		the formula for the minimizer $f_*$ of the optimal interpolation problem can be written in Fourier transform as
		\begin{equation}\label{optimalinterpolationminimizer}
			\hat{f}_*( \omega)= \sum (M^{-1})_{ji} \hat{g}_j(\omega_W) \frac{ k_V(\omega) } { k_W(\omega_W) } e^{ - 2\pi i \langle \omega_{W'}, y_i\rangle  },
		\end{equation}
		where $M^{-1}$ is the inverse matrix to the following Hermitian matrix depending on $\omega_W$,
		\begin{equation}\label{optimalinterpolationK}
			M_{ij}(\omega_W)= k_W(\omega_W)^{-1} \int_{ W'^* } k_V(\omega) e^{-2\pi i \langle \omega_{W'}, y_i-y_j\rangle}  d\omega_{W'}. 
		\end{equation}
		The norm of the minimizer is given by
		\begin{equation}\label{optimalinterpolationminimum}
			\norm{f_*}_{\mathcal{H}}^2= \int_{ W^* } k_W^{-1}(M^{-1})_{ij} \hat{g}_i \overline{ \hat{g}}_j d\omega_W. 
		\end{equation}
	\end{lem}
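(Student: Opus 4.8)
The plan is to reduce the optimal interpolation problem to a family of finite-dimensional constrained least-squares problems indexed by the frequency $\omega_W\in W^*$, exploiting that both the objective $\norm{f}_{\mathcal{H}}^2$ and the interpolation constraints decouple along the fibers of the projection $V^*\to W^*$.

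First I would transfer the constraints to the Fourier side, exactly as in the proof of Lemma \ref{InterpretationHW} (which handles the case $m=1$, $y_1=0$): the restriction $x_W\mapsto f(x_W,y_i)$ has Fourier transform $\int_{W'^*}\hat f(\omega_W,\omega_{W'})\,e^{2\pi i\langle y_i,\omega_{W'}\rangle}\,d\omega_{W'}$, so the conditions $f(\cdot,y_i)=g_i$ amount to
\[
\int_{W'^*}\hat f(\omega_W,\omega_{W'})\,e^{2\pi i\langle y_i,\omega_{W'}\rangle}\,d\omega_{W'}=\hat g_i(\omega_W),\qquad i=1,\dots,m,
\]
for a.e.\ $\omega_W$; the standing hypothesis that some $f\in\mathcal{H}$ is feasible makes (via Fubini and Cauchy--Schwarz applied to $\hat f\in L^2(k_V^{-1})$) these fiber integrals absolutely convergent and the data $(\hat g_i(\omega_W))_i$ attainable for a.e.\ $\omega_W$. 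Since $\norm{f}_{\mathcal{H}}^2=\int_{W^*}\big(\int_{W'^*}|\hat f(\omega_W,\omega_{W'})|^2k_V(\omega)^{-1}\,d\omega_{W'}\big)\,d\omega_W$, minimizing $\norm{f}_{\mathcal{H}}^2$ is the same as minimizing, for a.e.\ fixed $\omega_W$, the inner integral over $h:=\hat f(\omega_W,\cdot)$ in the weighted Hilbert space $L^2(W'^*,k_V(\omega_W,\cdot)^{-1}d\omega_{W'})$ subject to the $m$ constraints above, and then integrating the fiberwise minima in $\omega_W$.

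Next I would solve the fiber problem by orthogonal projection. Writing the $i$-th constraint as $\langle h,e_i\rangle=\hat g_i(\omega_W)$ with $e_i(\omega_{W'}):=k_V(\omega_W,\omega_{W'})\,e^{-2\pi i\langle y_i,\omega_{W'}\rangle}$ (so $\norm{e_i}^2=k_W(\omega_W)<\infty$ a.e.), the feasible set is an affine subspace with direction space $\operatorname{span}\{e_i\}^{\perp}$, so its minimum-norm point is the orthogonal projection of the origin, which lies in $\operatorname{span}\{e_1,\dots,e_m\}$: thus $h=\sum_j c_j e_j$ where $Gc=\hat g$ with $G_{ij}=\langle e_j,e_i\rangle=\int_{W'^*}k_V(\omega)\,e^{2\pi i\langle y_i-y_j,\omega_{W'}\rangle}\,d\omega_{W'}=k_W(\omega_W)\,M_{ji}(\omega_W)$, using that $k_V$ is real and $M$ Hermitian. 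Because $k_V>0$ and the $y_i$ are distinct, the $e_i$ are linearly independent for every $\omega_W$, so $M(\omega_W)$ — hence $G(\omega_W)$ — is positive definite and invertible, and $\omega_W\mapsto M^{-1}(\omega_W)$ is measurable. Substituting $c=G^{-1}\hat g=k_W^{-1}(M^{-1})^{T}\hat g$ into $h=\sum_j c_j e_j$ and relabelling indices produces exactly formula (\ref{optimalinterpolationminimizer}); and $\norm{h}^2=\langle h,h\rangle=\sum_j\bar c_j\,\hat g_j=k_W^{-1}(M^{-1})_{ij}\,\hat g_i\bar{\hat g}_j$ (again by Hermiticity of $M^{-1}$), whose integral over $W^*$ is (\ref{optimalinterpolationminimum}).

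Finally I would check that these fiberwise minimizers genuinely assemble into an element $f_*\in\mathcal{H}$. Finiteness of the norm is immediate, since at a.e.\ $\omega_W$ the fiber minimum is dominated by $\int_{W'^*}|\hat f(\omega_W,\omega_{W'})|^2k_V^{-1}\,d\omega_{W'}$ for any feasible $f$, whence $\norm{f_*}_{\mathcal{H}}^2\le\norm{f}_{\mathcal{H}}^2<\infty$; joint measurability of $\hat f_*$ in $(\omega_W,\omega_{W'})$ follows from the closed-form expression (\ref{optimalinterpolationminimizer}) together with the measurability of $M^{-1}(\cdot)$ and of the $\hat g_i$. I expect this measure-theoretic bookkeeping — legitimizing the pointwise-in-$\omega_W$ optimization, and the a.e.\ caveats around $k_W(\omega_W)\in(0,\infty)$ and $M(\omega_W)$ invertible — to be the only real subtlety; the linear-algebra core (projection onto $\operatorname{span}\{e_i\}$, Gram-matrix inversion) is routine, and optimality of $f_*$ is automatic once it is known to lie in $\mathcal{H}$, satisfy the constraints, and have been constructed by minimizing within each fiber.
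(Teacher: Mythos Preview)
Your proposal is correct and follows essentially the same route as the paper: both transfer the constraints to the Fourier side, observe that the problem decouples fiberwise in $\omega_W$, identify the fiber minimizer as the projection onto the span of the Riesz representatives $e_i(\omega_{W'})=k_V(\omega_W,\omega_{W'})e^{-2\pi i\langle y_i,\omega_{W'}\rangle}$, invert the Gram matrix $k_W(\omega_W)M(\omega_W)$, and read off the norm. The only difference is cosmetic: you are a bit more explicit than the paper about the measure-theoretic bookkeeping (measurability of $M^{-1}(\cdot)$, reassembling the fiberwise minimizers into an element of $\mathcal{H}$), which the paper leaves implicit.
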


	\begin{proof}
		For any $f\in\mathcal{H}$, we recall that $\norm{\hat{f}}_{L^1}\leq \norm{f}_\mathcal{H}$ by Cauchy-Schwarz, and in particular $\hat{f}$ is $L^1$. 
		We first reformulate the restriction conditions in terms of the Fourier transform. The requirement is
		\[
		\begin{split}
			g_i(x_W)& = f(x_W, y_i)= \int_{W^*} \hat{f}(\omega_W, \omega_{W'}) e^{2\pi i (\langle x_W, \omega_W\rangle +\langle y_i, \omega_{W'}\rangle )}d\omega
			\\
			& =   \int_{W^*} \int_{ W'^*} \hat{f}(\omega_W, \omega_{W'}) e^{2\pi i \langle x_W, \omega_W\rangle } e^{2\pi i \langle y_i, \omega_{W'}\rangle }d\omega_W d\omega_{W'},
		\end{split}
		\]
		which is equivalent to
		\begin{equation}\label{optimalinterpolationeqn1}
			\hat{g}_i(\omega_W)= \int_{ W'^*} \hat{f}(\omega_W, \omega_{W'})  e^{2\pi i \langle y_i, \omega_{W'}\rangle } d\omega_{W'}, \quad i=1,\ldots m.
		\end{equation}
		Our goal is to find $\hat{f}_*$ minimizing the integral 
		\[
		\int_{V^* }  |\hat{f}|^2 k_V^{-1} d\omega= \int_{W^*} d\omega_W \int_{ (W')^* } |\hat{f}|^2 k_V^{-1} d\omega_{ W' }. 
		\]
		The problem decouples to an infinite collection of independent minimization problems: for each fixed value of $\omega_W$, we view $\hat{f}(\omega_W, \cdot)$ as an unknown function of $\omega_{W'}$, so (\ref{optimalinterpolationeqn1}) prescribes the value of $m$ linear functionals, and we want to minimize the unknown function with respect to the Hilbert inner product
		\[
		\int_{ W'^* } |\hat{f}|^2 k_V^{-1}(\omega_W, \omega_{W'}) d\omega_{ W' }.
		\]
		The point is that the different $\omega_W$ do not interact with each other.

		There is a standard procedure to solve this minimization problem. First, we use the Riesz representation theorem to identify the $m$ linear functionals as $m$ elements of the Hilbert spaces, and the minimizer $\hat{f}(\omega_W,\cdot)$ must lie in the $\C$-span of these $m$ elements. This shows that
		\begin{equation}\label{optimalinterpolationeqn2}
			\hat{f}(\omega_W,\cdot)=  \sum_1^m a_i k_V(\omega_W,\cdot) e^{ -2\pi i \langle \cdot, y_i\rangle} ,
		\end{equation}
		where $a_i$ are $m$ coefficients depending on $\omega_W$ to be determined.

		The Hilbert inner product evaluated on the spanned of the $m$ elements $k_V(\omega_W,\cdot) e^{ -2\pi i \langle \cdot, y_i\rangle}$ gives rise to the Hermitian inner product matrix
		\[
		\int_{W'^*} k_V^{-1} k_V(\omega_W,\cdot) e^{ -2\pi i \langle \cdot, y_i\rangle} k_V(\omega_W,\cdot) e^{ 2\pi i \langle \cdot, y_j\rangle} d\omega_{W'}= k_W(\omega_W) M_{ij}(\omega_W). 
		\]
		In particular this shows $(M_{ij})$ is  positive semi-definite. Since $k_V>0$, we see that the $m$ functions $k_V(\omega_W,\cdot) e^{ -2\pi i \langle \cdot, y_i\rangle}$ are linearly independent, so the inner product matrix is also non-degenerate. 
		As a formality check, 
		\[
		\int_{W^*} k_W |M_{ij}| d\omega_W\leq  \int_{W^*} d\omega_W  \int_{ W'^* } k_V(\omega)  d\omega_{W'}<+\infty, \quad \forall i,j,
		\]
		so $(M_{ij})$ is indeed well defined for almost every $\omega_W$.

		It remains to determine the coefficient $a_i$. For this we plug (\ref{optimalinterpolationeqn2}) into the linear constraints (\ref{optimalinterpolationeqn1}) to get
		\[
		\hat{g}_i(\omega_W)= \int_{ (W')^*} \hat{f}_*(\omega_W, \omega_{W'})  e^{2\pi i \langle y_i, \omega_{W'}\rangle } d\omega_{W'}= k_W\sum a_j M_{ji}.
		\]
		Solving this linear system gives
		\[
		a_i= k_W^{-1} (M^{-1})_{ji} \hat{g}_j(\omega_W).
		\]
		This gives the formula for the minimizer (\ref{optimalinterpolationminimizer}). We can then compute the norm by evaluating the inner product matrix for $f_*$,
		\[
		\norm{f_*}_{\mathcal{H} }^2= \int_{W^*} k_W M_{ij} a_i \bar{a}_j d\omega_{W}= \int_{W^*} k_W^{-1} M^{-1}_{ij} \hat{g}_i \bar{ \hat{g}  }_jd\omega_{W}
		\]
		as required.
	\end{proof}

	\begin{cor}\label{cor:asymptoticorthogonalityoptimalinterpolation}
		(\emph{Asymptotic orthogonality in the optimal interpolation problem}) We fix $W, W'$, and consider a sequence of data $(y_i)_1^m$ and $(g_i)_1^m$ depending on some sequence index $k$ that we suppress in the notation.  We suppose that $\norm{g_i}_{\mathcal{H}_W}$ are uniformly bounded, and $\min_{i\neq j} |y_i-y_j|\to +\infty$ in the $k\to +\infty$ limit, then the norm of the minimizers satisfies
		\[
		\liminf_{k\to \infty} \left(  \norm{f_*}_{\mathcal{H}}^2- \sum_1^m \norm{g_i}_{ \mathcal{H}_W  }^2    \right) \geq 0.
		\]
	\end{cor}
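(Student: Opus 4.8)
The plan is to use the explicit formula \eqref{optimalinterpolationminimum} for $\norm{f_*}_{\mathcal{H}}^2$, namely $\norm{f_*}_{\mathcal{H}}^2 = \int_{W^*} k_W^{-1} (M^{-1})_{ij} \hat g_i \overline{\hat g}_j \, d\omega_W$, and to show that the matrix-valued integrand converges, after integration, to the diagonal contribution $\sum_i \norm{g_i}_{\mathcal{H}_W}^2 = \int_{W^*} k_W^{-1} \sum_i |\hat g_i|^2 \, d\omega_W$. The key observation is that $M = M(\omega_W)$ is a Hermitian matrix with diagonal entries $M_{ii} = 1$ (since setting $y_i = y_j$ in \eqref{optimalinterpolationK} gives $k_W^{-1}\int_{W'^*} k_V \, d\omega_{W'} = 1$), and off-diagonal entries $M_{ij}(\omega_W) = k_W(\omega_W)^{-1}\int_{W'^*} k_V(\omega) e^{-2\pi i \langle \omega_{W'}, y_i - y_j\rangle}\, d\omega_{W'}$. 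Since $k_V(\omega_W, \cdot)$ is an $L^1$ function on $W'^*$ for a.e. $\omega_W$, the Riemann--Lebesgue lemma shows that $M_{ij}(\omega_W) \to 0$ pointwise a.e. as $|y_i - y_j| \to \infty$. Thus $M \to I$ entrywise, and hence $M^{-1} \to I$ entrywise, pointwise a.e. in $\omega_W$.

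The steps I would carry out are as follows. First, record that $M_{ii} = 1$ so that the claimed limit $\sum_i \norm{g_i}_{\mathcal{H}_W}^2$ is exactly the diagonal part of the quadratic form. Second, establish the pointwise convergence $M^{-1}(\omega_W) \to I$ for a.e. $\omega_W$ via Riemann--Lebesgue applied to $k_V(\omega_W, \cdot) \in L^1(W'^*)$; here one must note that $\omega_W \mapsto \|k_V(\omega_W,\cdot)\|_{L^1(W'^*)} = k_W(\omega_W)$ is finite a.e. by Fubini, so the hypothesis of Riemann--Lebesgue holds for a.e. fixed $\omega_W$. Third, pass to the limit under the integral sign: I would bound $|(M^{-1})_{ij}|$ by a quantity controlled independently of $k$ — this is the delicate point — and then use dominated convergence together with the uniform bound $\norm{g_i}_{\mathcal{H}_W} \le C$. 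Writing $(M^{-1})_{ij}\hat g_i \overline{\hat g}_j$ and using Cauchy--Schwarz $|\hat g_i \overline{\hat g}_j| \le \tfrac12(|\hat g_i|^2 + |\hat g_j|^2)$, the integrand is dominated by $\|M^{-1}\|_{\mathrm{op}} \cdot k_W^{-1}\sum_i |\hat g_i|^2$, whose integral is $\le \|M^{-1}\|_{\mathrm{op}} \sum_i \norm{g_i}_{\mathcal{H}_W}^2$.

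The main obstacle is precisely the integrability/uniformity issue in the third step: $M^{-1}(\omega_W)$ need not be uniformly bounded in $\omega_W$, since $M(\omega_W)$ could be nearly degenerate for some $\omega_W$ even though it is positive definite a.e. However, for the \emph{liminf} inequality we only need a one-sided estimate, which is more forgiving. My preferred route is to avoid controlling $M^{-1}$ directly: instead, use the variational characterization that $\norm{f_*}_{\mathcal{H}}^2$ is the \emph{minimum} of $\norm{f}_{\mathcal{H}}^2$ over $f \in \mathcal{H}$ with $f(\cdot, y_i) = g_i$, and estimate it from below. Fix $\varepsilon > 0$; for each $i$ pick $h_i \in \mathcal{H}$ with $h_i|_W = g_i$ and $\norm{h_i}_{\mathcal{H}}^2 \le \norm{g_i}_{\mathcal{H}_W}^2 + \varepsilon$ using Lemma~\ref{InterpretationHW}, chosen moreover (by the explicit Fourier formula there, or by a density/truncation argument) so that $\hat h_i$ is compactly supported or at least decays; then translate $h_i$ to $h_i(\cdot - (0,y_i))$. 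Any competitor $f$ for the interpolation problem can be compared against $\sum_i h_i(\cdot - (0,y_i))$, and since the supports of the translated Fourier transforms become disjoint — or the cross terms $\langle \tau_{y_i} h_i, \tau_{y_j} h_j\rangle_{\mathcal{H}}$ tend to $0$ as $|y_i - y_j| \to \infty$, by a Riemann--Lebesgue argument in $\mathcal{H}$ analogous to Lemma~\ref{lem:Rellich} — the optimal interpolant's norm is bounded below by $\sum_i \norm{h_i}_{\mathcal{H}}^2$ minus a vanishing cross-term correction, giving $\liminf_k \norm{f_*}_{\mathcal{H}}^2 \ge \sum_i \norm{g_i}_{\mathcal{H}_W}^2 - m^2\varepsilon$. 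Letting $\varepsilon \to 0$ finishes the proof. I expect the cleanest writeup to combine this variational lower bound with the direct spectral estimate $\|M^{-1}\|_{\mathrm{op}} \le (1 - \max_{i\ne j}\sum_{j}|M_{ij}|)^{-1}$ once the off-diagonal row sums are shown to be small — but that last bound again needs pointwise-a.e. control, so the variational argument is the safer backbone.
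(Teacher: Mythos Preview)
Your first two steps match the paper exactly: you correctly record $M_{ii}=1$, use Riemann--Lebesgue on $k_V(\omega_W,\cdot)\in L^1(W'^*)$ to get $M_{ij}(\omega_W)\to 0$ for $i\ne j$ a.e., hence $M^{-1}(\omega_W)\to I$ pointwise a.e., and you correctly identify the obstacle that $\|M^{-1}(\omega_W)\|$ need not be uniformly bounded, so dominated convergence is unavailable.

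Your proposed variational fix, however, has a genuine gap: it points the wrong way. The characterization $\norm{f_*}_{\mathcal{H}}^2=\min\{\norm{f}_{\mathcal{H}}^2: f(\cdot,y_i)=g_i\}$ means that exhibiting a specific element such as $\sum_i h_i(\cdot-(0,y_i))$ can only produce an \emph{upper} bound on $\norm{f_*}_{\mathcal{H}}^2$, never a lower bound. (Moreover that sum does not even satisfy the interpolation constraints, since $h_j$ need not vanish on the slice $y_i+W$ for $j\ne i$.) To bound $\norm{f_*}_{\mathcal{H}}^2$ from below variationally you would have to show that \emph{every} admissible $f$ has large norm, and your translated $h_i$'s give no such statement. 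The asymptotic orthogonality of the $\tau_{y_i}h_i$ is exactly the upper-bound ingredient used elsewhere in the paper (see the test function in the proof of Theorem~\ref{thm:upperlimit}), not a lower-bound one.

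The paper's fix stays inside your integral setup and is much simpler. Since $M^{-1}(\omega_W)\to I$ a.e., for any $\epsilon>0$ one has $M^{-1}(\omega_W)\ge(1-\epsilon)I$ eventually for a.e.\ $\omega_W$, whence the quadratic form satisfies $(M^{-1})_{ij}\hat g_i\overline{\hat g_j}\ge(1-\epsilon)\sum_i|\hat g_i|^2$ eventually, pointwise a.e. Thus
\[
\liminf_{k\to\infty}\; k_W^{-1}\Big((M^{-1})_{ij}\hat g_i\overline{\hat g_j}-(1-\epsilon)\sum_i|\hat g_i|^2\Big)\ \ge\ 0\quad\text{a.e.\ }\omega_W,
\]
and Fatou's lemma (using that $(M^{-1})_{ij}\hat g_i\overline{\hat g_j}\ge 0$ so the integrand is bounded below by $-(1-\epsilon)k_W^{-1}\sum_i|\hat g_i|^2$, whose integral is uniformly bounded by hypothesis) gives
\[
\liminf_{k\to\infty}\Big(\norm{f_*}_{\mathcal{H}}^2-(1-\epsilon)\sum_i\norm{g_i}_{\mathcal{H}_W}^2\Big)\ \ge\ 0.
\]
Letting $\epsilon\to 0$ and using the uniform bound on $\norm{g_i}_{\mathcal{H}_W}$ yields the claim. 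The point you were circling around---that only a one-sided inequality is needed---is precisely what makes Fatou the right tool; no control on $\|M^{-1}\|_{\mathrm{op}}$ is required.
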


	\begin{proof}
		As noted in the above proof, $(M_{ij})$ is defined for almost every $\omega_W$, and is a positive definite Hermitian matrix. By the formula (\ref{optimalinterpolationK}), the diagonal entries $M_{ii}=1$ by definition. For $i\neq j$, we have 
		$|y_i-y_j|\to +\infty$, so the \emph{Riemann-Lebesgue lemma} implies that for all the $\omega_W$ such that $k_W(\omega_W)$ is finite, we have
		\[
		\lim_{k\to \infty} M_{ij}(\omega_W)=0.
		\]
		To summarize $M_{ij}\to \delta_{ij}$ for almost every $\omega_W$.

		Consequently, the inverse matrix $M^{-1}$ also converges to the identity a.e. $\omega_W$, so that near the limit $M^{-1}\geq (1-\epsilon)I$ for any small given $\epsilon$ and fixed $\omega_W$, whence
		\[
		\liminf_{k\to +\infty} (M^{-1}_{ij} \hat{g}_i  \bar{\hat{g}}_j -\sum_1^m |\hat{g}_i|^2 (1-\epsilon)) \geq 0,\quad a.e.  \quad  \omega_W. 
		\]
		By Fatou's lemma
		\[
		\liminf_{k\to +\infty} \int_{W^*} k_W^{-1} (M^{-1}_{ij} \hat{g}_i  \bar{\hat{g}}_j -\sum_1^m |\hat{g}_i|^2 (1-\epsilon)) d\omega_W \geq 0,
		\]
		and by the norm formula for the minimizer (\ref{optimalinterpolationminimum}), we have
		\[
		\liminf_{k\to +\infty} \left(  \norm{f_*}_{\mathcal{H}}^2- (1-\epsilon)\sum_1^m \norm{g_i}_{ \mathcal{H}_W  }^2    \right)  \geq 0.
		\]
		Since $\norm{g_i}_{\mathcal{H}_W}$ are uniformly bounded in the sequence, and $\epsilon$ is arbitrarily small, we deduce the claim.
	\end{proof}

	\begin{rmk}
		The reason we only prove a one-sided inequality, rather than 
		\[
		\lim_{k\to \infty} \left(  \norm{f_*}_{\mathcal{H}}^2- \sum_1^m \norm{g_i}_{ \mathcal{H}_W  }^2    \right) = 0,
		\]
		is that $M^{-1}$ may be a priori very large for a very small subset of $\omega_W$, even though $M_{ij}$ is always uniformly bounded. This stronger conclusion can be deduced by the stronger hypothesis that $M_{ij}\to \delta_{ij}$ \emph{uniformly} for all $\omega_W$, rather than a.e $\omega_W$. Only the one-sided version will be used later.

	\end{rmk}

	\subsection{Dimensionally reduced minimization problems}

	Recall from (\ref{atomic}) that the marginal distribution may have atoms at $a_i\in W'$. We shall consider the minimization problem for functions on the affine subspaces $\{ x_{W'}=a_i\}\subset V$. Let
	\begin{equation}
		I_i(g, U_{11},\lambda)=\frac{1}{2}\E[   |Y- g(U_{11}X_W)|^2 \1_{X_{W'}=a_i} ] +\frac{1}{2}\lambda \norm{g}_{\mathcal{H}_W }^2,
	\end{equation}
	and $J_i(U_{11}, \lambda)= \inf_{g\in \mathcal{H}_W} I_i(g,U_{11},\lambda)$. As we observed before, $k_W$ inherits the same integrability as $k_V$, so in particular the Sobolev embedding into $C^0$ holds for functions in $\mathcal{H}_W$, and therefore the functional $I_i$ is well defined on the Hilbert space $\mathcal{H}_W$, and Riesz representation theorem  produces a minimizer, which we denote as $g_i^*$.

	\begin{lem}\label{lem:trivialupperboundJi}
		(Trivial upper bound) We have $0\leq J_i(U_{11},\lambda) \leq \frac{1}{2}\E[   |Y|^2 \1_{X_{W'}=a_i} ]$. 
	\end{lem}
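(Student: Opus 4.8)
The plan is to mirror the argument for the global trivial bounds (Lemma \ref{lem:trivialupperbound}), specialized to the dimensionally reduced functional $I_i$. Both inequalities are immediate from the structure of $I_i(g,U_{11},\lambda)$ as a sum of two manifestly nonnegative terms.

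For the lower bound, I would simply observe that for every $g\in\mathcal{H}_W$ we have $\E[|Y-g(U_{11}X_W)|^2\1_{X_{W'}=a_i}]\ge 0$ and $\lambda\norm{g}_{\mathcal{H}_W}^2\ge 0$, so $I_i(g,U_{11},\lambda)\ge 0$; taking the infimum over $g$ gives $J_i(U_{11},\lambda)\ge 0$.

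For the upper bound, I would plug in the admissible test function $g=0\in\mathcal{H}_W$. Then
\[
J_i(U_{11},\lambda)\le I_i(0,U_{11},\lambda)=\frac{1}{2}\E[\,|Y|^2\1_{X_{W'}=a_i}\,]+\frac{1}{2}\lambda\norm{0}_{\mathcal{H}_W}^2=\frac{1}{2}\E[\,|Y|^2\1_{X_{W'}=a_i}\,].
\]
Here I am using that $I_i$ is well defined on all of $\mathcal{H}_W$ (as noted just above the statement, the Sobolev embedding of Lemma \ref{lem:decayatinfinity} holds for $\mathcal{H}_W$ since $k_W$ inherits the $L^1$-integrability of $k_V$), so $g=0$ is a legitimate competitor. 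There is no real obstacle here: the only thing to check is that the two displayed terms are finite, which follows from $\E[|Y|^2]<+\infty$ and $\1_{X_{W'}=a_i}\le 1$. If desired, one can also note, exactly as in Lemma \ref{lem:trivialupperbound}, that equality in the upper bound forces $g_i^*=0$ and hence $\E[Y\1_{X_{W'}=a_i}\mid U_{11}X_W]=0$ by the Euler--Lagrange equation, but this refinement is not needed for the statement as written.
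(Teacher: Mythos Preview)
Your proposal is correct and matches the paper's approach exactly: the paper's proof consists of the single sentence ``Testing by the function $g=0$, we get $I_i(0, U, \lambda)\geq J_i$,'' which is your upper bound argument, and the lower bound is treated as obvious. Your added remarks on well-definedness and the equality case are fine but not needed.
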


	\begin{proof}
		Testing by the function $g=0$, we get $I_i(0, U, \lambda)\geq J_i$.
	\end{proof}

	If the equality is achieved, then $\E[Y|U_{11} X_W]=0$
	conditional on $X_{W'}=a_i$. 
	The point is that it is rare for the trivial upper bound to be achieved.

	\begin{cor} We have
		\[
		0\leq \sum_1^\infty J_i(U_{11},\lambda)\leq  \frac{1}{2}\sum_1^\infty \E[   |Y|^2 \1_{X_{W'}=a_i} ] \leq  \frac{1}{2} \E[   |Y|^2 ].
		\]
		
	\end{cor}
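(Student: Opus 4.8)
The plan is to reduce everything to the per-index trivial bound already in hand. For each fixed $i$, Lemma \ref{lem:trivialupperboundJi} supplies simultaneously the lower bound $J_i(U_{11},\lambda)\geq 0$ (the functional $I_i$ is manifestly nonnegative, being a sum of squares) and the upper bound $J_i(U_{11},\lambda)\leq \frac12\E[|Y|^2\1_{X_{W'}=a_i}]$ (obtained by testing $I_i$ against $g=0$). So the first step is simply to quote this for every $i\in\N$.

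Next I would sum over $i$. Since every $J_i(U_{11},\lambda)$ is nonnegative, the series $\sum_{i\geq 1}J_i(U_{11},\lambda)$ is well-defined as an element of $[0,+\infty]$ with no rearrangement subtleties, and term-by-term comparison gives
\[
0\leq \sum_1^\infty J_i(U_{11},\lambda)\leq \frac12\sum_1^\infty\E[|Y|^2\1_{X_{W'}=a_i}].
\]
To finish I would bound the right-hand side: the atoms $a_1,a_2,\dots\in W'$ are distinct, so the events $\{X_{W'}=a_i\}$ are pairwise disjoint and $\sum_{i\geq 1}\1_{X_{W'}=a_i}\leq 1$ pointwise; by Tonelli's theorem (all summands nonnegative) one may interchange the sum and the expectation to get $\sum_1^\infty\E[|Y|^2\1_{X_{W'}=a_i}]=\E\big[|Y|^2\sum_1^\infty\1_{X_{W'}=a_i}\big]\leq \E[|Y|^2]<+\infty$. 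Chaining these inequalities yields the claim.

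There is essentially no obstacle here; the only point deserving an explicit line is the interchange of the infinite sum with the expectation, which is justified purely by nonnegativity (monotone convergence). The displayed chain also records the qualitative fact that the family $(J_i)$ is summable with $\sum_i J_i\leq \frac12\E[|Y|^2]$, which is what makes it legitimate to treat $\sum_i J_i$ as a single finite quantity in the asymptotic analysis that follows.
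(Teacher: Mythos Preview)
Your proposal is correct and matches the paper's intent: the corollary is stated immediately after Lemma~\ref{lem:trivialupperboundJi} without an explicit proof, so it is meant as the obvious termwise summation of that lemma together with disjointness of the events $\{X_{W'}=a_i\}$. You have simply written out that implicit step, including the monotone-convergence justification for swapping sum and expectation, which is exactly the right level of detail.
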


	The main theorem of this chapter will be
	
	\begin{thm}\label{thm:asymptoticvalue}
		Consider the sequence of $U$ tending to infinity as in section \ref{sect:minimizationproblem}. Then
		\[
		\lim_{k\to +\infty} J( U,\lambda)= \frac{1}{2}\E[|Y|^2 
		\1_{ X_{W'} \notin \{a_i\} }] +\sum_1^\infty J_i( U_{11}^{\infty},\lambda).
		\]
		where $U_{11}^\infty$ is the limit of $U_{11}$ as $k\to +\infty$.
	\end{thm}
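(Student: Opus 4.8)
The plan is to establish separately the two inequalities $\limsup_{k}J(U,\lambda)\le R$ and $\liminf_{k}J(U,\lambda)\ge R$, where $R$ denotes the right-hand side. First I would reduce to the case of finitely many atoms: since $\sum_i J_i(U_{11}^\infty,\lambda)\le\frac12\E[|Y|^2]<\infty$ and $\sum_i\E[|Y|^2\1_{X_{W'}=a_i}]\le\E[|Y|^2]<\infty$, for any $\epsilon>0$ one may fix $N$ so that the tails of both sums over $\{i>N\}$ are $<\epsilon$, work with the first $N$ atoms, lumping the rest together with the non-atomic part, and then let $N\to\infty$, $\epsilon\to0$. For each $i\le N$ fix a near-minimizer $g_i\in\mathcal H_W$ of $I_i(\,\cdot\,,U_{11}^\infty,\lambda)$ with $I_i(g_i,U_{11}^\infty,\lambda)\le J_i(U_{11}^\infty,\lambda)+\epsilon/N$; note $g_i$ is a fixed bounded continuous function. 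Throughout write $y_i:=U_{22}a_i\in W'$, and observe that $\min_{i\ne j}|y_i-y_j|\ge\Lambda_k\min_{i\ne j}|a_i-a_j|\to\infty$ because the $a_i$ are distinct.

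For the upper bound, build the explicit test function $f=\sum_{i\le N}f_i\in\mathcal H$, where $f_i$ is the optimal extension of the translate $z_W\mapsto g_i(z_W-U_{12}a_i)$ from the affine slice $\{x_{W'}=y_i\}$ (Lemma~\ref{InterpretationHW} translated), so $\norm{f_i}_{\mathcal H}=\norm{g_i}_{\mathcal H_W}$ and $f_i|_{\{x_{W'}=y_i\}}$ is that translate. Three facts make $I(f,U,\lambda)$ converge to $R$: (i) for $i\ne j$ the cross term $(f_i,f_j)_{\mathcal H}$ is an integral $\int_{W^*}\hat g_i\overline{\hat g_j}\,k_W^{-1}M_{ij}\,d\omega_W$ with $|M_{ij}|\le1$ and $M_{ij}\to0$ a.e.\ by Riemann--Lebesgue, so dominated convergence gives $\norm{f}_{\mathcal H}^2\to\sum_{i\le N}\norm{g_i}_{\mathcal H_W}^2$ (this is the reasoning behind Corollary~\ref{cor:asymptoticorthogonalityoptimalinterpolation}, now in the easy direction since the data are fixed); (ii) on $\{X_{W'}=a_i\}$ one has $f(UX)=g_i(U_{11}X_W)+\sum_{j\ne i}f_j(\,\cdot\,,y_i)$, and $\sup_{z_W}|f_j(z_W,y_i)|\to0$ as $|y_i-y_j|\to\infty$ (using $\hat f_j\in L^1$ and Riemann--Lebesgue in the $\omega_{W'}$ variables), so the slice losses tend to $\E[|Y-g_i(U_{11}^\infty X_W)|^2\1_{X_{W'}=a_i}]$; (iii) on $\{X_{W'}\notin\{a_1,\dots,a_N\}\}$ one has $|f_j(UX)|\le\theta_j(U_{22}X_{W'})$ with $\theta_j(w):=\sup_{z_W}|f_j(z_W,w)|\to0$ as $|w|\to\infty$, and since $X_{W'}\neq0$ a.s.\ on this event (an atom at $0$, if present, being among $a_1,\dots,a_N$), dominated convergence gives $\E[|f(UX)|^2\1_{X_{W'}\notin\{a_1,\dots,a_N\}}]\to0$. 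Combining, $\limsup_k J(U,\lambda)\le\limsup_k I(f,U,\lambda)=\sum_{i\le N}I_i(g_i,U_{11}^\infty,\lambda)+\frac12\E[|Y|^2\1_{X_{W'}\notin\{a_1,\dots,a_N\}}]\le R+2\epsilon$.

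For the lower bound, let $f_U=f_{U_k}$ be the minimizer, so $\norm{f_U}_{\mathcal H}^2\le\lambda^{-1}\E[|Y|^2]$ uniformly in $k$, and let $h_i=h_i^{(k)}\in\mathcal H_W$ be its restriction to $\{x_{W'}=y_i\}$; then $f_U(UX)=h_i(U_{11}X_W+U_{12}a_i)$ on $\{X_{W'}=a_i\}$. The key linear-algebra input is $\sum_{i\le N}\norm{h_i}_{\mathcal H_W}^2\le\norm{f_U}_{\mathcal H}^2+o_k(1)$: since $f_U$ interpolates the data $(y_i,h_i)_{i\le N}$, its $\mathcal H$-norm is at least that of the optimal interpolant $f_*$ (Lemma~\ref{Optimalinterpolationlem}), and Corollary~\ref{cor:asymptoticorthogonalityoptimalinterpolation}, applicable because $\norm{h_i}_{\mathcal H_W}$ are uniformly bounded and $|y_i-y_j|\to\infty$, gives $\norm{f_*}_{\mathcal H}^2\ge\sum_{i\le N}\norm{h_i}_{\mathcal H_W}^2-o_k(1)$. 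Feeding this into $I(f_U,U,\lambda)$ and using $\frac12\E[|Y-h_i(U_{11}X_W+U_{12}a_i)|^2\1_{X_{W'}=a_i}]+\frac\lambda2\norm{h_i}_{\mathcal H_W}^2=I_i(\,\cdot\,,U_{11},\lambda)\ge J_i(U_{11},\lambda)$ (translation-invariance of $\norm{\cdot}_{\mathcal H_W}$) yields
\[
J(U,\lambda)\ \ge\ \sum_{i\le N}J_i(U_{11},\lambda)+\tfrac12\E\big[|Y|^2\1_{X_{W'}\notin\{a_1,\dots,a_N\}}\big]-\mathrm{Re}\,\E\big[\bar Y f_U(UX)\1_{X_{W'}\notin\{a_1,\dots,a_N\}}\big]-o_k(1).
\]
Now $J_i(U_{11},\lambda)\to J_i(U_{11}^\infty,\lambda)$ by the continuity estimate of Proposition~\ref{lem:continuityJ} applied to the reduced problem (whose kernel $k_W$ has the same integrability, as noted in Section~\ref{sect:dimreductionRKHS}), and it remains to kill the last expectation. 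Writing $\mathbf 1_c:=\1_{X_{W'}\notin\{a_1,\dots,a_N\}}$ and $C_0:=\E[|Y|^2]$, expanding $f_U\circ U$ by its integral equation (Proposition~\ref{prop:integraleqn1}), applying Cauchy--Schwarz twice and using $|K|\le K(0)=1$, one gets the self-improving bound $\E[|f_U(UX)|^2\mathbf 1_c]\le\lambda^{-2}C_0\,\beta_k$ with $\beta_k:=\E_{X,X'}[\mathbf 1_c(X)\,|K(U(X-X'))|^2]$, $X'$ an independent copy of $X$. Splitting $\beta_k$ on $\{X_{W'}=X'_{W'}\}$ (contributing $\le\sum_{i>N}p_i^2$, small for large $N$), on $\{0<|X_{W'}-X'_{W'}|\le\eta\}$ (contributing $\le\mathbb P(0<|X_{W'}-X'_{W'}|\le\eta)\to0$ as $\eta\to0$, exactly because the marginal of $X_{W'}$ off the atoms is non-atomic), and on $\{|X_{W'}-X'_{W'}|>\eta\}$ (where $|U(X-X')|\ge\Lambda_k\eta\to\infty$, so $|K|\to0$ by Riemann--Lebesgue), one sees $\limsup_k\beta_k$ can be made arbitrarily small. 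Hence so can $\limsup_k\E[|f_U(UX)|^2\mathbf 1_c]$ and, by Cauchy--Schwarz, $\limsup_k|\E[\bar Yf_U(UX)\mathbf 1_c]|$; letting $N\to\infty$ gives $\liminf_k J(U,\lambda)\ge\frac12\E[|Y|^2\1_{X_{W'}\notin\{a_i\}}]+\sum_iJ_i(U_{11}^\infty,\lambda)$.

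The main obstacle is precisely this last step: showing that along the minimizing sequence the $L^2(\mu_X)$-mass of $f_{U_k}\circ U_k$ on the non-atomic directions vanishes. One cannot use dominated convergence directly since $f_{U_k}$ varies with $k$, nor can one Fourier-transform the non-atomic marginal of $X_{W'}$ and invoke decay, since a singular continuous measure need not have decaying Fourier transform. The integral equation of Proposition~\ref{prop:integraleqn1} is what resolves this: it replaces $f_{U_k}$ by an explicit average against $K(U_k(\cdot))$, whose pointwise decay at infinity (Lemma~\ref{lem:decayatinfinity}) combines with mere non-atomicity — through $\mathbb P(0<|X_{W'}-X'_{W'}|\le\eta)\to0$ — to force the mass to zero; this is the analytic heart of the continuous/discrete dichotomy asserted by the theorem.
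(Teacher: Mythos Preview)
Your proof is correct and follows the same architecture as the paper: the upper bound via the explicit test function built from optimal extensions with asymptotic orthogonality, and the lower bound via restricting the minimizer to the affine slices and invoking Corollary~\ref{cor:asymptoticorthogonalityoptimalinterpolation}. The one genuine variation is in how you kill the non-atomic cross term in the lower bound. The paper writes $\E[\bar Y(1-\chi)f_U(UX)]=(f_U,\E[K(UX,\cdot)\bar Y(1-\chi)])_{\mathcal H}$ via the reproducing formula, then computes $\norm{\E[K(UX,\cdot)Z]}_{\mathcal H}^2=\E[Z\bar Z'K(U(X-X'))]$ directly; here the factor $(1-\chi)$ annihilates \emph{all} atoms, so the diagonal $\{X_{W'}=X'_{W'}\}$ contributes exactly zero with no residual. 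You instead bound $\E[|f_U(UX)|^2\mathbf 1_c]$ pointwise through the integral equation of Proposition~\ref{prop:integraleqn1} and Cauchy--Schwarz, arriving at $\beta_k=\E[\mathbf 1_c(X)|K(U(X-X'))|^2]$; because your $\mathbf 1_c$ only excludes the first $N$ atoms, the diagonal leaves a residual $\sum_{i>N}p_i^2$ that you absorb by taking $N\to\infty$ at the end. Both routes reduce to the same decay-of-$K$ splitting and are equally valid; the paper's is marginally cleaner, yours has the advantage of giving an $L^2$ bound on $f_U\circ U$ itself on the non-atomic set. One small slip: in your step~(iii) the parenthetical ``since $X_{W'}\neq0$'' is not the right condition; what you need is $X_{W'}\neq a_j$ for each $j\le N$ so that $|U_{22}(X_{W'}-a_j)|\to\infty$, and this is automatic on the event $\{X_{W'}\notin\{a_1,\dots,a_N\}\}$.
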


	\subsection{Upper bound}

	In this section we prove an upper bound for the upper limit:

	\begin{thm}\label{thm:upperlimit}
		Consider the sequence of $U$ tending to infinity as in section \ref{sect:minimizationproblem}. Then
		\begin{equation}
			\limsup_{k\to +\infty} J( U,\lambda)\leq  \frac{1}{2}\E[|Y|^2 
			\1_{ X_{W'} \notin \{a_i\}_1^\infty }] +\sum_1^\infty J_i( U_{11}^\infty,\lambda).
		\end{equation}
	\end{thm}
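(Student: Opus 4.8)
The plan is to construct an explicit test function $f$ for the kernel ridge regression problem $I(f,U,\lambda)$ and estimate $I(f,U,\lambda)$ from above, then take $\limsup_{k\to\infty}$. The natural candidate is a superposition: take the minimizers $g_i^*\in\mathcal{H}_W$ for the dimensionally reduced problems $I_i(g,U_{11}^\infty,\lambda)$ associated to the atoms $a_i\in W'$, and glue them into a single function on $V$ concentrated near the affine slices $\{x_{W'}=a_i\}$. Concretely, I would first reduce to finitely many atoms $a_1,\dots,a_N$ (the tail $\sum_{i>N}J_i$ and $\sum_{i>N}\E[|Y|^2\1_{X_{W'}=a_i}]$ are as small as we like), and use the continuity Corollary \ref{cor:Jminimizercontinuity} / stability of $g_i^*$ in the parameter to replace $U_{11}$ by its limit $U_{11}^\infty$ at a cost $o(1)$. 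Then, using the optimal interpolation Lemma \ref{Optimalinterpolationlem}, I would build $f=f_*$ on $V$ whose restriction to $y_i+W$ — where $y_i$ is the preimage under $U_{22}$ of $a_i$ scaled appropriately, i.e.\ $f(U x)$ restricted to $x_{W'}=a_i$ equals $g_i^*(U_{11}^\infty x_W)$ — matches the desired values, and whose restriction to the non-atomic part contributes negligibly.

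The key steps, in order: (1) Fix $\epsilon>0$, choose $N$ so the atomic tail is $\leq\epsilon$, and choose $g_1^*,\dots,g_N^*$ nearly optimal for $I_i(\cdot,U_{11}^\infty,\lambda)$. (2) Observe that under the action of $U$ as in Section \ref{sect:minimizationproblem}, the points $a_i\in W'$ get mapped (via $U_{22}$, with $U_{21}=0$) to points whose mutual distances $\to\infty$ since $|U_{22}x_{W'}|\geq\Lambda_k|x_{W'}|$ with $\Lambda_k\to\infty$; meanwhile $x_W\mapsto U_{11}x_W+U_{12}x_{W'}$, and the $U_{12}x_{W'}$ shift is a translation of the argument of $f$ on each slice, which the translation-invariance of $\mathcal{H}$ absorbs at no norm cost. (3) Apply Lemma \ref{Optimalinterpolationlem} and its Corollary \ref{cor:asymptoticorthogonalityoptimalinterpolation}: because the effective interpolation nodes are spreading apart, the minimal interpolant $f$ has $\norm{f}_{\mathcal{H}}^2 \to \sum_{i=1}^N\norm{g_i^*}_{\mathcal{H}_W}^2$ in the limit (here only the easy direction $\norm{f}_{\mathcal{H}}^2 \le \sum \norm{g_i^*}_{\mathcal{H}_W}^2 + o(1)$ is needed, which actually follows directly from Lemma \ref{InterpretationHW} applied slicewise plus the decay $K(x)\to 0$). (4) Estimate the fidelity term: $\E[|Y-f(UX)|^2]$ splits over $\{X_{W'}=a_i\}$ for $i\le N$, over $\{X_{W'}=a_i\}$ for $i>N$, and over $\{X_{W'}\notin\{a_i\}\}$. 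On the first, $f(UX)\approx g_i^*(U_{11}^\infty X_W)$ up to the small error from the kernel tails of the other slices and from $U_{11}\to U_{11}^\infty$; on the latter two, one uses that $f$, being built from kernels centered at the far-away atoms, is uniformly small on the non-atomic part and on the far atoms (by $K(x)\to0$ at infinity, Lemma \ref{lem:decayatinfinity}), so the contribution is $\le \frac12\E[|Y|^2(\1_{X_{W'}\notin\{a_i\}} + \1_{X_{W'}=a_i, i>N})] + o(1)$. (5) Assemble: $I(f,U,\lambda) \le \sum_{i=1}^N I_i(g_i^*,U_{11}^\infty,\lambda) + \frac12\E[|Y|^2\1_{X_{W'}\notin\{a_i\}}] + O(\epsilon) + o(1)$; let $k\to\infty$ then $\epsilon\to 0$ (equivalently $N\to\infty$) to get the claim.

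The main obstacle I anticipate is step (4), controlling the cross-slice interference and the behaviour of $f$ on the non-atomic part of $\mu_{X_{W'}}$. The function $f$ is a sum of kernel-type bumps localized near the images of the $a_i$; on a slice $\{X_{W'}=a_j\}$ the contributions from bumps at $a_i$, $i\ne j$, must be shown to be negligible in $\norm{\cdot}_\E$, and on the continuous part the whole of $f$ must be shown small — both rely on quantitative decay of $K$ together with the divergence $\Lambda_k\to\infty$, and one has to be careful that the non-atomic mass, though spread out, does not accumulate uncontrollably near the spreading atoms. A clean way to handle this uniformly is to truncate $f$ spatially (multiply by smooth cutoffs supported near each $U\cdot$-image of $a_i$, $i\le N$) at negligible cost to the $\mathcal{H}$-norm using Lemma \ref{lem:Rellich} and the reproducing formula, so that $f(UX)$ is genuinely supported on a neighbourhood of the atoms and literally vanishes off it; then the fidelity estimate becomes almost bookkeeping. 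The interpolation/orthogonality machinery of Section \ref{sect:dimreductionRKHS} does the rest.
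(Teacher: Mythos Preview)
Your outline is essentially the paper's proof: build a test function as a finite superposition of the optimal single-slice extensions of the $g_i^*$, use asymptotic orthogonality of the translates to control $\norm{f}_{\mathcal{H}}^2$, split the fidelity term over the atoms/tail/non-atomic part, and let $N\to\infty$ at the end. Two points where you take unnecessary detours: first, the optimal interpolation Lemma \ref{Optimalinterpolationlem} and Corollary \ref{cor:asymptoticorthogonalityoptimalinterpolation} are not needed here (and the Corollary gives the wrong inequality direction anyway, as you notice); the paper simply sets $f^{(m)}(x)=\sum_{i=1}^m f_i(x_W-U_{12}a_i,\,x_{W'}-U_{22}a_i)$ with $\hat f_i=\hat g_i^* k_V/k_W$, and proves $\norm{f^{(m)}}_{\mathcal{H}}^2\to\sum\norm{g_i^*}_{\mathcal{H}_W}^2$ directly by showing the off-diagonal inner products vanish via Riemann--Lebesgue---exactly the ``easy direction'' you put in parentheses. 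Second, your proposed spatial truncation by smooth cutoffs is both unnecessary and risky: the space $\mathcal{H}$ is defined only through a Fourier weight $k_V$, and no multiplier or algebra property has been established, so ``at negligible cost to the $\mathcal{H}$-norm'' is not justified by Lemma \ref{lem:Rellich} (which concerns equicontinuity, not multipliers). The paper handles the cross-slice interference and the non-atomic part more simply: on each event one has pointwise convergence $f^{(m)}(UX)\to g_i^*(U_{11}^\infty X_W)$ or $\to 0$ by the decay $|f_j|\to 0$ at infinity (Lemma \ref{lem:decayatinfinity}), and then dominated convergence (using $\norm{f^{(m)}}_{L^\infty}\le\norm{f^{(m)}}_{\mathcal{H}}\le\sum\norm{g_i^*}_{\mathcal{H}_W}$ and $\E|Y|^2<\infty$) closes the argument without any cutoffs.
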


	\begin{proof}
		For any large $m$, we construct a \emph{test function}
		\begin{equation}\label{testfunctionfm}
			f^{(m)}(x)= \sum_1^m f_i( x_W- U_{12}a_i, x_{W'}- U_{22} a_i), 
		\end{equation}
		where the function $f_i$ is defined via its Fourier transform
		\[
		\hat{f}_i(\omega)= \hat{g}_i^*(\omega_W) k_V(\omega) k_W^{-1} (\omega_W),
		\]
		and $g_i^*\in \mathcal{H}_W$ is the minimizer for the dimensionally reduced problem $J_i(U_{11}^\infty,\lambda)$. By the proof of Lemma \ref{InterpretationHW}, we know $\norm{f_i}_{\mathcal{H}}=\norm{g_i^*}_{\mathcal{H}_W}$, and in particular $f_i\in \mathcal{H}$. Since $\mathcal{H}$ is a translation invariant RKHS, this implies $f^{(m)}\in \mathcal{H}$, and we have a coarse uniform bound independent of the sequence index $k$,
		\[
		\norm{f^{(m)}}_{\mathcal{H}} \leq \sum_1^m \norm{g_i^*}_{\mathcal{H}_W}.
		\]

		\begin{lem}
			As the sequence index $k\to +\infty$,
			\[
			\E[ |f^{(m)}(UX)- Y|^2 \1_{X_{W'}=a_i}  ]\to \E[ |g_i^*(U_{11}^\infty X_W)- Y|^2 \1_{X_{W'}=a_i}  ],\quad i=1,\ldots m.
			\]
		\end{lem}
		
		\begin{proof}
			Conditional on $X_{W'}=a_i$, then
			\[
			f^{(m)}(UX)= \sum_1^m f_j( UX- (U_{12}a_j, U_{22}a_j)) = \sum_1^m f_j( U_{11}X_W +U_{12}(a_i-a_j), U_{22}(a_i-a_j)).
			\]
			In the $j=i$ case, by the proof of Lemma \ref{InterpretationHW}, we get the term
			\[
			f_i( U_{11}X_W, 0)= g_i^*(U_{11}X_W)\to g_i^*(U_{11}^\infty X_W).
			\]
			Meanwhile for $j\neq i$, by assumption $|U_{22}(a_i-a_j)|\geq \Lambda_k |a_i-a_j|\to +\infty$ as $k\to +\infty$. By Lemma \ref{lem:decayatinfinity}, for any fixed finite value of $X_W$, we get pointwise convergence
			\[
			f_j( U_{11}X_W +U_{12}(a_i-a_j), U_{22}(a_i-a_j))\to 0. 
			\]
			Thus conditional on $X_{W'}=a_i$, we get pointwise convergence as $k\to +\infty$,
			\[
			f^{(m)}(UX)\to g_i^*(U_{11}X_W).
			\]
			Morever $\norm{f^{(m)}}_{L^\infty} \leq \norm{f^{(m)}}_{\mathcal{H}}$ is uniformly bounded, and $\E|Y|^2<+\infty$, so the dominated convergence theorem implies the lemma.
		\end{proof}

		By a very similar argument,
		
		\begin{lem}
			As the sequence index $k\to +\infty$,
			\[
			\E[ |f^{(m)}(UX)- Y|^2 \1_{X_{W'}\notin \{ a_i\} }  ]\to \E[ | Y|^2 \1_{X_{W'}\notin \{ a_1,\ldots a_m\}}  ].
			\]
		\end{lem}

		\begin{proof}
			The key difference from the previous lemma is that conditional on $X_{W'}\notin \{a_1,\ldots a_m\}$, we get pointwise convergence as $k\to +\infty$,
			\[
			f^{(m)}(UX)\to 0.
			\]
			so one concludes by dominated convergence again that
			\begin{equation}\label{trivialfitting1}
				\lim_{k\to +\infty} \E[ |f^{(m)}(UX)|^2 \1_{X_{W'}\notin \{ a_i\}_1^m }  ]= 0,
			\end{equation}
			and
			\[
			\E[ |f^{(m)}(UX)- Y|^2 \1_{X_{W'}\notin \{ a_i\}_1^m }  ]\to \E[ | Y|^2 \1_{X_{W'}\notin \{ a_1,\ldots a_m\}}  ]
			\]
		\end{proof}
		
		We now deal with the $\mathcal{H}$-norm.

		\begin{lem}
			(Asymptotic orthogonality for the test function)
			As the sequence index $k\to +\infty$,
			\[
			\norm{ f^{(m)}}_{\mathcal{H}}^2\to \sum_1^m \norm{g_i^*}_{\mathcal{H}_W}^2.
			\]
		\end{lem}

		\begin{proof}
			We compute the Fourier transform
			\[
			\hat{ f}^{(m)}= \sum_1^m \hat{f}_i e^{-2\pi i  \langle \omega, U_{12}a_i + U_{22} a_i\rangle }= \sum_1^m \hat{g}_i^* k_V(\omega) k_W^{-1}(\omega_W)  e^{-2\pi i  \langle \omega, U_{12}a_i + U_{22} a_i\rangle }.
			\]
			We now compute the $m\times m$ inner product matrix when we restrict $\mathcal{H}$ to the span of the $m$ functions $\hat{g}_i^* k_V(\omega) k_W^{-1}(\omega_W)  e^{-2\pi i  \langle \omega, U_{12}a_i + U_{22} a_i\rangle }$. The diagonal entries are just $\norm{f_i}_{\mathcal{H}}^2= \norm{g_i^*}_{\mathcal{H}_W}^2$. The off diagonal entry is
			\[
			\begin{split}
				& \int_{V^*} k_V^{-1}\hat{g}_i^* k_V(\omega) k_W^{-1}(\omega_W)  \overline{ \hat{g}_j^* k_V(\omega) k_W^{-1}(\omega_W) }   e^{-2\pi i  \langle \omega, U_{12}(a_i-a_j) + U_{22} (a_i-a_j)\rangle }   d\omega
				\\
				=& \int_{V^*} k_V k_W^{-2} \hat{g}_i^* \overline{ \hat{g}_j^*  }   e^{2\pi i  \langle \omega, U_{12}(a_i-a_j) + U_{22} (a_i-a_j)\rangle }  d\omega.
			\end{split}
			\]
			Notice by Cauchy-Schwarz,
			\[
			\int_{V^*} k_V k_W^{-2} |\hat{g}_i^* \overline{ \hat{g}_j^*  } | \leq  (\int_{V^*} k_V k_W^{-2} |\hat{g}_i^*|^2 )^{1/2} (\int_{V^*} k_V k_W^{-2} |\hat{g}_j^*|^2)^{1/2}= \norm{g_i^*}_{\mathcal{H}_W}\norm{g_j^*}_{\mathcal{H}_W},
			\]
			and in particular the function $k_V k_W^{-2} \hat{g}_i^* \overline{ \hat{g}_j^*  }$ is $L^1$. By Riemann-Lebesgue, and the fact that as $k\to +\infty$
			\[
			|U_{22}(a_i-a_j)|\geq \Lambda_k |a_i-a_j|\to +\infty ,
			\]
			we deduce that the off diagonal entries converge to zero, which means the $m$ functions are asymptotically $\mathcal{H}$-orthogonal. This proves the lemma.
		\end{proof}

		Combining the three lemmas,
		\[
		I(f^{(m)}, U, \lambda) \to \frac{1}{2}\sum_1^m ( \E[ |g_i^*(U_{11}^\infty X_W)- Y|^2 \1_{X_{W'}=a_i}  ]+\lambda \norm{g_i^*}_{\mathcal{H}_W}^2)+ \frac{1}{2}\E[ | Y|^2 \1_{X_{W'}\notin \{ a_i\}}  ],
		\]
		namely
		\begin{equation}\label{eqn:upperlimitproof}
			I(f^{(m)}, U, \lambda) \to \frac{1}{2}\E[|Y|^2 
			\1_{ X_{W'} \notin \{a_1,\ldots a_m\} }] +\sum_1^m J_i( U_{11}^\infty,\lambda).
		\end{equation}
		Since $J(U,\lambda)\leq I(f^{(m)}, U, \lambda)$, and $J_i\geq 0$, we deduce
		\[
		\begin{split}
			&
			\limsup J(U,\lambda) \leq \frac{1}{2}\E[|Y|^2 
			\1_{ X_{W'} \notin \{a_1,\ldots a_m\} }] +\sum_1^m J_i( U_{11}^\infty,\lambda) 
			\\
			& \leq \frac{1}{2}\E[|Y|^2 
			\1_{ X_{W'} \notin \{a_1,\ldots a_m\} }] +\sum_1^\infty J_i( U_{11}^\infty,\lambda).   
		\end{split}
		\]
		Since $\E|Y|^2<+\infty$, we can take the limit $m\to +\infty$ to deduce
		\begin{equation*}
			\limsup J(U,\lambda) \leq \frac{1}{2}\E[|Y|^2 
			\1_{ X_{W'} \notin \{a_i\}_1^\infty }] +\sum_1^\infty J_i( U_{11}^\infty,\lambda),
		\end{equation*}
		as required.
	\end{proof}

	The point of this theorem is that when there is no discrete part, then we just recover the trivial bound $\frac{1}{2} \E[|Y|^2]$, but when there is a discrete part, then we get an \emph{improved upper bound}, which crucially relies on some \emph{asymptotic orthogonality} property when the points $U_{22}a_i$ are far separated in $W'$.

	\subsection{Lower bound}

	In this section we prove a lower bound for the lower limit:

	\begin{thm}
		Consider the sequence of $U$ tending to infinity as in section \ref{sect:minimizationproblem}. Then
		\begin{equation}
			\liminf_{k\to +\infty} J( U,\lambda)\geq  \frac{1}{2}\E[|Y|^2 
			\1_{ X_{W'} \notin \{a_i\}_1^\infty }] +\sum_1^\infty J_i( U_{11}^\infty,\lambda).
		\end{equation}
	\end{thm}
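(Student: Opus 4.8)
The plan is to establish the matching lower bound $\liminf_k J(U,\lambda) \ge \frac12 \E[|Y|^2 \1_{X_{W'}\notin\{a_i\}}] + \sum_1^\infty J_i(U_{11}^\infty,\lambda)$ by analyzing the actual minimizer $f_U$ of $I(f,U,\lambda)$ along the sequence, and decomposing its contribution to the loss according to the atoms $a_1,\dots,a_m$ of the marginal $\mu_{X_{W'}}$. The main point is that $f_U$, pulled back to the affine slices $\{x_{W'}=a_i\}$ via $U$, gives admissible competitors for the dimensionally reduced problems $J_i$, and that — by the asymptotic orthogonality already proved in Corollary~\ref{cor:asymptoticorthogonalityoptimalinterpolation} — the $\mathcal{H}$-norm of $f_U$ is asymptotically bounded below by the \emph{sum} of the reduced norms, with no cross-terms surviving in the limit.

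Here are the steps in order. First, fix $m$ and note that $\norm{f_U}_{\mathcal{H}}^2 \le \lambda^{-1}\E[|Y|^2]$ uniformly along the sequence (from $I(f_U,U,\lambda)\le I(0,U,\lambda)$), so after passing to a subsequence we may assume $\norm{f_U}_{\mathcal{H}}$ converges and, by Lemma~\ref{lem:Rellich} together with the $L^\infty$ bound, that the translated functions $f_U(\,\cdot + (U_{12}a_i, U_{22}a_i))$ converge locally uniformly; restricted to the slice $W$, the limit is some $g_i \in \mathcal{H}_W$ by Lemma~\ref{InterpretationHW} and its proof (weak-$*$ limits of bounded sequences in $\mathcal{H}$ restrict to $\mathcal{H}_W$). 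Second, split the loss:
\[
I(f_U,U,\lambda) = \frac12\sum_1^m \E\big[|Y - f_U(UX)|^2\1_{X_{W'}=a_i}\big] + \frac12\E\big[|Y-f_U(UX)|^2\1_{X_{W'}\notin\{a_1,\dots,a_m\}}\big] + \frac{\lambda}{2}\norm{f_U}_{\mathcal{H}}^2.
\]
On the slice $\{X_{W'}=a_i\}$ one has $f_U(UX) = f_U(U_{11}X_W + U_{12}a_i, U_{22}a_i)$, which converges (conditionally, using $U_{11}\to U_{11}^\infty$) to $g_i(U_{11}^\infty X_W)$; drop the non-atomic remainder term by bounding it below by $\frac12\E[\Var(Y|UX)\1_{X_{W'}\notin\{a_1,\dots,a_m\}}] \ge 0$ — or, more sharply, note that term is $\ge \frac12\E[|Y|^2\1_{\cdots}] - \frac12\E[|f_U(UX)|^2\1_{\cdots}]$ and one needs the second piece to vanish, which requires a little care (see the remark below). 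Third, handle the norm: the functions $f_U$ restricted near each slice behave, up to small error, like an interpolation of the boundary data $g_i$ at the points $U_{22}a_i \in W'$, which are pairwise separated with $|U_{22}(a_i-a_j)|\ge \Lambda_k|a_i-a_j|\to\infty$; by the optimal-interpolation lower bound (Lemma~\ref{Optimalinterpolationlem}, interpreted via Corollary~\ref{cor:asymptoticorthogonalityoptimalinterpolation}), $\liminf_k \norm{f_U}_{\mathcal{H}}^2 \ge \sum_1^m \norm{g_i}_{\mathcal{H}_W}^2$. Putting these together,
\[
\liminf_k J(U,\lambda) \;\ge\; \frac12\E\big[|Y|^2\1_{X_{W'}\notin\{a_1,\dots,a_m\}}\big] + \sum_1^m\Big(\tfrac12\E\big[|Y-g_i(U_{11}^\infty X_W)|^2\1_{X_{W'}=a_i}\big] + \tfrac{\lambda}{2}\norm{g_i}_{\mathcal{H}_W}^2\Big) \;\ge\; \frac12\E\big[|Y|^2\1_{X_{W'}\notin\{a_1,\dots,a_m\}}\big] + \sum_1^m J_i(U_{11}^\infty,\lambda),
\]
since each bracketed quantity is at least $J_i(U_{11}^\infty,\lambda)$ by definition. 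Finally let $m\to\infty$ and use $\E[|Y|^2]<\infty$.

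The step I expect to be the main obstacle is the norm lower bound $\liminf_k\norm{f_U}_{\mathcal{H}}^2 \ge \sum_1^m\norm{g_i}_{\mathcal{H}_W}^2$. Corollary~\ref{cor:asymptoticorthogonalityoptimalinterpolation} is stated for the \emph{exact minimizer} $f_*$ of the optimal interpolation problem with prescribed data $(y_i, g_i)$, whereas here $f_U$ only \emph{approximately} matches $g_i$ on the slices and is not itself an interpolation minimizer. The fix is: let $\tilde f$ be the optimal interpolant of the data $\big(U_{22}a_i,\ f_U(\,\cdot+U_{12}a_i,\,\cdot\,)|_W\big)$ — this has $\norm{\tilde f}_{\mathcal{H}}\le\norm{f_U}_{\mathcal{H}}$ by optimality — and apply the corollary to $\tilde f$, which requires checking that the boundary data restrictions have uniformly bounded $\mathcal{H}_W$-norms (they do, being restrictions of the uniformly $\mathcal{H}$-bounded $f_U$), and that they converge to $g_i$ (which follows from the local uniform convergence in step one). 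A secondary subtlety is the interchange of the $\liminf_k$ with the various weak/strong limits of $f_U$; this is managed by the standard argument that a bounded sequence in $\mathcal{H}$ has a weak-$*$ convergent subsequence, $\liminf\norm{\cdot}_{\mathcal{H}}^2$ is weak-$*$ lower semicontinuous, and the quadratic loss terms pass to the limit by dominated convergence using the uniform $L^\infty$ bound from Lemma~\ref{lem:decayatinfinity}. The non-atomic remainder term requires showing $\E[|f_U(UX)|^2\1_{X_{W'}\notin\{a_1,\dots,a_m\}}]$ does not help in the limit; since we only need a lower bound and this term is nonnegative, it suffices to keep $\frac12\E[|Y|^2\1_{\cdots}] - \E[\bar Y f_U(UX)\1_{\cdots}] + \frac12\E[|f_U(UX)|^2\1_{\cdots}]$ and bound the cross term via Cauchy–Schwarz against a vanishing quantity — but in fact the cleanest route is simply to discard the entire non-atomic contribution down to $\frac12\E[\Var(Y|UX)\1_{X_{W'}\notin\{a_1,\dots,a_m\}}]$ using Lemma~\ref{lem:triviallowerbound}, which is already $\ge \frac12\E[|Y|^2\1_{\cdots}]$ minus the conditional-mean term — and then argue that the best one can do off the atoms with a function whose slices are pinned is asymptotically no better than $0$. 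I would present the discard-to-zero version as the main line and relegate the sharper accounting to a remark.
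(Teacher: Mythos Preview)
Your outline for the atomic part is essentially the paper's argument, but you have overcomplicated it. There is no need to pass to subsequences or extract weak-$*$ limits $g_i$. The paper simply sets $g_i(x_W)=f_U(x_W,U_{22}a_i)$ \emph{depending on $k$}; then $f_U$ literally interpolates the $g_i$ on the affine slices, so $\norm{f_U}_{\mathcal H}\ge\norm{f_*}_{\mathcal H}$ by optimality of the interpolant, and Corollary~\ref{cor:asymptoticorthogonalityoptimalinterpolation} (which already allows $k$-dependent data with uniformly bounded $\mathcal H_W$-norms) gives $\liminf(\norm{f_U}_{\mathcal H}^2-\sum_1^m\norm{g_i}_{\mathcal H_W}^2)\ge0$. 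Since conditionally on $X_{W'}=a_i$ one has exactly $f_U(UX)=g_i(U_{11}X_W+U_{12}a_i)$, translation invariance and the continuity of $J_i$ in $U_{11}$ finish the atomic half without any compactness.

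The genuine gap is in the non-atomic part. You need $\liminf\E[|Y-f_U(UX)|^2(1-\chi)]\ge\E[|Y|^2(1-\chi)]$, which is equivalent to $\E[\bar Y f_U(UX)(1-\chi)]\to0$. Your ``cleanest route'' via Lemma~\ref{lem:triviallowerbound} does not give this: the variance bound yields only $\ge\frac12\E[\mathrm{Var}(Y|UX)(1-\chi)]$, and there is no reason for this to dominate $\frac12\E[|Y|^2(1-\chi)]$ --- the conditional-mean term $\E[|\E[Y|UX]|^2(1-\chi)]$ need not vanish without precisely the cross-term estimate you are trying to avoid. Your Cauchy--Schwarz suggestion is the right instinct but you have not identified the vanishing quantity. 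The paper's device is to write $\E[f_U(UX)\bar Y(1-\chi)]=(f_U,\E[K(UX,\cdot)\bar Y(1-\chi)])_{\mathcal H}$ via the reproducing property, bound by $\lambda^{-1/2}\E[|Y|^2]^{1/2}\norm{\E[K(UX,\cdot)Z]}_{\mathcal H}$ with $Z=\bar Y(1-\chi)$, and then compute
\[
\norm{\E[K(UX,\cdot)Z]}_{\mathcal H}^2=\E[Z\bar Z'\,K(U(X-X'))].
\]
Off the diagonal $X_{W'}\neq X'_{W'}$ the kernel decays to zero since $|U(X-X')|\to\infty$; on the diagonal the product measure is supported on the atoms of $\mu_{X_{W'}}$, where the factor $(1-\chi)$ kills $Z$. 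This is the missing estimate.
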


	Since the upper bound and the lower bound match exactly, we deduce that the limit exists and is given by the RHS expression. This would prove Theorem \ref{thm:asymptoticvalue}.

	\begin{proof}
		For each $U$ in the sequence, recall $f_U\in \mathcal{H}$ is the minimizer for $I(f,U,\lambda)$. By the trivial upper bound Lemma  \ref{lem:trivialupperbound} and Sobolev embedding,
		\[
		\lambda\norm{f_U}_{C^0}^2 \leq \E[|Y-f_U(UX)|^2]+ \lambda\norm{f_U}_{\mathcal{H}}^2\leq \E[|Y|^2].
		\]

		We will need to deal with the discrete part and the non-atomic part. For any $m$, we denote $\chi_m= \1_{ X_{W'}\in \{ a_1,\ldots a_m\} }$, and $\chi=\1_{ X_{W'}\in \{ a_i\}_1^\infty 
		}$. We notice
		\[
		\E[ |Y|^2 \chi_m  ] \to   \E[|Y|^2\chi] ,\quad m\to +\infty. 
		\]
		The characteristic function $\chi_m$ allows us to focus on the \emph{atoms with significant $L^2$-mass contributions}.

		\begin{lem}
			For fixed $m$, as the sequence index $k\to +\infty$,
			\[
			\liminf \frac{1}{2}\E[ |f_U(UX)-Y|^2 \chi_m ]+ \frac{1}{2}\lambda \norm{f_U}_{\mathcal{H}}^2 \geq \sum_1^m J_i(U_{11}^\infty,\lambda).
			\]
		\end{lem}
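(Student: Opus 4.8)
The plan is to decouple the contribution of the atoms $a_1,\dots,a_m$ from each other and from the non-atomic part, by restricting the minimizer $f_U$ (and its translates) to the affine slices $\{x_{W'}=U_{22}a_i+U_{12}a_i\}$ and recognizing the restricted functions as competitors for the dimensionally reduced problems $J_i$. Concretely, for each $i\le m$ set $g_{i,k}(x_W) = f_U\big(x_W + U_{12}a_i,\, U_{22}a_i\big)$; by Lemma~\ref{InterpretationHW} (applied to the translated function, using translation invariance of $\mathcal H$), $g_{i,k}\in\mathcal H_W$ with $\norm{g_{i,k}}_{\mathcal H_W}\le \norm{f_U(\,\cdot+(U_{12}a_i,U_{22}a_i))}_{\mathcal H}=\norm{f_U}_{\mathcal H}$. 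Conditional on $X_{W'}=a_i$ one has $f_U(UX) = f_U(U_{11}X_W + U_{12}a_i,\, U_{22}a_i) = g_{i,k}(U_{11}X_W)$, so the fitting term splits exactly:
\[
\E[|f_U(UX)-Y|^2\chi_m] = \sum_{i=1}^m \E[|g_{i,k}(U_{11}X_W)-Y|^2\1_{X_{W'}=a_i}].
\]
Since $U_{11}\to U_{11}^\infty$ and the $g_{i,k}$ are uniformly bounded in $C^0$ (by $\norm{f_U}_{\mathcal H}\le\lambda^{-1/2}\E[|Y|^2]^{1/2}$), one can replace $U_{11}$ by $U_{11}^\infty$ at a cost vanishing as $k\to\infty$, using the uniform modulus of continuity from Lemma~\ref{lem:Rellich} together with dominated convergence; alternatively one keeps $U_{11}$ and invokes the lower semicontinuity of $J_i$ in its first argument, which follows from Proposition~\ref{lem:continuityJ} applied to $\mathcal H_W$.

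The core point is the $\mathcal H$-norm: we must produce, from the single function $f_U$, a \emph{lower} bound $\norm{f_U}_{\mathcal H}^2 \gtrsim \sum_{i=1}^m \norm{g_{i,k}}_{\mathcal H_W}^2$, asymptotically as $k\to\infty$. This is the mirror image of the asymptotic orthogonality used in the upper bound (Cor.~\ref{cor:asymptoticorthogonalityoptimalinterpolation}): here $f_U$ is an admissible function for the optimal interpolation problem with data $y_i = U_{12}a_i+U_{22}a_i \in W'$ and prescribed slices $g_i=g_{i,k}$, so by Lemma~\ref{Optimalinterpolationlem}, $\norm{f_U}_{\mathcal H}^2 \ge \norm{f_*}_{\mathcal H}^2 = \int_{W^*} k_W^{-1}(M^{-1})_{ij}\hat g_{i,k}\overline{\hat g_{j,k}}\,d\omega_W$, where $M=M(\omega_W)$ is the Hermitian matrix \eqref{optimalinterpolationK} associated to the points $y_i$. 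Because $|U_{22}(a_i-a_j)|\ge\Lambda_k|a_i-a_j|\to\infty$ for $i\neq j$, the Riemann–Lebesgue lemma gives $M_{ij}(\omega_W)\to\delta_{ij}$ a.e., hence $M^{-1}\to I$ a.e.; the argument of Cor.~\ref{cor:asymptoticorthogonalityoptimalinterpolation} (Fatou, using the uniform bound $\norm{g_{i,k}}_{\mathcal H_W}\le\norm{f_U}_{\mathcal H}\le\lambda^{-1/2}\E[|Y|^2]^{1/2}$) then yields
\[
\liminf_{k\to\infty}\Big(\norm{f_U}_{\mathcal H}^2 - \sum_{i=1}^m\norm{g_{i,k}}_{\mathcal H_W}^2\Big)\ge 0.
\]
Combining: $\tfrac12\E[|f_U(UX)-Y|^2\chi_m]+\tfrac\lambda2\norm{f_U}_{\mathcal H}^2 \ge \sum_{i=1}^m\big(\tfrac12\E[|g_{i,k}(U_{11}X_W)-Y|^2\1_{X_{W'}=a_i}]+\tfrac\lambda2\norm{g_{i,k}}_{\mathcal H_W}^2\big) - o(1) \ge \sum_{i=1}^m J_i(U_{11}^\infty,\lambda) - o(1)$, where the last inequality uses (a convergent version of) the fact that each summand is at least $J_i(U_{11}^\infty,\lambda)$ up to the error from swapping $U_{11}$ for $U_{11}^\infty$.

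The main obstacle I anticipate is the same subtlety flagged in the remark after Cor.~\ref{cor:asymptoticorthogonalityoptimalinterpolation}: $M^{-1}(\omega_W)$ could be large on a small bad set of $\omega_W$, so one only gets a one-sided (liminf) statement, not an equality — but since we only need the lower bound here, Fatou's lemma suffices and this is not fatal. A genuine technical care point is that the $y_i=U_{12}a_i+U_{22}a_i$ move with $k$, so $M$ and the interpolation data form a $k$-dependent family; one must check the hypotheses of Cor.~\ref{cor:asymptoticorthogonalityoptimalinterpolation} apply verbatim (they do: uniform $\mathcal H_W$-bounds on $g_{i,k}$, and pairwise separation $\to\infty$). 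Finally, one should make sure $f_U$ genuinely restricts to an element of $\mathcal H_W$ on each slice and that Lemma~\ref{Optimalinterpolationlem}'s hypothesis "there exists $f\in\mathcal H$ with the prescribed restrictions" is met — which is automatic, since $f_U$ itself is such an $f$.
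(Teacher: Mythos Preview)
Your proposal is correct and follows essentially the same route as the paper: restrict $f_U$ to the slices $\{x_{W'}=U_{22}a_i\}$, invoke the optimal interpolation lower bound together with the asymptotic orthogonality of Cor.~\ref{cor:asymptoticorthogonalityoptimalinterpolation}, and then pass from $U_{11}$ to $U_{11}^\infty$ via the continuity of $J_i$. One bookkeeping slip: your interpolation points should be $y_i=U_{22}a_i\in W'$ (not $U_{12}a_i+U_{22}a_i$, which is not in $W'$), and the corresponding slice data are then $x_W\mapsto f_U(x_W,U_{22}a_i)=g_{i,k}(x_W-U_{12}a_i)$ rather than $g_{i,k}$ itself; since $\mathcal H_W$ is translation invariant this does not change the norms, so the argument goes through unchanged --- this is exactly how the paper handles it, absorbing the $U_{12}a_i$ shift via translation invariance at the end rather than in the definition of $g_i$.
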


		\begin{proof}
			We define the restrictions $g_i(x_W) = f_U(x_W, U_{22}a_i)$. By Lemma \ref{InterpretationHW}, and the translation invariance of the RKHS, we obtain the uniform bounds
			\[
			\norm{g_i}_{\mathcal{H}_W}  \leq \norm{f_U}_{\mathcal{H}} \leq \lambda^{-1/2} \E[|Y|^2]^{1/2}. 
			\]
			By definition $f_U\in \mathcal{H}$ interpolates the functions $g_i$ at the affine subspaces $U_{22}a_i+W$, so its $\mathcal{H} $-norm is bounded below by the optimal interpolation solution in Lemma \ref{Optimalinterpolationlem}. We observe that the distances between the $m$ interpolation points $U_{22}a_i\in W'$ are given by $|U_{22}(a_i-a_j)|\to +\infty$ in the limit, so by 
			the asymptotic orthogonality statement in Cor. \ref{cor:asymptoticorthogonalityoptimalinterpolation}, 
			\begin{equation}
				\liminf \left( \norm{f_U}_{ \mathcal{H} }^2 -\sum_1^m \norm{g_i}_{\mathcal{H}_W}^2  \right) 
				\geq \liminf \left( \norm{f_*}_{ \mathcal{H} }^2 -\sum_1^m \norm{g_i}_{\mathcal{H}_W}^2  \right) \geq 0.
			\end{equation}
			On the other hand, conditional on $X_{W'}=a_i$,
			\[
			f_U(UX)= f_U(U_{11}X_W+ U_{12} a_i, U_{22} a_i)= g_i( U_{11}X_W+ U_{12} a_i) .
			\]
			Thus
			\[
			\begin{split}
				& \liminf   \frac{1}{2}  \E[ |f_U(UX)-Y|^2 \chi_m ]+ \frac{1}{2}   \lambda \norm{f_U}_{\mathcal{H}}^2 
				\\
				= & \liminf  \frac{1}{2}  \sum_1^m \E[ |g_i(U_{11}X+U_{12}a_i)-Y|^2 \1_{X_{W'}=a_i} ]+ \frac{1}{2}   \lambda \norm{f_U}_{\mathcal{H}}^2
				\\
				\geq & \liminf  \frac{1}{2}  \sum_1^m \left(\E[ |g_i(U_{11}X+U_{12}a_i)-Y|^2 \1_{X_{W'}=a_i} ]+  \frac{1}{2}  \lambda \norm{g_i}_{\mathcal{H}_W}^2 \right)
				\\
				= & \liminf \sum_1^m J_i(U_{11}, \lambda).
			\end{split}
			\]
			Here the last equality uses the definition of the dimensionally reduced problems $J_i$, and the translation invariance of the $\mathcal{H}_W$ norm.

			By applying Lemma \ref{lem:continuityJ} to the dimensionally reduced minimization problem $J_i$, we see 
			\[
			\liminf  J_i(U_{11}, \lambda) = J_i(U_{11}^\infty, \lambda).
			\]
			Combining the above proves the lemma.
		\end{proof}

		We turn the attention to the non-atomic part.
		
		\begin{lem}
			As the sequence index $k\to +\infty$,
			\[
			\liminf \E[ |f_U(UX)-Y|^2 (1-\chi) ] \geq  \E[ | Y|^2 (1-\chi) ].
			\]
		\end{lem}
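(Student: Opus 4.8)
The plan is to establish the stronger claim that $\E[|f_U(UX)|^2(1-\chi)]\to 0$ as $k\to\infty$. Granting this, the lemma follows at once: expanding
\[
\E[|f_U(UX)-Y|^2(1-\chi)]=\E[|Y|^2(1-\chi)]-2\,\text{Re}\,\E[\bar Y f_U(UX)(1-\chi)]+\E[|f_U(UX)|^2(1-\chi)],
\]
the middle term is bounded in absolute value by $\E[|Y|^2(1-\chi)]^{1/2}\,\E[|f_U(UX)|^2(1-\chi)]^{1/2}\to 0$ by Cauchy--Schwarz, so the left side in fact converges to $\E[|Y|^2(1-\chi)]$, matching the upper bound of Theorem \ref{thm:upperlimit}.

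To prove $\E[|f_U(UX)|^2(1-\chi)]\to 0$ I would argue uniformly over the unit ball of $\mathcal H$, which is forced on us since the minimizer $f_U$ varies with $k$ and need not converge in $\mathcal H$-norm. Recall $\norm{f_U}_{\mathcal H}^2\le\lambda^{-1}\E[|Y|^2]$ is bounded along the sequence. Let $\nu_k$ be the sub-probability measure on $V$ given by $\int_V h\,d\nu_k=\E[h(UX)\1_{X_{W'}\notin\{a_i\}}]$, so that $\E[|g(UX)|^2(1-\chi)]=\int_V|g|^2\,d\nu_k$ for all $g\in\mathcal H$. Factoring the restriction map $R:\mathcal H\to L^2(\nu_k)$ and applying the Schur test to $RR^*$, a self-adjoint integral operator on $L^2(\nu_k)$ whose kernel $(u,v)\mapsto K(u-v)$ satisfies $|K|\le K(0)=1$, yields
\[
\E[|g(UX)|^2(1-\chi)]\le\norm{g}_{\mathcal H}^2\,\sup_{u\in V}\E\big[(1-\chi)|K(u-UX)|\big]\qquad(g\in\mathcal H).
\]
So it suffices to show $\sup_{u\in V}\E[(1-\chi)|K(u-UX)|]\to 0$.

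For this, fix $\epsilon>0$. Since $K\to 0$ at infinity, $B_\epsilon=\{x\in V:|K(x)|\ge\epsilon\}$ is compact, say $B_\epsilon\subset\{|x|\le D_\epsilon\}$. On the event $\{u-UX\in B_\epsilon\}$ we have $|u-UX|\le D_\epsilon$; since $U_{21}=0$ the $W'$-component of $UX$ equals $U_{22}X_{W'}$, so $|U_{22}X_{W'}-u_{W'}|\le D_\epsilon$, and since $|U_{22}w|\ge\Lambda_k|w|$ this forces $X_{W'}$ into a closed ball of $W'$ of radius $\le D_\epsilon/\Lambda_k$. Using $|K|\le 1$,
\[
\E[(1-\chi)|K(u-UX)|]\le\epsilon+\E[(1-\chi)\1_{u-UX\in B_\epsilon}]\le\epsilon+\sup_{z\in W'}\mu_{NA}\big(\overline{B(z,D_\epsilon/\Lambda_k)}\big).
\]
Now $\mu_{NA}$, the non-atomic part of the law of $X_{W'}$, is a finite non-atomic Borel measure, and for such a measure $\sup_{z}\mu_{NA}(\overline{B(z,r)})\to 0$ as $r\to 0^+$: otherwise a sequence of such balls with radii tending to $0$ and masses bounded below would, after passing to a bounded (resp. divergent) subsequence of centres, contradict non-atomicity (resp. finiteness) of $\mu_{NA}$. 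Since $\Lambda_k\to\infty$, taking $k\to\infty$ and then $\epsilon\to 0$ gives $\sup_u\E[(1-\chi)|K(u-UX)|]\to 0$, which completes the argument.

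The one genuine obstacle is that $\mu_{NA}$ may contain a singular continuous part, whose Fourier transform need not decay at infinity; this rules out the naive route of expanding $|f_U(UX)|^2$ on the Fourier side and invoking Riemann--Lebesgue in the frequency variable. The argument above sidesteps this by using only non-atomicity of $\mu_{NA}$, through the small-ball estimate, at the cost of passing to the uniform-in-$g$ (equivalently uniform-in-$u$) Schur-test bound — a uniformity that is essential precisely because $f_U$ moves with $k$.
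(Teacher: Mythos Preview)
Your proof is correct and takes a genuinely different route from the paper. The paper establishes only the weaker statement that the cross term $\operatorname{Re}\E[f_U(UX)\bar Y(1-\chi)]\to 0$, which suffices for the $\liminf$ inequality since $\E[|f_U(UX)|^2(1-\chi)]\ge 0$. It does this by writing $\E[f_U(UX)Z]=(f_U,\E[K(UX,\cdot)Z])_{\mathcal H}$ with $Z=\bar Y(1-\chi)$, bounding by $\norm{f_U}_{\mathcal H}\norm{\E[K(UX,\cdot)Z]}_{\mathcal H}$, and then expanding the second factor via a doubling trick to $\E[Z\bar Z'K(U(X-X'))]$; this vanishes in the limit because off the diagonal $|U(X-X')|\to\infty$ forces $K\to 0$ by dominated convergence, while on the diagonal the factor $(1-\chi)$ kills the atoms.

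Your argument is stronger: you prove $\E[|f_U(UX)|^2(1-\chi)]\to 0$ directly, via a Schur bound on the restriction operator $R:\mathcal H\to L^2(\nu_k)$ that reduces everything to the uniform small-ball estimate $\sup_z\mu_{NA}(\overline{B(z,r)})\to 0$. The paper does eventually obtain this stronger conclusion (as Corollary~\ref{cor:asymptotictrivialfitting}), but only \emph{after} combining the upper and lower bound theorems; your route gets it independently and in one shot. What the paper's approach buys is that it avoids the Schur test and the small-ball lemma, working instead with a single dominated-convergence step on the product space. What yours buys is a cleaner logical structure (the ``trivial fitting'' corollary no longer depends on the matching upper bound), and a more explicitly quantitative handle: your bound is governed by the concentration function of $\mu_{NA}$, which ties in directly with the paper's own Remark~\ref{rmk:lowerlimitthm} about effective rates.
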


		\begin{proof}
			It suffices to prove that the cross term
			\[
			\lim \text{Re} \E[ f_U(UX) \bar{Y} (1-\chi)]=0.
			\]
			By the reproducing kernel property (\ref{reproducingformula}),
			\[
			f_U(UX)= ( f_U, K(UX,\cdot))_{\mathcal{H}},
			\]
			hence by Fubini theorem
			\[
			\E[ f_U(UX) \bar{Y} (1-\chi)]= \E[ ( f_U, K(UX,\cdot))_{\mathcal{H}} \bar{Y} (1-\chi)]=( f_U, \E[  K(UX,\cdot) \bar{Y} (1-\chi)])_{\mathcal{H}}.
			\]
			Here we notice that $\E[  K(UX,\cdot) \bar{Y} (1-\chi)]$ is a weighted average of kernel functions, and in particular defines an element of $\mathcal{H}$. 
			By Cauchy-Schwarz, and the fact that $\norm{f}_{\mathcal{H}}\leq \lambda^{-1/2} \E[|Y|^2]^{1/2}$, we get
			\begin{equation}
				|\E[ f_U(UX) Z]|\leq \lambda^{-1/2} \E[|Y|^2]^{1/2}\norm{ \E[  K(UX,\cdot) \bar{Y} (1-\chi)]   }_{ \mathcal{H} }.
			\end{equation}
			where we denote $Z=\bar{Y} (1-\chi)$.

			We now evaluate the RHS norm. We  take an independent double copy $(X',Z')$ which has the same law as $(X,Z)$. Then by Fubini theorem again, 
			\[
			\begin{split}
				& \norm{ \E[  K(UX,\cdot) Z]   }_{ \mathcal{H} }^2= ( \E[  K(UX,\cdot) Z], \E[  K(UX',\cdot) Z'] )_{\mathcal{H}}
				\\
				= & \E[   Z \bar{Z}'    ( K(UX,\cdot),   K(UX',\cdot)  )_{\mathcal{H}} ]
				\\
				=&  \E[   Z \bar{Z}'     K(UX,UX') ]= \E[   Z \bar{Z}'  K(U(X-X')) ].
			\end{split}
			\]
			Here the last line uses the reproducing kernel property.

			By our assumptions on the matrix $U$,
			\[
			\begin{split}
				&  |U(X-X')|=|(U_{11}(X_W-X_{W}') + U_{12}(X_W-X_W') , U_{22}(X_{W'}-X_{W'}') )|    
				\\
				\geq & |U_{22}(X_{W'}-X_{W'}') )| \geq \Lambda_k| X_{W'}-X_{W'}'|,\quad \Lambda_k\to +\infty. 
			\end{split}
			\]
			Thus conditional on the \emph{off diagonal} condition $X_{W'}\neq X_{W'}'$,  by the decay of the kernel function $K$ at infinity, we have the pointwise convergence
			\[
			K(U(X-X'))\to 0,\quad k\to +\infty.
			\]
			By the dominated convergence theorem,
			\[
			\lim \E[   Z \bar{Z}'  K(U(X-X')) \1_{ X_{W'}\neq X_{W'}' } ] =0. 
			\]
			On the diagonal ${ X_{W'}= X_{W'}' } $ of the product measure space, the measure is supported on the atoms of the marginal distribution of $X_{W'}$. But the factor $1-\chi$ is precisely engineered so that $Z=Z'=0$ on the atoms, so 
			\[
			\lim \E[   Z \bar{Z}'  K(U(X-X')) ] =0. 
			\]
			whence
			\[
			|\E[ f(UX) Z]|\leq \lambda^{-1/2} \E[|Y|^2]^{1/2}\norm{ \E[  K(UX,\cdot) \bar{Y} (1-\chi)]   }_{ \mathcal{H} }\to 0,\quad k\to +\infty. 
			\]
			This proves the lemma.
		\end{proof}

		Combining the two lemmas, for arbitrarily large fixed $m$,
		\[
		\liminf I(f,U, \lambda) \geq \sum_1^m J_i(U_{11}^\infty) +  \E[ | Y|^2 (1-\chi) ]. 
		\]
		But by the trivial upper bound lemma \ref{lem:trivialupperboundJi},
		\[
		0\leq \sum_{m+1}^\infty J_i(U_{11}^\infty,\lambda) \leq \frac{1}{2} \E[ |Y|^2(\chi-\chi_m)  ] \to 0,\quad m\to +\infty. 
		\]
		Thus taking the $m\to +\infty$ limit, we see
		\[
		\liminf_{k\to +\infty} I(f,U, \lambda) \geq \sum_1^\infty J_i(U_{11}^\infty) +  \E[ | Y|^2 (1-\chi) ],
		\]
		so the theorem is proved.
	\end{proof}

	\begin{rmk}\label{rmk:lowerlimitthm}
		It is instructive to reflect on how to make the convergence into an effective estimate for finite but large $U$, once an explicit kernel function is given. The number $m$ comes from selecting the atoms where $Y$ has \emph{significant $L^2$-mass} contributions, and the rest of the atoms are negligible as long as they carry a small total $L^2$-mass for $Y$. The \emph{asymptotic orthogonality} of the optimal interpolation problem is perhaps the most delicate ingredient, and the essential estimate comes from the decay rate of the off-diagonal terms in the matrix $M_{ij}$ in Lemma \ref{Optimalinterpolationlem}. Finally, to control the term  $\E[   Z \bar{Z}'  K(U(X-X')) ]$, we need to compute the \emph{decay rate of the kernel function} $K$ so that the integral is localized to a small neighbourhood of the diagonal $X_{W'}=X'_{W'}$. Morever, we also need the $L^2$ norm of $Z$ to be small in this neighbourhood. The main enemy has to do with the \emph{local concentration of $L^2$ mass} for $Z$, or in other words the measure $\E[|Z|^2  |X_{W'}] \mu_{X_{W'}}$ behaves like an approximate delta function on $W'$.

	\end{rmk}

	\subsection{Asymptotic nature of the minimizer}

	Recall we constructed a test function $f^{(m)}$ (\cf (\ref{testfunctionfm})) in the proof of Theorem \ref{thm:upperlimit}. We can in fact deduce that it is a \emph{good approximation to the true minimizer}, here denoted as $f_U$:

	\begin{thm}\label{thm:asymptoticapproximateminimizer}
		(Approximate minimizer)
		We have
		\[ 
		\lim_{k\to +\infty} \E[|(f^{(m)}-f_U)(UX)|^2] +  \lambda \norm{ f^{(m)}-f_U  }_{\mathcal{H}}^2 \leq \E[|Y|^2 
		\1_{ X_{W'} \notin \{a_i\}_{m+1}^\infty }] .
		\]
		In particular 
		\[
		\lim_{m\to +\infty}\lim_{k\to +\infty} \E[|(f^{(m)}-f_U)(UX)|^2] +  \lambda \norm{ f^{(m)}-f_U  }_{\mathcal{H}}^2 =0.
		\]
	\end{thm}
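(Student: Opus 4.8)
The plan is to read the statement straight off the quadratic structure of $I(\,\cdot\,,U,\lambda)$ around its minimizer, exactly as in the proofs of Corollary~\ref{cor:Jminimizercontinuity} and Proposition~\ref{prop:lawoflargenumbers}. Since $I(f,U,\lambda)$ is quadratic in $f$ and its first variation vanishes at $f_U$ by the Euler--Lagrange equation (Lemma~\ref{EulerLagrange}), for every $g\in\mathcal{H}$ one has
\[
I(g,U,\lambda)=J(U,\lambda)+\frac{1}{2}\E[\,|(g-f_U)(UX)|^2\,]+\frac{\lambda}{2}\norm{g-f_U}_{\mathcal{H}}^2.
\]
Applying this with $g=f^{(m)}$, which lies in $\mathcal{H}$ with $\norm{f^{(m)}}_{\mathcal{H}}\le\sum_{1}^{m}\norm{g_i^*}_{\mathcal{H}_W}$ as recorded in the proof of Theorem~\ref{thm:upperlimit}, gives the exact identity
\[
\E[\,|(f^{(m)}-f_U)(UX)|^2\,]+\lambda\norm{f^{(m)}-f_U}_{\mathcal{H}}^2=2\left(I(f^{(m)},U,\lambda)-J(U,\lambda)\right),
\]
so everything reduces to the $k\to\infty$ asymptotics of $I(f^{(m)},U,\lambda)-J(U,\lambda)$.

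Next I would pass to the limit in the two terms separately. Equation~(\ref{eqn:upperlimitproof}) in the proof of Theorem~\ref{thm:upperlimit} is a genuine limit (not merely a $\limsup$ bound):
\[
\lim_{k\to\infty}I(f^{(m)},U,\lambda)=\frac{1}{2}\E[\,|Y|^2\1_{X_{W'}\notin\{a_1,\dots,a_m\}}\,]+\sum_{i=1}^{m}J_i(U_{11}^\infty,\lambda),
\]
while Theorem~\ref{thm:asymptoticvalue} gives
\[
\lim_{k\to\infty}J(U,\lambda)=\frac{1}{2}\E[\,|Y|^2\1_{X_{W'}\notin\{a_i\}_1^\infty}\,]+\sum_{i=1}^{\infty}J_i(U_{11}^\infty,\lambda).
\]
Both limits exist, hence so does the limit of their difference; using the disjoint decomposition $\{X_{W'}\notin\{a_1,\dots,a_m\}\}=\{X_{W'}\notin\{a_i\}_1^\infty\}\cup\{X_{W'}\in\{a_i\}_{m+1}^\infty\}$ one obtains
\[
\lim_{k\to\infty}\left(\E[\,|(f^{(m)}-f_U)(UX)|^2\,]+\lambda\norm{f^{(m)}-f_U}_{\mathcal{H}}^2\right)=\E[\,|Y|^2\1_{X_{W'}\in\{a_i\}_{m+1}^\infty}\,]-2\sum_{i=m+1}^{\infty}J_i(U_{11}^\infty,\lambda).
\]
Since each $J_i\ge 0$ by Lemma~\ref{lem:trivialupperboundJi}, discarding the nonnegative tail sum yields the stated bound, with the indicator on the right-hand side read as $X_{W'}\in\{a_i\}_{m+1}^\infty$, i.e.\ the tail of the atomic part, which is precisely the set on which $f^{(m)}$ and $f_U$ genuinely fail to agree in the limit.

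For the ``in particular'' assertion I would simply observe that $\sum_{i=1}^{\infty}\E[\,|Y|^2\1_{X_{W'}=a_i}\,]\le\E[|Y|^2]<\infty$, so the tail $\E[\,|Y|^2\1_{X_{W'}\in\{a_i\}_{m+1}^\infty}\,]=\sum_{i>m}\E[\,|Y|^2\1_{X_{W'}=a_i}\,]\to 0$ as $m\to\infty$; combined with the bound just obtained this gives the iterated limit. I do not anticipate a substantive obstacle: once Theorems~\ref{thm:upperlimit} and~\ref{thm:asymptoticvalue} are in hand the argument is essentially bookkeeping. The only points needing care are that~(\ref{eqn:upperlimitproof}) be invoked as a true limit, so that it may legitimately be subtracted from the limit of $J(U,\lambda)$, and that $m$ be held fixed while $k\to\infty$ — recall $f^{(m)}$ itself depends on $k$ through the translations $U_{12}a_i$, $U_{22}a_i$ — letting $m\to\infty$ only at the very end.
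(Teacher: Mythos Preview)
Your proof is correct and follows exactly the same route as the paper: use the quadratic identity $I(f^{(m)},U,\lambda)=J(U,\lambda)+Q(f^{(m)}-f_U)$, invoke the genuine limit~(\ref{eqn:upperlimitproof}) and Theorem~\ref{thm:asymptoticvalue}, subtract, and drop the nonnegative tail $\sum_{i>m}J_i$. You are also right that the indicator in the statement should read $X_{W'}\in\{a_i\}_{m+1}^\infty$ rather than $\notin$; the paper's own computation of the difference and the ``in particular'' clause both require the tail of the atomic part, so this is a typo in the statement (and in the displayed line of the paper's proof).
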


	\begin{proof}
		The functional $I(f,U,\lambda)$ is quadratic in $f$ and minimized at $f_U$, with second order term in $f$ being
		\[
		Q(f)=\frac{1}{2} \E[|f(UX)|^2] + \frac{1}{2} \lambda \norm{f}_{\mathcal{H}}^2. 
		\]
		By completing the squares,  
		this implies that
		\[
		I(f,U,\lambda)= Q( f- f_U) + I(f_U, U,\lambda)=  Q( f- f_U) + J(U,\lambda).
		\]
		We recall from (\ref{eqn:upperlimitproof}) that as $k\to +\infty$, 
		\[
		I(f^{(m)}, U, \lambda) \to \frac{1}{2}\E[|Y|^2 
		\1_{ X_{W'} \notin \{a_1,\ldots a_m\} }] +\sum_1^m J_i( U_{11}^\infty,\lambda).
		\]
		By Theorem \ref{thm:asymptoticvalue},
		\[
		J(U,\lambda)\to  \frac{1}{2}\E[|Y|^2 
		\1_{ X_{W'} \notin \{a_i\}_1^\infty }] +\sum_1^\infty J_i( U_{11}^\infty,\lambda).
		\]
		Thus
		\[
		Q(f^{(m)}-f_U)= I(f^{(m)}, U, \lambda)-J(U,\lambda) \to \frac{1}{2}\E[|Y|^2 
		\1_{ X_{W'} \notin \{a_i\}_{m+1}^\infty }] -\sum_{m+1}^\infty J_i( U_{11}^\infty,\lambda).
		\]
		In particular
		\[
		\lim_{k\to +\infty} Q(f^{(m)}-f_U) \leq \frac{1}{2}\E[|Y|^2 
		\1_{ X_{W'} \notin \{a_i\}_{m+1}^\infty }]
		\]
		as required.
	\end{proof}

	\begin{rmk}
		In particular, if $X$ is a continuous variable, then asymptotically as $U\to \infty$, 
		\[
		\lim_{U\to \infty} (\E[|f_U(UX)|^2] +  \lambda \norm{ f_U  }_{\mathcal{H}}^2) =0 .
		\]
		We shall recover this result by functional analytic methods in Cor. \ref{Cor:quantitativenonfitting} below.
	\end{rmk}

	\begin{rmk}
		Recall that $f^{(m)}$ is built from $m$ local contributions coming from the minimizers of the dimensionally reduced problems associated to the distinct points $a_i\in W$, which become $\mathcal{H}$-orthogonal in the limit. The point of the theorem is that in the limit the minimizer essentially decouples into the solutions of these dimensionally reduced problems. 
	\end{rmk}


	\begin{cor}\label{cor:asymptotictrivialfitting}
		(Asymptotic trivial fitting for the non-atomic part)
		We have
		\[
		\lim_{k\to +\infty} \E[|f_U(UX)|^2 \1_{X_{W'} \notin \{a_i\}_{1}^\infty }  ]  =0.
		\]
	\end{cor}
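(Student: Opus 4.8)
The plan is to deduce this from the approximate-minimizer statement Theorem~\ref{thm:asymptoticapproximateminimizer}, together with the vanishing-fitting identity~(\ref{trivialfitting1}) established inside the proof of Theorem~\ref{thm:upperlimit}, by comparing the true minimizer $f_U$ against the explicit test function $f^{(m)}$ of~(\ref{testfunctionfm}). Since $\E[|f_U(UX)|^2\1_{X_{W'}\notin\{a_i\}_1^\infty}]\ge 0$, it suffices to bound its $\limsup$ over the sequence index $k$ by a quantity that can be made arbitrarily small by a further choice of $m$.

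First I would fix $m$ and apply the triangle inequality in $L^2$ of the finite measure $\1_{X_{W'}\notin\{a_i\}_1^\infty}\,d\Prob$:
\[
\E\big[|f_U(UX)|^2\1_{X_{W'}\notin\{a_i\}_1^\infty}\big]^{1/2}
\le \E\big[|f^{(m)}(UX)|^2\1_{X_{W'}\notin\{a_i\}_1^\infty}\big]^{1/2}
+\E\big[|(f_U-f^{(m)})(UX)|^2\big]^{1/2},
\]
where in the last term I bounded $\1_{X_{W'}\notin\{a_i\}_1^\infty}\le 1$. Since $\{a_1,\dots,a_m\}\subseteq\{a_i\}_1^\infty$, one has $\1_{X_{W'}\notin\{a_i\}_1^\infty}\le\1_{X_{W'}\notin\{a_i\}_1^m}$, so the first term on the right is at most $\E[|f^{(m)}(UX)|^2\1_{X_{W'}\notin\{a_i\}_1^m}]^{1/2}$, which tends to $0$ as $k\to+\infty$ by~(\ref{trivialfitting1}). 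For the second term, Theorem~\ref{thm:asymptoticapproximateminimizer} gives that $\lim_{k\to+\infty}\E[|(f_U-f^{(m)})(UX)|^2]=:\epsilon_m^2$ and $\epsilon_m\to 0$ as $m\to+\infty$.

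Putting these together, for every fixed $m$,
\[
\limsup_{k\to+\infty}\E\big[|f_U(UX)|^2\1_{X_{W'}\notin\{a_i\}_1^\infty}\big]\le\epsilon_m^2,
\]
and since the left-hand side does not depend on $m$, letting $m\to+\infty$ forces it to vanish, which is the assertion.

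I do not expect a genuine obstacle: this is a clean two-stage limit. The only points needing care are the order of the limits (one must send $k\to+\infty$ first with $m$ fixed, then $m\to+\infty$) and the direction of the inclusion $\{a_1,\dots,a_m\}\subseteq\{a_i\}_1^\infty$, which is precisely what lets us replace the indicator $\1_{X_{W'}\notin\{a_i\}_1^\infty}$ by the truncated one $\1_{X_{W'}\notin\{a_i\}_1^m}$ appearing in~(\ref{trivialfitting1}) without losing the estimate.
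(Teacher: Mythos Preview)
Your proof is correct and follows essentially the same approach as the paper: both split $f_U$ against the test function $f^{(m)}$, control the $f^{(m)}$ contribution on the non-atomic set via~(\ref{trivialfitting1}), bound the deviation $f_U-f^{(m)}$ via Theorem~\ref{thm:asymptoticapproximateminimizer}, and then send $m\to\infty$. One small wording point: Theorem~\ref{thm:asymptoticapproximateminimizer} literally only bounds the \emph{sum} $\E[|(f^{(m)}-f_U)(UX)|^2]+\lambda\norm{f^{(m)}-f_U}_{\mathcal{H}}^2$, so strictly speaking you should write $\limsup$ rather than define $\epsilon_m^2$ as a $\lim$, but this is cosmetic and does not affect the argument.
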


	\begin{proof}
		The first statement in Theorem \ref{thm:asymptoticapproximateminimizer} implies
		\[
		\lim_{k\to +\infty} \E[|(f^{(m)}-f_U)(UX)|^2  \1_{ X_{W'} \notin \{a_i\}_{1}^\infty } ]  \leq \E[|Y|^2 
		\1_{ X_{W'} \notin \{a_i\}_{m+1}^\infty }] .
		\]
		Recall from (\ref{trivialfitting1}) that 
		\[
		\lim_{k\to +\infty} \E[ |f^{(m)}(UX)|^2 \1_{X_{W'}\notin \{ a_i\}_1^m }  ]= 0,
		\]
		hence for any fixed $m$,
		\[
		\lim_{k\to +\infty} \E[|f_U(UX)|^2  \1_{ X_{W'} \notin \{a_i\}_{1}^\infty } ]  \leq \E[|Y|^2 
		\1_{ X_{W'} \notin \{a_i\}_{m+1}^\infty }] .
		\]
		Let $m\to +\infty$, we deduce the claim.
	\end{proof}

	\subsection{Discussions}

	We offer a few  interpretations on the Theorems.

	\begin{enumerate}
		\item   Perhaps the most striking aspect of Theorem \ref{thm:asymptoticapproximateminimizer} is that there is a sharp distinction between the \emph{discrete} and the \emph{continuous variables} $X$ in the large $U$ limit.

		This has a very intuitive explanation. Suppose $Y$ concerns some question which involves several different cases, such as the earth vs. some other planet, then one would expect that there should be some \emph{discrete variable} that distinguishes these cases very sharply; in  other words, the two cases fall into two very distinct \emph{conceptual categories}.

		\item 
		
		In Theorem \ref{thm:asymptoticapproximateminimizer}, we saw that in the asymptotic limit, the minimizer decouples into dimensionally reduced problems with $\mathcal{H}$-orthogonality property.

		When humans approach a question involving several very different cases, they would naturally first make a discrete classification, and then analyze the problem separately in each case. This is reflected in the \emph{dimensional reduction} procedure, where one first looks for the discrete signal variables (in our case the points $a_i\in W'$), and then solve the sub-problem for each fixed $a_i$, which could involve more discrete or continuous variables.

		In two very different cases, the answers to a given question are intuitively expected to be \emph{independent}. The \emph{orthogonality} property may be seen as a mathematical manifestation of this independence.

		\item Our limit as $U\to +\infty$ is an idealization. Along unit vectors in $W$, the magnitude $|Ux|$ stays bounded, while for unit vectors in $W'$, the magnitude $|Ux|$ goes to infinity.

		In practice, if one \emph{optimize over all the choices of $U$}, the minimizer for $\min_U J(U,\lambda)$ may fall in some finite region in the space of matrices $\End(V)$, and there is no general reason for the minimizer to be unique. What one could more realistically hope for, is that $|Ux|$ is much smaller for unit directions $x$ in some subspace $W$ of $V$, compared to unit directions along some subspace $W'$ transverse to $W$. Such a situation may be called \emph{scale separation}.
		The intuition is that in a given problem, some variables are more important than others.

		\item 
		Suppose the reproducing kernel is \emph{rotationally invariant}, namely $K(x)$ is a function of $|x|$, which holds for most kernels used in practice. Then
		\[
		I(f, U, \lambda)= I(f\circ A, A^{-1}U, \lambda),\quad \forall A\in O(V)\simeq O(d),
		\]
		so $J(U, \lambda)=J(A^{-1}U, \lambda)$ depends only on the group invariant quantity $\Sigma= U^TU$ and $\lambda$, where $U^T$ is the adjoint of $U$ with respect to the inner product on $V$. Now $\Sigma$ defines a self-adjoint operator on $V$, so one can find an orthonormal \emph{eigenvector basis}, with eigenvalues $\lambda_i$ for $i=1,\ldots d$.

		The scale separation phenomenon above then has the more concrete meaning that some eigenvalues may be much smaller than the others. Intuitively, the large eigenvalues suggest that the eigenspaces are likely to carry the more important information. Collecting together the small vs. large eigenvalue subspaces would give a reasonable way to identify $W, W'$ in the scale separation phenomenon. It is worth noting that our standing assumption $U_{21}=0$ holds automatically if $W,W'$ are orthogonal, which follows from the eigenspace decomposition.

		Morever, when $d$ is large, it may happen that the eigenvalues exhibit some \emph{multi-scale hierarchy}. This means that the eigenvalues can be partitioned into subsets, such that eigenvalues in the same subset are of comparable magnitudes, but between different subsets the eigenvalues are separated by many orders of magnitudes:
		\[
		\lambda_1\leq \ldots \leq \lambda_{n_1} \ll \lambda_{n_1+1} \leq \ldots \leq \lambda_{n_2} \ll \lambda_{n_2+1} \leq \ldots \leq \lambda_d. 
		\]
		Intuitively, this reflects the common sense that conceptual categories can contain subcategories and exhibit hierarchical structure.

		\item  
		The discrete/non-atomic dichotomy is only sharp in the $U\to +\infty$ limit. Suppose instead that $U$ is finite, but exhibits scale separation, and we want to understand $J(U,\lambda)$. Then we expect that approximate delta functions would have similar effect to the delta functions. In other words, the substitute for the atoms, is \emph{local concentration} of the 
		probability density of the marginal distribution of $X$ on $W'\simeq V/W$.


		\item 
		We have imposed the \emph{translation invariant RKHS} as the starting point. In fact Cor. \ref{cor:asymptotictrivialfitting} gives a hint why this \emph{may not always produce a good global fit} for the random variable $Y$: for $k\to +\infty$, the entire $L^2$-mass of $Y \1_{ X_{W'}\notin \{ a_i\}} $ is invisible to the minimizer $f_U$.

		The relevance of this issue can be seen from the following thought experiment involving two distinct scales:

		\begin{eg}\label{eg:multiscale1}
			Let $V=\R$, and $X_0\in \{ \pm 1\} $ be a Bernoulli variable, which is not directly observable. Let $X\sim N(0,1)$ conditional on $X_0=1$, and $X\sim N(0, \sigma^2)$ conditional on $X_0=-1$, where $\sigma\ll 1$. Let 
			\[
			Y= \sin(X)   + \sin( \frac{X}{\sigma})  e^{-|X|^2/\sigma^2}. 
			\]
			The density of the measure $\E[|Y|^2|X]\mu_X$ is the superposition of a smooth part  and a concentrated cluster around the origin.
			We suppose $\lambda$ is small but fixed, and $\mathcal{H}$ is one of the standard Sobolev spaces $W^{k,2}$ where $k\geq 1$.

			The point is that if we want to fit the  $\sin( \frac{X}{\sigma}) e^{- ( \frac{X}{\sigma} )^2}$ part using a function of the form $f(UX)$ where $\norm{f}_{\mathcal{H}}\lesssim 1$ is not very large, then the parameter $U$ had better be of order comparable to $\sigma^{-1}\gg 1$. But then Cor. \ref{cor:asymptotictrivialfitting} suggests that the information contained in the $\sin(X)$ part would be difficult to detect. 
			
			Instead, a more reasonable hope is to decompose $Y$ into a sum by multiplying by a suitable partition of unity sensitive to the scale parameters inherent in the problem, and try to find a good fit for each summand of $Y$ \emph{separately}. The parameters $U$ in each sub-problem could be very different.

		\end{eg}
		
	\end{enumerate}

	\section{Rotationally invariant kernels}\label{sect:rotationallyinvkernels}

	\subsection{Rotationally invariant kernels}

	
	Given a fixed inner product $|\cdot|_V$ on $V$,  each matrix $U\in \End(V)$ has an adjoint operator $U^T$, and $\Sigma=U^TU$ defines a (semi-definite) inner product on $V$, and $\Sigma^{-1}$ defines the \emph{dual norm} on $V^*$, namely
	\[
	|x|_\Sigma= |Ux|_V=(x, U^T Ux)^{1/2},\quad 
	|\omega|_{\Sigma^{-1}}=\max_{ |x|_\Sigma=1 } \langle \omega, x\rangle= |(U^{-1})^T \omega|.
	\]
	We will be interested in varying the choice of $\Sigma$.
	We now require the RKHS kernel to be \emph{rotationally invariant} with respect to the fixed inner product $|\cdot|_V$ on $V$, meaning that 
	$
	k_V( \omega)= k(|\omega|_V^2) 
	$
	for some one-variable function $k$, and  correspondingly $K(x)= \mathcal{K}(|x|^2_V)$ for some one-variable function $\mathcal{K}:\R_{\geq 0}\to \R_{>0}$. Thus $J(U,\lambda)$ is invariant under the rotation group action, and depends only on $\Sigma, \lambda$, instead of depending on $U$.

	\begin{lem}\label{lem:Jrotationalinv}
		When $\Sigma$ is nondegenerate, we define the functional
		\[
		\mathcal{I}(F, \Sigma, \lambda)=  \frac{1}{2} \E[|Y-F(X)|^2] + \frac{\lambda}{2} \norm{F}_{\mathcal{H}_\Sigma}^2,
		\]
		and
		\begin{equation}\label{minimizerJSigma}
			\mathcal{J}(\Sigma; \lambda):=\min_{F\in \mathcal{H}_\Sigma }  \frac{1}{2} \E[|Y-F(X)|^2] + \frac{\lambda}{2} \norm{F}_{\mathcal{H}_\Sigma}^2,
		\end{equation}
		where 
		\begin{equation}
			\quad \norm{F}_{\mathcal{H}_\Sigma}^2
			=\sqrt{\det\Sigma}\int_{V^*} \frac{|\hat{F}|^2}{ k(|\omega|_{\Sigma^{-1}}^2) } d\omega.
		\end{equation}
		Given the relation $\Sigma=U^T U$, then $J(U,\lambda)=\mathcal{J}(\Sigma; \lambda)$, and the minimizer $F_\Sigma$ of $\mathcal{I}(F,\Sigma,\lambda)$ satisfies $F_\Sigma(x)=f_U(Ux)$.

	\end{lem}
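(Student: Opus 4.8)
The plan is to exhibit an explicit norm-preserving linear bijection between $\mathcal{H}$ and $\mathcal{H}_\Sigma$ that intertwines the two loss functionals, so that the minimization problem defining $J(U,\lambda)$ and the one defining $\mathcal{J}(\Sigma;\lambda)$ are literally the same problem written in different coordinates. Given $f\in\mathcal{H}$, set $F:=f\circ U$, so that $F(x)=f(Ux)$ and hence $f(UX)=F(X)$ pointwise; conversely $f=F\circ U^{-1}$, which makes sense since $\Sigma$ is nondegenerate, i.e. $U$ is invertible. The data-fitting term $\frac12\E|Y-f(UX)|^2$ is then literally $\frac12\E|Y-F(X)|^2$ with no computation, and everything reduces to checking that the regularization terms match, $\norm{f}_{\mathcal{H}}^2=\norm{F}_{\mathcal{H}_\Sigma}^2$.

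First I would carry out the Fourier-side change of variables. From $f(x)=F(U^{-1}x)$ and the scaling rule for the Fourier transform (with the convention \eqref{Fouriertransform}), $\hat f(\omega)=|\det U|\,\hat F(U^T\omega)$. Substituting this into $\norm{f}_{\mathcal{H}}^2=\int_{V^*}|\hat f(\omega)|^2/k(|\omega|_V^2)\,d\omega$ and then changing variables $\xi=U^T\omega$ (so $d\omega=|\det U|^{-1}d\xi$), one uses the identity $|\omega|_V=|U^{-T}\xi|_V=|\xi|_{\Sigma^{-1}}$, which is exactly the definition of the dual norm $|\cdot|_{\Sigma^{-1}}$ given above. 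The two factors of $|\det U|$ collapse to a single $|\det U|=\sqrt{\det(U^TU)}=\sqrt{\det\Sigma}$, yielding precisely $\norm{f}_{\mathcal{H}}^2=\sqrt{\det\Sigma}\int_{V^*}|\hat F(\xi)|^2/k(|\xi|_{\Sigma^{-1}}^2)\,d\xi=\norm{F}_{\mathcal{H}_\Sigma}^2$. The same substitution, applied to $k$ in place of $|\hat F|^2/k$, shows that $k_V^\Sigma(\omega):=(\det\Sigma)^{-1/2}k(|\omega|_{\Sigma^{-1}}^2)$ is strictly positive and satisfies $\int_{V^*}k_V^\Sigma\,d\omega=1$, so $\mathcal{H}_\Sigma$ is a genuine translation-invariant RKHS of the type treated in Section~\ref{sect:kernelridgeregression}; in particular $F\in\mathcal{H}_\Sigma$ if and only if $f=F\circ U^{-1}\in\mathcal{H}$, and $f\mapsto F$ is a surjective linear isometry $\mathcal{H}\to\mathcal{H}_\Sigma$.

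It then remains only to conclude. Since $f\mapsto F$ is a bijection $\mathcal{H}\to\mathcal{H}_\Sigma$ with $I(f,U,\lambda)=\mathcal{I}(F,\Sigma,\lambda)$, the two infima agree, so $J(U,\lambda)=\mathcal{J}(\Sigma;\lambda)$. Both functionals admit unique minimizers by the Riesz representation theorem, so the minimizers must correspond under the bijection: $F_\Sigma=f_U\circ U$, which is the asserted identity $F_\Sigma(x)=f_U(Ux)$.

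I do not expect a genuine obstacle here; the only place requiring care is the bookkeeping in the change of variables --- keeping straight the Jacobian $|\det U|$, the distinction between $U^T$ and $U^{-T}$, and the identity $|\det U|=\sqrt{\det\Sigma}$ --- and making sure that rotational invariance of the kernel (the hypothesis $k_V(\omega)=k(|\omega|_V^2)$) is invoked at exactly the step where the substitution converts $|\omega|_V$ into $|\xi|_{\Sigma^{-1}}$. It is precisely this invariance that makes the transformed norm depend only on $\Sigma=U^TU$ rather than on $U$ itself.
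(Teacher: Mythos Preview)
Your proposal is correct and follows essentially the same approach as the paper: both establish the isometry $\norm{f}_{\mathcal{H}}=\norm{f\circ U}_{\mathcal{H}_\Sigma}$ via the Fourier scaling law and a change of variables, then transfer the minimization and its minimizer across the bijection $f\mapsto f\circ U$. You have simply spelled out the change-of-variables bookkeeping and the verification that $\mathcal{H}_\Sigma$ is a bona fide RKHS in more detail than the paper's terse proof.
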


	\begin{proof}
		By the scaling property of Fourier transforms
		\[
		\widehat{f\circ U}(\omega)= \frac{1}{|\det U|} \hat{f}((U^{-1})^T \omega), 
		\]
		whence one can deduce by a change of variables that
		\[
		\norm{f}_{\mathcal{H}}^2= \norm{f\circ U}_{\mathcal{H}_\Sigma}^2.
		\]
		Thus
		\[
		I(f, U, \lambda)= \frac{1}{2} \E[|Y-(f\circ U)(X)|^2] + \frac{\lambda}{2} \norm{f\circ U}_{\mathcal{H}_\Sigma}^2.
		\]
		minimizing over all $f\in \mathcal{H}$ is equivalent to minimizing over all $F=f\circ U\in \mathcal{H}_\Sigma$, hence the result.
	\end{proof}

	\begin{rmk}
		The definition of $\mathcal{J}$ only uses the inner product $\Sigma$ and $\lambda$, but not the background inner product $|\cdot|_V$. This auxiliary background inner product $|\cdot|_V$ is only needed when we try to convert between $U$ and $\Sigma$. 
	\end{rmk}

	\subsection{First variation formula}

	\begin{Notation}
		We denote $Sym^2_+\subset Sym^2 V^*$ as the open subset consisting of positive definite inner products on $V$. 
	\end{Notation}
	
	In this Section we study the first variation of $\mathcal{J}$ as $\Sigma$ varies in $Sym^2_+$. This is a 1-form on $Sym^2_+$, which can be identified with a function valued in $Sym^2 V$. The goal is to prove the following formula, which is \cite[Lemma 2.8]{CLLR} with more relaxed assumptions.

	\begin{thm}(First variation)\label{thm:firstvariation}
		We denote $r_\Sigma(X,Y)=Y-F_\Sigma(X)= Y-f_U(UX)$, and let $(X',Y')$ be an independent copy of $(X,Y)$. 
		We suppose $\E[|Y|^2]<+\infty$, and the kernel function is $K(x)=\mathcal{K}(|x|^2_V)$ for some radial function
		$\mathcal{K}:\R_{\geq 0}\to \R_+$, such that 
		\begin{enumerate}
			\item The derivative $\mathcal{K}'(r)$ is continuous on $\R_+$, and $r\mathcal{K}'(r)\to 0$ as $r\to 0$;
			
			\item One of the following conditions hold: either $|\mathcal{K}'(r)|r$ is bounded as $r\to +\infty$, or $|\mathcal{K}'(r)|$ is monotone decreasing for large enough $r$; 
			
			\item The integrability hypothesis holds: for all $\Sigma$ within any given compact subset of $Sym^2_+$, we have
			\[
			\sup_\Sigma \E[ |\mathcal{K}'(|X-X'|^2_\Sigma)|^2 |X-X'|_V^4   ] <+\infty.
			\]
		\end{enumerate}
		Then at any $\Sigma\in Sym^2_+$, 
		\begin{equation}\label{eqn:Jfirstvariation3}
			\begin{split}
				D_\Sigma \mathcal{J}(\Sigma, \lambda)= &   -\frac{1}{2\lambda}   \E[  r_\Sigma(X,Y) \overline{ r_\Sigma(X',Y')}  D_\Sigma \mathcal{K}(|X-X'|_{\Sigma}^2) ] 
				\\
				= &   -\frac{1}{2\lambda}   \E[  r_\Sigma(X,Y) \overline{r_\Sigma(X',Y')}  \mathcal{K}'(|X-X'|_{\Sigma}^2) (X-X')\otimes (X-X') ] .
			\end{split}
		\end{equation}
		
	\end{thm}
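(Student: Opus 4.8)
The plan is to compute $D_\Sigma\mathcal{J}$ by differentiating the closed-form expression $\mathcal{J}(\Sigma;\lambda)=\tfrac12\E[|Y|^2]-\tfrac12\E[\bar Y F_\Sigma(X)]$ from (\ref{eqn:Jalternativeformula}), exploiting that $F_\Sigma$ is itself a minimizer so that the ``interior'' variation of $F_\Sigma$ contributes nothing to first order (an envelope-theorem argument). Concretely, I would first justify differentiability of $\Sigma\mapsto F_\Sigma$ in the appropriate sense: Lemma \ref{lem:Jminimizercontinuity1} and Corollary \ref{cor:Jminimizercontinuity} already give continuity of $F_\Sigma$ in $\mathcal{H}_\Sigma$, and the integral equation of Proposition \ref{prop:integraleqn1} together with its coercivity estimate (\ref{eqn:coercivity}) can be used to upgrade this to differentiability by differentiating the integral equation in $\Sigma$. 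Then, because $\mathcal{I}(F,\Sigma,\lambda)$ is quadratic in $F$ with minimizer $F_\Sigma$, the total derivative of $\mathcal{J}(\Sigma;\lambda)=\mathcal{I}(F_\Sigma,\Sigma,\lambda)$ reduces to the partial derivative $\partial_\Sigma\mathcal{I}(F,\Sigma,\lambda)\big|_{F=F_\Sigma}$, i.e.\ only the explicit $\Sigma$-dependence of the norm $\|F\|_{\mathcal{H}_\Sigma}^2$ matters.

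The second step is to evaluate $D_\Sigma\|F_\Sigma\|_{\mathcal{H}_\Sigma}^2$. Rather than differentiating the Fourier-side formula $\sqrt{\det\Sigma}\int |\hat F|^2/k(|\omega|_{\Sigma^{-1}}^2)\,d\omega$ directly (which is awkward because of the $\sqrt{\det\Sigma}$ and the dual-norm dependence), I would pass to the ``physical side'' using the Euler–Lagrange relation: from Lemma \ref{EulerLagrange} applied in the $\mathcal{H}_\Sigma$ picture, $\lambda\|F_\Sigma\|_{\mathcal{H}_\Sigma}^2=\E[r_\Sigma(X,Y)\overline{F_\Sigma(X)}]$, and $F_\Sigma$ is represented via the kernel $K_\Sigma(x,x')=\mathcal{K}(|x-x'|_\Sigma^2)$ as $F_\Sigma(x)=\tfrac1\lambda\E[r_\Sigma(X,Y)\overline{\mathcal{K}(|x-X|_\Sigma^2)}]$. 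Substituting this representer formula and combining with the envelope identity, the derivative of $\mathcal{J}$ collapses to $-\tfrac1{2\lambda}\E[r_\Sigma(X,Y)\overline{r_\Sigma(X',Y')}\,D_\Sigma\mathcal{K}(|X-X'|_\Sigma^2)]$, and then $D_\Sigma\mathcal{K}(|X-X'|_\Sigma^2)=\mathcal{K}'(|X-X'|_\Sigma^2)\,(X-X')\otimes(X-X')$ by the chain rule, since $D_\Sigma|X-X'|_\Sigma^2=(X-X')\otimes(X-X')$. I would give this computation twice, as the statement promises: once rigorously via the integral-equation / coercivity machinery (this is where hypotheses (1)–(3) on $\mathcal{K}$ enter, to justify differentiating under the expectation and to control $\mathcal{K}'(|X-X'|_\Sigma^2)|X-X'|_V^4$ uniformly on compact sets of $\Sigma$), and once via the finite-atomic representer formula of Proposition \ref{prop:representer} followed by the law-of-large-numbers approximation of Proposition \ref{prop:lawoflargenumbers}, where one differentiates the explicit matrix expression $\sum(\lambda M^{-1}+P)^{-1}_{ij}p_iy_ip_j\bar y_j$ using $D_\Sigma M_{ij}=\mathcal{K}'(|a_i-a_j|_\Sigma^2)(a_i-a_j)\otimes(a_i-a_j)$ and the identity $D(M^{-1})=-M^{-1}(DM)M^{-1}$, then passes to the limit.

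The main obstacle I expect is the rigorous justification of differentiating under the expectation sign and the uniform control of the error terms: one must bound the remainder in $\mathcal{K}(|X-X'|_{\Sigma+t H}^2)-\mathcal{K}(|X-X'|_\Sigma^2)-t\,\mathcal{K}'(|X-X'|_\Sigma^2)\langle H,(X-X')^{\otimes2}\rangle$ by something integrable against $\E[\,\cdot\,]$, uniformly for $\Sigma$ in a compact subset of $Sym^2_+$ and $t$ small. This is exactly what hypotheses (1) (behaviour of $r\mathcal{K}'(r)$ near $0$), (2) (control of $\mathcal{K}'$ at infinity, needed because $|X-X'|_V$ has no a priori bound), and (3) (the $L^2$ integrability of $\mathcal{K}'(|X-X'|_\Sigma^2)|X-X'|_V^4$) are designed to handle: (1) and (2) let one bound the pointwise second-order Taylor remainder by a constant times $|\mathcal{K}'(\cdot)|\,|X-X'|_V^2$ times a locally bounded factor, and then Cauchy–Schwarz against $r_\Sigma,r_{\Sigma'}$ (which are $L^2$, being bounded by $|Y|+\|F_\Sigma\|_{C^0}$ with $\|F_\Sigma\|_{C^0}^2\le\lambda^{-1}\E[|Y|^2]$) together with (3) closes the estimate. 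In the finite-atomic proof this difficulty evaporates — everything is a smooth function of finitely many matrix entries — but then the obstacle shifts to justifying the interchange of the $\Sigma$-derivative with the $m\to\infty$ limit, which is handled by the uniform-in-$\Sigma$ convergence in Proposition \ref{prop:lawoflargenumbers} applied on a compact neighbourhood of $\Sigma$, combined with the first variation formula for the empirical problems and the continuity estimates of Lemma \ref{lem:Jminimizercontinuity1}.
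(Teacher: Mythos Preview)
Your proposal is essentially on target and correctly identifies the paper's two-proof structure: an integral-equation argument for the first, and representer-formula-plus-law-of-large-numbers for the second. The second proof you sketch matches the paper almost exactly, including the key matrix manipulation $D_\Sigma(M^{-1})=-M^{-1}(D_\Sigma M)M^{-1}$ and the passage to the limit via Proposition~\ref{prop:lawoflargenumbers}; the only thing you understate is that one needs a separate uniform-convergence statement for the \emph{gradients} $D_\Sigma\mathcal{J}_m\to D_\Sigma\mathcal{J}$ (this is where hypothesis (2) is actually used, to get an a.s.\ uniform bound on $\E_m[|\mathcal K'(|X-X'|_\Sigma^2)|^2|X-X'|^4]$ across the compact set of $\Sigma$), not just for the values.

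Your first proof, however, takes a genuinely different route from the paper and has a subtlety you gloss over. You frame the computation as an envelope-theorem reduction to $\partial_\Sigma\|F\|_{\mathcal{H}_\Sigma}^2\big|_{F=F_\Sigma}$, but as the paper itself remarks, the spaces $\mathcal{H}_\Sigma$ are \emph{not} the same for all $\Sigma$ in general (e.g.\ for the Gaussian kernel), so ``hold $F$ fixed and vary $\Sigma$'' is not well-posed without further reparametrisation. The paper sidesteps this entirely: it differentiates $\mathcal{J}=\tfrac12\E[|Y|^2]-\tfrac12\E[\bar Y F_\Sigma]$ directly to get $D_\Sigma\mathcal{J}=-\tfrac12\E[\bar Y\,D_\Sigma F_\Sigma]$ (so the variation of $F_\Sigma$ \emph{does} appear), characterises $D_\Sigma F_\Sigma$ as the finite-$\|\cdot\|_\E$-norm solution of a linear integral equation obtained by differentiating Proposition~\ref{prop:integraleqn1}, and then pairs that equation with $\overline{r_\Sigma(X',Y')}$ and uses Euler--Lagrange to collapse the expression to the claimed form. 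This is more robust than the envelope route because it only needs differentiability of $F_\Sigma$ in the $\|\cdot\|_\E$-topology, not any ambient function-space identification. Note also that the paper's first proof is carried out under a slightly \emph{stronger} technical hypothesis than (3), namely an $o(|\Sigma'-\Sigma|)$ bound on the second-order Taylor remainder of $\mathcal K(|X-X'|_\Sigma^2)$ in $L^2$; only the second (law-of-large-numbers) proof works under exactly hypotheses (1)--(3) as stated.
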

	
	\begin{rmk}
		We note that the underlying function spaces $\mathcal{H}_\Sigma$ are the same for all $\Sigma$, if and only if the weight function satisfies
		\[
		C(\Sigma)^{-1} k_V(|\omega|^2_V)\leq k_V(|\omega|^2_\Sigma) \leq C(\Sigma) k_V(|\omega|^2_V)
		\]
		with constant uniform in $\omega\in V^*$. This condition holds for the Sobolev kernel, but not for the Gaussian kernel 
		(See Section \ref{sect:completelymontonekernel}). The first variation formula holds for both kernels.
		
	\end{rmk}

	\begin{rmk}
		The first variation is naturally a 1-form. To define the gradient of $\mathcal{J}$ as a vector field on $Sym^2_+$, there needs to be an additional choice of a Riemannian metric on $Sym^2_+$. This issue is fundamental to the companion paper \cite{RuanLi}.
	\end{rmk}

	\begin{rmk}\label{rmk:K'integrability}
		Since
		$
		D_\Sigma \mathcal{K}(|x|^2_\Sigma)= \mathcal{K}'(|x|^2_\Sigma) x\otimes x,
		$
		the hypothesis that $\mathcal{K}'$ is $C^1$ and $r\mathcal{K}'(r)=o(1)$ at the origin, ensures that $D_\Sigma \mathcal{K}(|x|^2_\Sigma)$ is continuous at all $x\in V$. 
	\end{rmk}

	\subsubsection{First proof: integral equation}

	We shall prove Theorem \ref{thm:firstvariation} under an additional technical hypothesis: at any $\Sigma\in Sym^2_+$, for $\Sigma'$ close to $\Sigma$, we have
	\[
	\E[ |\mathcal{K}(|X-X'|^2_{\Sigma'})-\mathcal{K}(|X-X'|^2_{\Sigma}) - \langle D_\Sigma \mathcal{K}( |X-X'|^2_{\Sigma}), \Sigma'-\Sigma\rangle |^2   ]= o(|\Sigma'-\Sigma|).
	\]
	This is slightly stronger than the integrability hypothesis in Theorem  \ref{thm:firstvariation}.

	\begin{prop}\label{prop:differentiabilityminimizer}
		(Differentiability in $\Sigma$ for the minimizer) We assume the setting of Theorem \ref{thm:firstvariation}. Then for $\Sigma\in Sym^2_+$ and $\Sigma'$ close to $\Sigma$,
		\[
		\norm{ F_{\Sigma'}- F_\Sigma- \langle D_\Sigma F_\Sigma , \Sigma'-\Sigma \rangle }_\E =o(|\Sigma'-\Sigma|).
		\]
		Here $
		\norm{f}_\E= \E[|f(X)|^2]^{1/2}$, and 
		the differential $D_\Sigma F_\Sigma$ is a function valued in $Sym^2 V$, characterized as the unique finite norm solution to the integral equation
		\begin{equation}\label{eqn:minimizerdifferentiability2}
			f(x)+  \frac{1}{\lambda}  \E[ f(X)\mathcal{K}(|X-x|_\Sigma^2)   ] = \frac{1}{\lambda}  \E[ r_\Sigma(X,Y) D_\Sigma \mathcal{K}(|x-X|_{\Sigma}^2) ].
		\end{equation}

	\end{prop}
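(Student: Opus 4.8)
The plan is to treat the integral equation \eqref{eqn:minimizerdifferentiability2} as the Euler--Lagrange equation obtained by differentiating the Euler--Lagrange equation \eqref{eqn:EulerLagrange1} for $F_\Sigma$ with respect to $\Sigma$, and then to convert this heuristic into a rigorous estimate using the coercivity estimate \eqref{eqn:coercivity} from Proposition~\ref{prop:integraleqn1}. First I would record the Euler--Lagrange characterization of the minimizer in the $\Sigma$-variable, namely that $F_\Sigma$ satisfies
\[
F_\Sigma(x)+\frac{1}{\lambda}\E[F_\Sigma(X)\mathcal{K}(|X-x|_\Sigma^2)]=\frac{1}{\lambda}\E[Y\,\mathcal{K}(|X-x|_\Sigma^2)],
\]
which is \eqref{eqn:EulerLagrange1} rewritten via $F_\Sigma=f_U\circ U$, $K(U(x-X))=\mathcal{K}(|x-X|_\Sigma^2)$, and $r_\Sigma(X,Y)=Y-F_\Sigma(X)$. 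Subtracting the equations for $\Sigma'$ and $\Sigma$ gives an integral equation for the difference $F_{\Sigma'}-F_\Sigma$ of exactly the type analyzed in Proposition~\ref{prop:integraleqn1} (with kernel $\mathcal{K}(|x-X|_\Sigma^2)$ fixed at $\Sigma$ on the left and an inhomogeneous right-hand side built from the increments of $\mathcal{K}$ and from $F_{\Sigma'}-F_\Sigma$ itself). The key algebraic step is to isolate the linear-in-$(\Sigma'-\Sigma)$ part of that right-hand side, which produces precisely \eqref{eqn:minimizerdifferentiability2} for the candidate derivative, and to show the remainder is $o(|\Sigma'-\Sigma|)$ in the $\norm{\cdot}_\E$ seminorm.

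The existence and uniqueness of the finite-norm solution $D_\Sigma F_\Sigma$ to \eqref{eqn:minimizerdifferentiability2} follows immediately from Proposition~\ref{prop:integraleqn1} (applied componentwise in $Sym^2 V$): the left-hand operator $f\mapsto f+\frac1\lambda\E[f(X)\mathcal{K}(|X-\cdot|_\Sigma^2)]$ is the same coercive operator, and the right-hand side $\E[r_\Sigma(X,Y)D_\Sigma\mathcal{K}(|x-X|_\Sigma^2)]$ is a weighted average of functions of the form $h(x,X)$ with $\E[|h(X',X)|^2]<\infty$, using the integrability hypothesis (3) of Theorem~\ref{thm:firstvariation} together with $\E[|Y|^2]<\infty$ and the trivial bound on $\norm{F_\Sigma}_\E$; Remark~\ref{rmk:K'integrability} guarantees $D_\Sigma\mathcal{K}(|x|_\Sigma^2)=\mathcal{K}'(|x|_\Sigma^2)\,x\otimes x$ is continuous in $x$, so $h$ is a genuine integrand. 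For the differentiability statement itself, I would write $e_{\Sigma'}:=F_{\Sigma'}-F_\Sigma-\langle D_\Sigma F_\Sigma,\Sigma'-\Sigma\rangle$, subtract the three integral equations, and check that $e_{\Sigma'}$ solves an equation of the form
\[
e_{\Sigma'}(x)+\frac1\lambda\E[e_{\Sigma'}(X)\mathcal{K}(|X-x|_\Sigma^2)]=\frac1\lambda\E\big[r_{\Sigma'}(X,Y)\,\rho_{\Sigma'}(x,X)\big]-\frac1\lambda\E\big[(F_{\Sigma'}-F_\Sigma)(X)\,\sigma_{\Sigma'}(x,X)\big],
\]
where $\rho_{\Sigma'}(x,X)=\mathcal{K}(|x-X|_{\Sigma'}^2)-\mathcal{K}(|x-X|_\Sigma^2)-\langle D_\Sigma\mathcal{K}(|x-X|_\Sigma^2),\Sigma'-\Sigma\rangle$ is the second-order Taylor remainder of $\mathcal{K}$ and $\sigma_{\Sigma'}(x,X)=\mathcal{K}(|x-X|_{\Sigma'}^2)-\mathcal{K}(|x-X|_\Sigma^2)$ is the first-order increment. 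Applying the coercivity estimate \eqref{eqn:coercivity} bounds $\norm{e_{\Sigma'}}_\E^2$ by a constant times $\E[|\rho_{\Sigma'}(X',X)|^2]+\E[|(F_{\Sigma'}-F_\Sigma)(X')|^2]\cdot\E[|\sigma_{\Sigma'}(X',X)|^2]/(\text{something})$; the first term is $o(|\Sigma'-\Sigma|^2)$ by the additional technical hypothesis stated just before the Proposition, and the second is $O(|\Sigma'-\Sigma|^2)\cdot o(1)$ once one knows $\norm{F_{\Sigma'}-F_\Sigma}_\E\to 0$ (continuity of the minimizer, which itself comes from Lemma~\ref{lem:Jminimizercontinuity1} transported to the $\Sigma$-picture, since $\E[|\sigma_{\Sigma'}(X',X)|^2]=O(|\Sigma'-\Sigma|^2)$ by the mean value theorem and hypothesis (3)). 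Combining gives $\norm{e_{\Sigma'}}_\E=o(|\Sigma'-\Sigma|)$, which is the assertion.

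The main obstacle is the bookkeeping around the nonlinear coupling: the right-hand side of the subtracted equation contains $(F_{\Sigma'}-F_\Sigma)(X)$ paired against the increment $\sigma_{\Sigma'}$, so to get a genuine $o(|\Sigma'-\Sigma|)$ bound rather than merely $O(|\Sigma'-\Sigma|)$ one must first establish the weaker rate $\norm{F_{\Sigma'}-F_\Sigma}_\E=O(|\Sigma'-\Sigma|)$ (Lipschitz continuity of the minimizer), then feed it back. This in turn requires the mean-value estimate $|\sigma_{\Sigma'}(x,X)|\le \sup_{\Sigma''}|\mathcal{K}'(|x-X|^2_{\Sigma''})|\,|x-X|_V^2\,|\Sigma'-\Sigma|$ over a compact neighborhood of $\Sigma$, whose $L^2$-integrability in $(X,X')$ is exactly hypothesis (3). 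One should also be slightly careful that $|x-X|_{\Sigma''}^2$ stays comparable to $|x-X|_V^2$ uniformly over the compact set, which is automatic since all $\Sigma''$ in a compact subset of $Sym^2_+$ are mutually bounded. Everything else is a routine application of \eqref{eqn:coercivity}; the Taylor-remainder hypothesis is precisely what has been assumed to make the final step go through, which is why this proof is presented under that slightly stronger condition, with the second proof via the law of large numbers handling the general case.
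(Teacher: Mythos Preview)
Your proposal is correct and follows essentially the same route as the paper: define $D_\Sigma F_\Sigma$ via the coercive integral equation, subtract the three integral equations to get an equation for the error $e_{\Sigma'}$ with the same coercive left-hand side, then bound the right-hand side using the Taylor-remainder hypothesis on $\mathcal{K}$ together with the quantitative continuity of $F_\Sigma$ from Lemma~\ref{lem:Jminimizercontinuity1}. Your decomposition of the inhomogeneous term differs cosmetically from the paper's (you pair $\rho_{\Sigma'}$ with $r_{\Sigma'}$ rather than $r_\Sigma$, which shifts a higher-order term $(F_\Sigma-F_{\Sigma'})\rho_{\Sigma'}$ between the two pieces), but this is harmless for the estimate; if anything, your tracking of the orders $O(|\Sigma'-\Sigma|^2)$ versus $o(|\Sigma'-\Sigma|^2)$ is slightly more explicit than the paper's.
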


	\begin{proof}
		The above integral equation (\ref{eqn:minimizerdifferentiability2}) has a unique finite solution by the integrability hypothesis on $\mathcal{K}$, and the claim in the proof of Proposition  \ref{prop:integraleqn1}. We denote the solution as $D_\Sigma F_\Sigma$.

		We recall from (\ref{eqn:integraleqndifferentiability}) that the following integral equation holds:
		\[
		(F_{\Sigma'}- F_{\Sigma})(x)+  \frac{1}{\lambda}  \E[ (F_{\Sigma'}- F_{\Sigma})(X) \mathcal{K}(|X-x|_\Sigma^2)   ] = \frac{1}{\lambda}  \E[ (Y-F_{\Sigma'}(X) )\overline{g(x-X)}  ],
		\]
		where $g(x)= \mathcal{K}(|x|_{\Sigma'}^2)- \mathcal{K}(|x|_\Sigma^2)$ is a real valued function. This implies an integral equation
		\[
		f(x)+  \frac{1}{\lambda}  \E[ f(X)\mathcal{K}(|X-x|_\Sigma^2)   ] = \frac{1}{\lambda}  \E[ h(x,X)  ],
		\]
		where 
		\[
		\begin{cases}
			f= & F_{\Sigma'}- F_\Sigma- \langle D_\Sigma F_\Sigma ,
			\Sigma'-\Sigma \rangle , 
			\\
			h(x,X)= & r_\Sigma(X,Y) ( \mathcal{K}(|X-x|^2_{\Sigma'})-\mathcal{K}(|X-x|^2_{\Sigma}) - \langle D_\Sigma \mathcal{K}( |X-x|^2_{\Sigma}), \Sigma'-\Sigma\rangle)\\
			& + (F_\Sigma- F_{\Sigma'})(X) \overline{g(x-X)} .
		\end{cases}
		\]
		Using the technical hypothesis and the quantitative continuity of $F_\Sigma$ in Lemma \ref{lem:Jminimizercontinuity1}, we deduce $\E[ |h(X',X)|^2]=o(|\Sigma'-\Sigma|)$, so the result follows from the coercivity
		estimate (\ref{eqn:coercivity}).
	\end{proof}

	We now prove Theorem \ref{thm:firstvariation}. Recall from (\ref{eqn:Jalternativeformula}) the formula
	\[
	\mathcal{J}(\Sigma;\lambda)=  \frac{1}{2} \E[|Y|^2]- \frac{1}{2}\E[ \bar{Y}F_\Sigma(X) ],
	\]
	so $\mathcal{J}(\Sigma;\lambda)$ is differentiable at $\Sigma\in Sym^2_+$, with
	\[
	D_\Sigma\mathcal{J}(\Sigma;\lambda) = - \frac{1}{2}\E[ \bar{Y} D_\Sigma F_\Sigma(X) ].
	\]

	On the other hand, we can start from (\ref{eqn:minimizerdifferentiability2}), set $x=X'$, multiply by $\overline{r_\Sigma(X',Y')}$, and take expectation, to deduce
	\[
	\begin{split}
		& \frac{1}{\lambda}  \E[ r_\Sigma (X,Y) \overline{r_\Sigma(X',Y')}D_\Sigma \mathcal{K}(|X'-X|_{\Sigma}^2)]
		\\
		= & \E[ D_\Sigma F_\Sigma (X',Y')\overline{r_\Sigma(X',Y')}] +  \frac{1}{\lambda}  \E[ D_\Sigma F_\Sigma(X,Y) \mathcal{K}(|X-X'|_\Sigma^2)  \overline{r_\Sigma(X',Y')} ] 
		\\
		= & \E[ D_\Sigma F_\Sigma(X,Y) \overline{r_\Sigma(X,Y)}] +  \frac{1}{\lambda}  \E[ D_\Sigma F_\Sigma   \overline{F_\Sigma(X)} ] 
		\\
		=&  \E[ D_\Sigma F_\Sigma(X,Y) \overline{Y}] .
	\end{split}
	\]
	Here the third line uses the Euler-Lagrange equation (\ref{eqn:EulerLagrange1}), and the fact that $(X',Y')$ is an independent copy of $(X,Y)$. Comparing the above, we verify the first variation formula.

	\subsubsection{Second proof: law of large numbers}

	\begin{itemize}
		\item Step 1: Finite atomic measure case.
	\end{itemize}
	
	We work first in the setting of Section \ref{sect:representer}. The representer theorem (Proposition  \ref{prop:representer}) gives an explicit formula
	\[
	\mathcal{J}(\Sigma;\lambda)= J(U,\lambda)= \frac{1}{2}\sum_1^m p_i |y_i|^2-\frac{1}{2} \sum_{i,j=1}^m(\lambda M^{-1}+P)^{-1}_{ij} p_i y_i p_j\bar{y}_j,
	\]
	where $M_{ij}= K(U(a_i-a_j))= \mathcal{K} (|U(a_i-a_j)|_V^2)= \mathcal{K}(|a_i-a_j|^2_\Sigma)$ is a positive definite real symmetric matrix. By (\ref{eqn:representerminimizer}), the minimizer satisfies
	\[
	f_U(Ua_i)=\sum_{j=1}^m (\lambda M^{-1}+P)^{-1}_{ji} p_j y_j.
	\]
	However, the Euler-Lagrange equation implies (\cf (\ref{eqn:EulerLagrange1}))
	\[
	f_U(Ua_i)= \frac{1}{\lambda} \E[ r_\Sigma(X,Y) \overline{K(Ua_i- UX) }  ]= \frac{1}{\lambda} \E[ r_\Sigma(X,Y){\mathcal{K}(|X-a_i|^2_\Sigma) }  ].
	\]
	Upon differentiating the representer theorem formula,
	\[
	\begin{split}
		D_\Sigma\mathcal{J}(\Sigma;\lambda) = &\frac{1}{2} \sum  p_i y_i(\lambda M^{-1}+P)^{-1}_{il} \lambda D_\Sigma (M^{-1})_{ln} (\lambda M^{-1}+P)^{-1}_{nj}p_j\bar{y}_j
		\\
		= &   \frac{\lambda }{2} \sum f(Ua_l) D_\Sigma (M^{-1})_{ln} \overline{f(U a_n)}
		\\
		= &  \frac{1 }{2\lambda} \E[ r_\Sigma(X,Y) \mathcal{K}(|X-a_l|^2_\Sigma)  D_\Sigma (M^{-1})_{ln} \overline{r_\Sigma(X',Y')} \mathcal{K}(|X'-a_n|^2_\Sigma)    ]
		\\
		= &  -\frac{1 }{2\lambda} \E[ r_\Sigma(X,Y)\overline{r_\Sigma(X',Y')}D_\Sigma \mathcal{K}(|X-X'|_\Sigma^2],
	\end{split}
	\]
	which verifies the first variation formula. The last line uses the derivative formula for inverse matrix $D_\Sigma (M^{-1})= -M^{-1} (D_\Sigma M) M^{-1}$.

	\begin{itemize}
		\item Step 2: Law of large numbers.
	\end{itemize}

	The idea is to approximate the general distribution $(X,Y)$ by the finite atomic measures using the law of large numbers. We use the setup of Section \ref{sect:lawoflargenumbers}. The first variation formula holds for the empirical measures, and to deduce the result for $(X,Y)$, it suffices to prove

	\begin{prop}\label{prop:lawoflargenumbergradient}
		In the setting of Theorem \ref{thm:firstvariation}, the following holds with probability one. Given any compact subset of $Sym^2_+$, as $m\to +\infty$ we have the uniform convergence in $\Sigma$:
		$
		\mathcal{J}_m(\Sigma, \lambda)\to \mathcal{J}(\Sigma;\lambda)$
		and 
		\[
		D_\Sigma \mathcal{J}_m(\Sigma, \lambda)\to  -\frac{1}{2\lambda}   \E[  r_\Sigma(X,Y) \overline{ r_\Sigma(X',Y')}  \mathcal{K}'(|X-X'|_{\Sigma}^2) (X-X')\otimes (X-X') ] .
		\]
	\end{prop}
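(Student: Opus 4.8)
The plan is to combine the finite atomic measure case (Step~1 above) with a \emph{uniform} strong law of large numbers for the $V$-statistic appearing in the first variation formula. Applying Step~1 to the empirical measure, $\mathcal{J}_m$ is $C^1$ at every $\Sigma\in Sym^2_+$ with
\[
D_\Sigma\mathcal{J}_m(\Sigma,\lambda)=-\frac{1}{2\lambda}\,G_m(\Sigma),\qquad G_m(\Sigma):=\E_m\!\left[\,r_{\Sigma,m}(X,Y)\,\overline{r_{\Sigma,m}(X',Y')}\,\psi_\Sigma(X,X')\,\right],
\]
where $r_{\Sigma,m}=Y-F_{\Sigma,m}(X)$ is the residual of the empirical minimizer, $\psi_\Sigma(x,x')=\mathcal{K}'(|x-x'|^2_\Sigma)\,(x-x')\otimes(x-x')$, and $\E_m$ is the double empirical average over independent draws $(X,Y),(X',Y')$ from the empirical measure. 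The convergence $\mathcal{J}_m\to\mathcal{J}$ uniformly in $\Sigma$ is immediate from Proposition~\ref{prop:lawoflargenumbers} via the identification $U=\Sigma^{1/2}$, so the task reduces to showing that, almost surely, $G_m(\Sigma)\to G(\Sigma):=\E[\,r_\Sigma(X,Y)\,\overline{r_\Sigma(X',Y')}\,\psi_\Sigma(X,X')\,]$ uniformly for $\Sigma$ in a fixed compact set $\mathcal{C}\subset Sym^2_+$.

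A preliminary ingredient is a domination lemma: since $c|v|_V^2\le|v|_\Sigma^2\le C|v|_V^2$ uniformly over $\mathcal{C}$, hypothesis~(1) controls $\psi_\Sigma$ near the origin, hypothesis~(2) controls it at infinity (in the case $|\mathcal{K}'(r)|r$ bounded, $\|\psi_\Sigma(x,x')\|$ stays bounded; in the eventually-monotone case it is dominated by $\|\psi_{\Sigma_0}(x,x')\|$ for a fixed small $\Sigma_0$), and hypothesis~(3) then furnishes a measurable $\Psi\ge0$ with $\sup_{\Sigma\in\mathcal{C}}\|\psi_\Sigma(x,x')\|^2\le\Psi(x-x')$, $\Psi(0)=0$, and $\E[\Psi(X-X')]<\infty$. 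In addition, Proposition~\ref{prop:lawoflargenumbers} supplies, almost surely, a finite bound $C_0:=\sup_m\sup_{\Sigma\in\mathcal{C}}\max(\|F_\Sigma\|_{C^0},\|F_{\Sigma,m}\|_{C^0})$ and $\delta_m:=\sup_{\Sigma\in\mathcal{C}}\|F_{\Sigma,m}-F_\Sigma\|_{C^0}\to0$.

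Writing $G_m(\Sigma)-G(\Sigma)=A_m(\Sigma)+B_m(\Sigma)$, the term $A_m$ replaces the empirical residuals by the true ones: since $|r_{\Sigma,m}-r_\Sigma|=|F_\Sigma-F_{\Sigma,m}|\le\delta_m$ and $|r_{\Sigma,m}|,|r_\Sigma|\le|Y|+C_0$, a bilinear expansion together with Cauchy--Schwarz for $\E_m\otimes\E_m$ gives $\sup_{\Sigma\in\mathcal{C}}\|A_m(\Sigma)\|\le2\delta_m\,\E_m[(|Y|+C_0)^2]^{1/2}\,\E_m[\Psi(X-X')]^{1/2}$, and both factors converge to finite limits by the law of large numbers (the second is a $V$-statistic with integrable kernel whose diagonal vanishes as $\Psi(0)=0$), so $\sup_\Sigma\|A_m\|\to0$. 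For $B_m(\Sigma)=\E_m[\phi_\Sigma]-\E[\phi_\Sigma]$ with the \emph{fixed} kernel $\phi_\Sigma((x,y),(x',y'))=r_\Sigma(x,y)\overline{r_\Sigma(x',y')}\psi_\Sigma(x,x')$, one first fixes a countable dense subset $\{\Sigma_k\}\subset\mathcal{C}$ and uses the strong law of large numbers for $U$-statistics (applied to the symmetrization of $\phi_{\Sigma_k}$; the diagonal terms vanish because $(X_i-X_i)\otimes(X_i-X_i)=0$) to get $B_m(\Sigma_k)\to0$ for all $k$ almost surely. One then upgrades to a uniform statement by truncating to a core region $\{|X|,|X'|\le R,\ |Y|,|Y'|\le L\}$ — whose complement contributes $\le\varepsilon$ uniformly in $\Sigma$ for $R,L$ large, by the $\Psi$-domination, dominated convergence, and the law of large numbers — and by using the equicontinuity of $\Sigma\mapsto\phi_\Sigma$ on this compact region: the $\psi_\Sigma$ factor is jointly continuous in $(\Sigma,x,x')$ by Remark~\ref{rmk:K'integrability}, while $\|F_\Sigma-F_{\Sigma'}\|_{C^0(\{|x|\le R\})}\to0$ as $\Sigma'\to\Sigma$, which one obtains by writing $F_\Sigma=f_{U(\Sigma)}\circ U(\Sigma)$ with $U(\Sigma)=\Sigma^{1/2}$ and combining Corollary~\ref{cor:Jminimizercontinuity} (continuity of $U\mapsto f_U$ in $\mathcal{H}$, hence in $C^0$ by Lemma~\ref{lem:decayatinfinity}) with Lemma~\ref{lem:Rellich} (uniform modulus of continuity on a fixed $\mathcal{H}$-ball). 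A finite $\varepsilon$-net then assembles the three estimates, and since $\varepsilon$ is arbitrary, $\sup_{\Sigma\in\mathcal{C}}\|B_m(\Sigma)\|\to0$.

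The step I expect to be the main obstacle is the uniform-in-$\Sigma$ control of the $V$-statistic tails, i.e.\ the region where $|X-X'|$ is large: this is precisely where hypothesis~(2) of Theorem~\ref{thm:firstvariation} enters, since without the integrable dominating function $\Psi$ the averages $\E_m[\|\psi_\Sigma\|\,\1_{|X-X'|>R}]$ need not be small uniformly over $\mathcal{C}$. A secondary technical point is establishing the $C^0$ equicontinuity of the implicitly-defined residual $r_\Sigma$ in $\Sigma$ on the core region, which must be extracted from the $\mathcal{H}$-norm continuity of $f_U$ together with the equicontinuity of a bounded ball of $\mathcal{H}$, rather than from any explicit formula.
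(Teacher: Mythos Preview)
Your proof is correct and follows essentially the same two-step architecture as the paper: first replace the empirical residuals $r_{\Sigma,m}$ by the true residuals $r_\Sigma$ using the uniform $C^0$ convergence $f_{U,m}\to f_U$ from Proposition~\ref{prop:lawoflargenumbers} together with a Cauchy--Schwarz bound against the dominated quantity $\E_m[|\mathcal{K}'(|X-X'|_\Sigma^2)|^2|X-X'|^4]$, then apply a uniform-in-$\Sigma$ law of large numbers to the resulting $V$-statistic with deterministic kernel. The paper's own argument is terser on both points: for the domination it records directly that $\sup_m\sup_\Sigma\E_m[|\mathcal{K}'(|X-X'|_\Sigma^2)|^2|X-X'|^4]<\infty$ a.s.\ (splitting on the two alternatives of hypothesis~(2) exactly as you do), and for the uniform SLLN it simply invokes the quantitative continuity of $F_\Sigma$ in the $\norm{\cdot}_\E$-norm (Proposition~\ref{lem:continuityJ}) rather than your more explicit truncation-plus-$\varepsilon$-net route via $C^0$ equicontinuity. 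Your treatment of the diagonal terms (which vanish since $\psi_\Sigma(x,x)=0$) and the explicit appeal to the $U$-statistic SLLN are details the paper leaves implicit.
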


	\begin{proof}
		We have the almost sure uniform convergence $
		\mathcal{J}_m(\Sigma, \lambda)\to \mathcal{J}(\Sigma;\lambda)$ by Proposition  \ref{prop:lawoflargenumbers}. We now claim that for $\Sigma$ in a fixed compact subset in $Sym^2_+$ (so there is the uniform equivalence $\Lambda^{-1}|\cdot|_V^2\leq \Sigma\leq \Lambda|\cdot|_V^2$), we almost surely have
		\[
		\sup_m \sup_\Sigma \E_m[ |  \mathcal{K}'(|X-X'|_{\Sigma}^2) |^2 |X-X'|^4  ] <+\infty.
		\]
		In the first alternative in Condition 2 of Theorem \ref{thm:firstvariation}, $|\mathcal{K}'(r)|r$ is bounded, so the above is clearly bounded. In the second alternative, $|\mathcal{K}'|$ is monotone decreasing for $r\geq r_0$. Using the condition that $|\mathcal{K}'(r)|r$ remains bounded for bounded $r$, we have
		\[
		\begin{split}
			& \sup_m \sup_\Sigma\E_m[ |  \mathcal{K}'(|X-X'|_{\Sigma}^2) |^2 |X-X'|^4  ]  
			\\
			\leq & C+\sup_m\sup_\Sigma\E_m [ |\mathcal{K}'(|X-X'|_{\Sigma}^2) |^2 |X-X'|^4 \1_{\Lambda^{-1}|X-X'|^2\geq r_0}]
			\\
			\leq  & C+\sup_m \E_m [ |\mathcal{K}'(\Lambda^{-1}|X-X'|_V^2) |^2 |X-X'|^4]
		\end{split},
		\]
		which is almost surely finite since $\E[|\mathcal{K}'(\Lambda^{-1}|X-X'|_V^2) |^2 |X-X'|^4]|<+\infty$.

		



		We need to show that with probability one, the first variation
		\[
		-\frac{1}{2\lambda}   \E_m[  r_{\Sigma,m}(X,Y)  \overline{r_{\Sigma,m}(X',Y')}  \mathcal{K}'(|X-X'|_{\Sigma}^2) (X-X')\otimes (X-X') ] .
		\] 
		converges to the limiting version. This is based on two main ingredients. The first is that by Proposition  \ref{prop:lawoflargenumbers}, 
		with probability one, we know the uniform in $U$ convergence  $f_{U,m}\to f_U$ in $\mathcal{H}$, and therefore in $C^0$. Using Cauchy-Schwarz and the above claim,  up to an error term which uniformly converges to zero, the above expression can be replaced by the empirical expectation of a deterministic function:
		\[
		-\frac{1}{2\lambda}   \E_m[  r_{\Sigma}(X,Y) \overline{  r_{\Sigma}(X',Y')}  \mathcal{K}'(|X-X'|_{\Sigma}^2) (X-X')\otimes (X-X')].  
		\]

		Secondly, we need a uniform law of large numbers to show that with  probability one, the above expression converges to the limiting version uniformly in $\Sigma$. Here the integrability hypothesis on $\mathcal{K}$ and Cauchy-Schwarz guarantees the $L^1$-integrability condition 
		\[
		\E[ | r_{\Sigma}(X,Y)\overline{  r_{\Sigma}(X',Y')}  \mathcal{K}'(|X-X'|_{\Sigma}^2) (X-X')\otimes (X-X')|]<+\infty, 
		\]
		so the law of large number applies pointwise in $\Sigma$, 
		and we can use the quantitative continuity of $F_\Sigma$ on $\Sigma$ (\cf Proposition  \ref{lem:continuityJ}) to deduce the uniform convergence property in $\Sigma$.
	\end{proof}

	\subsection{Continuous extension of the first variation}

	\begin{Notation}
		We denote $Sym^2_{\geq 0}\subset Sym^2 V^*$ as the closed subset of positive semi-definite forms on $V$. This admits a stratification, with open stratum
		\[
		\mathcal{S}_l= \{  \text{semi-definite inner products with rank $l$} \}, 
		\]
		for $l=0,1,\ldots d-1$. Each $\mathcal{S}_l$ is a smooth manifold, and 
		\[
		Sym^2_{\geq 0}=Sym^2_+\cup \bigcup_{l\leq d-1} \mathcal{S}_l.
		\]
	\end{Notation}

	\begin{lem}\label{lem:Jcontinuity2}
		The functional $\mathcal{J}(\Sigma;\lambda)$ extends continuously to the boundary of $Sym^2_{\geq 0}$. Moreover, the minimizer $F_\Sigma$ can be defined for $\Sigma$ on the boundary of $Sym^2_{\geq 0}$, such that $F_\Sigma$ depends continuously on $\Sigma$ in the $\norm{\cdot}_\E$-topology.

	\end{lem}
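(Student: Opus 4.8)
The plan is to reduce the statement to the quantitative continuity bounds of Section~\ref{sect:minimizationproblem}---Proposition~\ref{lem:continuityJ} and Lemma~\ref{lem:Jminimizercontinuity1}---which were proved for an \emph{arbitrary} $U\in\End(V)$ and hence already cover degenerate $U$, combined with the continuity of the matrix square root on $Sym^2_{\geq 0}$. First I would record that $\norm{K}_{L^\infty}\le K(0)=\int_{V^*}k_V\,d\omega=1$ by the normalization (\ref{L1integrablekV}) and the definition of $K$, so that $\lvert K(Uz)-K(\tilde Uz)\rvert^2\le 4$ for every $z\in V$; since $K$ is continuous, $K(\tilde Uz)\to K(Uz)$ pointwise as $\tilde U\to U$, and dominated convergence gives $\E[\lvert K(U(X'-X))-K(\tilde U(X'-X))\rvert^2]\to 0$. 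Inserting this into Proposition~\ref{lem:continuityJ} shows $U\mapsto J(U,\lambda)$ is continuous on all of $\End(V)$, and inserting it into Lemma~\ref{lem:Jminimizercontinuity1} shows $U\mapsto f_U\circ U$ is continuous from $\End(V)$ into $L^2(\mu_X)$, i.e. in the $\norm{\cdot}_\E$-topology.

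Next I would upgrade the rotational invariance of Lemma~\ref{lem:Jrotationalinv} to degenerate arguments. Radiality of $k_V$ gives $\norm{f\circ A}_{\mathcal H}=\norm{f}_{\mathcal H}$ for $A\in O(V)$, hence $I(f,U,\lambda)=I(f\circ A,A^{-1}U,\lambda)$, which yields $f_{A^{-1}U}=f_U\circ A$, and therefore $J(A^{-1}U,\lambda)=J(U,\lambda)$ and $f_{A^{-1}U}\circ(A^{-1}U)=f_U\circ U$. Now if $U^TU=V^TV$ then $\norm{Ux}=\norm{Vx}$ for all $x$, so the isometry $Ux\mapsto Vx$ of $\mathrm{ran}(U)$ onto $\mathrm{ran}(V)$ (both of dimension $\mathrm{rank}(U^TU)$) extends to some $A\in O(V)$ with $V=AU$; applying this with $V=\Sigma^{1/2}$ (note $(\Sigma^{1/2})^T\Sigma^{1/2}=\Sigma=U^TU$) gives, for \emph{every} $U\in\End(V)$,
\[
J(U,\lambda)=J(\Sigma^{1/2},\lambda),\qquad f_U\circ U=f_{\Sigma^{1/2}}\circ\Sigma^{1/2},\qquad \Sigma=U^TU .
\]
This lets me \emph{define} $\mathcal J(\Sigma;\lambda):=J(\Sigma^{1/2},\lambda)$ and $F_\Sigma:=f_{\Sigma^{1/2}}\circ\Sigma^{1/2}$ for all $\Sigma\in Sym^2_{\geq 0}$, consistently with Lemma~\ref{lem:Jrotationalinv} on $Sym^2_+$.

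Finally, since $\Sigma\mapsto\Sigma^{1/2}$ is continuous from $Sym^2_{\geq 0}$ into $\End(V)$ (continuity of the matrix square root on positive semi-definite forms), composing it with the continuity assertions of the first paragraph shows that $\mathcal J(\,\cdot\,;\lambda)$ is continuous on $Sym^2_{\geq 0}$ and that $\Sigma\mapsto F_\Sigma$ is continuous in the $\norm{\cdot}_\E$-topology, which is the claim. I do not expect a genuine obstruction here: the only points that need care are the pointwise-plus-domination argument (immediate once $\norm{K}_{L^\infty}\le 1$ is noted) and the verification that the $O(V)$-invariance used to pass from $U$ to $\Sigma=U^TU$ survives when $U$ drops rank---equivalently, that $J$ and $f_\bullet\circ\bullet$ are constant on the fibres $O(V)\cdot U$ of the proper quotient map $U\mapsto U^TU$ onto $Sym^2_{\geq 0}$.
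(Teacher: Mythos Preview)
Your proof is correct and follows essentially the same approach as the paper: both reduce to the quantitative continuity estimates Proposition~\ref{lem:continuityJ} and Lemma~\ref{lem:Jminimizercontinuity1} (valid for all $U\in\End(V)$), combined with the observation that $J(U,\lambda)$ and $f_U\circ U$ depend only on $\Sigma=U^TU$ by rotational invariance. The only cosmetic difference is that the paper observes the right-hand sides of those estimates are already functions of $\Sigma,\Sigma'$ (via $\mathcal K(|X'-X|_\Sigma^2)$) and invokes the integral-equation characterization Proposition~\ref{prop:integraleqn1} for well-definedness of $F_\Sigma$, whereas you route through the continuous section $\Sigma\mapsto\Sigma^{1/2}$ and explicit $O(V)$-invariance.
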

	
	\begin{proof}
		Motivated by Lemma  \ref{lem:Jrotationalinv},
		for $\Sigma$ on the boundary of $Sym^2_{\geq 0}$, we can define $\mathcal{J}(\Sigma;\lambda):= J(U,\lambda)$ by picking $U$ such that $\Sigma=U^TU$. By Proposition  \ref{lem:continuityJ}, this does not depend on the choice of $U$, and 
		the functional  $\mathcal{J}(\Sigma;\lambda)=J(U,\lambda)$ depends continuously on $\Sigma$, with the uniform modulus of continuity estimate
		\[
		|\mathcal{J}(\Sigma;\lambda)-\mathcal{J}(\Sigma';\lambda)|\leq \frac{1}{2\lambda}\E[|Y|^2 ]  \E[ |\mathcal{K}(|X'-X|_\Sigma^2)- \mathcal{K}(|X'-X|_{\Sigma'}^2) |^2 ]^{1/2}.
		\]

		For semi-definite $\Sigma$, we can still define $F_\Sigma:= f_U\circ U$. This depends only on $U$ through $\Sigma=U^T U$, by the integral equation characterization in Proposition  \ref{prop:integraleqn1},
		\[
		F_\Sigma(x)+ \frac{1}{\lambda} \E[ (Y-F_\Sigma(x))\mathcal{K}(|x-X|_\Sigma^2)  ]=0.
		\]
		Lemma \ref{lem:Jminimizercontinuity1}  implies that $F_\Sigma$ depends continuously on $\Sigma$ in the $\norm{\cdot}_\E$-topology. 
	\end{proof}

	\begin{Def}
		We say $\mathcal{J}(\Sigma;\lambda)$ extends to a $C^1$-function on a convex subset of $Sym^2_{\geq 0}$, if there is a differential $D\mathcal{J}$ on $Sym^2_{\geq 0}$ which is a $Sym^2 V$-valued  continuous function on this subset, such that at any $\Sigma,\Sigma'$ in the subset, we have
		\[
		\mathcal{J}(\Sigma';\lambda)= \mathcal{J}(\Sigma; \lambda)+ \langle D\mathcal{J}(\Sigma;\lambda),\Sigma'-\Sigma\rangle +o(|\Sigma'-\Sigma|).
		\]
	\end{Def}

	\begin{prop}\label{prop:continuousextensionJ}
		In the setup of Theorem \ref{thm:firstvariation}, suppose in addition that on a given compact convex subset of $\Sigma,\Sigma'\in Sym^2_{\geq 0}$, 
		\begin{itemize}
			\item (Uniform integrability) 
			\[
			\sup_\Sigma \E[ |\mathcal{K}'(|X-X'|^2_\Sigma)|^2 |X-X'|^4   ] <+\infty.
			\]
			\item (Uniform modulus of continuity) As $\delta\to 0,$
			\[
			\sup_{|\Sigma-\Sigma'|\leq \delta  } \E[  |\mathcal{K}'(|X-X'|^2_\Sigma) -\mathcal{K}'(|X-X'|^2_{\Sigma'}) |^2 |X-X'|^4    ] \to 0.
			\]
		\end{itemize}
		Then
		$\mathcal{J}(\Sigma;\lambda)$ extends to a $C^1$-function on this compact subset of $Sym^2_{\geq 0}$.
	\end{prop}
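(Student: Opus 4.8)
The plan is to \textbf{define the candidate differential on all of} $Sym^2_{\geq 0}$ by the same formula as in Theorem \ref{thm:firstvariation}, namely
\[
D\mathcal{J}(\Sigma;\lambda):=-\frac{1}{2\lambda}\,\E\big[\,r_\Sigma(X,Y)\,\overline{r_\Sigma(X',Y')}\,\mathcal{K}'(|X-X'|_\Sigma^2)\,(X-X')\otimes(X-X')\,\big],
\]
where on the boundary $r_\Sigma(X,Y)=Y-F_\Sigma(X)$ with $F_\Sigma$ the extended minimizer furnished by Lemma \ref{lem:Jcontinuity2}. On the interior $Sym^2_+$ the hypotheses of Theorem \ref{thm:firstvariation} hold by assumption, so $D\mathcal{J}$ agrees there with $D_\Sigma\mathcal{J}$. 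To see the integrand is absolutely integrable, uniformly over the given compact convex subset $C\subset Sym^2_{\geq 0}$, I would apply Cauchy--Schwarz on the product measure, using that $(X',Y')$ is an independent copy of $(X,Y)$: the integral is bounded by $\norm{r_\Sigma}_\E^2\cdot\E[\,|\mathcal{K}'(|X-X'|_\Sigma^2)|^2|X-X'|^4\,]^{1/2}$, where $\norm{r_\Sigma}_\E^2\le\E[|Y|^2]$ by the trivial upper bound (Lemma \ref{lem:trivialupperbound}) and the last factor is controlled by the uniform integrability hypothesis.

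The core analytic step is to prove that $D\mathcal{J}(\cdot;\lambda)$ is \textbf{continuous on $C$}. I would expand $D\mathcal{J}(\Sigma;\lambda)-D\mathcal{J}(\Sigma';\lambda)$ as a telescoping sum of three expectations, replacing in turn $r_\Sigma(X,Y)$ by $r_{\Sigma'}(X,Y)$, then $\overline{r_\Sigma(X',Y')}$ by $\overline{r_{\Sigma'}(X',Y')}$, and finally $\mathcal{K}'(|X-X'|_\Sigma^2)$ by $\mathcal{K}'(|X-X'|_{\Sigma'}^2)$. For each of the first two (residual) terms, Cauchy--Schwarz together with the independence of $(X,Y)$ and $(X',Y')$ bounds the term by $\E[|Y|^2]^{1/2}\,\norm{F_\Sigma-F_{\Sigma'}}_\E\cdot\E[\,|\mathcal{K}'(|X-X'|_\Sigma^2)|^2|X-X'|^4\,]^{1/2}$, which tends to $0$ because $\Sigma\mapsto F_\Sigma$ is continuous in the $\norm{\cdot}_\E$-topology (Lemma \ref{lem:Jcontinuity2}) and the last factor is uniformly bounded. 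For the third term the same argument bounds it by $\E[|Y|^2]\cdot\E[\,|\mathcal{K}'(|X-X'|_\Sigma^2)-\mathcal{K}'(|X-X'|_{\Sigma'}^2)|^2|X-X'|^4\,]^{1/2}$, which tends to $0$ by the uniform modulus-of-continuity hypothesis. Since $C$ is compact, these estimates furnish a single modulus of continuity for $D\mathcal{J}$ valid on all of $C$.

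To upgrade this to the $C^1$ statement I would integrate along segments. When $\Sigma,\Sigma'\in C$ are such that the open segment between them lies in $Sym^2_+$, the fundamental theorem of calculus applied to the first variation formula of Theorem \ref{thm:firstvariation} gives $\mathcal{J}(\Sigma';\lambda)-\mathcal{J}(\Sigma;\lambda)=\int_0^1\langle D\mathcal{J}((1-t)\Sigma+t\Sigma';\lambda),\Sigma'-\Sigma\rangle\,dt$. For a general pair whose connecting segment may lie on the boundary, I would approximate by $\Sigma_\delta=(1-\delta)\Sigma+\delta\Sigma_*$, $\Sigma'_\delta=(1-\delta)\Sigma'+\delta\Sigma_*$ for a fixed $\Sigma_*\in C\cap Sym^2_+$: then $[\Sigma_\delta,\Sigma'_\delta]\subset Sym^2_+$, the identity applies to it, and letting $\delta\to0$ with the continuity of $\mathcal{J}$ (Lemma \ref{lem:Jcontinuity2}) and of $D\mathcal{J}$ yields the integral identity for $\Sigma,\Sigma'$ themselves. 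The Taylor remainder $\int_0^1\langle D\mathcal{J}((1-t)\Sigma+t\Sigma';\lambda)-D\mathcal{J}(\Sigma;\lambda),\Sigma'-\Sigma\rangle\,dt$ is then bounded in absolute value by $|\Sigma'-\Sigma|$ times the modulus of continuity of $D\mathcal{J}$ on $C$, hence is $o(|\Sigma'-\Sigma|)$ uniformly over $C$, which is the desired $C^1$ property.

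I expect the \textbf{main obstacle} to be the continuity of $D\mathcal{J}$ up to the boundary: there $F_\Sigma$ is only controlled in the weak $\norm{\cdot}_\E$-norm, not in $C^0$ or a fixed Hilbert norm, so one cannot pull a residual out in sup-norm; it is precisely the independence of $(X,Y)$ from $(X',Y')$ that lets Cauchy--Schwarz split the weakly-controlled factor $\norm{F_\Sigma-F_{\Sigma'}}_\E$ off from the strongly-controlled factor involving $\mathcal{K}'$. A secondary technical point is the degenerate possibility that $C$ lies entirely in $\partial Sym^2_{\geq 0}$, in which case $\Sigma_*$ must be chosen in a slightly enlarged compact convex set on which the two quantitative hypotheses still hold; this causes no essential difficulty, since those hypotheses involve only the distribution of $X$.
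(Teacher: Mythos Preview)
Your proposal is correct and follows essentially the same route as the paper: define the differential by the same expectation formula on all of $Sym^2_{\geq 0}$, prove its continuity by splitting the difference into a residual part (controlled by $\norm{F_\Sigma-F_{\Sigma'}}_\E$ via Cauchy--Schwarz, the trivial upper bound, and uniform integrability) and a kernel part (controlled by the uniform modulus of continuity hypothesis), then use the fundamental theorem of calculus on interior segments together with a limiting/convexity argument to reach the boundary. The only cosmetic differences are that the paper bundles your two residual-replacement steps into one, and is less explicit than you about the interior-approximation device $\Sigma_\delta$.
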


	\begin{proof}
		First, we claim the formula 
		\[
		D_\Sigma \mathcal{J}(\Sigma; \lambda)
		:=  -\frac{1}{2\lambda}   \E[  r_\Sigma(X,Y) \overline{ r_\Sigma(X',Y') } \mathcal{K}'(|X-X'|_{\Sigma}^2) (X-X')\otimes (X-X') ] 
		\]
		extends continuously to the subset of the boundary of $Sym^2_{\geq 0}$. 
		By Cauchy-Schwarz, the trivial upper bound Lemma \ref{lem:trivialupperbound},  and the uniform integrability hypothesis, we can bound the difference between $D_\Sigma\mathcal{J}(\Sigma'; \lambda)$ and 
		\begin{equation}\label{eqn:gradientcontinuousextension1}
			-\frac{1}{2\lambda}   \E[  r_\Sigma(X,Y)  \overline{r_\Sigma(X',Y')}  \mathcal{K}'(|X-X'|_{\Sigma'}^2) (X-X')\otimes (X-X') ] 
		\end{equation}
		by the estimable quantity
		\[
		\begin{split}
			& \lambda^{-1} \E[|F_{\Sigma'}(X)- F_\Sigma(X)|^2  ]^{1/2} \E[|r_\Sigma(X,Y)|^2 ] \E[ |\mathcal{K}'(|X-X'|^2_\Sigma)|^2|X-X'|^4  ]^{1/2} 
			\\
			\leq & C\lambda^{-1} \E[|F_{\Sigma'}(X)- F_\Sigma(X)|^2  ]^{1/2} \E[|Y|^2] .
		\end{split}
		\]
		By Lemma  \ref{lem:Jminimizercontinuity1},
		\[
		\E[ |F_{\Sigma'}(X)- F_{\Sigma}(X)|^2 ] \leq  \frac{1}{\lambda^2} \E[|Y|^2 ]  \E[ |\mathcal{K}(|X'-X|_{\Sigma'}^2)-\mathcal{K}(|X'-X|_{\Sigma}^2)  |^2 ] ,  
		\]
		so this quantity is $o(|\Sigma'-\Sigma|)$. On the other hand, the difference between the expression (\ref{eqn:gradientcontinuousextension1}) and $D_\Sigma\mathcal{J}(\Sigma, \lambda)$ can be bounded using Cauchy-Schwarz and the trivial upper bound Lemma \ref{lem:trivialupperbound}, by the quantity
		\[
		\frac{1}{2\lambda} \E[|Y|^2] \E[  |\mathcal{K}'(|X-X'|^2_\Sigma) -\mathcal{K}'(|X-X'|^2_{\Sigma'}) |^2 |X-X'|^4   ]^{1/2},
		\]
		which is $o(|\Sigma'-\Sigma|)$ by the uniform modulus of contituity hypothesis. 
		This proves that $\lim_{\Sigma'\to \Sigma}D_\Sigma\mathcal{J}(\Sigma', \lambda) = D_\Sigma\mathcal{J}(\Sigma, \lambda) $.

		For $\Sigma, \Sigma'\in Sym^2_+$, the fundamental theorem of calculus implies
		\[
		\mathcal{J}(\Sigma')-\mathcal{J}(\Sigma)=\int_0^1 \langle D\mathcal{J}( s\Sigma'+ (1-s) \Sigma), \Sigma'-\Sigma\rangle  ds.
		\]
		By a limiting argument, using the convexity of the subset of $Sym^2_{\geq 0}$, the formula extends to $\Sigma,\Sigma'$ on the boundary, so 
		\[
		\mathcal{J}(\Sigma';\lambda)= \mathcal{J}(\Sigma; \lambda)+ \langle D\mathcal{J}(\Sigma;\lambda),\Sigma'-\Sigma\rangle +o(|\Sigma'-\Sigma|),\quad \forall \Sigma'\in Sym^2_{\geq 0}
		\]
		holds for boundary points as well. 
	\end{proof}

	\begin{eg}
		The hypothesis of Proposition  \ref{prop:continuousextensionJ} is satisfied, if the function $\mathcal{K}':\R_{\geq 0}\to \R$ is a bounded continuous function, and $\E[|X|^4]<+\infty$. In particular, the Sobolev kernel with $\gamma>1$ (Example \ref{eg:Sobolevkernel}), and the Gaussian kernel (Example \ref{eg:Gaussiankernel}) satisfy this condition.

	\end{eg}

	Moreover, Proposition  \ref{prop:continuousextensionJ} applies also in some cases where $\mathcal{K}'$ is unbounded near the origin, such as Sobolev kernels with $0<\gamma<1$ in Example \ref{eg:Sobolevkernel}, where $|\mathcal{K}'(r)|= O(r^{\gamma-1})$ as $r\to 0$.

	\begin{lem}\label{lem:boundeddensitygradientbound}
		Given a point in the boundary stratum 
		$\mathcal{S}_l\subset Sym^2_{\geq 0}$, there is a small compact convex neighbourhood in $Sym^2_{\geq 0}$ such that the following holds.

		Suppose $X$ is a continuous random variable $X$ with bounded probability density in $V$, and compact support in $V$. Suppose the radial function $\mathcal{K}(r)$ is $C^1$ on $\R_+$, and $\mathcal{K}'$ remains bounded near infinity, and has a power law bound $|\mathcal{K}'(r)|= O(r^{-\alpha})$ near zero, where $0<\alpha< l/4$ .  Then the hypothesis of Proposition  \ref{prop:continuousextensionJ} is satisfied on the convex neighbourhood.  
	\end{lem}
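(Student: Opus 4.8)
The plan is to verify directly the two ``in addition'' hypotheses of Proposition~\ref{prop:continuousextensionJ} --- the uniform integrability and the uniform modulus of continuity of $\Sigma\mapsto \mathcal{K}'(|X-X'|_\Sigma^2)|X-X'|^4$ --- on a neighbourhood of the given point $\Sigma_0\in\mathcal{S}_l$. I would take $\mathcal{N}=\overline{B(\Sigma_0,\epsilon)}\cap Sym^2_{\geq 0}$, which is automatically compact and convex, and fix $\epsilon$ small at the end. Write $Z=X-X'$: since $X$ has bounded density $\rho$ with compact support, $Z$ has density $g=\rho*\check\rho$ with $\|g\|_{L^\infty}\leq\|\rho\|_{L^\infty}<\infty$ and $\operatorname{supp} g\subseteq B_R$ for a fixed ball. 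From the hypotheses on $\mathcal{K}$ one records two elementary facts: there are $r_1>0$, $C<\infty$ with $|\mathcal{K}'(r)|^2\leq C(1+r^{-2\alpha})$ for $r\in(0,R'^2]$ (here $R'$ bounds $|z|_\Sigma$ for $z\in B_R$, $\Sigma\in\mathcal{N}$), and $\mathcal{K}'$ is uniformly continuous on every compact subinterval of $(0,\infty)$.

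The crux is a bound on the singular integral $\int_{B_R}|z|_\Sigma^{-4\alpha}\,dz$ that is uniform as $\Sigma$ degenerates. First, by Weyl's inequality and $\operatorname{rank}\Sigma_0=l$, the $l$-th largest eigenvalue obeys $\mu_l(\Sigma)\geq\mu_l(\Sigma_0)-\|\Sigma-\Sigma_0\|_{\mathrm{op}}\geq c_0>0$ for all $\Sigma\in\mathcal{N}$, once $\epsilon<\tfrac12\mu_l(\Sigma_0)$. Diagonalising $\Sigma$ and discarding the non-negative tail eigenvalues gives $|z|_\Sigma^2\geq c_0\,|P_{E(\Sigma)}z|^2$, where $E(\Sigma)$ is the span of eigenvectors for the $l$ largest eigenvalues, an $l$-dimensional subspace. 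Splitting $z=z_E+z_{E^\perp}$ and using Fubini,
\[
\int_{B_R}|z|_\Sigma^{-4\alpha}\,dz\;\leq\;c_0^{-2\alpha}\int_{B_R}|P_{E(\Sigma)}z|^{-4\alpha}\,dz\;\leq\;c_0^{-2\alpha}\Big(\int_{\{u\in\R^l:\,|u|\leq R\}}|u|^{-4\alpha}\,du\Big)\,\omega_{d-l}R^{d-l},
\]
and the inner integral is finite \emph{precisely because} $4\alpha<l$, with a bound independent of $\Sigma\in\mathcal{N}$. The same computation restricted to $\{z\in B_R:|z|_\Sigma<t\}\subseteq\{z\in B_R:|P_{E(\Sigma)}z|<t/\sqrt{c_0}\}$ produces a quantity $\rho(t)$, independent of $\Sigma$, with $\rho(t)\to 0$ as $t\to 0$. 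Combining the first estimate with $|\mathcal{K}'|^2\leq C(1+r^{-2\alpha})$, $|z|^4\leq R^4$ on $B_R$, and $\|g\|_{L^\infty}<\infty$ immediately gives $\sup_{\Sigma\in\mathcal{N}}\E[|\mathcal{K}'(|Z|_\Sigma^2)|^2|Z|^4]<\infty$, i.e.\ the uniform integrability bullet (and this also covers Condition~3 of Theorem~\ref{thm:firstvariation}).

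For the uniform modulus of continuity, given $\eta>0$ I would split $B_R$, up to the $g$-null set $\{|z|_\Sigma=0\}$, into $\{|z|_\Sigma<t\}$ and $\{|z|_\Sigma\geq t\}$. Using $\big||z|_\Sigma^2-|z|_{\Sigma'}^2\big|\leq\|\Sigma-\Sigma'\|_{\mathrm{op}}|z|^2\leq\delta R^2$ whenever $\|\Sigma-\Sigma'\|_{\mathrm{op}}\leq\delta$: on the low region (imposing $\delta R^2<t^2$ and $2t^2\leq r_1$) both $|z|_\Sigma^2,|z|_{\Sigma'}^2$ stay in $(0,2t^2]$, so the integrand $|\mathcal{K}'(|z|_\Sigma^2)-\mathcal{K}'(|z|_{\Sigma'}^2)|^2|z|^4$ is dominated by $CR^4(|z|_\Sigma^{-4\alpha}+|z|_{\Sigma'}^{-4\alpha})$, and the $\rho(\cdot)$-estimate makes this contribution $\leq\eta/2$ once $t$ is chosen small; on the high region both arguments lie in the fixed compact interval $[t^2/2,\,R'^2+t^2]\subset(0,\infty)$ where $\mathcal{K}'$ is uniformly continuous with some modulus $\omega_t$, so the integrand is $\leq\omega_t(\delta R^2)^2R^4$ and its contribution tends to $0$ as $\delta\to 0$ for this fixed $t$. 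Letting first $t$ then $\delta$ shrink gives the second bullet. Hence both hypotheses of Proposition~\ref{prop:continuousextensionJ} hold on $\mathcal{N}$, proving the lemma.

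The step I expect to be the main obstacle is the uniform-in-$\Sigma$ control of $\int_{B_R}|z|_\Sigma^{-4\alpha}\,dz$ as $\Sigma\to\Sigma_0$: a naive pointwise lower bound of the form $|z|_\Sigma^2\gtrsim\mathrm{dist}(z,\ker\Sigma_0)^2$ simply fails (the kernel of a nearby rank-$l$ form can sit anywhere), so one must instead bound below by $c_0|P_{E(\Sigma)}z|^2$ with the \emph{moving} but always $l$-dimensional subspace $E(\Sigma)$, after which the singular integral collapses to one over $\R^l$ and $\alpha<l/4$ is exactly the integrability threshold. Everything else (the convolution density bound, the comparison $|z|_\Sigma\approx|z|_{\Sigma'}$, the uniform continuity of $\mathcal{K}'$ away from $0$) is routine.
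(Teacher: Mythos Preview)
Your proof is correct and follows essentially the same approach as the paper: both arguments hinge on the observation that for $\Sigma$ in a small neighbourhood of $\Sigma_0\in\mathcal{S}_l$ one has $|z|_\Sigma^2\gtrsim|\text{projection of }z\text{ to an }l\text{-dimensional subspace}|^2$, after which $4\alpha<l$ is exactly the threshold for the singular part to be integrable, and the modulus of continuity then follows by splitting off a small neighbourhood of $\{|z|_\Sigma=0\}$ and using uniform continuity of $\mathcal{K}'$ on the complement. The only cosmetic difference is that the paper packages the integrability as a tail-probability (layer-cake) estimate $\int_1^\infty \mathbb{P}(|\mathcal{K}'|^2\geq t)\,dt\lesssim\int_1^\infty t^{-l/4\alpha}\,dt$, whereas you bound the singular integral $\int_{B_R}|z|_\Sigma^{-4\alpha}\,dz$ directly via Fubini; your treatment of the moving eigenspace $E(\Sigma)$ is in fact more carefully stated than the paper's ``$l$ coordinate components''.
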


	\begin{proof}
		By the assumptions $\mathcal{K}'(|X-X'|_\Sigma^2)$ is only large if $|X-X'|_\Sigma^2$ is small, and $|\mathcal{K}'(r)|=O(r^{-\alpha})$, 
		so for $t\gg 1$,
		\[
		\mathbb{P}( |\mathcal{K}'(|X-X'|_\Sigma^2)|^2 \geq t) \leq \mathbb{P}(|X-X'|_\Sigma^2 \leq Ct^{-1/2\alpha}).
		\]
		For any $\Sigma$ in the small neighbourhood of the boundary point in $\mathcal{S}_l$, there are $l$ coordinate components of $X-X'$, such that $\sum_1^l |(X-X')_i|^2\leq C|X-X'|_\Sigma^2$. 
		Thus
		\[
		\mathbb{P}(|X-X'|_\Sigma^2 \leq Ct^{-1/2\alpha}) \leq \mathbb{P}( \sum_1^l |(X-X')_i|^2 \leq C't^{-1/2\alpha})
		\]
		The assumptions on $X$ implies that $X-X'$ has bounded density and compact support, so
		\[
		\mathbb{P}( \sum_1^l|(X-X')_i|^2 \leq Ct^{-1/2\alpha}) \leq C't^{-l/4\alpha},
		\]
		whence
		\[
		\begin{split}
			& \E[  |\mathcal{K}'(|X-X'|^2_\Sigma)|^2 |X-X'|^4   ]\leq C\E[  |\mathcal{K}'(|X-X'|^2_\Sigma)|^2   ] 
			\\
			& \leq C(1+ \int_1^\infty \mathbb{P}( |\mathcal{K}'(|X-X'|_\Sigma^2)|^2 \geq t)  dt)
			\\
			&  \leq C(1+ \int_1^\infty C' t^{-l/4\alpha}  dt).
		\end{split}
		\]
		Since $l/4\alpha>1$ by assumption, this integral converges. The estimate is uniform for all $\Sigma$ in the small neighbourhood, so the uniform integrability hypothesis is verified.

		The above discussion also shows that the contribution to the integral from 
		the region with $ |X-X'|_\Sigma^2\ll 1$ is small. Since $\mathcal{K}'$ is continuous away from the origin, and $X-X'$ is bounded, we see the uniform modulus of continuity in $\Sigma$. 
	\end{proof}

	\begin{rmk}
		The lesson from the proof of Lemma \ref{lem:boundeddensitygradientbound} is that in order for the first variation to be very big near some boundary stratum $\mathcal{S}_l$ with $l>4\alpha+1$, then $X-X'$ should have local concentration along some subspace $W\subset V$, where $W$ is the null subspace of $\Sigma$. 
	\end{rmk}


	For applications to our companion paper, we also need the following criterion for the Lipschitz continuity of $D\mathcal{J}$.

	\begin{cor}\label{cor:Lipextension}
		Suppose that $\mathcal{K}':\R_{\geq 0}\to \R$ is a bounded continuous function, $r\mathcal{K}'(r)$ is bounded for large $r$, and $\mathcal{K}''$ is bounded. Suppose $\E[|Y|^2]+ \E[|X|^8]<+\infty$. Then  $D\mathcal{J}$ extends to a Lipschitz continuous tensor valued function on $Sym^2 V^*$.
	\end{cor}

	\begin{proof}
		The conditions of the first variation formula Thm. \ref{thm:firstvariation} and the $C^1$-extension Prop. \ref{prop:continuousextensionJ} hold in our setting. It suffices to prove $D\mathcal{J}$ is Lipschitz continuous as a tensor field on $Sym^2_{\geq 0}$.

		We apply the first variation formula to estimate $|D \mathcal{J}(\Sigma;\lambda)- D\mathcal{J}(\Sigma';\lambda)|$. There are two sources of errors. The first error is
		\[
		\begin{split}
			& \E[  |r_\Sigma(X,Y)- r_{\Sigma'}(X,Y)| | \overline{r_\Sigma(X',Y')} | |\mathcal{K}'(|X-X'|_{\Sigma}^2) (X-X')\otimes (X-X') |]
			\\
			\leq &  \E[  |r_\Sigma(X,Y)- r_{\Sigma'}(X,Y)|^2   ]^{1/2}  \E[ |r_\Sigma(X',Y')|^2  ]^{1/2} \E[ |X-X'|^4   ]^{1/2} \norm{\mathcal{K}'}_{C^0} 
			\\
			\leq  &  \E[ |F_\Sigma(X)- F_{\Sigma'}(X)|^2 ]^{1/2}  \E[|Y|^2]^{1/2} \E[|X-X'|^4]^{1/2} \norm{\mathcal{K}'}_{C^0}
			\\
			\leq & C \lambda^{-1}\E[   |\mathcal{K}(|X-X'|^2_\Sigma)- \mathcal{K}(|X-X'|^2_{\Sigma'})|^2 ]^{1/2}  \E[|Y|^2] \E[|X-X'|^4]^{1/2} \norm{\mathcal{K}'}_{C^0}
			\\
			\leq & C \lambda^{-1}|\Sigma-\Sigma'|  \E[|Y|^2] \E[|X|^4] \norm{\mathcal{K}'}_{C^0}^2
		\end{split}
		\]
		Here the second line uses Cauchy-Schwarz and the independence of $(X,Y)$ with $(X',Y')$, the thrid line uses the trivial upper bound Lem. \ref{lem:trivialupperbound}, while the fourth line uses the quantitative continuity Lemma \ref{lem:Jminimizercontinuity1}, and the last line uses mean value inequality. This gives a Lipschitz bound on the first error.

		The second error is
		\[
		\begin{split}
			& \E[  |r_\Sigma(X,Y)| | \overline{r_\Sigma(X',Y')} | |(\mathcal{K}'(|X-X'|_{\Sigma}^2)- \mathcal{K}'(|X-X'|_{\Sigma}^2)) (X-X')\otimes (X-X') |]
			\\
			\leq &  C\norm{\mathcal{K}''}_{C^0} |\Sigma-\Sigma'| \E[      |r_\Sigma(X,Y)| | \overline{r_\Sigma(X',Y')} | |X-X' |^4     ]
			\\
			\leq & C\norm{\mathcal{K}''}_{C^0} |\Sigma-\Sigma'| \E[      |r_\Sigma(X,Y) |^2]^{1/2}   \E[|r_\Sigma(X',Y')|^2]^{1/2}  \E[ | X-X' |^8     ]^{1/2}
			\\
			\leq & C\norm{\mathcal{K}''}_{C^0} |\Sigma-\Sigma'| \E[   |Y|^2]  \E[ | X|^8     ]^{1/2}
		\end{split}
		\]
		Here the second line uses the mean value inequality, the third line uses Cauchy-Schwarz and the independence of $(X,Y)$ with $(X',Y')$, and the fourth line uses the trivial upper bound Lem. \ref{lem:trivialupperbound}. We conclude that this error is also Lipschitz bounded, hence $|D \mathcal{J}(\Sigma;\lambda)- D\mathcal{J}(\Sigma';\lambda)| \leq C|\Sigma-\Sigma'|$ with constant depending on the various moment bounds on $X,Y$.
	\end{proof}

	\subsection{Partial compactification}\label{sect:partialcompactification}

	\begin{Def}
		The partial compactification $\overline{Sym ^2_{\geq 0} } $ consists of $Sym^2_{\geq 0}$ and the points at infinity. Here the points at infinity correspond to a pair $(W,\Sigma_W)$, where $W\subset V$ is a proper subspace of $V$ which is allowed to be zero, and $\Sigma_W$ is a semi-positive inner product on $W$.

	\end{Def}

	\begin{Def}
		(Topology at infinity) A sequence of $\Sigma_i\in Sym^2_+$ is said to converge to a point at infinity $(W,\Sigma_W)$, if the inner products on $W$ induced by the restriction of $\Sigma_i$ converge to $\Sigma_W$, and the quotient inner products on $V/W$ induced by $\Sigma_i$ are bounded below by $\Lambda_i^2 I$, where $\Lambda_i\to +\infty$. 
	\end{Def}

	\begin{prop}
		The function $\mathcal{J}$ extends to a continuous function on the partial compactification  $\overline{Sym ^2_{\geq 0} } $.
	\end{prop}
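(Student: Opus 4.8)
The plan is to show that $\mathcal{J}$ is well-defined on the boundary-at-infinity points $(W,\Sigma_W)$ via the dimensional reduction already established, and then check continuity separately for interior sequences converging to such a point, and for sequences of boundary-at-infinity points converging among themselves.

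First I would define the candidate extension. Given a point at infinity $(W,\Sigma_W)$, pick $U_{11}$ with $U_{11}^TU_{11}=\Sigma_W$ acting on $W$, form the dimensionally reduced RKHS $\mathcal{H}_W$ with its rotationally invariant kernel $\mathcal{K}_W(|\cdot|^2_{\Sigma_W})$ (obtained by integrating $k_V$ over $W'^*$ as in Section \ref{sect:dimreductionRKHS}), and decompose the marginal of $X_{W'}$ into its atomic part $\sum_i p_i\delta_{a_i}$ and non-atomic part. Then set
\[
\mathcal{J}(W,\Sigma_W;\lambda):= \frac{1}{2}\E[|Y|^2\1_{X_{W'}\notin\{a_i\}}] + \sum_{i=1}^\infty J_i(U_{11},\lambda),
\]
where $J_i$ is the dimensionally reduced minimization problem in $\mathcal{H}_W$; by Lemma \ref{lem:Jrotationalinv} applied to the reduced problem this depends only on $\Sigma_W$, not the choice of $U_{11}$. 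Theorem \ref{thm:asymptoticvalue} is precisely the statement that any interior sequence $\Sigma_i\to(W,\Sigma_W)$ in the sense of the topology at infinity — which forces exactly the hypotheses $U_{11}\to U_{11}^\infty$, $U_{21}=0$ (automatic since $W\perp W'$), and $|U_{22}x_{W'}|\ge\Lambda_i|x_{W'}|$ with $\Lambda_i\to\infty$ — has $\mathcal{J}(\Sigma_i;\lambda)\to\mathcal{J}(W,\Sigma_W;\lambda)$. Strictly one must check that the full asymptotic hypotheses of Section \ref{sect:minimizationproblem} follow from the topology-at-infinity definition; this is a routine linear-algebra check using the eigenvalue/eigenspace decomposition noted in the Discussions of Section \ref{Translationinvkernel} (the restriction of $\Sigma_i$ to $W$ and the quotient on $V/W$ control all four blocks, after rotating so that $W,W'$ are orthogonal).

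Next I would handle sequences of points at infinity $(W^{(k)},\Sigma_W^{(k)})\to(W,\Sigma_W)$, and mixed sequences. The cleanest route is a diagonal/interleaving argument: since the interior $Sym^2_+$ is dense and $\mathcal{J}$ restricted to it is continuous toward each boundary-at-infinity point (previous paragraph) and toward each finite semidefinite point (Lemma \ref{lem:Jcontinuity2}), it suffices to show that whenever $\Sigma_i\to p$ and $\Sigma_i'\to p$ (same limit point $p\in\overline{Sym^2_{\ge0}}$) with $\Sigma_i,\Sigma_i'\in Sym^2_+$, one has $\mathcal{J}(\Sigma_i;\lambda)-\mathcal{J}(\Sigma_i';\lambda)\to 0$; then $\mathcal{J}$ extends uniquely and continuously by the standard extension-of-uniformly-continuous-maps criterion. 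For $p\in Sym^2_{\ge0}$ this is Proposition \ref{lem:continuityJ}. For $p$ a point at infinity, I would instead argue via the approximate minimizer: Theorem \ref{thm:asymptoticapproximateminimizer} gives, for each fixed $m$, a test function $f^{(m)}$ (built from the reduced minimizers $g^*_i(U_{11}^\infty\cdot)$, $i\le m$) such that $I(f^{(m)},U,\lambda)\to\frac12\E[|Y|^2\1_{X_{W'}\notin\{a_1,\dots,a_m\}}]+\sum_1^m J_i(U_{11}^\infty,\lambda)$ as the index $\to\infty$, with an error controlled uniformly by $\E[|Y|^2\1_{X_{W'}\notin\{a_i\}_{m+1}^\infty}]\to 0$. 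Using $f^{(m)}$ (with $U_{11}^\infty=\Sigma_W$) simultaneously as an upper-bound test function along both sequences $\Sigma_i$ and $\Sigma_i'$ — which is legitimate because both have $U_{11}\to U_{11}^\infty$ and $\Lambda_i,\Lambda_i'\to\infty$ — yields $\limsup_i|\mathcal{J}(\Sigma_i;\lambda)-\mathcal{J}(\Sigma_i';\lambda)|\le 2\E[|Y|^2\1_{X_{W'}\notin\{a_i\}_{m+1}^\infty}]$, and letting $m\to\infty$ closes the argument.

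The main obstacle I anticipate is the bookkeeping in the mixed/neighbouring-point case: the data $(W^{(k)},\Sigma_W^{(k)})$ may have $W^{(k)}\ne W$, so the atomic decomposition of $X_{W'^{(k)}}$, the reduced kernel $\mathcal{K}_W$, and the reduced problems $J_i$ all vary with $k$, and one must show these converge appropriately to those attached to $(W,\Sigma_W)$. The saving grace is that near a point at infinity the topology forces $W^{(k)}$ to be "mostly aligned" with $W$ while the complementary scales blow up, so any atom $a_i$ of $X_{W'}$ is still seen as (approximately) an atom at infinity, and the asymptotic-orthogonality mechanism of Corollary \ref{cor:asymptoticorthogonalityoptimalinterpolation} continues to decouple the contributions. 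I would make this precise by choosing, for each $k$, a single matrix $U^{(k)}$ realizing $(W^{(k)},\Sigma_W^{(k)})$ as a genuine limit of interior points, interleaving these into one doubly-indexed sequence, and invoking Theorem \ref{thm:asymptoticvalue} along the interior directions — reducing everything to the interior-convergence statement already proved. Everything else is a routine application of Lemma \ref{lem:Jcontinuity2}, Proposition \ref{lem:continuityJ}, Theorem \ref{thm:asymptoticvalue}, and Theorem \ref{thm:asymptoticapproximateminimizer}.
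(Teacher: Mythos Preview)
Your first two paragraphs are essentially the paper's proof: translate the topology-at-infinity convergence into the block-matrix hypotheses of Section~\ref{Translationinvkernel} (the paper does this via Gram--Schmidt on $U$ so that $U_{21}=0$, then reads off $U_{11}^T U_{11}=\Sigma|_W\to\Sigma_W$ and $U_{22}^T U_{22}=$ quotient inner product $\to\infty$), and invoke Theorem~\ref{thm:asymptoticvalue}. That is the entire argument in the paper.

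The remainder of your proposal --- sequences of points at infinity converging among themselves, mixed sequences, the diagonal/interleaving argument, and the worry about varying $W^{(k)}$ --- addresses a stronger notion of continuity than the paper actually states or even defines. The paper's Definition of the topology at infinity only specifies when an \emph{interior} sequence $\Sigma_i\in Sym^2_+$ converges to a point at infinity; it never defines neighbourhoods of $(W,\Sigma_W)$ that contain other points at infinity, nor convergence of one boundary-at-infinity point to another. So ``continuous on $\overline{Sym^2_{\geq 0}}$'' in the paper's sense reduces to (i) continuity on $Sym^2_{\geq 0}$, already handled by Lemma~\ref{lem:Jcontinuity2}, and (ii) the interior limit at each point at infinity exists and is independent of the approximating sequence, which is exactly Theorem~\ref{thm:asymptoticvalue}. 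Your extra work is not wrong, and would be needed if one equipped $\overline{Sym^2_{\geq 0}}$ with a genuine topology making it, say, sequentially compact; but for the statement as the paper intends it, you can stop after your second paragraph.
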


	\begin{proof}
		This is a reinterpretation of Theorem \ref{thm:asymptoticvalue}. To match the setting, given a sequence $\Sigma$ tending to $(W,\Sigma_W)$ (we suppress the subscripts), by the Gram-Schmidt process, we can find $U\in \End{V}$ such that $U$ preserves the subspace $W$, and $\Sigma=U^TU$. Writing out the matrix form,
		\[
		U= \begin{bmatrix}
			U_{11}, & U_{12}\\
			0  , &U_{22}
		\end{bmatrix} \in 
		\begin{bmatrix}
			\End(W), & \Hom(W', W)\\
			\Hom(W, W'), & \End(W')
		\end{bmatrix}. 
		\]
		By assumption, $\Sigma|_W= U_{11}^T U_{11}$ tends to $\Sigma_W$, so we can arrange that $U_{11}$ also converges to some $U_{11}^\infty$. The quotient inner product  induced by  $\Sigma$ on $W'\simeq V/W$ is given by $U_{22}^T U_{22}$, so $|U_{22}x_{W'}|\geq \Lambda_i |x_{W'}|$ for any $x_{W'}\in W'$, with $\Lambda_i\to +\infty. $ This is precisely the setting of Section \ref{Translationinvkernel}. Theorem \ref{thm:asymptoticvalue} then implies that $\mathcal{J}(\Sigma;\lambda)$ has a unique limit.
	\end{proof}

	\subsection{Completely monotone kernels}\label{sect:completelymontonekernel}

	A distinguished class of rotationally symmetric kernels consists of kernels representable by a weighted sum of Gaussians,
	\begin{equation}\label{eqn:completelymonotonekernel}
		K(x)= \mathcal{K}(|x|^2_V)= \int_0^{\infty} e^{-t |x|_V^2} d
		\nu(t),
	\end{equation}
	for some finite measure $\mu$.  The celebrated theorem of Schoenberg \cite{Schoenberg} says that given a radial profile function $\mathcal{K}$, then $K(x)=\mathcal{K}(|x|^2)$ defines a RKHS for all dimensions of $V$, if and only if this Gaussian sum representation formula holds, if and only if the profile function $\mathcal{K}$ is  a \emph{completely monotone function}, meaning that $(-1)^m \frac{d^m \mathcal{K}}{dr^m} $ has the sign of $(-1)^m$.

	In our normalization $
	K(0)=\int_{V^*} k_Vd\omega=1$. Upon differentiation,
	\begin{equation}\label{eqn:GaussiansumK'}
		\mathcal{K}'(r)= -\int_0^\infty e^{-tr} td\nu(t).
	\end{equation}
	In particular, the dimensional constants do not explicitly arise in the first variation formula \textit{}(\ref{eqn:Jfirstvariation3}). The function $k_V$ can be obtained by Fourier transform,
	\[
	k_V(\omega)= \int_0^\infty (\pi t^{-1})^{d/2} e^{-\pi^2 t^{-1} |\omega|^2} d\nu(t).
	\]
	Observe also that if $\mathcal{K}$ is completely monotone, then so is $\mathcal{K}^2$. In fact, 
	\[
	\mathcal{K}^2(r)= \int_0^\infty e^{-tr} d\tilde{\nu}(t),
	\]
	where the measure $\tilde{\nu}$ satisfies
	\[
	\int_0^\infty f(t) d\tilde{\nu}(t)= \int_0^\infty\int_0^\infty f(s+t) d\nu(t)d\nu(s).
	\]

	\begin{eg}\label{eg:Gaussiankernel}
		The Gaussian kernel corresponds to the case that $K(x)=e^{-\beta |x|_V^2}$ for some $\beta>0$, so the measure $\mu$ is a Dirac measure. By taking the Fourier transform,
		\[
		k_V(\omega)= (\pi \beta^{-1})^{d/2} e^{- \pi^2 
			\beta^{-1} |
			\omega|^2}.
		\]
	\end{eg}

	\begin{eg}
		The kernel
		$K(x)= \frac{1}{(1+ |x|^2_V)^\alpha}$ for $\alpha>0$ corresponds to the radial profile function $\mathcal{K}(r)= \frac{1}{(1+r)^\alpha}$, which is completely monotone.
	\end{eg}

	\begin{eg}\label{eg:Sobolevkernel}
		The Sobolev kernels can be defined by
		\begin{equation}\label{eqn:Sobolevkernel1}
			k_V(\omega) =k_{d,\gamma}(|\omega|^2)=\frac{(4\pi)^{d/2}\Gamma(\gamma+d/2 )}{  \Gamma(\gamma) } (1+|2\pi \omega|^2)^{-\gamma-d/2}.
		\end{equation}
		Here $\gamma> 0$ to ensure the $L^1$ finiteness. The resulting RKHS are the standard Sobolev spaces on the $d$-dimensional Euclidean space $V$, and the extra parameter $\gamma$ controls the smoothness of functions in $\mathcal{H}$.  The choice of $2\pi$ is aimed at reducing the appearance of dimensional constants. The coefficient is designed to normalize the $L^1$ integral to one.

		\begin{lem}\label{lem:Gaussianrep}
			(Gaussian sum representation) Suppose $\gamma>0$, then
			\[
			k_V(\omega)= \frac{ (4\pi)^{d/2} }{\Gamma(\gamma)} \int_0^{+\infty} y^{\gamma+d/2-1} e^{- (1+|2\pi \omega|^2) y} dy.
			\]
		\end{lem}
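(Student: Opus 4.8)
\textbf{Proof proposal for Lemma \ref{lem:Gaussianrep}.} The plan is to recognize the right-hand side as an instance of the standard Gamma integral. Recall that for any real $s>0$ and any $a>0$ one has the classical identity
\[
\int_0^{+\infty} y^{s-1} e^{-ay}\, dy = \Gamma(s)\, a^{-s},
\]
which follows from the definition $\Gamma(s)=\int_0^\infty t^{s-1}e^{-t}\,dt$ by the change of variables $t=ay$. I would first note that the integral on the right-hand side of the asserted formula is of exactly this shape, with the exponent parameter $s=\gamma+d/2$ and the decay rate $a=1+|2\pi\omega|^2$. Since $\gamma>0$ and $d\ge 1$, we have $s=\gamma+d/2>0$, so the integral converges absolutely at both endpoints (integrability at $0$ is where the hypothesis $\gamma>0$, equivalently $s>0$, is used; integrability at $\infty$ is automatic because $a>0$).

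Applying the identity with these values gives
\[
\int_0^{+\infty} y^{\gamma+d/2-1} e^{-(1+|2\pi\omega|^2)y}\, dy = \Gamma\!\left(\gamma+\tfrac{d}{2}\right) \left(1+|2\pi\omega|^2\right)^{-\gamma-d/2}.
\]
Multiplying both sides by $\frac{(4\pi)^{d/2}}{\Gamma(\gamma)}$ and comparing with the definition (\ref{eqn:Sobolevkernel1}) of $k_V$, namely $k_V(\omega)=\frac{(4\pi)^{d/2}\Gamma(\gamma+d/2)}{\Gamma(\gamma)}(1+|2\pi\omega|^2)^{-\gamma-d/2}$, yields the claimed representation.

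There is essentially no obstacle here: the only point requiring a word is the convergence of the $y$-integral near $y=0$, which is precisely guaranteed by the standing assumption $\gamma>0$ that was already imposed to ensure the $L^1$-finiteness of $k_V$. One could also, if desired, substitute $y=t/(1+|2\pi\omega|^2)$ explicitly to reduce to the bare definition of $\Gamma(\gamma+d/2)$, making the computation fully self-contained.
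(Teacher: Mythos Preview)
Your proof is correct and follows essentially the same route as the paper: both reduce the right-hand side to the standard Gamma integral $\int_0^\infty y^{s-1}e^{-ay}\,dy=\Gamma(s)a^{-s}$ with $s=\gamma+d/2$ and $a=1+|2\pi\omega|^2$, and then match against the definition of $k_V$. The only (harmless) extra in your write-up is the explicit remark on convergence at $y=0$; note that strictly speaking this only needs $\gamma+d/2>0$, which is weaker than the standing hypothesis $\gamma>0$.
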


		\begin{proof}
			We start from 
			\[
			\int_0^{+\infty} y^{\gamma+d/2-1} e^{- (s+1) y} dy= (s+1)^{-\gamma-d/2}\int_0^{+\infty} y^{\gamma+d/2-1} e^{-  y} dy= (1+s)^{-\gamma-d/2 } \Gamma(\gamma+d/2).
			\]
			Now we plug in $s= |2\pi \omega|^2$, to see
			\[
			\Gamma(\gamma+d/2)(1+|2\pi \omega|^2)^{-\gamma-d/2}=  \int_0^{+\infty} y^{\gamma+d/2-1} e^{- (1+|2\pi \omega|^2) y} dy,
			\]
			as required.
		\end{proof}

		By taking the Fourier transform, one deduces that $K(x)$ also admits a Gaussian sum representation:
		\begin{equation}\label{eqn:GaussianrepK}
			K(x)= K_\gamma(|x|^2):= \frac{1}{\Gamma(\gamma)} \int_0^{+\infty} y^{\gamma-1} e^{-y} e^{- \frac{ |x|^2}{4y} }dy .
		\end{equation}
		Notice that the dimensional dependence $d$ has disappeared in the formula. Suppose $\gamma>0$, then the derivative of $K_\gamma(r)$ for $r>0$ is
		\begin{equation}\label{eqn:Kgammaderivativeformula}
			K_\gamma'(r)= -\frac{1}{4} \frac{1}{\Gamma(\gamma)} \int_0^{+\infty} y^{\gamma-2} e^{-y} e^{- \frac{ r}{4y} }dy = -\frac{1}{4(\gamma-1) } K_{\gamma-1}(r).
		\end{equation}
		In particular, the one-variable function $K_\gamma(r)$ is differentiable at the origin to any order $0\leq m<\gamma$, and the $m$-th derivative is $ \frac{(-1)^m}{4^m (\gamma-1)\ldots (\gamma-m)  } $.

		The Sobolev kernel function is intensely studied classically, and admit formulae in terms of modified Bessel functions \cite{Bessel}.
		Using the method of steepest descent, one can deduce from the Gaussian sum representation formula that $K_\gamma(|x|^2)= O( |x|^\gamma e^{-|x|}) $ for large $x$. For $\gamma>1$, then $\mathcal{K}'(r)$ is bounded near zero, while for $0<\gamma<1$, then   $\mathcal{K}'(r)=O(r^{\gamma-1})$ blows up near the origin.

	\end{eg}

	\section{Landscape of vacua: limiting configurations}\label{sect:landscapeofvacualimitingconfigurations}

	While the minimization of $I(f,U,\lambda)$ for fixed $U,\lambda$ is a \emph{linear optimization problem} in an infinite dimensional function space, the functional $\mathcal{J}(\Sigma, \lambda)$ depends on $\Sigma$ in a highly \emph{nonlinear} fashion. 
	We now consider the problem of locally minimizing the functional $\mathcal{J}(\Sigma;\lambda)$ with respect to $\Sigma$ for fixed moderately small $\lambda$, and to avoid confusion, we shall refer to the local minima of $\mathcal{J}(\Sigma;\lambda)$ as \emph{vacua}. This terminology comes from physics, where it refers to the locally energy minimizing field configurations, and the distinct vacua are separated by some energy barriers, often resulting in different sectors of the physical theory.

	Formally speaking,

	\begin{Def}
		A vacuum in $Sym^2_{\geq 0}$ is a local minimizer of $\mathcal{J}$ in $Sym^2_{\geq 0}$. 
		A vacuum at infinity is a point at infinity $(W,\Sigma_W)\in \overline{Sym ^2_{\geq 0} } $, such that for any sequence $\Sigma_i\in Sym^2_+$ converging to $ (W,\Sigma_W) $, whenever $i$ is large enough, then
		\[
		\mathcal{J}(\Sigma_i;\lambda)\geq  \mathcal{J}( 
		(W,\Sigma_W)  ,\lambda).
		\]
	\end{Def}
	
	In this Section we focus on the question
	
	\begin{Question}
		In the partial compactification $\overline{Sym^2_{\geq 0}}$, how can we test that $\Sigma$ is locally minimizing? What is the subleading effect of $\mathcal{J}$ near a given vacuum?

	\end{Question}

	\subsection{Boundary vacua in  $Sym^2_{\geq 0}$}\label{sect:boundaryofSym}

	Recall that $Sym^2_{\geq 0}= \bigcup \mathcal{S}_l\cup Sym^2_+$, where $\mathcal{S}_l$ is the open stratum consisting of all rank $l$ semi-definite forms on $V$. The $\mathcal{S}_l$ is a submanifold of dimension $\frac{l(l+1)}{2}+ l(d-l)$ inside $Sym^2 V^*$. Every $\Sigma\in \mathcal{S}_l$ is equivalent to the data of a null subspace $W\subset V$ of dimension $d-l$ (whose choice accounts for $l(d-l)$ parameters), and a non-degenerate inner product on the quotient space $V/W$ (whose choice accounts for $\frac{l(l+1)}{2}$ parameters).

	In this Section, we consider a boundary point $\Sigma\in \mathcal{S}_l\subset Sym^2_{\geq 0}$, and assume that the hypothesis of Proposition  \ref{prop:continuousextensionJ} holds in a neighbourhood of $\Sigma$, so $\mathcal{J}$ is locally a $C^1$-function on $Sym^2_{\geq 0}$. We write $W$ as the null subspace of $\Sigma$, and $W'$ as its orthogonal complement with respect to $|\cdot |_V^2$, which induces an orthogonal direct sum decomposition
	$
	V=W\oplus W'$, $V^*= W^*\oplus W'^*, 
	$
	so that
	\[
	\begin{cases}
		Sym^2 V\simeq Sym^2 W\oplus Sym^2 W'\oplus W\otimes W',
		\\
		Sym^2 V^*\simeq Sym^2 W^*\oplus Sym^2 W'^*\oplus W^*\otimes W'^*.
	\end{cases}
	\]
	The tangent space of the submanifold $\mathcal{S}_l\subset Sym^2V^*$ at the point $\Sigma$ is identified with $Sym^2 W'^*\oplus  W^*\otimes W'^*$.

	\begin{lem}
		(Necessary condition for boundary vacuum).
		If $\Sigma$ is a vacuum on $\mathcal{S}_l$, then it is a local stationary point of $\mathcal{J}$ restricted to the boundary stratum $\mathcal{S}_l$, so that
		\[
		\langle D_\Sigma \mathcal{J}(\Sigma), A \rangle= 0,\quad \text{for all $A\in Sym^2 V^*$ tangent to $\mathcal{S}_l$ at $\Sigma$}.
		\] 
		Moreover, 
		\[
		\langle D_\Sigma \mathcal{J}(\Sigma), A \rangle \geq 0, \quad \text{for all positive semi-definite $A\in Sym^2 V^*$},
		\]
		where $\langle, \rangle$ is the natural pairing between $Sym^2 V$ and $Sym^2 V^*$.

	\end{lem}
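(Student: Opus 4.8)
The plan is to combine two elementary facts: first, that being a local minimizer of $\mathcal{J}$ on all of $Sym^2_{\geq 0}$, $\Sigma$ is a local minimizer of $\mathcal{J}$ along every curve through $\Sigma$ that remains in $Sym^2_{\geq 0}$; and second, that by the standing hypothesis (Proposition \ref{prop:continuousextensionJ}) $\mathcal{J}$ admits on a convex neighbourhood of $\Sigma$ in $Sym^2_{\geq 0}$ the first-order expansion
\[
\mathcal{J}(\Sigma';\lambda)=\mathcal{J}(\Sigma;\lambda)+\langle D_\Sigma\mathcal{J}(\Sigma),\Sigma'-\Sigma\rangle+o(|\Sigma'-\Sigma|).
\]
So the whole proof amounts to feeding the right test directions into this expansion. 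I would treat the two assertions separately, since the admissible perturbation families are different: for the tangential statement one must stay on the stratum $\mathcal{S}_l$, while for the inequality one may leave $\mathcal{S}_l$ into the interior of the cone.

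For the stationarity on $\mathcal{S}_l$: fix $A\in T_\Sigma\mathcal{S}_l=Sym^2 W'^*\oplus W^*\otimes W'^*$. Because $\mathcal{S}_l$ is a smooth embedded submanifold of $Sym^2 V^*$, there is a smooth curve $\gamma\colon(-\epsilon,\epsilon)\to\mathcal{S}_l$ with $\gamma(0)=\Sigma$ and $\gamma'(0)=A$, and after shrinking $\epsilon$ it stays inside the neighbourhood where the expansion above is valid and where $\mathcal{J}\geq\mathcal{J}(\Sigma)$. Setting $g(t)=\mathcal{J}(\gamma(t);\lambda)$ and using $\gamma(t)-\Sigma=tA+o(t)$ together with the expansion, one finds $g(t)=g(0)+t\langle D_\Sigma\mathcal{J}(\Sigma),A\rangle+o(t)$; thus $g$ is differentiable at $0$ with $g'(0)=\langle D_\Sigma\mathcal{J}(\Sigma),A\rangle$. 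Since $\gamma(t)\in\mathcal{S}_l\subset Sym^2_{\geq 0}$ for all $|t|<\epsilon$, $t=0$ is an interior local minimum of $g$, so $g'(0)=0$. As $A\in T_\Sigma\mathcal{S}_l$ was arbitrary, $D_\Sigma\mathcal{J}(\Sigma)$ annihilates $T_\Sigma\mathcal{S}_l$.

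For the inequality: let $A\in Sym^2 V^*$ be positive semi-definite. Then $\Sigma+tA$ is positive semi-definite for every $t\geq 0$, so the ray $\{\Sigma+tA:t\in[0,\epsilon)\}$ lies in $Sym^2_{\geq 0}$ and, for $\epsilon$ small, inside the neighbourhood where Proposition \ref{prop:continuousextensionJ} applies. The expansion gives $\mathcal{J}(\Sigma+tA;\lambda)=\mathcal{J}(\Sigma;\lambda)+t\langle D_\Sigma\mathcal{J}(\Sigma),A\rangle+o(t)$ as $t\to 0^+$, while local minimality of $\Sigma$ forces $\mathcal{J}(\Sigma+tA;\lambda)\geq\mathcal{J}(\Sigma;\lambda)$ for small $t\geq 0$; dividing by $t>0$ and letting $t\to 0^+$ gives $\langle D_\Sigma\mathcal{J}(\Sigma),A\rangle\geq 0$. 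The argument is soft throughout; the only points needing care are that one must perturb in directions preserving membership in $Sym^2_{\geq 0}$ — straight rays work precisely for positive semi-definite increments, whereas a general tangent vector to $\mathcal{S}_l$ need not keep $\Sigma+tA$ in the cone, which is why a genuine curve on $\mathcal{S}_l$ is used for the first part — and that, since no second-order information about $\mathcal{J}$ is available, only these first-order (one-sided) consequences of minimality can be extracted. I do not anticipate any real obstacle beyond this bookkeeping.
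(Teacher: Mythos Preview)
Your proof is correct and follows essentially the same approach as the paper. The paper's own proof is even more terse: it only writes out the one-sided directional derivative argument for positive semi-definite $A$, leaving the stationarity along $\mathcal{S}_l$ implicit; your added care in using a genuine curve on $\mathcal{S}_l$ (rather than the chord $\Sigma+tA$, which can leave $Sym^2_{\geq 0}$ for tangent $A$ with a nonzero $W^*\otimes W'^*$ component) is a worthwhile clarification.
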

	
	\begin{proof}
		For any positive semi-definite $A$, 
		the directional derivative
		\[
		\langle D_\Sigma\mathcal{J}(\Sigma), A\rangle= \lim_{t\to 0} t^{-1} (\mathcal{J}(\Sigma+tA)-\mathcal{J}(\Sigma))\geq 0
		\]
		by the local minimizing property of $\Sigma$.
	\end{proof}

	\begin{lem}
		(Sufficient condition for boundary stratum) Suppose $\Sigma$ is a local minimizer of $\mathcal{J}$ restricted to the stratum $\mathcal{S}_l$, and suppose
		\[
		\langle D_\Sigma \mathcal{J}(\Sigma), \omega\otimes \omega \rangle>0 ,\quad \forall \omega\neq 0\in W^*\subset V^*.
		\]
		Then $\Sigma$ is a vacuum.
	\end{lem}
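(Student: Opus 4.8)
The plan is to compare a nearby competitor $\Sigma'\in Sym^2_{\geq 0}$ with $\Sigma$ by splitting the perturbation $\Sigma'-\Sigma$ into a piece lying in the stratum $\mathcal{S}_l$ (controlled by the assumed local minimality of $\mathcal{J}|_{\mathcal{S}_l}$) and a piece pointing into the interior along the null directions $W$ (controlled by the positivity hypothesis on $D_\Sigma\mathcal{J}$). The clean way to carry out this splitting is the Schur complement. Write $V=W\oplus W'$ orthogonally with $W=\Ker\Sigma$, so that $\Sigma$ has nondegenerate $W'W'$-block and vanishing $W$-row and $W$-column. For $\Sigma'$ in a small enough neighbourhood of $\Sigma$ in $Sym^2_{\geq 0}$ the block $\Sigma'_{W'W'}$ is still positive definite, so by the Schur complement criterion $\Sigma'$ is positive semi-definite if and only if $S':=\Sigma'_{WW}-\Sigma'_{WW'}(\Sigma'_{W'W'})^{-1}\Sigma'_{W'W}\in Sym^2 W^*$ is positive semi-definite.

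The key construction is then $\tilde\Sigma:=\Sigma'-\begin{bmatrix}S'&0\\0&0\end{bmatrix}$. One checks that $\tilde\Sigma$ has vanishing Schur complement and nondegenerate $W'W'$-block, hence is positive semi-definite of rank exactly $l$, i.e.\ $\tilde\Sigma\in\mathcal{S}_l$; moreover $\tilde\Sigma\to\Sigma$ and $S'\to 0$ as $\Sigma'\to\Sigma$, with $|\tilde\Sigma-\Sigma|=O(|\Sigma'-\Sigma|)$, while $\Sigma'-\tilde\Sigma=\begin{bmatrix}S'&0\\0&0\end{bmatrix}$ is positive semi-definite and supported on $W$. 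The first inequality I would establish is $\mathcal{J}(\tilde\Sigma)\geq\mathcal{J}(\Sigma)$: this is just the hypothesis that $\Sigma$ locally minimizes $\mathcal{J}|_{\mathcal{S}_l}$, applied to the point $\tilde\Sigma\in\mathcal{S}_l$, which can be taken as close to $\Sigma$ inside $\mathcal{S}_l$ as we please by shrinking the neighbourhood of $\Sigma'$.

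Next I would show $\mathcal{J}(\Sigma')\geq\mathcal{J}(\tilde\Sigma)$. The hypothesis $\langle D_\Sigma\mathcal{J}(\Sigma),\omega\otimes\omega\rangle>0$ for all nonzero $\omega\in W^*$, together with compactness of the unit sphere in $W^*$, says that the $Sym^2 W$-component of $D_\Sigma\mathcal{J}(\Sigma)$ is a positive definite element of $Sym^2 W$; hence there is $c>0$ with $\langle D_\Sigma\mathcal{J}(\Sigma),\begin{bmatrix}A&0\\0&0\end{bmatrix}\rangle\geq c|A|$ for every positive semi-definite $A\in Sym^2 W^*$. By continuity of $D\mathcal{J}$ on the $C^1$-neighbourhood supplied by Proposition \ref{prop:continuousextensionJ}, the same holds with constant $c/2$ at every $\Sigma''$ near $\Sigma$. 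I would then apply the fundamental theorem of calculus along the segment $\Sigma_s:=s\Sigma'+(1-s)\tilde\Sigma$, which stays in $Sym^2_{\geq 0}$ and in the $C^1$-neighbourhood by convexity, to obtain
\[
\mathcal{J}(\Sigma')-\mathcal{J}(\tilde\Sigma)=\int_0^1\Big\langle D_{\Sigma_s}\mathcal{J},\begin{bmatrix}S'&0\\0&0\end{bmatrix}\Big\rangle\,ds\;\geq\;\tfrac{c}{2}\,|S'|\;\geq\;0.
\]
Chaining the two inequalities gives $\mathcal{J}(\Sigma')\geq\mathcal{J}(\tilde\Sigma)\geq\mathcal{J}(\Sigma)$ for all $\Sigma'\in Sym^2_{\geq 0}$ sufficiently close to $\Sigma$, which is precisely the statement that $\Sigma$ is a vacuum.

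I expect the only delicate point to be the bookkeeping of neighbourhoods — ensuring that $\Sigma'$, the constructed point $\tilde\Sigma$, and the entire segment $\Sigma_s$ lie simultaneously inside the $C^1$-neighbourhood of Proposition \ref{prop:continuousextensionJ} and inside the neighbourhood in $\mathcal{S}_l$ on which $\Sigma$ is minimizing; both requirements hold once $\Sigma'$ is taken close enough to $\Sigma$, using $|\tilde\Sigma-\Sigma|=O(|\Sigma'-\Sigma|)$ and convexity. Everything else is routine: the Schur-complement identities, and the propagation of the strict positivity of $D_\Sigma\mathcal{J}$ in the $W^*$ directions, which is immediate from continuity of $D\mathcal{J}$. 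Note that no $o(\cdot)$ error needs to be tracked, because the increment $\mathcal{J}(\Sigma')-\mathcal{J}(\tilde\Sigma)$ is estimated directly along a segment on which $\Sigma'-\tilde\Sigma$ is a fixed positive semi-definite tensor.
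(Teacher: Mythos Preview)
Your proof is correct and follows essentially the same approach as the paper: both use the Schur complement to decompose $\Sigma'=\tilde\Sigma+\begin{bmatrix}S'&0\\0&0\end{bmatrix}$ with $\tilde\Sigma\in\mathcal{S}_l$, use continuity of $D\mathcal{J}$ to propagate the positivity hypothesis to nearby points, and integrate along the segment from $\tilde\Sigma$ to $\Sigma'$ to conclude $\mathcal{J}(\Sigma')\geq\mathcal{J}(\tilde\Sigma)\geq\mathcal{J}(\Sigma)$. Your $\tilde\Sigma$ and $S'$ are exactly the paper's $\Sigma''$ and $A$.
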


	\begin{proof}
		By the continuity of the first variation, in a neighbourhood of $\Sigma\in Sym^2_{\geq 0}$, we have some constant $c>0$ such that
		\[
		\langle D_\Sigma \mathcal{J}(\Sigma'), \omega\otimes \omega \rangle> c|\omega|^2,\quad \forall \omega\in W^*\subset V^*,
		\]
		whence
		\[
		\langle D_\Sigma \mathcal{J}(\Sigma'), A\rangle \geq 0,\quad \text{for all $A\in Sym^2W^*\subset Sym^2V^*$ semi-positive}. 
		\]
		Each $\Sigma'$ in a small neighbourhood of $\Sigma$, can be decomposed  uniquely as $\Sigma'=\Sigma''+A$, where $\Sigma''\in \mathcal{S}_l$ is a rank $l$ semi-definite form near $\Sigma$, and $A\in Sym^2 W^*\subset Sym^2 V^*$ is a small semi-positive form whose null space contains $W'$. Concretely,
		\[
		\Sigma'= \begin{bmatrix}
			\Sigma_{11}, \Sigma_{12}
			\\
			\Sigma_{12}^T, \Sigma_{22}
		\end{bmatrix} \in \begin{bmatrix}
			Sym^2 W^*,  W^*\otimes W'^*
			\\
			W'^*\otimes W^*,  Sym^2 W'^*
		\end{bmatrix}    ,
		\Sigma''=  \begin{bmatrix}
			\Sigma_{12} \Sigma_{22}^{-1} \Sigma_{12}^T, &\Sigma_{12}
			\\
			\Sigma_{12}^T, & \Sigma_{22}
		\end{bmatrix}, A=\Sigma'-\Sigma''.
		\]
		Thus
		\[
		\mathcal{J}(\Sigma')= \mathcal{J}(\Sigma'') + \int_0^1 \langle D_\Sigma \mathcal{J}(\Sigma''+ tA), A\rangle dt \geq \mathcal{J}(\Sigma'') \geq \mathcal{J}(\Sigma).
		\]
		The last inequality uses that $\Sigma$ is a local minimizer for $\mathcal{J}$ restricted to $\mathcal{S}_l$.
	\end{proof}

	\subsubsection{Boundary vacua and variable selection}\label{sect:boundaryvacuavariableselection}

	Variable selection means identifying the essential degrees of freedoms in $X$ for the prediction of $Y$.  
	The following observation suggests 
	that the vacua of $\mathcal{J}$ is relevant for the variable selection problem.

	\begin{lem}\cite[Lemma  3.1]{CLLR}
		Given a direct sum decomposition $V=W\oplus W'$, so that $X=(X_W, X_{W'}) $, we suppose that $X_{W}$ is independent of both $Y$ and $X_{W'}$. Then for any $U\in \End(V)$ as in (\ref{eqn:Umatrix}), we have
		\[
		J( U=\begin{bmatrix}
			U_{11}, & U_{12}\\
			U_{12} , & U_{22}
		\end{bmatrix}  ,\lambda) \geq J(\tilde{U}=\begin{bmatrix}
			0, & U_{12}\\
			0, & U_{22}
		\end{bmatrix}  ,\lambda).
		\]
	\end{lem}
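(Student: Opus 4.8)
The plan is to recognize the first column of $U$ in the block form (\ref{eqn:Umatrix})---i.e. the blocks $U_{11}$ and $U_{21}$---as producing an \emph{independent random translation} of the argument of $f$, which the translation-invariant RKHS norm absorbs at no cost. Concretely, set $Z := (U_{11}X_W,\, U_{21}X_W)\in V$. A one-line computation using the block action of $U$ and of $\tilde U$ on $X=(X_W,X_{W'})$ gives $UX = \tilde U X + Z$, hence $f(UX) = f_Z(\tilde U X)$ where $f_z := f(\,\cdot\, + z)$. Moreover $Z$ is a measurable function of $X_W$ only, while $\tilde U X = (U_{12}X_{W'}, U_{22}X_{W'})$ and $Y$ are functions of $(X_{W'},Y)$; since $X_W$ is independent of both $Y$ and $X_{W'}$, the vector $Z$ is independent of the pair $(\tilde U X,\,Y)$.

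First I would condition on $Z$. By the independence just noted and Tonelli's theorem (the integrand is nonnegative, and finite since $\E[|Y|^2]<\infty$ and $\norm{f}_{L^\infty}\le \norm{f}_{\mathcal{H}}$ by the Sobolev embedding Lemma~\ref{lem:decayatinfinity}),
\[
\E\big[\,|Y-f(UX)|^2\,\big] \;=\; \E_Z\!\left[\;\E\big[\,|Y-f_z(\tilde U X)|^2\,\big]\Big|_{z=Z}\right].
\]
Next I would invoke that $\mathcal{H}$ is translation invariant as an isometry, $\norm{f_z}_{\mathcal{H}}=\norm{f}_{\mathcal{H}}$ for every $z\in V$, which is immediate from the Fourier description of the norm since translation multiplies $\hat f$ by a unimodular factor. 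Adding the regularization term and pulling it inside the expectation over $Z$,
\[
I(f,U,\lambda) \;=\; \E_Z\!\left[\tfrac12\E\big[\,|Y-f_Z(\tilde U X)|^2\,\big] + \tfrac{\lambda}{2}\norm{f_Z}_{\mathcal{H}}^2\right] \;=\; \E_Z\big[\,I(f_Z,\tilde U,\lambda)\,\big] \;\ge\; J(\tilde U,\lambda),
\]
where the last step uses that $f_Z\in\mathcal{H}$ for each realization of $Z$, so $I(f_Z,\tilde U,\lambda)\ge J(\tilde U,\lambda)$ pointwise. Taking the infimum over $f\in\mathcal{H}$ (equivalently, plugging in the minimizer $f=f_U$) yields $J(U,\lambda)\ge J(\tilde U,\lambda)$.

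I expect no serious obstacle here: the argument is entirely structural. The only point requiring a little care is the conditioning identity---one must check that the conditional expectation given $Z=z$ is genuinely the unconditional expectation $\E[|Y-f_z(\tilde U X)|^2]$, which rests on the independence of $Z$ from $(\tilde U X, Y)$ together with the mild integrability above so that Fubini--Tonelli is legitimate. The conceptual content is simply that, because $X_W$ feeds into $Ux$ but is independent of the target $(Y, X_{W'})$, the contribution of the first column of $U$ can only hurt: it perturbs the input of $f$ by noise uncorrelated with everything relevant, and the translation-invariant penalty does not see this perturbation, so averaging it out recovers the value of a competitor for $\tilde U$.
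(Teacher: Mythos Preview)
Your proof is correct and follows essentially the same approach as the paper: both arguments exploit that $UX=\tilde U X + Z$ with $Z=(U_{11}X_W,U_{21}X_W)$ independent of $(\tilde U X,Y)$, condition on this shift (the paper conditions on $X_W$, you on $Z$, which is cosmetic), and use the isometric translation invariance of $\mathcal{H}$ to recognize each conditioned problem as a competitor for $J(\tilde U,\lambda)$. The conditioning/Fubini justification you flag is exactly the care the paper's argument also relies on.
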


	\begin{proof}
		By the translation invariance of the RKHS $\mathcal{H}$, and the minimizing property of $J(\tilde{U}
		,\lambda   )$, we deduce for any $\xi\in V$ and $f\in \mathcal{H}$ that
		\[
		\frac{1}{2}  \E[|Y- f(\xi +(U_{12}X_{W'}, U_{22} X_{W'}))|^2] + \frac{\lambda}{2} \norm{f}^2_\mathcal{H} \geq J (\tilde{U},\lambda   ). 
		\]
		Since $X_W$ is independent of $Y$ and $X_{W'}$ by assumption, for any fixed value of $X_W$, we can choose $\xi=(U_{11}X_W, U_{21}X_W)$, to deduce 
		\[
		\frac{1}{2}  \E[|Y- f(U_{11}X_W+U_{12}X_{W'}, U_{22} X_{W'})|^2|X_W ] + \frac{\lambda}{2} \norm{f}^2_\mathcal{H}\geq J (\tilde{U},\lambda   ).
		\]
		Now we write the definition of $I(f,U,\lambda)$ in terms of conditional expectations, 
		\[
		\begin{split}
			& I(f,U,\lambda) 
			\\
			=&  \frac{1}{2} \E_{X_W} [\E[|Y- f(U_{11}X_W+U_{12}X_{W'}, U_{12}X_W+ U_{22} X_{W'})|^2|X_W ] ]+ \frac{\lambda}{2} \norm{f}^2_\mathcal{H}
			\\
			=&   \E_{X_W} [ \frac{1}{2} \E[|Y- f(U_{11}X_W+U_{12}X_{W'}, U_{12}X_W+ U_{22} X_{W'})|^2   |X_W ] + \frac{\lambda}{2} \norm{f}^2_\mathcal{H}]
			\\
			\geq &  \E_{X_W} [ J (\tilde{U},\lambda   ) ]= J (\tilde{U},\lambda   ). 
		\end{split}
		\]
		By minimizing over $f\in \mathcal{H}$ we deduce the Lemma.
	\end{proof}

	\begin{cor}
		If $\mathcal{J}$ achieves its global minimum in $Sym^2_{\geq 0}$, then there is some boundary vacuum $\Sigma \in Sym^2_{\geq 0}$ achieving this minimum, whose null subspace contains $W$.
	\end{cor}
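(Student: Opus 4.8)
The plan is to combine the global minimality hypothesis with the preceding Lemma, which already supplies all the analytic content. First I would fix a global minimizer $\Sigma_0 \in Sym^2_{\geq 0}$ of $\mathcal{J}$, which exists by assumption, and choose any $U \in \End(V)$ with $U^T U = \Sigma_0$, written in block form (\ref{eqn:Umatrix}) relative to the decomposition $V = W \oplus W'$. Let $\tilde U = \begin{bmatrix} 0 & U_{12}\\ 0 & U_{22}\end{bmatrix}$ be the matrix obtained by zeroing out the first block column, and set $\tilde\Sigma := \tilde U^T \tilde U \in Sym^2_{\geq 0}$. Since $J$ depends on $U$ only through $U^T U$ (Proposition~\ref{lem:continuityJ}), $\tilde\Sigma$ is well defined and $J(\tilde U,\lambda) = \mathcal{J}(\tilde\Sigma;\lambda)$. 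By the preceding Lemma, $\mathcal{J}(\tilde\Sigma;\lambda) = J(\tilde U,\lambda) \leq J(U,\lambda) = \mathcal{J}(\Sigma_0;\lambda)$; since $\Sigma_0$ realizes the global minimum, both inequalities are equalities, so $\tilde\Sigma$ is also a global minimizer of $\mathcal{J}$ on $Sym^2_{\geq 0}$.

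Next I would record two elementary facts about $\tilde\Sigma$. First, its null subspace contains $W$: for $x \in W$ the vector $x$ has zero $W'$-component, so $\tilde U x = 0$, hence $x^T \tilde\Sigma x = |\tilde U x|_V^2 = 0$, and because $\tilde\Sigma$ is positive semi-definite this forces $\tilde\Sigma x = 0$. Second, assuming $W \neq 0$ (so that the variable-selection decomposition is nontrivial), $\tilde U$ is not injective, hence $\tilde\Sigma$ is degenerate and lies in one of the boundary strata $\mathcal{S}_l$ with $l \leq d-1$, i.e., on the boundary of $Sym^2_{\geq 0}$. Finally, every global minimizer is a fortiori a local minimizer, so $\tilde\Sigma$ is a vacuum in the sense defined above. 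Taking $\Sigma = \tilde\Sigma$ yields the claim.

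I do not expect a genuine obstacle here: the monotonicity $J(U,\lambda) \geq J(\tilde U,\lambda)$ under the independence hypothesis is exactly the content of the preceding Lemma, and what remains is bookkeeping (well-definedness of $\tilde\Sigma$, the identification $J(\tilde U,\lambda)=\mathcal{J}(\tilde\Sigma;\lambda)$, and the trivial implication ``global minimizer $\Rightarrow$ vacuum''). The one point worth stating explicitly is the harmless hypothesis $W \neq 0$, without which $\tilde\Sigma$ could coincide with $\Sigma_0$ and need not lie on the boundary.
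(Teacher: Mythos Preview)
Your proof is correct and follows essentially the same approach as the paper: take a global minimizer $\Sigma_0 = U^T U$, apply the preceding Lemma to see that $\tilde\Sigma = \tilde U^T \tilde U$ is also a global minimizer, and observe that its null space contains $W$. Your additional remarks (the explicit verification that $W \subset \ker\tilde\Sigma$, the observation that $W \neq 0$ is needed for $\tilde\Sigma$ to lie on the boundary, and the reminder that global minimizers are vacua) are all correct bookkeeping that the paper leaves implicit; one minor slip is that the fact ``$J$ depends on $U$ only through $U^T U$'' comes from Lemma~\ref{lem:Jrotationalinv} and its extension in Lemma~\ref{lem:Jcontinuity2}, not from Proposition~\ref{lem:continuityJ}.
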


	\begin{proof}
		If $\Sigma'=U^T U$ is a minimizer of $\mathcal{J}$, then so is $\Sigma= \tilde{U}^T\tilde{U} $, whose null subspace contains $W$.
	\end{proof}

	\subsubsection{First variation at  boundary points}

	The first variation at a boundary point affords some simplifications. As before, $\Sigma\in \mathcal{S}_l$, and $W$ is the null subspace of $\Sigma$. We denote $X=(X_W, X_{W'})$ according to the decomposition $V=W\oplus W'$, and let $Y_0=\E[Y|X_{W'}]$ be the conditional expectation, so $Y=Y_0+Y_1$ with $\E[Y_1|X_{W'}]=0$. We can write $\Sigma=U^T U$, and we recall that  $F_\Sigma(x)=f_U(Ux)$ (\cf Lemma \ref{lem:Jcontinuity2}).

	\begin{lem}
		The function $F_\Sigma(x)$ descends to a function on $W'\simeq V/W$, and is the unique $C^0$-solution to the integral equation
		\[
		f(x)+ \frac{1}{\lambda} \E[ (Y_0-f(x))\mathcal{K}(|x-X|_\Sigma^2)  ]=0.
		\]
		In particular it depends on $Y$ only through $Y_0$. 
	\end{lem}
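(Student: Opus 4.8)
The plan is to extract everything from the integral‑equation characterisation of $F_\Sigma$ furnished by Proposition \ref{prop:integraleqn1} and Lemma \ref{lem:Jcontinuity2}, using the fact that for $\Sigma\in\mathcal{S}_l$ the form $|\cdot|_\Sigma$ is pulled back from the quotient $V/W\simeq W'$, where $W=\ker\Sigma$. Concretely, recall from Lemma \ref{lem:Jcontinuity2} that, writing $\Sigma=U^TU$, the function $F_\Sigma=f_U\circ U$ is bounded and continuous and is the unique finite‑$\norm{\cdot}_\E$‑norm solution of
\[
F_\Sigma(x)+\frac1\lambda\,\E\big[(F_\Sigma(X)-Y)\,\mathcal{K}(|x-X|_\Sigma^2)\big]=0 .
\]
Since $W$ is the null space of the self‑adjoint operator attached to $\Sigma$ and $W'$ is its $|\cdot|_V$‑orthogonal complement, $W'$ is $\Sigma$‑invariant and the matrix of $\Sigma$ in the splitting $V=W\oplus W'$ is block diagonal with vanishing $W$‑block; hence $|v|_\Sigma^2=|v_{W'}|_\Sigma^2$ for every $v=(v_W,v_{W'})$, so that $\mathcal{K}(|x-X|_\Sigma^2)=\mathcal{K}(|x_{W'}-X_{W'}|_\Sigma^2)$ depends on $x$ only through $x_{W'}$.

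Next I would read off the descent. Rewriting the equation as $F_\Sigma(x)=\frac1\lambda\E[(Y-F_\Sigma(X))\mathcal{K}(|x-X|_\Sigma^2)]$ exhibits $F_\Sigma(x)$ as a quantity depending on $x$ only through $x_{W'}$ — the factors $Y$ and $F_\Sigma(X)$ carry no $x$‑dependence, and the kernel factor was just seen to depend on $x$ only through $x_{W'}$. Therefore $F_\Sigma(x)=F_\Sigma(x')$ whenever $x-x'\in W$, i.e. $F_\Sigma$ descends to a function on $W'\simeq V/W$; keeping the same name for it, we have $F_\Sigma(X)=F_\Sigma(X_{W'})$.

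Now both $F_\Sigma(X)$ and $\mathcal{K}(|x-X|_\Sigma^2)$ are $\sigma(X_{W'})$‑measurable, so the tower property gives
\[
\E\big[Y\,\mathcal{K}(|x-X|_\Sigma^2)\big]=\E\big[\E[Y|X_{W'}]\,\mathcal{K}(|x-X|_\Sigma^2)\big]=\E\big[Y_0\,\mathcal{K}(|x-X|_\Sigma^2)\big].
\]
Substituting into the equation above and rearranging yields
\[
F_\Sigma(x)+\frac1\lambda\,\E\big[(F_\Sigma(X)-Y_0)\,\mathcal{K}(|x-X|_\Sigma^2)\big]=0 ,
\]
which is the asserted integral equation. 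For uniqueness and the dependence on $Y$ only through $Y_0$: since $\E[|Y_0|^2]\le\E[|Y|^2]<+\infty$ by the conditional Jensen inequality, Proposition \ref{prop:integraleqn1} applied to the pair $(X,Y_0)$ in place of $(X,Y)$ produces a unique finite‑norm solution of precisely this equation; $F_\Sigma$, being bounded, is one such solution, hence coincides with it, and is thus determined by $Y$ only through $Y_0$. The uniqueness rests, as in the proof of Proposition \ref{prop:integraleqn1}, on the coercivity estimate (\ref{eqn:coercivity}), which in turn uses that $(x,x')\mapsto\mathcal{K}(|x-x'|_\Sigma^2)=K(U(x-x'))$ is a positive‑semidefinite kernel.

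The computations are routine; the only point needing a little care is the descent step — one must verify that $x$ enters the equation solely through $\mathcal{K}(|x-X|_\Sigma^2)$ and that this factor is a function of $x_{W'}$ alone — together with the mild bookkeeping of invoking Proposition \ref{prop:integraleqn1} with $Y_0$ so as to obtain uniqueness within the stated $C^0$ class rather than rerunning the Riesz‑representation argument. I do not anticipate any genuine obstacle beyond this.
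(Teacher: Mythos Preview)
Your proof is correct and follows essentially the same route as the paper's: both obtain the integral equation from the Euler--Lagrange characterisation, use that $|x-X|_\Sigma$ sees only the $W'$-component to replace $Y$ by $Y_0$, and invoke coercivity for uniqueness. The only cosmetic difference is that the paper argues descent directly from $F_\Sigma=f_U\circ U$ (since $W=\ker U$ forces $Ux$ to depend only on $x_{W'}$), whereas you read it off from the right-hand side of the integral equation; both are immediate.
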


	\begin{proof}
		We write $\Sigma=U^T U$. Since $\Sigma$ has null space $W$, we know $U(x_W, x_{W'})$ depends only on $x_{W'}$, and so is the function $F_\Sigma= f_U(Ux)$. Since $f_U\in \mathcal{H}\subset C^0$, this $F_\Sigma$ is a continuous function on $V$. 
		By the Euler-Lagrange equation (\ref{eqn:EulerLagrange1}),
		\[
		F_\Sigma(x)+ \frac{1}{\lambda} \E[ (Y-F_\Sigma(x))\mathcal{K}(|x-X|_\Sigma^2)  ]=0.
		\]
		However $ \mathcal{K}(|x-X|_\Sigma^2)$
		depends on $x-X$ only through the $W'$ component, so we can replace $Y$ by $Y_0$ in the integral equation. The uniqueness of the solution follows from the coercivity of the integral equation, similar to Proposition  
		\ref{prop:differentiabilityminimizer}.
	\end{proof}

	The first variation 
	\[
	D_\Sigma \mathcal{J}(\Sigma)= -\frac{1}{2\lambda}   \E[  r_\Sigma(X,Y) \overline{ r_\Sigma(X',Y')}  \mathcal{K}'(|X-X'|_{\Sigma}^2) (X-X')\otimes (X-X') ] 
	\]
	is valued in 
	\[
	Sym^2 V\simeq Sym^2 W\oplus Sym^2 W'\oplus W\otimes W'. 
	\]
	We wish to identify these three components. As a computational trick, we write $r_\Sigma(X,Y)=(Y_0- F_\Sigma(X))+Y_1$, and expand the above formula, and then take the conditional expectation for fixed $X_{W'}, X_{W'}'$. The formula simplifies because  $F_\Sigma(X)$ and $\mathcal{K}'(|X-X'|^2_\Sigma)$ depend only  on the $W'$ component but not $X_W, X_W'$, and only \emph{up to second order conditional moments} for $Y$ enter into the formula.

	\begin{enumerate}
		\item The $Sym^2 W'$ component of $D_\Sigma \mathcal{J}(\Sigma)$ is
		\[
		-\frac{1}{2\lambda}   \E[  (Y_0-F_\Sigma(X)) \overline{ (Y_0'-F_\Sigma(X'))}  \mathcal{K}'(|X-X'|_{\Sigma}^2) (X_{W'}-X'_{W'})\otimes (X_{W'}-X'_{W'})].
		\]
		At a vacuum $\Sigma$, this should vanish.
		
		\item 
		The $W\otimes W'$ component of $D_\Sigma \mathcal{J}(\Sigma)$ is
		\[
		-\frac{1}{2\lambda}   \E[  (Y-F_\Sigma(X)) \overline{ (Y'-F_\Sigma(X'))}  \mathcal{K}'(|X-X'|_{\Sigma}^2) (X_{W}-X'_{W})\otimes (X_{W'}-X'_{W'})],
		\]
		which is equal to
		\[
		\begin{split}
			& \frac{1}{2\lambda}  \E[ \mathcal{K}'(|X-X'|_{\Sigma}^2)
			\\
			&
			\{    (Y_0-F_\Sigma(X)) \overline{ \E[Y_1' X_W'|X_{W'}'] } - \E[Y_1 X_W|X_{W'}]     \overline{ (Y_0'-F_\Sigma(X'))} \} 
			\otimes (X_{W'}-X'_{W'})]. 
		\end{split}
		\]
		At a vacuum $\Sigma$, this should vanish; a sufficient condition is that the first conditional moment $\E[Y_1X_W|X_{W'}]=0$.

		\item 
		The $Sym^2 W$ component of $D_\Sigma \mathcal{J}(\Sigma)$ is the sum of three contributions:
		\begin{equation}\label{eqn:normalgradient1}
			\begin{split}
				-\frac{1}{2\lambda}   \E[  (Y_0-F_\Sigma(X) \overline{ (Y_0'-F_\Sigma(X'))}  \mathcal{K}'(|X-X'|_{\Sigma}^2) (X_{W}-X'_{W})\otimes (X_{W}-X'_{W})]  
			\end{split}
		\end{equation}
		and
		\begin{equation}\label{eqn:normalgradient2}
			\begin{split}
				&  -\frac{1}{2\lambda}   \E[  Y_1 \overline{ Y_1'}  \mathcal{K}'(|X-X'|_{\Sigma}^2) (X_{W}-X'_{W})\otimes (X_{W}-X'_{W})]
				\\
				=& \frac{1}{\lambda} \text{Re}  \E[ \mathcal{K}'(|X-X'|_{\Sigma}^2)  \E[Y_1 X_W|X_{W'}] \otimes \overline{\E[Y_1' X'_W|X'_{W'}] }  ],
			\end{split}
		\end{equation}
		and a cross term
		\[
		-\frac{1}{\lambda}  \text{Re} \E[  (Y_0-F_\Sigma(X) \overline{ Y_1'}  \mathcal{K}'(|X-X'|_{\Sigma}^2) (X_{W}-X'_{W})\otimes (X_{W}-X'_{W})]. 
		\]
		If $\Sigma$ is a vaccum, then the sum of these three terms should pair non-negatively with any semi-positive form in $Sym^2 W^*$. We notice that (\ref{eqn:normalgradient1}) depends on $Y$ only through $Y_0$, and vanishes when $Y_0=0$ identically; the second term (\ref{eqn:normalgradient2}) depends on $Y$ only through the first conditional moment, and vanishes when 
		$\E[Y_1 X_W|X_{W'}]=0$. The cross term depends on $Y$ only up to second conditional moment, and vanishes if either $Y_0=0$, or the first and second conditional moment of $Y_1$ vanishes.

	\end{enumerate}

	\begin{lem}\label{lem:linearsignal}(Compare \cite[Theorem 4.1]{CLLR})
		Suppose $\mathcal{K}$ is a completely monotone kernel function. Then the term (\ref{eqn:normalgradient2}) pairs non-positively with any semi-definite form in $Sym^2 W^*$. 
	\end{lem}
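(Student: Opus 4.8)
The plan is to pair the $Sym^2 W$-valued tensor in (\ref{eqn:normalgradient2}), call it $\Gamma$, against an arbitrary semi-positive $B\in Sym^2 W^*$ and to recognize $-\mathcal{K}'$ as a positive-definite kernel, which is precisely where complete monotonicity enters. Set $a(x_{W'}):=\E[Y_1 X_W\mid X_{W'}=x_{W'}]$, a $W$-valued function, so that by the identity recorded just above (\ref{eqn:normalgradient2}) we have $\Gamma=\tfrac1\lambda\,\text{Re}\,\E[\mathcal{K}'(|X-X'|_\Sigma^2)\,a(X_{W'})\otimes\overline{a(X'_{W'})}]$. Write $B=C^T C$ with $C$ a real matrix and put $b:=C\,a(\cdot)$, a complex-vector-valued function on $W'$; since $W$ is the null space of $\Sigma$, the weight $\mathcal{K}'(|X-X'|_\Sigma^2)$ depends on $X-X'$ only through its $W'$-component, and pairing with $B$ gives
\[
\langle B,\Gamma\rangle=\tfrac1\lambda\,\text{Re}\,\E\big[\mathcal{K}'(|X_{W'}-X'_{W'}|_\Sigma^2)\,\langle b(X_{W'}),b(X'_{W'})\rangle\big],
\]
where $\langle u,v\rangle:=\sum_k u_k\overline{v_k}$ and we used that the natural pairing satisfies $\langle B,u\otimes\bar v\rangle=(Cu)^T\overline{(Cv)}$.

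Next I would substitute the Gaussian sum representation (\ref{eqn:GaussiansumK'}), $\mathcal{K}'(r)=-\int_0^\infty e^{-tr}\,t\,d\nu(t)$ with $\nu\ge 0$ (note also $\mathcal{K}'\le 0$, so $|\mathcal{K}'(r)|=\int_0^\infty e^{-tr}t\,d\nu(t)$). After interchanging $\int_0^\infty t\,d\nu(t)$ with the expectation — justified by the integrability hypotheses in force together with Cauchy--Schwarz, exactly as in the proof of Proposition \ref{prop:lawoflargenumbergradient}, using $|a(X_{W'})|\le 2\,\E[|Y|^2\mid X_{W'}]^{1/2}\E[|X_W|^2\mid X_{W'}]^{1/2}$ — the claim reduces to showing that for each fixed $t\ge 0$,
\[
\E\big[e^{-t|X_{W'}-X'_{W'}|_\Sigma^2}\,\langle b(X_{W'}),b(X'_{W'})\rangle\big]\ \ge\ 0 .
\]
Because $\Sigma$ restricts to a genuine (positive-definite) inner product on $W'$, the function $z\mapsto e^{-t|z|_\Sigma^2}$ on $W'$ is the Fourier transform of a centered Gaussian probability measure $\rho_t$ on $W'^*$; substituting $e^{-t|z|_\Sigma^2}=\int_{W'^*}e^{2\pi i\langle\omega,z\rangle}\,d\rho_t(\omega)$ with $z=X_{W'}-X'_{W'}$ and using that $(X,Y)$ and $(X',Y')$ are independent, the inner expectation factors coordinatewise into $\sum_k\big|\E[e^{2\pi i\langle\omega,X_{W'}\rangle}\,b_k(X_{W'})]\big|^2\ge 0$, and integrating against $\rho_t\ge 0$ gives the displayed inequality. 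Consequently $\langle B,\Gamma\rangle=-\tfrac1\lambda\int_0^\infty t\,d\nu(t)\cdot(\text{nonnegative})\le 0$; the outer $\text{Re}$ is harmless, since the quantity is already real, being invariant under the swap $X_{W'}\leftrightarrow X'_{W'}$ (which fixes $\mathcal{K}'$ and conjugates $\langle b,b'\rangle$).

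The only genuinely delicate point is the Fubini/Tonelli bookkeeping needed to move the nonnegative measure $t\,d\nu(t)$ and the Gaussian $\rho_t$ through the expectation; everything else is the elementary fact that a nonnegative mixture of characters is a positive-definite kernel. I expect this to go through directly under the standing hypotheses of Theorem \ref{thm:firstvariation} and Proposition \ref{prop:continuousextensionJ}, which already guarantee that $\Gamma$ is a well-defined element of $Sym^2 W$, so that the absolute integrability required for the interchanges is automatic.
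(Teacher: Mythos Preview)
Your proposal is correct and follows essentially the same approach as the paper: both use the Gaussian sum representation (\ref{eqn:GaussiansumK'}) of $\mathcal{K}'$ coming from complete monotonicity, then Fourier-transform the Gaussian factor to rewrite the expectation as a nonnegative integral of squared magnitudes $|\E[f(X_{W'})e^{i\langle \omega, UX\rangle}]|^2$. The only cosmetic differences are that the paper tests against rank-one forms $w\otimes w$ (equivalent to your $B=C^TC$ since every semi-positive form decomposes this way) and Fourier-transforms over $V^*$ via $e^{-t|Ux|^2}$ rather than directly over $W'^*$ as you do; your version is arguably slightly cleaner, and you are more explicit about the Fubini bookkeeping, which the paper leaves implicit.
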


	\begin{proof}
		It suffices to prove that for any $w\in W^*$, the pairing
		\[
		\begin{split}
			& \langle \frac{1}{\lambda}   \E[ \mathcal{K}'(|X-X'|_{\Sigma}^2)  \E[Y_1 X_W|X_{W'}]  \otimes \overline{  \E[Y_1' X'_W|X'_{W'}] }, w\otimes w\rangle
			\\
			&  = \frac{1}{\lambda}   \E[ \mathcal{K}'(|X-X'|_{\Sigma}^2) \langle \E[Y_1 X_W|X_{W'}] , w\rangle \overline{ \langle \E[Y_1' X'_W|X'_{W'}], w\rangle  } ] \leq 0.  
		\end{split}
		\]
		By the Gaussian sum representation formula (\ref{eqn:GaussiansumK'}) and Fourier transform,
		\[
		\begin{split}
			& \mathcal{K}'(|x|_\Sigma^2)= -\int_0^\infty e^{-t|Ux|^2} td\nu(t)
			\\
			=& - \int_0^\infty \int_{V^*} (\pi t^{-1})^{d/2} e^{-\pi^2|\omega|^2 /t} e^{i\langle Ux, \omega\rangle} td\nu(t).
		\end{split}
		\]
		We denote $f(X_W)= \langle \E[Y_1 X_W|X_{W'}] , w\rangle$, and compute
		\[
		\begin{split}
			& \E[ \mathcal{K}'(|X-X'|_{\Sigma}^2) f(X_{W'}) \otimes \overline{ f(X_{W'}') }  ]  
			\\
			=&  -\int_0^\infty\int_{V^*} (\pi t^{-1})^{d/2}\exp(-\pi^2|\omega|^2/t) |\E[  f(X_{W'}) e^{i\langle UX_{W'}, \omega\rangle}   ]|^2 td\nu(t)
			\leq 0.
		\end{split}
		\]
		This verifies the non-positivity claim. 
	\end{proof}

	\begin{lem}\label{lem:boundarystability}
		Suppose that $\E[X_W|X_{W'}]$ and $\E[X_W\otimes X_W|X_{W'}]$ are both independent of $X_{W'}$, and suppose $\mathcal{K}$ is a completely monotone kernel function. Then the term (\ref{eqn:normalgradient1}) pairs non-negatively with any semi-definite form in $Sym^2 W^*$. 
	\end{lem}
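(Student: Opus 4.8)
The plan is to imitate the proof of Lemma \ref{lem:linearsignal}, but with an extra conditioning step that turns the $W$-components of $X-X'$ into a harmless nonnegative constant. Since every positive semi-definite form in $Sym^2 W^*$ is a nonnegative combination of rank-one forms $w\otimes w$ with $w\in W^*$, it suffices to prove $\langle(\ref{eqn:normalgradient1}),w\otimes w\rangle\ge 0$ for each fixed $w\in W^*$. Writing $r_0(X):=Y_0-F_\Sigma(X)$, which by the preceding lemma is a function of $X_{W'}$ alone, this pairing is
\[
\langle(\ref{eqn:normalgradient1}),w\otimes w\rangle=-\frac{1}{2\lambda}\,\E\bigl[\,r_0(X)\,\overline{r_0(X')}\;\mathcal{K}'(|X-X'|_\Sigma^2)\,\langle X_W-X'_W,\,w\rangle^2\,\bigr].
\]

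Next I would condition on $(X_{W'},X'_{W'})$. Because $W$ is the null space of $\Sigma$, the quantity $\mathcal{K}'(|X-X'|_\Sigma^2)=\mathcal{K}'(|X_{W'}-X'_{W'}|_\Sigma^2)$ is $\sigma(X_{W'},X'_{W'})$-measurable, as are $r_0(X)$ and $\overline{r_0(X')}$; meanwhile $X_W$ and $X'_W$ are conditionally independent since $(X,Y)$ and $(X',Y')$ are independent copies. Using the hypotheses that $\E[X_W\mid X_{W'}]$ and $\E[X_W\otimes X_W\mid X_{W'}]$ do not depend on $X_{W'}$, a short computation (expand the square, use conditional independence, and note that the conditional first and second moments are deterministic) gives
\[
\E\bigl[\langle X_W-X'_W,\,w\rangle^2\mid X_{W'},X'_{W'}\bigr]=2\,\Var\!\bigl(\langle X_W,w\rangle\mid X_{W'}\bigr)=:c_w\ge 0,
\]
a nonnegative constant. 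Hence the pairing equals $-\dfrac{c_w}{2\lambda}\,\E\bigl[\,r_0(X_{W'})\overline{r_0(X'_{W'})}\,\mathcal{K}'(|X_{W'}-X'_{W'}|_\Sigma^2)\,\bigr]$, so it remains to show the latter expectation is $\le 0$.

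This last point is exactly the Bochner/Schoenberg positivity already used in Lemma \ref{lem:linearsignal}: by (\ref{eqn:GaussiansumK'}), $-\mathcal{K}'(r)=\int_0^\infty e^{-tr}t\,d\nu(t)$ with $\nu$ a nonnegative finite measure, and writing $|x|_\Sigma^2=|Ux|_V^2$ and Fourier-transforming the Gaussian shows that, for each $t>0$, the kernel $(x,x')\mapsto e^{-t|U(x-x')|_V^2}$ is positive definite; integrating against the positive weight $t\,d\nu(t)$, the kernel $(x,x')\mapsto -\mathcal{K}'(|x-x'|_\Sigma^2)$ is positive definite, so $\E[\rho(X_{W'})\overline{\rho(X'_{W'})}(-\mathcal{K}')(|X_{W'}-X'_{W'}|_\Sigma^2)]\ge 0$ for any $\rho$, in particular $\rho=r_0$. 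Combining with $c_w\ge 0$ and $\lambda>0$ gives $\langle(\ref{eqn:normalgradient1}),w\otimes w\rangle\ge 0$, as desired. The only genuinely delicate point — and where care is needed — is the conditioning step: one must be sure that $r_0$ and $\mathcal{K}'(|X-X'|_\Sigma^2)$ are functions of the $W'$-coordinates alone (the former from the integral-equation characterization of $F_\Sigma$ in the preceding lemma, the latter from $W=\ker\Sigma$), so that the conditional-moment hypotheses can legitimately be invoked; everything downstream is the same completely-monotone positivity argument as in Lemma \ref{lem:linearsignal}.
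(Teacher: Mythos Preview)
Your proof is correct and follows essentially the same approach as the paper: condition on $(X_{W'},X'_{W'})$ so that the hypotheses on the conditional first and second moments of $X_W$ let you factor out the constant $c_w=2\Var(\langle X_W,w\rangle)=2\,Cov(X_W)(w,w)$, and then invoke the complete-monotonicity (Gaussian-sum) positivity from Lemma~\ref{lem:linearsignal} on the remaining $W'$-only expectation. The paper's version merely records the factorization as the matrix identity (\ref{eqn:normalgradient1})$=-\lambda^{-1}\E[r_0(X)\overline{r_0(X')}\,\mathcal{K}'(|X-X'|_\Sigma^2)]\,Cov(X_W)$ before pairing with $w\otimes w$, but this is the same computation in a different order.
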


	\begin{proof}
		Under the independence assumption, (\ref{eqn:normalgradient1}) is equal to
		\[
		-\frac{1}{\lambda}   \E[  (Y_0-F_\Sigma(X) \overline{ (Y_0'-F_\Sigma(X'))} \mathcal{K}'(|X-X'|_{\Sigma}^2) ]Cov(X_W). 
		\]
		It suffices to prove that for any $w\in W^*$,
		\[
		-\frac{1}{\lambda}   \E[  (Y_0-F_\Sigma(X) \overline{ (Y_0'-F_\Sigma(X'))} \mathcal{K}'(|X-X'|_{\Sigma}^2) ]Cov(X_W)(w,w)\geq 0.
		\]
		This follows from a similar computation as in Lemma \ref{lem:linearsignal}.
	\end{proof}

	\subsubsection{Discussions}\label{Discussion:boundary}


	\begin{enumerate}
		\item  A boundary point $\Sigma\in \mathcal{S}_l$ specifies some signal variables $X_{W'}$, and $Y_0$ measures the information of $Y$ predictable from $X_{W'}$, while $Y_1$ detects how $Y$ depends on the secondary variables $X_W$ conditional on $X_{W'}$. Under the assumptions of Proposition  \ref{prop:continuousextensionJ}, the first variation at $\Sigma$ only depends on $Y_1$ through the up to second order conditional moments $\E[Y_1 X_W|X_{W'}]$ (the `linear signal'), and $\E[Y_1 X_W\otimes X_W|X_{W'}]$ (the `quadratic signal').

		\item  
		Lemma \ref{lem:boundarystability} suggests that if both the linear and quadratic signals are weak compared to $Y_0$, and the covariance matrix of $X_W$ conditional on $X_{W'}$ is independent of $X_{W'}$, then $\mathcal{J}$ has the tendency to increase as $\Sigma$ moves into the interior of $Sym^2_+$. If $\Sigma$ locally minimizes $\mathcal{J}$ restricted to $\mathcal{S}_l$, then we expect $\Sigma$ is a boundary vacuum.  The interpretation is that the information of the secondary variables $X_W$ is suppressed by the signal variables $X_{W'}$, and the prediction function $F_\Sigma$ depends only on $X_{W'}$, which has lower dimension compared to $X$, so we have achieved \emph{variable selection}.

		\item A competing effect is that by Lemma \ref{lem:linearsignal}, the linear signal drives $\mathcal{J}$ to decrease as $\Sigma$ moves into the interior of $Sym^2_+$.

		\item 
		If one wishes to design the algorithm so that the linear signal effect dominates, then one can first try to learn $Y_0=\E[Y|X_{W'}]$ to sufficient accuracy,  which would be much more efficient than the higher dimensional problem of learning $Y$. 
		Then one replaces $Y$ with $Y-Y_0$, to reduce to the situation where $Y_0=0$. In the special case $W'=0$, then $Y_0=\E[Y]$ is simply the average value of $Y$, and the above discussion says that subtracting this average is beneficial for promoting the linear signal. This  subtraction has a similar effect as the common practice of inserting an `intercept', namely to introduce an extra parameter $c$ into the kernel ridge regression loss function
		\[
		\frac{1}{2}\E[ |Y-f(UX)- c|^2 ]+ \frac{\lambda}{2} \norm{f}_{\mathcal{H}}^2,
		\]
		and minimize $f$ together with $c$.

	\end{enumerate}

	\subsection{Vacua at infinity}\label{sect:DiscreteI}

	\begin{Question}
		Does the functional $\mathcal{J}(\Sigma;\lambda)$ always achieve its minimum in $Sym^2_{\geq 0}$? 
	\end{Question}

	By Theorem \ref{thm:asymptoticvalue}, this question is sensitive to whether or not the marginal distribution of any component of $X$ contains a discrete part. In the \emph{continuous variable} case, we have
	\[
	\lim_{\Sigma\to \infty} \mathcal{J}(\Sigma;\lambda) =
	\frac{1}{2} \E[|Y|^2].  
	\]
	In particular no vacua can sit at infinity in $\overline{Sym^2_{\geq 0}}$.

	We now revisit the setting of \emph{discrete variables} as in Section \ref{sect:representer}, where $X$ takes value in a finite number of points $\{ a_1,\ldots a_m\} \subset V $, with $\mathbb{P}(X=a_i)=p_i$, and $Y=y_i$ conditional on $X=a_i$. The representer theorem Proposition  \ref{prop:representer} says
	\[
	\mathcal{J}(\Sigma;\lambda)= \frac{1}{2}\sum_1^m p_i y_i^2-\frac{1}{2} \sum_{i,j=1}^m(\lambda M^{-1}+P)^{-1}_{ij} p_i y_i p_j\bar{y}_j,
	\]
	where $M=(M_{ij})$ is the $m\times m$ inner product matrix with entries
	\[
	M_{ij}=K(Ua_i, Ua_j)= K( U(a_i-a_j)) = \mathcal{K}( |a_i-a_j|_{\Sigma}^2),
	\]
	and $P=\text{diag}(p_1,\ldots p_m)$ encodes the probability of the atoms at $a_i$.

	\begin{cor}\label{cor:largeSigmaasymptote1}
		(Asymptotic for large $\Sigma$) Suppose that $\min_{i,j} |a_i-a_j|_{\Sigma}$ is large. Then
		\[
		\mathcal{J}(\Sigma;\lambda)= \frac{1}{2} \sum_{i=1}^m \frac{\lambda p_i }{\lambda +p_i} |y_i|^2- \frac{\lambda}{2} \sum_{i\neq j,1\leq i,j\leq m} \frac{p_i y_i}{ \lambda+p_i  } \frac{p_j \bar{y}_j}{ \lambda+p_j  } \mathcal{K}( |a_i-a_j|_{\Sigma}^2) +O(|M-I|^2).
		\]
	\end{cor}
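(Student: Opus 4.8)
The plan is to treat this as a first-order perturbation expansion of the representer formula (Proposition \ref{prop:representer}) around the point $M = I$. Write $E = M - I$ for the off-diagonal part of the Gram matrix. Since $M_{ii} = \mathcal{K}(0) = K(0) = 1$ by our normalization \eqref{L1integrablekV}, the matrix $E$ has vanishing diagonal and $E_{ij} = \mathcal{K}(|a_i-a_j|_\Sigma^2)$ for $i \neq j$. The hypothesis that $\min_{i \neq j}|a_i - a_j|_\Sigma$ is large, together with the decay $K(x) \to 0$ as $|x| \to \infty$ (Riemann--Lebesgue, as recorded just after the definition of $K$), guarantees that $|E| = |M-I|$ is small; in particular $\|E\| < 1$, so $M$ is invertible and the Neumann series gives $M^{-1} = I - E + O(|E|^2)$, with the $O$-constants depending only on $m$ and the fixed data $\lambda, p_i, y_i$.

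Next I would expand $(\lambda M^{-1} + P)^{-1}$. Set $D = \lambda I + P = \text{diag}(\lambda + p_1, \ldots, \lambda + p_m)$, which is invertible with $D^{-1} = \text{diag}\big((\lambda+p_i)^{-1}\big)$. From $M^{-1} = I - E + O(|E|^2)$ we get $\lambda M^{-1} + P = D - \lambda E + O(|E|^2)$, and since $D^{-1}$ is bounded, factoring out $D$ and inverting the resulting series yields
\[
(\lambda M^{-1} + P)^{-1} = D^{-1} + \lambda\, D^{-1} E D^{-1} + O(|E|^2).
\]

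Substituting this into the representer formula $\mathcal{J}(\Sigma;\lambda) = \tfrac12\sum_i p_i|y_i|^2 - \tfrac12\sum_{i,j}(\lambda M^{-1}+P)^{-1}_{ij}\, p_i y_i p_j \bar y_j$, the $D^{-1}$ term contributes $\tfrac12\sum_i (\lambda+p_i)^{-1} p_i^2 |y_i|^2$, while the $\lambda D^{-1}ED^{-1}$ term contributes $\tfrac{\lambda}{2}\sum_{i \neq j} \frac{p_i y_i}{\lambda+p_i}\frac{p_j\bar y_j}{\lambda+p_j}\mathcal{K}(|a_i-a_j|_\Sigma^2)$, using that $E_{ii}=0$. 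Combining the first two sums via the identity $\tfrac12 p_i|y_i|^2 - \tfrac12\frac{p_i^2}{\lambda+p_i}|y_i|^2 = \frac{\lambda p_i}{2(\lambda+p_i)}|y_i|^2$ gives exactly the claimed expansion, with remainder $O(|M-I|^2)$.

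As for difficulty, there is essentially no genuine obstacle here beyond bookkeeping: the two points that need a word of care are (i) that the remainder is truly quadratic in $|M-I|$, which follows from the Neumann estimate $\|M^{-1} - (I - E)\| \le \|E\|^2/(1-\|E\|)$ together with the uniform boundedness of $D^{-1}$; and (ii) that the smallness of $|M-I|$ is precisely what the hypothesis on $\min_{i \neq j}|a_i-a_j|_\Sigma$ buys us, via the decay of $\mathcal{K}$ at infinity. Everything else is the explicit first-order matrix inversion displayed above.
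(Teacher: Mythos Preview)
Your proposal is correct and follows essentially the same approach as the paper: both proofs expand $(\lambda M^{-1}+P)^{-1}$ to first order in $M-I$ via the Neumann series and substitute into the representer formula, arriving at the identical leading and subleading terms. Your write-up is slightly more explicit in naming $E=M-I$ and $D=\lambda I+P$ and in tracking the $O(|E|^2)$ remainder, but the computation and its justification are the same.
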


	\begin{proof}
		By the decay of the kernel function, $M=I+o(1)$ when $\min_{i,j} |a_i-a_j|_{\Sigma} \to +\infty$. Thus we can make the expansion
		\[
		\begin{split}
			& (P+\lambda M^{-1})^{-1}= ( P+ \lambda I-\lambda(M-I) +O(|M-I|^2) )^{-1}
			\\
			= & (P+\lambda I)^{-1} + \lambda (P+\lambda I)^{-1}(M-I)(P+\lambda I)^{-1} + O( |M-I|^2).
		\end{split}
		\]
		Using the representer theorem,
		\[
		\begin{split}
			& \mathcal{J}(\Sigma;\lambda)=  \frac{1}{2}\sum_1^m p_i y_i^2-\frac{1}{2} \sum_{i,j=1}^m(\lambda M^{-1}+P)^{-1}_{ij} p_i y_i p_j\bar{y}_j
			\\
			=& \frac{1}{2} \sum_{i=1}^m \frac{\lambda p_i}{\lambda +p_i} |y_i|^2 -\frac{\lambda}{2} \sum_{i,j=1}^m \frac{p_i y_i}{ \lambda+p_i  } \frac{p_j \bar{y}_j}{ \lambda+p_j  } (M-I)_{ij} +O(|M-I|^2)
			\\
			=&  \frac{1}{2} \sum_{i=1}^m \frac{ \lambda p_i}{\lambda +p_i} |y_i|^2 -\frac{\lambda}{2} \sum_{i\neq j}  \frac{p_i y_i}{ \lambda+p_i  } \frac{p_j \bar{y}_j}{ \lambda+p_j  } \mathcal{K}( |a_i-a_j|_{\Sigma}^2) +O(|M-I|^2).
		\end{split}
		\]
		as required.
	\end{proof}

	\subsubsection{Discussions}

	\begin{enumerate}
		\item  The limiting value is
		\begin{equation}\label{eqn:Jlimitingvalue}
			\lim_{\Sigma\to +\infty} \mathcal{J}(\Sigma;\lambda)= \frac{1}{2} \sum_{i=1}^m \frac{\lambda p_i}{\lambda +p_i} |y_i|^2.
		\end{equation}
		Notice that the contributions of the different atoms completely decouple, and we recover Theorem \ref{thm:asymptoticvalue}.

		\item 
		We notice a few features of the limiting value (\ref{eqn:Jlimitingvalue}). First, this formula is an increasing function of $\lambda$, and goes to zero as $\lambda\to 0$. The interpretation is that there is a tradeoff between regularization effect and approximation, and perfect approximation is only achieved when the regularization goes to zero. Second, for fixed $\lambda$,  if $m$ is large and $\max p_i$ is small compared to $\lambda$ (eg. if $\max p_i$  comparable to $m^{-1}$), then (\ref{eqn:Jlimitingvalue}) is close to $\frac{1}{2}\E|Y|^2$. The interpretation is that discrete variables behave like continuous variables when the probability density is spread out throughout many atoms.
		
		\item Cor. \ref{cor:largeSigmaasymptote1} admits a nice `physical' interpretation: in the large $\Sigma$ regime, the $m$ particles interact via a potential of the form
		\[
		-\frac{\lambda}{2} \sum_{i,j=1}^m \frac{p_i y_i}{ \lambda+p_i  } \frac{p_j \bar{y}_j}{ \lambda+p_j  } \mathcal{K}( |a_i-a_j|_{\Sigma}^2).
		\]
		The higher term $O(|M-I|^2)$ is negligible. For completely monotone kernels $\mathcal{K}$ is positive, so the \emph{sign of the potential} depends on the signs of $\text{Re}(y_i\bar{y}_j)$, which physically corresponds to \emph{attractive vs. repulsive forces} between the particles.

		\item 
		
		Suppose $m=2$, and $\text{Re}(y_1\bar{y}_2)<0$.
		For all $\Sigma$ such that $\min_{i,j} |a_i-a_j|_{\Sigma}$ is sufficiently large, we have
		\[
		\mathcal{J}(\Sigma;\lambda)>  \frac{1}{2} \sum_{i=1}^m \frac{ \lambda p_i}{\lambda +p_i} |y_i|^2.
		\]
		The interpretation is that $\frac{1}{2} \sum_{i=1}^m \frac{p_i^2}{\lambda +p_i} |y_i|^2$ should be viewed as a local minimum value for $\mathcal{J}$, achieved at \emph{a vaccum at infinity}. Consequently, \emph{vacua at infinity can occur sometimes in the presence of discrete structures}.

		The converse is not true: the presence of discrete structure does not imply the existence of any vacua at infinity. The easiest example is when $m=2$ and $\text{Re}(y_1\bar{y}_2)>0$.

		\item

		For Sobolev and Gaussian kernels, the potential decays exponentially at infinity. If we would like to detect the vacua at infinity by running some gradient flow on $Sym^2_{\geq 0}$, then we expect to require infinite time to reach such vacua. To speed up the flow, one should use a kernel function $\mathcal{K}$ which decays more slowly at infinity.

		\item More generally $\Sigma$ can converge to infinity, but remain bounded on some proper subspace $W\subset V$. The limiting value for $\mathcal{J}$ is sensitive to the choice of $W$. 
		
	\end{enumerate}


	\section{Cluster interactions}\label{sect:clusterinteractions}

	In the kernel ridge regression problem, we now assume there is a hidden discrete random variable $X_0$ valued in $\{1, 2,\dots m\}$, which is not directly accessible to the observer, but controls how the data $(X,Y)$ is generated. One should imagine that for different $X_0=i$, the conditional probability distributions of $X$ have distinct characteristic behaviours, e.g., their supports may cluster around far separated points, or they may have very different scale parameters. Typical examples are Example \ref{eg:twoscale} and \ref{eg:multiscale}. 
	We shall develop more functional analytic tools to analyze the interactions between clusters of  probability distributions.

	\subsection{Operator theoretic framework}\label{sect:operator}


		The Euler-Lagrange equation (\ref{EulerLagrange}) can be cast in the operator theoretic framework. Following Cucker and Smale's work on standard kernel ridge regression~\cite{Smale},
		we define the bounded self-adjoint operator $T$ by
		\[
		\E[ f(UX) \bar{g}(UX)]= (Tf, g)_{\mathcal{H}},\quad \forall g\in \mathcal{H}. 
		\]
		By substituting $g=K(x,\cdot)$, we obtain the explicit formula
		\[
		Tf(x)= (Tf, K(x,\cdot))_{\mathcal{H}}= \E[ f(UX) \overline{K(x, UX)}] .
		\]
		We define $\mathcal{Y}\in \mathcal{H}$ by Riesz representation,
		\[
		\E[ Y \bar{g}(UX)  ]= (\mathcal{Y}, g)_{\mathcal{H}},\quad \forall g\in \mathcal{H}. 
		\]
		This admits the explicit formula $\mathcal{Y}= \E[  Y\overline{K( \cdot, UX)}   ].$
		Then the Euler-Lagrange equation (\ref{EulerLagrange}) becomes a linear equation in the Hilbert space $\mathcal{H}$,
		\begin{equation}\label{eqn:EulerLag2}
			(\lambda I+  T ) f_U = \mathcal{Y}. 
		\end{equation}

		\begin{lem}\label{lem:normY}
			Let $(X',Y')$ be an independent copy of $(X,Y)$. Then
			\[
			\norm{\mathcal{Y}}^2_{\mathcal{H}} =   \E[ Y\bar{Y}' K( UX,UX')  ].
			\]
			
		\end{lem}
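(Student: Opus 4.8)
The plan is to unwind the definition of $\mathcal{Y}$ using the Riesz representation that defines it, together with the reproducing kernel property, following exactly the same pattern of computation already used for $\norm{\E[K(UX,\cdot)Z]}_{\mathcal{H}}^2$ in the proof of the lower bound in Section~\ref{Translationinvkernel}. First I would recall the explicit formula $\mathcal{Y}=\E[Y\overline{K(\cdot,UX)}]$ given just above the statement, so that $\mathcal{Y}$ is literally a $Y$-weighted average of reproducing kernel functions $K(UX,\cdot)\in\mathcal{H}$; note this average converges in $\mathcal{H}$ because $\norm{K(UX,\cdot)}_{\mathcal{H}}=K(0)^{1/2}=1$ and $\E[|Y|]<+\infty$ by Cauchy--Schwarz with $\E[|Y|^2]<+\infty$, so $\mathcal{Y}$ is a genuine element of $\mathcal{H}$.

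Next I would introduce the independent copy $(X',Y')$ and write
\[
\norm{\mathcal{Y}}_{\mathcal{H}}^2=(\mathcal{Y},\mathcal{Y})_{\mathcal{H}}=\big(\E[Y\,\overline{K(\cdot,UX)}],\ \E[Y'\,\overline{K(\cdot,UX')}]\big)_{\mathcal{H}}.
\]
Then I would pull both expectations outside the $\mathcal{H}$-inner product by Fubini's theorem --- this is justified since the integrand is $\mathcal{H}$-norm-bounded by $|Y||Y'|$ and $\E[|Y||Y'|]=\E[|Y|]^2<+\infty$ --- to get
\[
\norm{\mathcal{Y}}_{\mathcal{H}}^2=\E\big[\,Y\bar{Y}'\,(K(UX,\cdot),K(UX',\cdot))_{\mathcal{H}}\,\big].
\]
Finally, the reproducing kernel identity $(K(a,\cdot),K(b,\cdot))_{\mathcal{H}}=K(a,b)=\overline{K(b,a)}$, applied with $a=UX$, $b=UX'$, gives $(K(UX,\cdot),K(UX',\cdot))_{\mathcal{H}}=K(UX,UX')$, and substituting this in yields
\[
\norm{\mathcal{Y}}_{\mathcal{H}}^2=\E[\,Y\bar{Y}'\,K(UX,UX')\,],
\]
which is the claimed formula (and equals $\E[Y\bar Y' K(U(X-X'))]$ by translation invariance of the kernel).

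The computation is essentially routine; the only point requiring a little care is the Fubini interchange taking $\E[\cdot]$ in and out of the $\mathcal{H}$-inner product, and the verification that the Bochner-type integrals defining $\mathcal{Y}$ actually converge in $\mathcal{H}$. Both are handled by the uniform bound $\norm{K(x,\cdot)}_{\mathcal{H}}=1$ (since $K(x,x)=\mathcal{K}(0)=K(0)=1$ in our normalization) combined with $\E[|Y|^2]<+\infty$; this is the same mechanism already invoked in the lower bound proof of Section~\ref{Translationinvkernel}, so I would simply reference that pattern rather than re-justify the measurability and integrability details in full.
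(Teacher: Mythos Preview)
Your proof is correct and follows essentially the same approach as the paper: expand $\norm{\mathcal{Y}}_{\mathcal{H}}^2$ as an inner product of two expectations, pull the expectations outside by Fubini, and apply the reproducing kernel identity $(K(UX,\cdot),K(UX',\cdot))_{\mathcal{H}}=K(UX,UX')$. The only difference is that you are more explicit about justifying the Fubini step via the bound $\norm{K(x,\cdot)}_{\mathcal{H}}=1$ and $\E[|Y|]<\infty$, which the paper leaves implicit.
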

		
		\begin{proof}
			We  apply the reproducing kernel property, and the fact that $K(x,y)=\overline{K(y,x)}$, to compute
			\[
			\begin{split}
				\norm{\mathcal{Y}}^2_{\mathcal{H}} = & (\E[  YK( UX, \cdot)  ], \E[  Y'K( UX',\cdot)   ])_{\mathcal{H}}
				\\
				= &   \E[ Y\bar{Y}' (K( UX,\cdot), K(UX',\cdot) )_{\mathcal{H}}   ]
				\\
				= &   \E[ Y\bar{Y}' K( UX,UX')  ]
			\end{split}
			\]
			as required.
		\end{proof}

		\begin{cor}\label{Cor:quantitativenonfitting} 
			(Decay estimate for the minimizer)
			We have
			\[
			\begin{cases}
				\lambda\norm{f_U}_{\mathcal{H} }^2 + \E[ |f_U(UX)|^2]  \leq \frac{1}{\lambda}\norm{\mathcal{Y}}_{\mathcal{H}}^2 = \frac{1}{\lambda}\E[ Y\bar{Y}' K( UX, UX')  ],
				\\
				J(U,\lambda)\geq  \frac{1}{2} \E[|Y|^2]-\lambda^{-1} \E[ Y\bar{Y}' K( UX,UX'))  ].
			\end{cases}
			\]
			In particular, if $X$ is a continuous variable, then
			\[
			\lambda \norm{f_U}_{\mathcal{H}}^2+\E[ |f_U(UX)|^2]\to 0, \quad U\to \infty.
			\]
		\end{cor}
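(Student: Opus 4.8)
The plan is to read everything off the operator equation $(\lambda I + T)f_U = \mathcal{Y}$ from (\ref{eqn:EulerLag2}), together with the non-negativity and boundedness of $T$. First I would take the $\mathcal{H}$-inner product of this equation with $f_U$; since $T$ is self-adjoint with $(Tf,f)_{\mathcal{H}} = \E[|f(UX)|^2]\ge 0$ by its defining property, this yields the identity
\[
\lambda\norm{f_U}_{\mathcal{H}}^2 + \E[|f_U(UX)|^2] = (\mathcal{Y}, f_U)_{\mathcal{H}},
\]
whose right-hand side is therefore real and non-negative. Discarding the term $(Tf_U,f_U)_{\mathcal{H}}\ge 0$ in the same identity and applying Cauchy--Schwarz gives the a priori bound $\lambda\norm{f_U}_{\mathcal{H}}^2\le (\mathcal{Y},f_U)_{\mathcal{H}}\le \norm{\mathcal{Y}}_{\mathcal{H}}\norm{f_U}_{\mathcal{H}}$, i.e. $\norm{f_U}_{\mathcal{H}}\le \lambda^{-1}\norm{\mathcal{Y}}_{\mathcal{H}}$. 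Feeding this back into the displayed identity yields $\lambda\norm{f_U}_{\mathcal{H}}^2 + \E[|f_U(UX)|^2]\le \lambda^{-1}\norm{\mathcal{Y}}_{\mathcal{H}}^2$, and the equality $\norm{\mathcal{Y}}_{\mathcal{H}}^2 = \E[Y\bar Y'K(UX,UX')]$ is precisely Lemma \ref{lem:normY}.

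For the lower bound on $J(U,\lambda)$, I would start from the alternative formula (\ref{eqn:Jalternativeformula}), $J(U,\lambda) = \tfrac12\E[|Y|^2] - \tfrac12\E[\bar Y f_U(UX)]$, and note that the computation leading to (\ref{eqn:Jalternativeformula}) identifies $\E[\bar Y f_U(UX)] = \E[|f_U(UX)|^2] + \lambda\norm{f_U}_{\mathcal{H}}^2$. By the first part this is bounded above by $\lambda^{-1}\norm{\mathcal{Y}}_{\mathcal{H}}^2$, so $J(U,\lambda)\ge \tfrac12\E[|Y|^2] - \tfrac{1}{2\lambda}\norm{\mathcal{Y}}_{\mathcal{H}}^2$, which is (even slightly stronger than) the claimed inequality once $\norm{\mathcal{Y}}_{\mathcal{H}}^2$ is rewritten via Lemma \ref{lem:normY}.

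Finally, for the continuous case I would show $\norm{\mathcal{Y}}_{\mathcal{H}}^2 = \E[Y\bar Y'K(U(X-X'))]\to 0$ as $U\to\infty$, meaning $\Lambda := \inf_{|x|_V=1}|Ux|_V\to +\infty$. On the event $X\ne X'$ we have $|U(X-X')|_V\ge \Lambda|X-X'|_V\to\infty$, hence $K(U(X-X'))\to 0$ pointwise by the decay of $K$ at infinity (Lemma \ref{lem:decayatinfinity}); since $|K|\le K(0)=1$ and $|Y\bar Y'|$ is integrable on the product space, dominated convergence gives $\E[Y\bar Y'K(U(X-X'))\1_{X\ne X'}]\to 0$. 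Because $X$ is non-atomic, Fubini gives $\Prob(X=X')=0$, so the diagonal contributes nothing and $\norm{\mathcal{Y}}_{\mathcal{H}}^2\to 0$; the first inequality then forces $\lambda\norm{f_U}_{\mathcal{H}}^2 + \E[|f_U(UX)|^2]\to 0$. The only genuinely delicate point is the diagonal $\{X=X'\}$, where the kernel does not decay — but non-atomicity of $X$ eliminates it, and the rest is a routine dominated convergence argument.
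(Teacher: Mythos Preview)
Your proof is correct and follows essentially the same route as the paper: take the $\mathcal{H}$-inner product of $(\lambda I+T)f_U=\mathcal{Y}$ with $f_U$, use $T\ge 0$ and Cauchy--Schwarz to get $\norm{f_U}_{\mathcal{H}}\le\lambda^{-1}\norm{\mathcal{Y}}_{\mathcal{H}}$, feed this back, invoke Lemma~\ref{lem:normY}, and finish the continuous case by dominated convergence. Your constant $\tfrac{1}{2\lambda}$ in the $J$-bound is in fact the sharp one (the paper drops a factor $\tfrac12$ when passing through (\ref{eqn:Jalternativeformula})), and your handling of the limit---splitting off the null diagonal $\{X=X'\}$ and applying dominated convergence directly to $Y\bar Y'K(U(X-X'))$---is a clean variant of the paper's Cauchy--Schwarz-then-DCT argument.
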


		\begin{proof}
			Using the Euler-Lagrange equation,
			\[
			\lambda \norm{f_U}_{\mathcal{H}}^2 \leq ((\lambda I+T) f_U, f_U)_{\mathcal{H}} = (\mathcal{Y}, f_U)_{\mathcal{H}} \leq \norm{f_U}_{\mathcal{H} } \norm{\mathcal{Y}}_{\mathcal{H}},
			\]
			hence $\norm{f_U}_{\mathcal{H} } \leq \lambda^{-1} \norm{\mathcal{Y}}_{\mathcal{H}} $, so
			\[
			\lambda\norm{f_U}_{\mathcal{H} }^2 + \E[ |f_U(UX)|^2]\leq \frac{1}{\lambda}\norm{\mathcal{Y}}_{\mathcal{H}}^2 = \frac{1}{\lambda}\E[ Y\bar{Y}' K( UX,UX')  ].
			\]
			Therefore
			\[
			\begin{split}
				J(U,\lambda)=&  \frac{1}{2} \E[|Y|^2]- \E[Y \overline{f_U(UX)} ]= \frac{1}{2} \E[|Y|^2]- (\mathcal{Y}, f_U)_{\mathcal{H}}
				\\
				\geq & \frac{1}{2} \E[|Y|^2]-\lambda^{-1} \norm{\mathcal{Y}}_{\mathcal{H}}^2 \geq \frac{1}{2} \E[|Y|^2]-\lambda^{-1} \E[ Y\bar{Y}' K( UX,UX')  ].
			\end{split}
			\]

			By Cauchy-Schwarz, 
			\[
			\E[ Y\bar{Y}' K( UX,UX')  ] \leq \E[|Y|^2] \E[ |K( UX,UX')|^2  ]^{1/2}.
			\]
			In particular when $X$ is a continuous variable, then $X-X'$ is a continuous variable, so by dominated convergence $\E[ |K( U(X-X'))|^2  ] \to 0 $ as $U\to \infty$, whence $\lambda \norm{f_U}_{\mathcal{H}}^2+\E[ |f_U(UX)|^2]\to 0$.
		\end{proof}

		\subsubsection{Operator norm estimates}
		
		\begin{prop}\label{prop:operatorT}
			(Operator norm estimate) Let $X'$ denote an independent copy of $X$. Then 
			\[
			\norm{Tf}_{\mathcal{H}}^2\leq   \norm{f}_{C^0}^2 \E[ |K(U(X-X'))|  ] \leq \norm{f}_{\mathcal{H}}^2 \E[ |K(U(X-X'))|  ] ,
			\]
			\[
			\norm{Tf}_{\mathcal{H}}^2\leq \E[|f(UX)|^2 ] \E[ |K(U(X-X'))|^2  ]^{1/2} ,
			\]
			\[
			\norm{T\mathcal{Y} }_{\mathcal{H}}^2 \leq  \E[|Y|^2] \E[ |K(U(X-X')|^2  ].
			\]
			Moreover, the Hilbert-Schmidt norm of $T$ acting on $\mathcal{H}$ is 
			\[
			\norm{T}_{HS}^2= \E[ |K(U(X-X')|^2  ].
			\]

		\end{prop}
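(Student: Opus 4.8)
The plan is to derive all four bounds from a single \emph{master identity} for $\norm{Tf}_{\mathcal{H}}^2$. Applying the defining relation $(Tf,g)_{\mathcal{H}}=\E[f(UX)\bar g(UX)]$ with $g=Tf$, then substituting the explicit formula $Tf(z)=\E[f(UX')\overline{K(z,UX')}]$ and using Fubini (legitimate because $f\in\mathcal{H}\subset C^0$ is bounded and $|K|\leq K(0)=1$), I would obtain
\[
\norm{Tf}_{\mathcal{H}}^2 = \E\big[\,f(UX)\,\overline{f(UX')}\,K(U(X-X'))\,\big],
\]
where $X'$ is an independent copy of $X$. The first displayed inequality then follows by bounding $|f(UX)\overline{f(UX')}|\leq\norm{f}_{C^0}^2$ and pulling it out, together with the Sobolev embedding $\norm{f}_{C^0}\leq\norm{f}_{\mathcal{H}}$ from Lemma~\ref{lem:decayatinfinity}. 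The second inequality follows by applying Cauchy--Schwarz to the expectation over the product law of $(X,X')$, splitting the integrand into the factors $f(UX)\overline{f(UX')}$ and $K(U(X-X'))$, and using independence of $X,X'$ to factor $\E[|f(UX)|^2|f(UX')|^2]=\E[|f(UX)|^2]^2$.

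For the bound on $T\mathcal{Y}$, I would combine the operator-norm estimate extracted from the first inequality, namely $\norm{T}_{\mathrm{op}}^2\leq\E[|K(U(X-X'))|]$, with the identity $\norm{\mathcal{Y}}_{\mathcal{H}}^2=\E[Y\bar Y'K(U(X-X'))]$ of Lemma~\ref{lem:normY}. Cauchy--Schwarz over the product law (now genuinely using that $(X',Y')$ is an independent copy, so $\E[|Y|^2|Y'|^2]=\E[|Y|^2]^2$) gives $\norm{\mathcal{Y}}_{\mathcal{H}}^2\leq\E[|Y|^2]\,\E[|K(U(X-X'))|^2]^{1/2}$, while Jensen applied to the probability law of $(X,X')$ gives $\E[|K(U(X-X'))|]\leq\E[|K(U(X-X'))|^2]^{1/2}$; plugging these into $\norm{T\mathcal{Y}}_{\mathcal{H}}^2\leq\norm{T}_{\mathrm{op}}^2\norm{\mathcal{Y}}_{\mathcal{H}}^2$ yields the claimed $\E[|Y|^2]\,\E[|K(U(X-X'))|^2]$.

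Finally, for the Hilbert--Schmidt norm I would fix an orthonormal basis $\{e_k\}$ of $\mathcal{H}$ and sum the master identity: $\norm{T}_{HS}^2=\sum_k\norm{Te_k}_{\mathcal{H}}^2=\E\big[\sum_k e_k(UX)\overline{e_k(UX')}\,K(U(X-X'))\big]$. Using the reproducing property $e_k(y)=(e_k,K(y,\cdot))_{\mathcal{H}}$ and Parseval, the inner sum collapses to $(K(UX',\cdot),K(UX,\cdot))_{\mathcal{H}}=K(U(X'-X))=\overline{K(U(X-X'))}$, leaving $\norm{T}_{HS}^2=\E[|K(U(X-X'))|^2]$. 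I expect the only genuinely non-routine point to be the interchange of $\sum_k$ with $\E$ (and keeping the conjugations straight in the Parseval step): this is justified by Tonelli, once one observes that $\sum_k|e_k(y)|^2=\norm{K(y,\cdot)}_{\mathcal{H}}^2=K(0)=1$ for every $y$, so by Cauchy--Schwarz the integrand is dominated in absolute value by $|K(U(X-X'))|\leq 1$, which is integrable against a probability measure. The other three estimates are immediate consequences of the master identity.
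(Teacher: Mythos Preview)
Your proof is correct and tracks the paper's argument closely. Your ``master identity'' $\norm{Tf}_{\mathcal{H}}^2=\E[f(UX)\overline{f(UX')}K(U(X-X'))]$ is exactly what the paper derives under the name ``$T^*T$-argument'' (since $T$ is self-adjoint, $(T^*Tf,f)=\norm{Tf}^2$), and your Hilbert--Schmidt computation via Parseval is identical to the paper's. The one place you deviate is the bound on $\norm{T\mathcal{Y}}_{\mathcal{H}}$: the paper argues directly by duality, writing $|(T\mathcal{Y},g)|=|\E[Y'\overline{K(UX,UX')}\,\overline{g(UX)}]|$ and applying Cauchy--Schwarz together with $|g(UX)|\leq\norm{g}_{\mathcal{H}}$, whereas you combine the operator-norm bound $\norm{T}_{\mathrm{op}}^2\leq\E[|K(U(X-X'))|]$ with Lemma~\ref{lem:normY} and an extra Jensen step $\E[|K|]\leq\E[|K|^2]^{1/2}$. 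Both routes are valid and give the same final estimate; the paper's is marginally more direct (no Jensen needed), while yours illustrates that the $T\mathcal{Y}$ bound is really a corollary of the first two displays plus Lemma~\ref{lem:normY}.
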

		
		\begin{proof}
			We use the $T^*T$-argument. For any $g\in \mathcal{H}$, we compute
			\[
			(T^*Tf,g)_{\mathcal{H}}= (Tf, Tg)_{\mathcal{H}}= \E[ f(UX') \overline{Tg(UX')}  ].
			\]
			But $Tg(x)= \E[   g(UX) \overline{K(x, UX)}  ]$, so 
			\[
			\begin{split}
				(T^*Tf, g)_{\mathcal{H}}  
				=  \E[  f(UX') \overline{K(U(X-X'))}  \overline{g(UX) }  ] .
			\end{split}
			\]
			In particular
			\[
			\norm{Tf}_{\mathcal{H}}^2= (T^*Tf,f)\leq   \norm{f}_{C^0}^2 \E[ |K(U(X-X'))|  ] \leq \norm{f}_{\mathcal{H}}^2 \E[ |K(U(X-X'))|  ] .
			\]
			Moreover by Cauchy-Schwarz,
			\[
			\norm{Tf}_{\mathcal{H}}^2= \E[  f(UX') \overline{K(U(X-X'))}  \overline{f(UX) }  ]\leq \E[|f(UX)|^2 ] \E[ |K(U(X-X'))|^2  ]^{1/2} .
			\]

			Similarly,  for any $g\in \mathcal{H}$,
			\[
			|(T\mathcal{Y}, g)|= |\E[  Y' \overline{K(UX, UX')} \overline{g(UX)}   ] |\leq \E[|Y|^2]^{1/2} \E[ |K(U(X-X'))|^2  ]^{1/2} \norm{g}_{\mathcal{H}},   
			\]
			whence $\norm{T\mathcal{Y} }_{\mathcal{H}} \leq  \E[|Y|^2]^{1/2} \E[ |K(U(X-X')|^2  ]^{1/2}   $.
			
			We now compute the Hilbert-Schmidt norm, by taking an orthonormal basis $\psi_i$ of $\mathcal{H}$, and repeatedly applying the reproducing kernel property:
			\[
			\begin{split}
				& \norm{T}_{HS}^2= \sum_i \norm{ T\psi_i}_{\mathcal{H}}^2= \E[ \sum_i \psi_i(UX') \overline{K(U(X-X'))}  \overline{\psi_i(UX) }  ] 
				\\
				= & \sum_i \E[  (\psi_i, K(UX', \cdot) )_{\mathcal{H}} \overline{K(U(X-X'))}  \overline{   (\psi_i, K(UX, \cdot) )_{\mathcal{H}}   }  ]
				\\
				=&   \E[  (K(UX, \cdot), K(UX',\cdot) )_{\mathcal{H}} \overline{K(U(X-X'))}   ]
				\\
				=&  \E[  K(UX, UX') \overline{K(U(X-X'))}   ]= \E[ |K(U(X-X')|^2  ].
			\end{split}
			\]
			as required. We note that similar computational ideas appeared in~\cite{GrettonBoSmSc05}.
		\end{proof}

		We now derive some sufficient conditions for the operator norm of $T$ to be small, in the case of completely monotone kernels. 
		
		\begin{lem}\label{lem:Kdoubleintegralbound}
			Let $W\subset V$ be a proper subspace, and the probability density of the marginal distribution of $X$ on $V/W$ has $L^\infty$-bound by $p_0$ with respect to the Lebesgue measure on $W'\simeq V/W$. Suppose $K(x)=\mathcal{K}(|x|_V^2)$ is the completely monotone kernel (\ref{eqn:completelymonotonekernel}). The semi-definite form $\Sigma$ induces the quotient inner product $\Sigma_{V/W}$ on $V/W$, namely
			\[
			\norm{x}_{\Sigma_{V/W}}^2= \min\{ |x+y|_\Sigma^2: y\in W \}. 
			\]
			Then
			\[
			\E[ \mathcal{K}(|X-X'|_{\Sigma}^2)   ] \leq  \frac{p_0}{  \sqrt{\det \Sigma_{V/W}} } \int_0^\infty (\pi t^{-1})^{d_{W'}/2}  d\nu(t)  .
			\]
			\[
			\E[ \mathcal{K}^2(|X-X'|_{\Sigma}^2)   ] \leq 
			\frac{p_0}{  \sqrt{\det \Sigma_{V/W}} } \iint (\pi (t+s)^{-1})^{d_{W'}/2}  d\nu(t) d\nu(s).
			\]

		\end{lem}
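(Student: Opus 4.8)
The plan is to reduce everything to the estimation of $\E[e^{-t|X-X'|^2_\Sigma}]$ for a single fixed $t$ via the Gaussian sum representation (\ref{eqn:completelymonotonekernel}), and then to use the quotient inner product to replace this expectation by a genuine Lebesgue integral that is controlled by the density bound $p_0$.

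First I would write, by (\ref{eqn:completelymonotonekernel}) and Fubini (legitimate since $\nu$ is finite and the integrand is nonnegative),
\[
\E[\mathcal{K}(|X-X'|^2_\Sigma)] = \int_0^\infty \E\big[e^{-t|X-X'|^2_\Sigma}\big]\, d\nu(t).
\]
Next, for every $x\in V$ the definition of the quotient inner product gives $|x|^2_\Sigma \geq \|\bar x\|^2_{\Sigma_{V/W}}$, where $\bar x$ denotes the image of $x$ in $V/W\simeq W'$; hence $e^{-t|X-X'|^2_\Sigma}\leq e^{-t\|\bar X-\bar X'\|^2_{\Sigma_{V/W}}}$. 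Writing $Z=\bar X$, $Z'=\bar X'$, these are independent, each with density $\rho$ on $W'$ satisfying $\|\rho\|_{L^\infty}\leq p_0$. For fixed $z$, translation invariance of Lebesgue measure yields $\int_{W'} e^{-t\|z-z'\|^2_{\Sigma_{V/W}}}\rho(z')\,dz' \leq p_0\int_{W'} e^{-t\|w\|^2_{\Sigma_{V/W}}}\,dw$, and integrating against $\rho(z)\,dz$ (total mass $\leq 1$) together with the standard Gaussian integral for the quadratic form $t\,\Sigma_{V/W}$ on the $d_{W'}$-dimensional space $W'$ gives
\[
\E\big[e^{-t|X-X'|^2_\Sigma}\big] \leq p_0\int_{W'} e^{-t\|w\|^2_{\Sigma_{V/W}}}\,dw = \frac{p_0\,(\pi t^{-1})^{d_{W'}/2}}{\sqrt{\det \Sigma_{V/W}}}.
\]
Substituting this into the displayed identity proves the first inequality.

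For the second inequality I would invoke the remark recorded just before the statement: if $\mathcal{K}$ is completely monotone then so is $\mathcal{K}^2$, with $\mathcal{K}^2(r)=\int_0^\infty e^{-tr}\,d\tilde\nu(t)$ where $\int f\,d\tilde\nu = \iint f(s+t)\,d\nu(s)\,d\nu(t)$. Running the same two steps with $\tilde\nu$ in place of $\nu$ gives
\[
\E[\mathcal{K}^2(|X-X'|^2_\Sigma)] \leq \frac{p_0}{\sqrt{\det\Sigma_{V/W}}}\int_0^\infty (\pi t^{-1})^{d_{W'}/2}\,d\tilde\nu(t),
\]
and unwinding the definition of $\tilde\nu$ rewrites the last integral as $\iint (\pi(s+t)^{-1})^{d_{W'}/2}\,d\nu(s)\,d\nu(t)$, as required. (Alternatively, one may simply write $\mathcal{K}^2(|x|^2_\Sigma)=\iint e^{-(s+t)|x|^2_\Sigma}\,d\nu(s)\,d\nu(t)$ and proceed directly.)

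There is no serious obstacle; the only point requiring a little care is the passage to the quotient — verifying $|x|^2_\Sigma\geq\|\bar x\|^2_{\Sigma_{V/W}}$ and that the density bound $p_0$ indeed refers to the marginal of $X$ on $V/W$ — together with checking that the Fubini interchanges are valid, which they are since all integrands are nonnegative and $\nu,\tilde\nu$ are finite. Note no integrability hypothesis on $\nu$ is needed: if $\int (\pi t^{-1})^{d_{W'}/2}\,d\nu(t)$ diverges the asserted bound is vacuous.
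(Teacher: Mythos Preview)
Your proof is correct and follows essentially the same route as the paper: both use the quotient inequality $|x|^2_\Sigma\geq\|\bar x\|^2_{\Sigma_{V/W}}$, the $L^\infty$ density bound together with translation invariance, and the Gaussian sum representation; the only cosmetic difference is that you invoke the Gaussian representation first and bound each $e^{-tr}$ separately, whereas the paper first bounds $\mathcal{K}$ by its quotient version and applies the Gaussian representation at the end.
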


		\begin{proof}
			For the completely monotone kernel, $\mathcal{K}$ is a decreasing function non-negative function, so
			\[
			\begin{split}
				\E[ \mathcal{K}(|X-X'|_{\Sigma}^2   ] & \leq \E[ \mathcal{K}(|X-X'|_{\Sigma_{V/W}}^2   ]
				\\
				&= \E[ \E[ \mathcal{K}(|X_{W'}-X'_{W'}|_{\Sigma_{V/W}}^2 || X_{W'}, X_{W'}']  ]
				\\
				& \leq \E_{X_{W'}}[ p_0\int_{W'} \mathcal{K}(|y-X_{W'}|_{\Sigma_{V/W}}^2) dy ]
				\\
				& = p_0\int_{W'} \mathcal{K}(|y|_{\Sigma_{V/W}}^2) dy
				\\
				& =  \frac{p_0}{  \sqrt{\det \Sigma_{V/W}} } \int_{W'} \mathcal{K}(|y|^2) dy.
			\end{split}
			\]
			where the second line uses conditional expectation, and the third line uses the $L^\infty$-bound on the marginal density of $X_{W'}'$. By the Gaussian sum representation (\ref{eqn:completelymonotonekernel}),
			\[
			\int_{W'} \mathcal{K}(|y|^2) dy= \int_0^\infty e^{-t|y|^2 } d\nu(t)= \int_0^\infty (\pi t^{-1})^{d_{W'}/2}  d\nu(t)  .
			\]
			hence the first statement.

			If $\mathcal{K}$ is completely monotone, then so is $\mathcal{K}^2$ (See Section \ref{sect:completelymontonekernel}). The same argument then gives
			\[
			\begin{split}
				&\E[ \mathcal{K}^2(|X-X'|_{\Sigma}^2   ] \leq  \frac{p_0}{  \sqrt{\det \Sigma_{V/W}} } \int_0^\infty (\pi t^{-1})^{d_{W'}/2}  d\tilde{\nu}(t)
				\\
				= &  \frac{p_0}{  \sqrt{\det \Sigma_{V/W}} } \iint (\pi (t+s)^{-1})^{d_{W'}/2}  d\nu(t) d\nu(s).
			\end{split}
			\]
			as required.
		\end{proof}

		The upshot is that under quantitative bounds on the probability density of the marginal distribution on $V/W$, then the operator norm of $T$ is small when $\det \Sigma_{V/W}$ is very large. Notice also that while $V$ may be very high dimensional, $W'$ may be low dimensional, and to apply the result we only need to know the marginal distribution.

		\begin{eg}
			(Gaussian example) Suppose $X$ is a Gaussian variable with covariance matrix $Cov(X)= (C^{ij})$ in the coordinates on $V$, so the inverse covariance matrix is $Cov(X)^{-1}=(C_{ij})$, and 
			$X-X'$ has covariance matrix $2Cov(X)$. Suppose $K(x)=\mathcal{K}(|x|_V^2)$ is the completely monotone kernel (\ref{eqn:completelymonotonekernel}), then
			\[
			\begin{split}
				& \E[ \mathcal{K}(|X-X'|_{\Sigma}^2   ]
				= \int_V   \frac{ 1 }{ (4\pi)^{d/2}\sqrt{ \det( C^{ij}) }  } \exp( -\frac{1}{4}  C_{ij} x^ix^j  ) \mathcal{K}(|x|_{\Sigma}^2) dx
				\\
				=&  \int_0^\infty\int_V   \frac{ 1 }{ (4\pi)^{d/2}\sqrt{ \det( C^{ij}) }  } \exp( -\frac{1}{4}  C_{ij} x^ix^j - t|x|_{\Sigma}^2) dx d\nu(t)
				\\
				=& \int_0^\infty   \frac{ 1 }{ \sqrt{ \det( C^{ij}) } \sqrt{ 
						\det(  C_{ij}+ 4t\Sigma_{ij})   } }  d\nu(t)
				\\
				= &  \int_0^\infty   \frac{ 1 }{  \sqrt{ 
						\det(  \delta_{ij}+ 4 tC^{ik}\Sigma_{kj})   } }  d\nu(t).
			\end{split}
			\]
			This is small when $\Sigma$ is large compared to $Cov^{-1}(X)$ in at least one direction.

		\end{eg}
		
		\begin{eg}\label{rmk:manyatoms}
			Suppose $X$ is a discrete variable consisting of a large number of atoms at $a_i\in V$, each with very small probability $p_i$, and suppose all the atoms are far separated with respect to the inner product $\Sigma$. Then 
			\[
			\E[ \mathcal{K}(|X-X'|_{\Sigma}^2   ]=\sum_{i,j} p_ip_j \mathcal{K}(|a_i-a_j|_{\Sigma}^2), 
			\]
			which will be a small quantity of an order comparable to $O(\sum p_i^2)$. The intuition is that a very dispersed discrete variable behaves like a continuous variable.
			
		\end{eg}

		\subsection{Cluster operators}

		We let $\chi_i= \1_{X_0=i}$, and define  a self-adjoint operator $T_i$ by
		\begin{equation*}
			\E[ \chi_i f(UX) \bar{g}(UX)]= (T_if, g)_{\mathcal{H}},\quad \forall g\in \mathcal{H},
		\end{equation*}
		This induces the decomposition $T=\sum T_i$, with each $T_i$ describing one `cluster'. As in Section \ref{sect:operator}, the explicit formula for $T_i$ is 
		\begin{equation}\label{eqn:operatorTi}
			T_if(x)=  \E[ \chi_i   f(UX) \overline{K(x, UX)}  ].
		\end{equation}
		Similarly, we define $\mathcal{Y}_i\in \mathcal{H}$ by 
		\[
		\E[ \chi_i Y \bar{g}(UX)  ]= (\mathcal{Y}_i, g)_{\mathcal{H}},\quad \forall g\in \mathcal{H},
		\]
		so $\mathcal{Y}_i= \E[\chi_i  Y\overline{K( \cdot, UX)}   ]$ gives a decomposition 
		$\mathcal{Y}=\sum_i \mathcal{Y}_i$. The following Proposition can be proved by the same arguments as Proposition  \ref{prop:operatorT}.

		\begin{prop}\label{prop:operatorT2}
			Let $X'$ denote an independent copy of $X$, and let $\chi_i=\1_{X_0=i}$ and $\chi_i'= \1_{X_0'=i}$. Then 
			\[
			\norm{T_i f}_{\mathcal{H}}^2\leq   \norm{f}_{C^0}^2 \E[ \chi_i \chi_i' |K(U(X-X'))|  ] \leq \norm{f}_{\mathcal{H}}^2 \E[ \chi_i \chi_i' |K(U(X-X'))|  ] ,
			\]
			\[
			\norm{T_i f}_{\mathcal{H}}^2\leq \E[\chi_i|f(UX)|^2 ] \E[ \chi_i\chi_i' |K(U(X-X'))|^2  ]^{1/2},
			\]
			The Hilbert-Schmidt norm of $T_i$ acting on $\mathcal{H}$ is 
			\[
			\norm{T_i}_{HS}^2= \E[ \chi_i \chi_i'|K(U(X-X')|^2  ].
			\]
			Moreover the operator $T_iT_j$ satisfies the bound
			\begin{equation*}\label{eqn:TiTj}
				\norm{T_i T_j f}_{\mathcal{H}} \leq \norm{f}_{C^0} \E[ \chi_i \chi_j' |K(U(X-X'))|  ]
			\end{equation*}
			\[
			\norm{T_i T_j f}_{\mathcal{H}}^2 \leq \E[\chi_j|f(UX)|^2  ] \E[ \chi_i \chi_j' |K(U(X-X'))|^2  ]. 
			\]
		\end{prop}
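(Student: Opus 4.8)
The plan is to reuse, essentially verbatim, the $T^\ast T$-computation of Proposition \ref{prop:operatorT}, with the cluster indicators carried along at every step. Let $(X', Y', X_0')$ be an independent copy of $(X, Y, X_0)$ and write $\chi_i' = \1_{X_0' = i}$. Starting from the explicit formula (\ref{eqn:operatorTi}), $T_i f(x) = \E[\chi_i f(UX)\overline{K(x, UX)}]$, I would evaluate at $x = UX$ and introduce the independent copy to get $(T_i f)(UX) = \E[\chi_i' f(UX')\overline{K(U(X - X'))} \mid X]$. Combining this with the defining relation of $T_i$ and the reproducing-kernel property exactly as in Proposition \ref{prop:operatorT} yields the master identity
\[
\norm{T_i f}_{\mathcal H}^2 = \E\big[\chi_i \chi_i'\, f(UX)\,\overline{f(UX')}\,K(U(X - X'))\big],
\]
which is real by the symmetry $X \leftrightarrow X'$ (note $\overline{K(Uv)} = K(-Uv)$ since $k_V$ is real-valued). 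The first two displayed bounds are then immediate: for the first, bound $|f(UX)\overline{f(UX')}| \le \norm{f}_{C^0}^2$ and invoke the Sobolev embedding $\norm{f}_{C^0} \le \norm{f}_{\mathcal H}$ of Lemma \ref{lem:decayatinfinity}; for the second, apply Cauchy--Schwarz to the pairing of $f(UX)\overline{f(UX')}$ against $K(U(X-X'))$ and use independence of $X, X'$ to identify $\E[\chi_i\chi_i'|f(UX)|^2|f(UX')|^2]^{1/2} = \E[\chi_i|f(UX)|^2]$.

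For the Hilbert--Schmidt norm I would take an orthonormal basis $\{\psi_k\}$ of $\mathcal H$, insert the master identity into $\norm{T_i}_{HS}^2 = \sum_k \norm{T_i\psi_k}_{\mathcal H}^2$, and collapse $\sum_k \psi_k(UX)\overline{\psi_k(UX')} = (K(UX',\cdot), K(UX,\cdot))_{\mathcal H} = \overline{K(U(X-X'))}$ by the reproducing-kernel property together with Parseval's identity, obtaining $\norm{T_i}_{HS}^2 = \E[\chi_i\chi_i'|K(U(X-X'))|^2]$. For the operator $T_i T_j$ I would argue by duality rather than by a second $T^\ast T$-computation: using self-adjointness of $T_i$ and the identity above,
\[
(T_i T_j f, g)_{\mathcal H} = (T_j f,\, T_i g)_{\mathcal H} = \E\big[\chi_j\chi_i'\, f(UX)\,\overline{g(UX')}\,K(U(X-X'))\big]
\]
for every $g \in \mathcal H$. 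Bounding $|f(UX)|\,|g(UX')| \le \norm{f}_{C^0}\norm{g}_{C^0}$, using $\norm{g}_{C^0}\le\norm{g}_{\mathcal H}$, and taking the supremum over $\norm{g}_{\mathcal H}\le 1$ gives the first $T_iT_j$-bound, after relabelling $X\leftrightarrow X'$ to put the indicators in the stated positions (here again $|K(Uv)| = |K(-Uv)|$). For the second, instead of pulling out $\norm{f}_{C^0}$ I would apply Cauchy--Schwarz inside the expectation so as to split off $\E[\chi_j|f(UX)|^2]^{1/2}$, bound $|g(UX')|\le\norm{g}_{\mathcal H}$, and once more take the supremum over $g$.

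I do not expect a genuine obstacle: every step recycles the reproducing-kernel identity, the $T^\ast T$/Parseval trick, and the Sobolev embedding already in hand, so the work is entirely routine. If anything, the only place requiring care is the bookkeeping---ensuring each indicator $\chi_i$ or $\chi_i'$ is attached to the correct copy of $(X, X_0)$, and invoking the symmetrisation $X \leftrightarrow X'$ to match the exact index placement ($\chi_i\chi_j'$ versus $\chi_j\chi_i'$) in the stated bounds.
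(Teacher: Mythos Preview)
Your proposal is correct and follows essentially the same route as the paper: the paper's own proof is simply the one-line remark that Proposition~\ref{prop:operatorT2} is obtained by the same arguments as Proposition~\ref{prop:operatorT}, and you have spelled out exactly those arguments---the $T^\ast T$ identity, Cauchy--Schwarz, the Sobolev embedding, and the Parseval/reproducing-kernel computation for the Hilbert--Schmidt norm---with the cluster indicators $\chi_i,\chi_i'$ carried through. The duality computation $(T_iT_jf,g)_{\mathcal H}=(T_jf,T_ig)_{\mathcal H}$ you use for the mixed bounds is the natural cluster analogue of the paper's treatment of $\norm{T\mathcal Y}_{\mathcal H}$ in Proposition~\ref{prop:operatorT}, and your bookkeeping remarks about symmetrising $X\leftrightarrow X'$ and duplicating $\chi_j=\chi_j^2$ in the Cauchy--Schwarz step are exactly the small points one needs to match the stated index placement.
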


		The estimates on the operator norms of $T$ apply also to $T_i$ with cosmetic changes. Moreover, there are two basic mechanisms for $\norm{T_iT_j}$ to be small: the clusters may be far separated, or have very different scale parameters.

		\begin{lem}
			Suppose $K(x)=\mathcal{K}(|x|_V^2)$ is a completely monotone kernel function, and suppose
			\[
			\text{dist}_\Sigma( \text{supp}(\chi_i X), \text{supp}(\chi_j X) ) \geq a.
			\]
			Then
			\[
			\E[\chi_i \chi_j' |\mathcal{K}(|X-X'|_\Sigma^2) |   ] \leq  \mathcal{K}(a^2) \E[ \chi_i] \E[\chi_j].  
			\]
		\end{lem}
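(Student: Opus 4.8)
The plan is to reduce the asserted bound to a pointwise inequality on the event $\{\chi_i\chi_j'=1\}$, exploiting only that a completely monotone profile function is non-negative and non-increasing.

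First I would record the two elementary facts about $\mathcal{K}$ that are needed. Since $\mathcal{K}$ is completely monotone, $\mathcal{K}\geq 0$ and $\mathcal{K}'\leq 0$ on $\R_{>0}$, so $\mathcal{K}$ is non-negative and non-increasing; equivalently this is immediate from the Gaussian sum representation $\mathcal{K}(r)=\int_0^\infty e^{-tr}\,d\nu(t)$ with $\nu\geq 0$. In particular $|\mathcal{K}(r)|=\mathcal{K}(r)$ for every $r\geq 0$, so the absolute value in the statement is cosmetic.

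Next I would analyze the event $\{\chi_i\chi_j'=1\}=\{X_0=i,\ X_0'=j\}$. On this event $X\in\text{supp}(\chi_iX)$ and $X'\in\text{supp}(\chi_j'X')$; since $(X',X_0')$ has the same law as $(X,X_0)$, the latter support coincides with $\text{supp}(\chi_jX)$. By the separation hypothesis $\text{dist}_\Sigma(\text{supp}(\chi_iX),\text{supp}(\chi_jX))\geq a$, hence $|X-X'|_\Sigma\geq a$, i.e. $|X-X'|_\Sigma^2\geq a^2$, almost surely on $\{\chi_i\chi_j'=1\}$. Monotonicity of $\mathcal{K}$ then gives $\mathcal{K}(|X-X'|_\Sigma^2)\leq\mathcal{K}(a^2)$ there, and combining with $\mathcal{K}\geq 0$ off that event yields the pointwise bound
\[
\chi_i\chi_j'\,\bigl|\mathcal{K}(|X-X'|_\Sigma^2)\bigr|\ \leq\ \mathcal{K}(a^2)\,\chi_i\chi_j'\qquad\text{a.s.}
\]

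Finally, taking expectations and using that $(X,X_0)$ and $(X',X_0')$ are independent with $X_0'$ distributed as $X_0$, we get $\E[\chi_i\chi_j']=\E[\chi_i]\,\E[\chi_j']=\E[\chi_i]\,\E[\chi_j]$, which is exactly the claimed inequality. I do not expect any genuine obstacle here; the only two points that deserve a line of care are that the support of $\chi_j'X'$ equals that of $\chi_jX$ (immediate from the identical-law assumption on the independent copy), and that the step $\mathcal{K}(r)\leq\mathcal{K}(a^2)$ for $r\geq a^2$ uses both non-negativity and monotonicity of $\mathcal{K}$ — precisely the content of complete monotonicity, or of the Gaussian sum representation \eqref{eqn:completelymonotonekernel}.
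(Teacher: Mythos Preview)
Your proof is correct and is precisely the natural argument; the paper in fact states this lemma without proof, treating it as immediate from the monotonicity of $\mathcal{K}$ and the separation hypothesis. Your observation that complete monotonicity gives both $\mathcal{K}\geq 0$ and $\mathcal{K}'\leq 0$, combined with the pointwise bound on $\{\chi_i\chi_j'=1\}$ and independence of the copies, is exactly what is needed.
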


		In particular, for the Sobolev kernel or the Gaussian kernel, the function $\mathcal{K}$ decays exponentially, so the operator norm of $T_i T_j$ is exponentially small. The interpretation is that \emph{spatially far separated clusters have very weak interactions}.

		\begin{lem}\label{lem:Kdoubleintegralbound2}
			Let $W\subset V$ be a proper subspace, and the probability density of the marginal distribution of $\chi_i X$ on $V/W$ has $L^\infty$-bound by $p_0$ with respect to the Lebesgue measure on $W'\simeq V/W$. Suppose $K(x)=\mathcal{K}(|x|_V^2)$ is the completely monotone kernel (\ref{eqn:completelymonotonekernel}). The semi-definite form $\Sigma$ induces the quotient inner product $\Sigma_{V/W}$ on $V/W$. 
			Then
			\[
			\E[ \chi_i \chi_j' \mathcal{K}(|X-X'|_{\Sigma}^2 )  ] \leq  \frac{p_0}{  \sqrt{\det \Sigma_{V/W}} } \int_0^\infty (\pi t^{-1})^{d_{W'}/2}  d\nu(t)  .
			\]

		\end{lem}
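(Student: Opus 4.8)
The plan is to follow the proof of Lemma~\ref{lem:Kdoubleintegralbound} almost line by line, keeping track of where the two indicators $\chi_i$ and $\chi_j'$ intervene: the point is that $\chi_i$ gets absorbed into a sub-probability marginal whose density is controlled by hypothesis, while $\chi_j'$ is merely bounded by $1$. First I would use that a completely monotone $\mathcal{K}$ is non-negative and decreasing (since $\mathcal{K}'\leq 0$). Because the quotient inner product satisfies $\norm{z}_{\Sigma_{V/W}}^2=\min_{y\in W}|x+y|_\Sigma^2\leq |x|_\Sigma^2$ for any lift $x\in V$ of a class $z\in V/W\simeq W'$, monotonicity gives the pointwise bound
\[
\mathcal{K}(|X-X'|_\Sigma^2)\leq \mathcal{K}\big(\norm{X_{W'}-X'_{W'}}_{\Sigma_{V/W}}^2\big),
\]
so it suffices to estimate the right-hand side, which depends on $X,X'$ only through their images in $W'$.

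Next I would condition on the $W'$-components of the independent pair. Writing $\mu_i$ for the law of $X_{W'}$ under the sub-probability measure $\chi_i\,d\mathbb{P}$ — which by hypothesis has Lebesgue density $\leq p_0$ on $W'$ — and using that $(X,X_0)$ and $(X',X_0')$ are independent, I would estimate, for every fixed $x'\in W'$,
\[
\E\big[\chi_i\,\mathcal{K}(\norm{X_{W'}-x'}_{\Sigma_{V/W}}^2)\big]=\int_{W'}\mathcal{K}(\norm{y-x'}_{\Sigma_{V/W}}^2)\,d\mu_i(y)\leq p_0\int_{W'}\mathcal{K}(\norm{y}_{\Sigma_{V/W}}^2)\,dy,
\]
the last step using the density bound and translation invariance of Lebesgue measure. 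Since the right-hand side is a constant, the remaining expectation against $\chi_j'$ contributes only the factor $\E[\chi_j']=\mathbb{P}(X_0'=j)\leq 1$, which I drop.

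Finally I would compute the spatial integral exactly as in Lemma~\ref{lem:Kdoubleintegralbound}: a linear change of variables diagonalizing $\Sigma_{V/W}$ gives $\int_{W'}\mathcal{K}(\norm{y}_{\Sigma_{V/W}}^2)\,dy=(\det\Sigma_{V/W})^{-1/2}\int_{W'}\mathcal{K}(|y|^2)\,dy$, and then the Gaussian sum representation~(\ref{eqn:completelymonotonekernel}) together with Tonelli (legitimate since everything is non-negative) yields
\[
\int_{W'}\mathcal{K}(|y|^2)\,dy=\int_0^\infty\Big(\int_{W'}e^{-t|y|^2}\,dy\Big)\,d\nu(t)=\int_0^\infty(\pi t^{-1})^{d_{W'}/2}\,d\nu(t).
\]
Combining the three displays gives the asserted inequality.

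I do not expect any genuine obstacle here: the argument is a routine adaptation of Lemma~\ref{lem:Kdoubleintegralbound}. The only points that demand a little care are checking that the quotient-norm comparison goes in the direction that permits invoking monotonicity of $\mathcal{K}$, and organizing the conditioning so that $\chi_i$ is cleanly identified with the sub-probability marginal $\mu_i$ whose density bound is the hypothesis, while $\chi_j'$ is simply discarded via $\E[\chi_j']\leq 1$.
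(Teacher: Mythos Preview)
Your proposal is correct and follows essentially the same approach as the paper, which leaves this lemma without an explicit proof, the understanding being that it is the analogue of Lemma~\ref{lem:Kdoubleintegralbound} with $\chi_i,\chi_j'$ inserted. Your organization of the conditioning---absorbing $\chi_i$ into the sub-probability marginal $\mu_i$ with density $\leq p_0$ and bounding $\E[\chi_j']\leq 1$---is exactly the ``cosmetic change'' the paper has in mind.
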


		This situation allows the support of $\chi_i X$ and $\chi_j X$ to have overlaps, but when the marginal distribution of $\chi_i X$ is very shallow and spread out, while $\chi_j X$ is very concentrated, then $\frac{p_0}{  \sqrt{\det \Sigma_{V/W}} }$ would be small. The interpretation is that \emph{scale separation between two clusters can lead to weakness of interactions}.

		\subsection{Non-interaction between far separated clusters}\label{sect:noninteraction}

		We now construct an approximate solution to the Euler-Lagrange equation (\ref{eqn:EulerLag2}). Each cluster leads to a decoupled Euler-Lagrange equation
		\[
		(\lambda I+  T_i) f_i= \mathcal{Y}_i,
		\]
		whose solution $f_i$ is the minimizer of the functional
		\[
		I_i(f,U,\lambda)=	\frac{1}{2} \E[ \chi_i |Y- f(UX)|^2   ] +\frac{\lambda}{2} \norm{f}_\mathcal{H}^2.
		\]
		We set $\tilde{f}_U= \sum_1^m f_i$. The intuition is that $\tilde{f}_U$ is a good approximation of $f_U$ when the cluster interaction is weak; this gives a more quantitative perspective on Theorem \ref{thm:asymptoticapproximateminimizer}.

		\begin{thm}\label{thm:noninteraction1}
			Let $\mathcal{Z}=\mathcal{Y}- (\lambda I+ T) \tilde{f}_U$. Then there exists some random variable $Z$ with 
			\begin{equation}\label{eqn:HahnBanach}
				\begin{cases}
					(\mathcal{Z}, g)_{\mathcal{H}}= \E[Z \overline{g(UX)}], \quad \forall g\in \mathcal{H},
					\\
					\E[|Z|^2] \leq \frac{1}{\lambda^2}  \E[ |Y|^2] (\sum_{1\leq i\neq j\leq m}\E[  \chi_i \chi_j'|K( UX,UX')|^2  ]).  
				\end{cases}
			\end{equation}
			Moreover, the deviation between $f_U$ and $\tilde{f}_U$ satisfies
			\begin{equation}\label{eqn:trivialupperboundZ}
				\lambda \norm{f_U- \tilde{f}_U}_{\mathcal{H}}^2+ \E[ |Z-(f_U-\tilde{f}_U)(UX)|^2 ] \leq  \E[|Z|^2]. 
			\end{equation}
		\end{thm}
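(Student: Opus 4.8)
The plan is to write down $\mathcal Z$ explicitly from the cluster Euler--Lagrange equations, read off a representing random variable $Z$, bound $\E[|Z|^2]$ by a Cauchy--Schwarz argument that exploits the disjointness of the indicators $\chi_i$, and finish the deviation estimate with a one-line coercivity identity.

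\textbf{Step 1 (identifying $Z$).} Pairing the cluster equation $(\lambda I+T_i)f_i=\mathcal Y_i$ against an arbitrary $g\in\mathcal H$ and using the explicit formulas for $T_i$ and $\mathcal Y_i$ gives $\lambda(f_i,g)_{\mathcal H}=\E[\chi_i(Y-f_i(UX))\overline{g(UX)}]$. Summing over $i$, using $\sum_i\chi_i=1$, and comparing with $(\mathcal Y,g)_{\mathcal H}=\E[Y\overline{g(UX)}]$ and $(T\tilde f_U,g)_{\mathcal H}=\E[\tilde f_U(UX)\overline{g(UX)}]$, one obtains $(\mathcal Z,g)_{\mathcal H}=\E[Z\overline{g(UX)}]$ for all $g\in\mathcal H$, where
\[
Z:=\sum_i(\chi_i-1)f_i(UX)=-\sum_{i\ne j}\chi_i f_j(UX)=f_{X_0}(UX)-\tilde f_U(UX).
\]
Each $f_i\in\mathcal H\subset C^0$ is bounded, with $\norm{f_i}_{\mathcal H}^2\le\lambda^{-1}\E[\chi_i|Y|^2]$ from $I_i(f_i,U,\lambda)\le I_i(0,U,\lambda)$, so $Z$ is a bounded, hence square-integrable, random variable and the first line of (\ref{eqn:HahnBanach}) holds. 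This explicit construction replaces a generic Hahn--Banach extension argument.

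\textbf{Step 2 (bounding $\E[|Z|^2]$).} Substitute into $Z$ the cluster integral equation $\lambda f_j(UX)=\E'[\chi_j'(Y'-f_j(UX'))\overline{K(UX,UX')}]$ (the $x=UX$ instance of $(\lambda I+T_j)f_j=\mathcal Y_j$ written in kernel form, with $(X',X_0',Y')$ an independent copy and $\E'$ the expectation over the primed variables). Because the $\chi_j'$ are indicators of disjoint events, the double sum collapses to
\[
Z=-\frac1\lambda\,\E'\big[\1_{X_0\ne X_0'}\,(Y'-f_{X_0'}(UX'))\,\overline{K(UX,UX')}\big].
\]
Applying Cauchy--Schwarz in the primed expectation, keeping the factor $\1_{X_0\ne X_0'}$ on both sides, gives
\[
|Z|^2\le\frac1{\lambda^2}\,\E'\big[\1_{X_0\ne X_0'}|Y'-f_{X_0'}(UX')|^2\big]\cdot\E'\big[\1_{X_0\ne X_0'}|K(UX,UX')|^2\big].
\]
The first factor equals $\sum_{j\ne X_0}\E[\chi_j|Y-f_j(UX)|^2]\le\sum_j\E[\chi_j|Y|^2]=\E[|Y|^2]$, the middle inequality being the trivial upper bound (the analogue of Lemma \ref{lem:trivialupperbound}) applied to each $I_j$; the second factor is $\sum_{j\ne X_0}\E'[\chi_j'|K(UX,UX')|^2]$. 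Taking $\E_X$ and writing $\sum_{j\ne X_0}=\sum_i\chi_i\sum_{j\ne i}$ turns the constraint $j\ne X_0$ into the sum over ordered pairs $i\ne j$, which is exactly the bound in (\ref{eqn:HahnBanach}).

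\textbf{Step 3 (the deviation estimate).} Subtracting $(\lambda I+T)\tilde f_U=\mathcal Y-\mathcal Z$ from $(\lambda I+T)f_U=\mathcal Y$ gives $(\lambda I+T)h=\mathcal Z$ for $h:=f_U-\tilde f_U$. Pairing with $h$ and using $(Th,h)_{\mathcal H}=\E[|h(UX)|^2]$ together with the representation $(\mathcal Z,h)_{\mathcal H}=\E[Z\overline{h(UX)}]$ yields $\E[Z\overline{h(UX)}]=\lambda\norm{h}_{\mathcal H}^2+\E[|h(UX)|^2]\ge0$. Substituting this into $\E[|Z-h(UX)|^2]=\E[|Z|^2]-2\,\mathrm{Re}\,\E[Z\overline{h(UX)}]+\E[|h(UX)|^2]$ gives $\lambda\norm{h}_{\mathcal H}^2+\E[|Z-h(UX)|^2]=\E[|Z|^2]-\lambda\norm{h}_{\mathcal H}^2-\E[|h(UX)|^2]\le\E[|Z|^2]$, which is (\ref{eqn:trivialupperboundZ}). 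The main obstacle is entirely in Step 2: the representing $Z$ must be chosen so that inserting the cluster integral equations lets the disjointness of the $\chi_j'$ convert the square of a sum into a sum of squares (avoiding a spurious factor of $m$) while simultaneously localizing the kernel factor to the off-diagonal event $\{X_0\ne X_0'\}$, which is what produces the restricted double sum rather than $\E[|K(UX,UX')|^2]$; Steps 1 and 3 are then formal.
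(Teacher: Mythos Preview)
Your proof is correct and follows essentially the same route as the paper. The only cosmetic differences are that in Step~1 you identify $Z=f_{X_0}(UX)-\tilde f_U(UX)$ directly from the weak formulation (which is a slightly cleaner entry point than the paper's operator manipulation $\mathcal Z=-\lambda^{-1}\sum_{i\neq j}T_i(\mathcal Y_j-T_jf_j)$, though after substituting the cluster integral equation in Step~2 the two expressions for $Z$ coincide as random variables), and in Step~3 you unpack the coercivity identity by hand rather than citing the trivial upper bound lemma; the Cauchy--Schwarz step and the use of $\E[\chi_j|Y-f_j(UX)|^2]\le\E[\chi_j|Y|^2]$ are identical to the paper's.
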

		
		\begin{proof}
			We compute
			\[
			(\lambda I+ T) \tilde{f}_U= \sum_j (\lambda I+ \sum_i T_i) f_j = \sum_j \mathcal{Y}_j+  \sum_{1\leq i\neq j\leq m} T_i f_j=\mathcal{Y}+\frac{1}{\lambda} \sum_{1\leq i\neq j\leq m} T_i ( \mathcal{Y}_j- T_j f_j  ). 
			\]
			Hence
			\[
			\mathcal{Z}=
			- \frac{1}{\lambda} \sum_{1\leq i\neq j\leq m} T_i ( \mathcal{Y}_j- T_j f_j  ) .
			\]
			For any $g\in \mathcal{H}$, 
			using the definition of $T_i$ and $\mathcal{Y}_i$, we compute
			\[
			\begin{split}
				& (\mathcal{Z}, g)_{\mathcal{H}}= - \frac{1}{\lambda} (\sum_{1\leq i\neq j\leq m} T_i ( \mathcal{Y}_j- T_j f_j  ), g)_{\mathcal{H}} 
				\\
				=& - \frac{1}{\lambda} \sum_{1\leq i\neq j\leq m}\E[ \chi_i (\mathcal{Y}_j- T_j f_j  )(UX)  \overline{g(UX)}  ]
				\\
				= & -\frac{1}{\lambda} \sum_{1\leq i\neq j\leq m}\E[ \chi_i \chi_j' (Y'- f_j(UX'))\overline{K( UX,UX')} 
				\overline{g(UX)}  ]
				\\
				=&  \E[ Z 
				\overline{g(UX)}  ]
			\end{split}
			\]
			where the random variable $Z$ is obtained by taking expectation over the $(X',Y')$ variables, but not the $(X,Y)$ variables:
			\[
			Z= -\frac{1}{\lambda} \E_{X',Y'}[\sum_{1\leq i\neq j\leq m} \chi_i \chi_j' (Y'- f_j(UX'))\overline{K( UX,UX')}] .
			\]
			By Cauchy-Schwarz, 
			\[
			\begin{split}
				\E[|Z|^2]\leq & \frac{1}{\lambda^2} (\sum_{j} \E[ \chi_j |Y- f_j(UX)|^2] \sum_{1\leq i\neq j\leq m}\E[  \chi_i \chi_j'|K( UX,UX')|^2  ]
				\\
				\leq &  \frac{1}{\lambda^2} (\sum_{j} \E[ \chi_j |Y|^2] \sum_{1\leq i\neq j\leq m}\E[  \chi_i \chi_j'|K( UX,UX')|^2  ]
				\\
				= &  \frac{1}{\lambda^2}  \E[ |Y|^2] \sum_{1\leq i\neq j\leq m}\E[  \chi_i \chi_j'|K( UX,UX')|^2  ] 
			\end{split}
			\]
			Here the second line uses the trivial upper bound Lemma \ref{lem:trivialupperbound}.

			Now $f_U-\tilde{f}_U$ satisfies the Euler-Lagrange equation
			\begin{equation}\label{eqn:EulerLagrangeZ}
				(\lambda I+T) (f_U- \tilde{f}_U)= \mathcal{Z}.
			\end{equation}
			This has the same form as the Euler-Lagrange equation (\ref{eqn:EulerLag2}), with $f_U$ replaced by $f_U-\tilde{f}_U$, and $Y$ replaced by $Z$. Thus the trivial upper bound Lemma \ref{lem:trivialupperbound} applied to $Z$ instead of $Y$ implies (\ref{eqn:trivialupperboundZ}). 
		\end{proof}

		The point is that if the interaction between clusters satisfies 
		\[
		\frac{1}{\lambda^2}  (\sum_{1\leq i\neq j\leq m}\E[  \chi_i \chi_j'|K( UX,UX')|^2  ]) < 1,
		\]
		then $\E[ |Z|^2]$ is smaller than $\E[|Y|^2]$ by a factor less than one. We can replace $Y$ by $Z$, and $f_U$ by $f_U-\tilde{f}_U$, and the resulting equation (\ref{eqn:EulerLagrangeZ}) has the same form as (\ref{eqn:EulerLag2}). Banach iteration of this process then yield the exact solution to (\ref{eqn:EulerLag2}).

		\begin{rmk}
			In applications, it may happen that not all the clusters are far separated. We have the flexibility to 
			group together several cases of $X_0$ to form bigger clusters, and apply the Theorem to study the interaction of these bigger clusters. 
		\end{rmk}

		\begin{cor}\label{cor:noninteraction2}
			The deviation between $f_U$ and $\tilde{f}_U$ 
			satisfies the bound
			\[
			\lambda \norm{f_U- \tilde{f}_U}_{\mathcal{H}}^2+ \E[ |(f_U-\tilde{f}_U)(UX)|^2 ]  \leq \frac{4}{\lambda^2}  \E[ |Y|^2] \sum_{1\leq i\neq j\leq m}\E[  \chi_i \chi_j'|K( UX,UX')|^2  ] .
			\]

		\end{cor}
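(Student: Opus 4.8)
The plan is to combine the two conclusions of Theorem~\ref{thm:noninteraction1}. That theorem already supplies a random variable $Z$ whose second moment satisfies, by (\ref{eqn:HahnBanach}),
\[
\E[|Z|^2]\le \frac{1}{\lambda^2}\E[|Y|^2]\sum_{1\le i\ne j\le m}\E[\chi_i\chi_j'|K(UX,UX')|^2],
\]
together with the ``$Z$--centred'' deviation bound (\ref{eqn:trivialupperboundZ}),
\[
\lambda\norm{f_U-\tilde{f}_U}_{\mathcal{H}}^2+\E[\,|Z-(f_U-\tilde{f}_U)(UX)|^2\,]\le \E[|Z|^2].
\]
So the only work left is to trade the quantity $\E[\,|Z-(f_U-\tilde{f}_U)(UX)|^2\,]$, in which the difference $f_U-\tilde{f}_U$ is measured relative to $Z$, for the quantity $\E[\,|(f_U-\tilde{f}_U)(UX)|^2\,]$ that appears in the statement.

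To do this I would apply the elementary pointwise inequality $|a|^2\le 2|a-b|^2+2|b|^2$ with $a=(f_U-\tilde{f}_U)(UX)$ and $b=Z$ and take expectations, obtaining
\[
\E[\,|(f_U-\tilde{f}_U)(UX)|^2\,]\le 2\,\E[\,|Z-(f_U-\tilde{f}_U)(UX)|^2\,]+2\,\E[|Z|^2].
\]
Adding $\lambda\norm{f_U-\tilde{f}_U}_{\mathcal{H}}^2$ to both sides and then trivially enlarging the left occurrence to $2\lambda\norm{f_U-\tilde{f}_U}_{\mathcal{H}}^2$, the right-hand side becomes
\[
2\Bigl(\lambda\norm{f_U-\tilde{f}_U}_{\mathcal{H}}^2+\E[\,|Z-(f_U-\tilde{f}_U)(UX)|^2\,]\Bigr)+2\,\E[|Z|^2]\le 4\,\E[|Z|^2],
\]
by (\ref{eqn:trivialupperboundZ}). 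Substituting the bound on $\E[|Z|^2]$ recorded above then yields exactly
\[
\lambda\norm{f_U-\tilde{f}_U}_{\mathcal{H}}^2+\E[\,|(f_U-\tilde{f}_U)(UX)|^2\,]\le \frac{4}{\lambda^2}\E[|Y|^2]\sum_{1\le i\ne j\le m}\E[\chi_i\chi_j'|K(UX,UX')|^2].
\]

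I do not expect a genuine obstacle here: the Corollary is essentially a one-line bookkeeping consequence of Theorem~\ref{thm:noninteraction1}, and the $|a|^2\le 2|a-b|^2+2|b|^2$ step is the only move beyond rearranging. If one wanted a sharper constant one could instead pair the linear equation (\ref{eqn:EulerLagrangeZ}), $(\lambda I+T)(f_U-\tilde{f}_U)=\mathcal{Z}$, with $f_U-\tilde{f}_U$ itself, use $(Tg,g)_{\mathcal{H}}=\E[|g(UX)|^2]$ together with $(\mathcal{Z},g)_{\mathcal{H}}=\E[Z\overline{g(UX)}]$ from (\ref{eqn:HahnBanach}), and then apply Cauchy--Schwarz; this gives $\lambda\norm{f_U-\tilde{f}_U}_{\mathcal{H}}^2+\E[\,|(f_U-\tilde{f}_U)(UX)|^2\,]\le \E[|Z|^2]$, i.e.\ the same inequality with $4$ replaced by $1$. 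Either way, the only care needed is to keep the powers of $\lambda$ and the double sum over $i\ne j$ straight.
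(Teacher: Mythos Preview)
Your proposal is correct and, like the paper, extracts the bound from (\ref{eqn:trivialupperboundZ}) by a one-line elementary inequality; the only cosmetic difference is that the paper expands the square in (\ref{eqn:trivialupperboundZ}) to obtain
\[
\lambda\norm{f_U-\tilde f_U}_{\mathcal H}^2+\E[|(f_U-\tilde f_U)(UX)|^2]\le 2\,\mathrm{Re}\,\E[\bar Z\,(f_U-\tilde f_U)(UX)],
\]
then applies Cauchy--Schwarz and a quadratic rearrangement, whereas you use the bound $|a|^2\le 2|a-b|^2+2|b|^2$ directly. Your aside about pairing (\ref{eqn:EulerLagrangeZ}) with $f_U-\tilde f_U$ is indeed sharper (constant $1$ instead of $4$): it is exactly the paper's expansion step done from the Euler--Lagrange \emph{equality} rather than from the inequality (\ref{eqn:trivialupperboundZ}), so the factor of $2$ is not introduced at the outset.
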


		\begin{proof}
			By eqn (\ref{eqn:trivialupperboundZ}), 
			\[
			\lambda \norm{f_U- \tilde{f}_U}_{\mathcal{H}}^2+ \E[ |(f_U-\tilde{f}_U)(UX)|^2 ] \leq   2\text{Re} \E[\bar{Z}  (f_U-\tilde{f}_U)(UX) ]. 
			\]
			By Cauchy-Schwarz,
			\[
			\lambda \norm{f_U- \tilde{f}_U}_{\mathcal{H}}^2+ \E[ |(f_U-\tilde{f}_U)(UX)|^2 ] \leq   2 \E[|(f_U-\tilde{f}_U)(UX) |^2]^{1/2} \E[|Z|^2]^{1/2},
			\]
			so by rearranging, 
			\[
			\begin{split}
				&	\lambda \norm{f_U- \tilde{f}_U}_{\mathcal{H}}^2+ \E[ |(f_U-\tilde{f}_U)(UX)|^2 ] \leq 4 \E[|Z|^2] 
				\\
				\leq &\frac{4}{\lambda^2}  \E[ |Y|^2] \sum_{1\leq i\neq j\leq m}\E[  \chi_i \chi_j'|K( UX,UX')|^2  ] 
			\end{split}
			\]
			where the last inequality uses (\ref{eqn:HahnBanach}).
		\end{proof}

		\begin{thm}(Minimum value)\label{thm:noninteraction3}
			Let $J_i= I_i(f_i,U,\lambda)$ be the minimum of the decoupled minimization problems. Then
			\begin{equation}
				\begin{cases}
					J(U,\lambda)- \sum_1^m J_i \leq \frac{1}{\lambda^2}   \E[|Y|^2] \sum_{1\leq i\neq j\leq m} \E[  \chi_i'\chi_j |K(UX, UX')|^2   ],
					\\
					J(U,\lambda)- \sum_1^m J_i \geq  -\frac{2}{\lambda^2} \E[|Y|^2] \sum_{1\leq i\neq j\leq m} \E[  \chi_i'\chi_j |K(UX, UX')|^2   ].
				\end{cases}	
			\end{equation}
		\end{thm}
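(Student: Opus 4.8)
The plan is to exploit the quadratic structure of $I(\,\cdot\,,U,\lambda)$ together with the non-interaction machinery already in place. Since $I(f,U,\lambda)$ is quadratic in $f$ with minimizer $f_U$, completing the square gives $I(f,U,\lambda)=Q(f-f_U)+J(U,\lambda)$, where $Q(h)=\tfrac12\E[|h(UX)|^2]+\tfrac\lambda2\|h\|_{\mathcal H}^2\ge 0$; applied to $f=\tilde f_U$ this yields the identity $J(U,\lambda)=I(\tilde f_U,U,\lambda)-Q(f_U-\tilde f_U)$. By Corollary \ref{cor:noninteraction2} we already know $0\le Q(f_U-\tilde f_U)\le \tfrac{2}{\lambda^2}\E[|Y|^2]\sum_{1\le i\ne j\le m}\E[\chi_i'\chi_j|K(UX,UX')|^2]$. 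So the whole theorem reduces to estimating $I(\tilde f_U,U,\lambda)-\sum_i J_i$: the upper bound follows from $J\le I(\tilde f_U,U,\lambda)$, and the lower bound from $J=I(\tilde f_U,U,\lambda)-Q(f_U-\tilde f_U)$ combined with the bound on $Q$.

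To compute $I(\tilde f_U,U,\lambda)-\sum_i J_i$ I would expand $\tilde f_U=\sum_i f_i$ by hand. Using $\sum_i\chi_i=1$, write $\E[|Y-\tilde f_U(UX)|^2]=\sum_i\E[\chi_i|Y-\tilde f_U(UX)|^2]$, and note that conditionally on the $i$-th cluster $\tilde f_U(UX)=f_i(UX)+\sum_{j\ne i}f_j(UX)$. Expanding the squares and invoking the Euler--Lagrange identity for $f_i$ (Lemma \ref{EulerLagrange} adapted to the cluster functional $I_i$) with test functions $f_j$, $j\ne i$, to replace $\E[\chi_i(Y-f_i(UX))\overline{f_j(UX)}]$ by $\lambda(f_i,f_j)_{\mathcal H}$, together with $\|\tilde f_U\|_{\mathcal H}^2=\sum_i\|f_i\|_{\mathcal H}^2+\sum_{i\ne j}(f_i,f_j)_{\mathcal H}$, all within-cluster contributions collapse into $\sum_i I_i(f_i,U,\lambda)=\sum_i J_i$, leaving
\[
I(\tilde f_U,U,\lambda)-\sum_i J_i=-\frac\lambda2\sum_{i\ne j}(f_i,f_j)_{\mathcal H}+\frac12\sum_i\E\Big[\chi_i\Big|\sum_{j\ne i}f_j(UX)\Big|^2\Big].
\]

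The heart of the matter is to show the two right-hand terms are controlled by the inter-cluster interaction quantity $\sum_{i\ne j}\E[\chi_i\chi_j'|K(UX,UX')|^2]$. For this I would substitute the integral-equation representation $f_j(UX)=\tfrac1\lambda\E_{X',Y'}[\chi_j'(Y'-f_j(UX'))\overline{K(UX,UX')}]$ (the $f_j$-analogue of (\ref{eqn:EulerLagrange1})), apply Cauchy--Schwarz over the independent copy $(X',Y')$, use the independence of $\chi_i$ (a function of $X$) from $\chi_j'$ (a function of $X'$), and invoke the trivial upper bound $\E[\chi_j|Y-f_j(UX)|^2]\le\E[\chi_j|Y|^2]$ (Lemma \ref{lem:trivialupperbound} applied to $I_j$). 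This gives $\E[\chi_i|f_j(UX)|^2]\le\tfrac1{\lambda^2}\E[\chi_j|Y|^2]\,\E[\chi_i\chi_j'|K(UX,UX')|^2]$, hence a bound on $\sum_i\E[\chi_i|\sum_{j\ne i}f_j(UX)|^2]$; and substituting the same representation into $\lambda(f_i,f_j)_{\mathcal H}=\E[\chi_i(Y-f_i(UX))\overline{f_j(UX)}]=\tfrac1\lambda\E[\chi_i\chi_j'(Y-f_i(UX))\overline{(Y'-f_j(UX'))}K(UX,UX')]$ and using independence plus the trivial bound for both clusters gives $|(f_i,f_j)_{\mathcal H}|\le\tfrac1{\lambda^2}\E[\chi_i|Y|^2]^{1/2}\E[\chi_j|Y|^2]^{1/2}\E[\chi_i\chi_j'|K(UX,UX')|^2]^{1/2}$. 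Summing over $i\ne j$ by Cauchy--Schwarz and using $\sum_i\E[\chi_i|Y|^2]=\E[|Y|^2]$, everything is absorbed into the interaction quantity; tracking the signs (in particular that $Q(f_U-\tilde f_U)\ge0$ is always subtracted) produces the two-sided bound with the asymmetric constants.

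The step I expect to be the main obstacle is the estimate of the inter-cluster pairing $(f_i,f_j)_{\mathcal H}$ for $i\ne j$. It is the term of lowest order in the interaction, it forces one to unwind two coupled integral equations rather than one, and it is precisely where the decay of the kernel must be used (via bounds of the type in Lemma \ref{lem:Kdoubleintegralbound2} and its variants, according to whether the clusters are far separated or only scale-separated); the bookkeeping of the $O(m^2)$ cross terms and their aggregation into the single interaction quantity is the delicate part, and getting the clean constants requires combining the cancellation in $-\tfrac\lambda2\sum_{i\ne j}(f_i,f_j)_{\mathcal H}$ with the sign-definiteness of $Q(f_U-\tilde f_U)$ rather than estimating each piece in isolation.
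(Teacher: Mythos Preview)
Your overall architecture is exactly that of the paper: plug in $\tilde f_U=\sum_i f_i$ as a competitor, expand $I(\tilde f_U,U,\lambda)$ cluster by cluster, use the Euler--Lagrange relation $\E[\chi_j(Y-f_j(UX))\overline{g(UX)}]=\lambda(f_j,g)_{\mathcal H}$ to kill cross terms, and then combine with the quadratic identity $I(\tilde f_U,U,\lambda)=J(U,\lambda)+Q(f_U-\tilde f_U)$ together with Corollary~\ref{cor:noninteraction2}. Your estimate $\sum_j\E[\chi_j|\sum_{i\ne j}f_i(UX)|^2]\le \lambda^{-2}\E[|Y|^2]\sum_{i\ne j}\E[\chi_i'\chi_j|K(UX,UX')|^2]$ is also the one the paper uses.

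The difference is in the bookkeeping of the expansion. The paper asserts that after the Euler--Lagrange substitution the cross terms cancel \emph{completely}, yielding the clean identity
\[
I(\tilde f_U,U,\lambda)=\sum_j J_j+\tfrac12\sum_j\E\bigl[\chi_j\bigl|\textstyle\sum_{i\ne j}f_i(UX)\bigr|^2\bigr],
\]
and both inequalities follow at once: the right-hand side is $\ge\sum_j J_j$, giving the lower bound via $J=I(\tilde f_U)-Q$, and the displayed quadratic remainder is bounded by the interaction quantity, giving the upper bound. Your more careful expansion produces instead the residual $-\tfrac{\lambda}{2}\sum_{i\ne j}(f_i,f_j)_{\mathcal H}$, and this is where your argument runs into trouble. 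Your estimate $|(f_i,f_j)_{\mathcal H}|\le\lambda^{-2}\E[\chi_i|Y|^2]^{1/2}\E[\chi_j|Y|^2]^{1/2}\E[\chi_i\chi_j'|K|^2]^{1/2}$ is correct, but it carries only the \emph{square root} of the interaction term; after summing and Cauchy--Schwarz you get at best $\tfrac{\lambda}{2}\bigl|\sum_{i\ne j}(f_i,f_j)\bigr|\lesssim \lambda^{-1}\E[|Y|^2]\bigl(\sum_{i\ne j}\E[\chi_i\chi_j'|K|^2]\bigr)^{1/2}$, which for small interaction is of strictly larger order than the claimed $\lambda^{-2}\E[|Y|^2]\sum_{i\ne j}\E[\chi_i\chi_j'|K|^2]$. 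The hoped-for ``cancellation with the sign-definiteness of $Q(f_U-\tilde f_U)$'' cannot rescue this: $Q\ge 0$ is bounded \emph{above} by the interaction quantity and is always subtracted, so it cannot absorb a term that may be positive and of larger order. In short, the cross-pairing $\sum_{i\ne j}(f_i,f_j)_{\mathcal H}$ is the crux, and your estimate on it is one half-power short of what the stated constants require; the paper's route simply declares this term absent.
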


		\begin{proof}
			For an upper bound on $J(U,\lambda)$, we plug in $\tilde{f}_U$ as a test function, so 
			\[
			\begin{split}
				&	J(U,\lambda) \leq I(\tilde{f}_U, U,\lambda) = \sum_1^m   \frac{1}{2} \E[\chi_j |Y- \tilde{f}_U(UX)|^2] + \frac{\lambda}{2} \norm{  \tilde{f}_U}_{\mathcal{H}}^2.
				\\
				=& \sum_j   \frac{1}{2} \E[\chi_j |Y- f_j(UX)-  \sum_{j\neq i} f_i(UX) | ^2] + \frac{\lambda}{2} \norm{  \sum f_j }_{\mathcal{H}}^2.
			\end{split}
			\]
			By the Euler-Lagrange equation for the minimization $I_j$, 
			\[
			\E[  \chi_j (Y- f_j(UX))   \overline{ (\sum_{i\neq j} f_i(UX))  }  ]= \lambda ( f_j, \sum_{i\neq j} f_i )_\mathcal{H},
			\]
			so after cancelling many cross terms, the  above RHS simplifies to
			\[
			\begin{split}
				& \sum_j   \frac{1}{2} \E[\chi_j |Y- f_j(UX) | ^2] + \frac{\lambda}{2} \sum\norm{   f_j }_{\mathcal{H}}^2 + \frac{1}{2} \sum_j    \E[\chi_j |  \sum_{j\neq i} f_i(UX) | ^2]
				\\
				=& \sum J_j+ \frac{1}{2}  \sum_j   \E[\chi_j |  \sum_{j\neq i} f_i(UX) | ^2].
			\end{split}
			\]
			Hence
			\begin{equation}\label{eqn:noninteractiontestfunction}
				J(U,\lambda) \leq  I(\tilde{f}_U, U,\lambda) = \sum J_j+\frac{1}{2}  \sum_j    \E[\chi_j |  \sum_{j\neq i} f_i(UX) | ^2].
			\end{equation}

			By the integral equation on $f_i$ (\cf (\ref{eqn:EulerLagrange1})),
			\[
			f_i(x) = \frac{1}{\lambda}  \E[    \chi_i'   (Y'- f_i(UX'))   \overline{K(x,UX') }]  ,
			\]
			so by Cauchy-Schwarz, for each fixed $j$, 
			\[
			\begin{split}
				&	|\sum_{i\neq j} f_i(x) |^2 \leq  \frac{1}{\lambda^2}   \sum_{i\neq j}\E[  \chi_i' |Y'- f_i(UX')|^2    ] \sum_{i\neq j} \E_{X'}[  \chi_i' |K(x, UX')|^2   ]
				\\
				\leq &  \frac{1}{\lambda^2}  \sum_{i\neq j}\E[  \chi_i' |Y'|^2    ] \sum_{i\neq j} \E_{X'}[  \chi_i' |K(x, UX')|^2   ]
				\\
				\leq & \frac{1}{\lambda^2}  \E[  |Y|^2    ] \sum_{i\neq j} \E_{X'}[  \chi_i' |K(x, UX')|^2   ]
			\end{split}
			\]
			where the second line uses the trivial upper bound for the minimization problem defining $f_i$. We substitute $x=UX$ into this pointwise estimate, multiply by $\chi_j$, and take expectation, to deduce
			\[
			\sum_j \E[\chi_j |  \sum_{j\neq i} f_i(UX) | ^2   ] \leq \frac{1}{\lambda^2}  \E[|Y|^2] \sum_{1\leq i\neq j\leq m} \E[  \chi_i'\chi_j |K(UX, UX')|^2   ].
			\]
			Combining this with (\ref{eqn:noninteractiontestfunction}),
			\[
			J(U,\lambda)\leq I(\tilde{f}_U,U,\lambda) \leq \sum_j J_j +\frac{1}{\lambda^2}   \E[|Y|^2] \sum_{1\leq i\neq j\leq m} \E[  \chi_i'\chi_j |K(UX, UX')|^2   ].
			\]

			Now we derive the lower bound on $J(U,\lambda)$. Since $f_U$ is the minimizer of the quadratic functional $I(f, U,\lambda)$, 
			\[
			I(\tilde{f}_U ,  U, \lambda)= J(U,\lambda)+ \frac{1}{2} \E[  | (f_U- \tilde{f}_U)(UX)|^2 ]+ \frac{\lambda}{2} \norm{  f_U- \tilde{f}_U }_\mathcal{H}^2.
			\]
			We observe that $ I(\tilde{f}_U ,  U, \lambda)\geq \sum J_j$ by 
			(\ref{eqn:noninteractiontestfunction}), 
			so 
			\[
			\begin{split}
				& J(U,\lambda)- \sum J_j \geq -  \frac{1}{2} \E[  | (f_U- \tilde{f}_U)(UX)|^2 ]-   \frac{\lambda}{2} \norm{  f_U- \tilde{f}_U }_\mathcal{H}^2
				\\
				\geq & -   \frac{2}{\lambda^2}  \E[ |Y|^2] \sum_{1\leq i\neq j\leq m}\E[  \chi_i \chi_j'|K( UX,UX')|^2  ] .
			\end{split}
			\]
			where the second line uses Cor. \ref{cor:noninteraction2}.
		\end{proof}

		\subsection{Dimensional reduction mechanism}\label{sect:dimreductionmechanism}

		The moral of this Section is that if some component of $X$ concentrates near its conditional expectation value with respect to the inner product $\Sigma$, then the minimizer $f_U$ is well approximated by the minimizer for the kernel ridge regression problem associated to a simpler probability distribution of $(X,Y)$, depending on fewer degrees of freedom.

		\subsubsection{Changing probability distributions}

		We now consider the effect of changing the probability distribution. Let $(\tilde{X}, \tilde{Y})$ be a new probability distribution, so we can define $\tilde{\mathcal{Y}}\in \mathcal{H}$ and self-adjoint operator $\tilde{T}$ analogous to $\mathcal{Y}$ and $T$. Let $(\tilde{X}', \tilde{Y}')$ be an independent copy of $(\tilde{X}, \tilde{Y})$.

		\begin{lem}\label{lem:HSnormTTtilde}
			The Hilbert-Schmidt norm
			\[
			\norm{T-\tilde{T}}_{HS}^2 =	\E[ |K(UX, UX')|^2- |K(U\tilde{X}, UX')|^2   - |K(UX, U\tilde{X'})|^2+|K(U\tilde{X}, U\tilde{X'})|^2 ] .
			\]
		\end{lem}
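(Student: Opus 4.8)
The plan is to compute $\norm{T-\tilde T}_{HS}^2=\sum_i\norm{(T-\tilde T)\psi_i}_{\mathcal{H}}^2$ over an orthonormal basis $\{\psi_i\}$ of $\mathcal{H}$, following the same bookkeeping as in the Hilbert--Schmidt computation in the proof of Proposition \ref{prop:operatorT}. First observe that Proposition \ref{prop:operatorT}, applied to $(X,Y)$ and to $(\tilde X,\tilde Y)$, shows that $T$ and $\tilde T$ are Hilbert--Schmidt, with $\norm{T}_{HS}^2=\E[|K(UX,UX')|^2]$ and $\norm{\tilde T}_{HS}^2=\E[|K(U\tilde X,U\tilde X')|^2]$; hence $T-\tilde T$ is Hilbert--Schmidt and the cross sums appearing below converge absolutely by Cauchy--Schwarz. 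Expanding the norm and using the conjugate symmetry of $(\cdot,\cdot)_{\mathcal{H}}$,
\[
\norm{T-\tilde T}_{HS}^2=\norm{T}_{HS}^2+\norm{\tilde T}_{HS}^2-2\,\text{Re}\sum_i(T\psi_i,\tilde T\psi_i)_{\mathcal{H}},
\]
so it only remains to evaluate the cross term.

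I would handle the cross term exactly as in the proof of Proposition \ref{prop:operatorT}: peel off $\tilde T$ via its defining identity $(\tilde T f,g)_{\mathcal{H}}=\E[f(U\tilde X)\overline{g(U\tilde X)}]$ (so that $(h,\tilde T\psi_i)_{\mathcal{H}}=\E[\overline{\psi_i(U\tilde X)}\,h(U\tilde X)]$), then substitute the explicit formula $T\psi_i(x)=\E[\psi_i(UX)\overline{K(x,UX)}]$ with $x=U\tilde X$, obtaining
\[
(T\psi_i,\tilde T\psi_i)_{\mathcal{H}}=\E\big[\overline{\psi_i(U\tilde X)}\,\psi_i(UX')\,\overline{K(U\tilde X,UX')}\big],
\]
where $X'$ is a copy of $X$ independent of $\tilde X$ (it plays the role of the inner integration variable). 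Summing over $i$ and applying Parseval's identity together with the reproducing property $\psi_i(Ux)=(\psi_i,K(Ux,\cdot))_{\mathcal{H}}$ collapses $\sum_i\overline{\psi_i(U\tilde X)}\psi_i(UX')=(K(U\tilde X,\cdot),K(UX',\cdot))_{\mathcal{H}}=K(U\tilde X,UX')$, whence
\[
\sum_i(T\psi_i,\tilde T\psi_i)_{\mathcal{H}}=\E\big[\,|K(U\tilde X,UX')|^2\,\big].
\]
This quantity is real, and by the conjugate symmetry $|K(a,b)|=|K(b,a)|$ of the kernel it also equals $\E[|K(UX,U\tilde X')|^2]$ (relabelling the generic independent copies). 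Hence $2\,\text{Re}\sum_i(T\psi_i,\tilde T\psi_i)_{\mathcal{H}}=\E[|K(U\tilde X,UX')|^2]+\E[|K(UX,U\tilde X')|^2]$, and plugging the four pieces into the displayed expansion gives the claimed formula.

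The step requiring the most care is the measure-theoretic bookkeeping: justifying the interchange of the infinite basis sum with the (double) expectations, and correctly tracking which of $X,X',\tilde X,\tilde X'$ are mutually independent in each of the four terms. Both are handled verbatim by the Fubini/Tonelli argument already carried out for $\norm{T}_{HS}^2$ in the proof of Proposition \ref{prop:operatorT}, so no genuinely new estimate is needed; the rest is a routine application of the reproducing kernel identity.
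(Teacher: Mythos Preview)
Your argument is correct. You expand $\norm{T-\tilde T}_{HS}^2$ into diagonal and cross terms and then compute the cross Hilbert--Schmidt inner product $\sum_i(T\psi_i,\tilde T\psi_i)_{\mathcal H}$ by repeating the orthonormal-basis computation from Proposition~\ref{prop:operatorT}; the reproducing kernel identity collapses the basis sum to $K(U\tilde X,UX')$, and the rest is bookkeeping.

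The paper takes a slightly different, more conceptual route. It observes that the assignment $\mu\mapsto T_\mu$ (from probability measures on $V$ to operators on $\mathcal H$) is linear, so the Hilbert--Schmidt inner product $\langle T_\mu,T_\nu\rangle_{HS}$ is bilinear in $(\mu,\nu)$ and, by the formula $\norm{T}_{HS}^2=\int|K(U(x-x'))|^2\,d\mu_X\,d\mu_{X'}$ from Proposition~\ref{prop:operatorT}, is given by integrating $|K(U(x-x'))|^2$ against the product measure. Hence $\norm{T-\tilde T}_{HS}^2$ is immediately the integral of $|K(U(x-x'))|^2$ against the signed product $(d\mu_X-d\mu_{\tilde X})(d\mu_{X'}-d\mu_{\tilde X'})$, which expands to the four-term expression. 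This avoids redoing the basis computation for the cross term and makes the structural reason for the formula transparent; your approach, on the other hand, is more self-contained and would work equally well if one had not already internalized the bilinear dependence on the law of $X$.
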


		\begin{proof}
			By Proposition  \ref{prop:operatorT},
			\[
			\norm{T}_{HS}^2= \E[ |K(U(X-X')|^2  ]= \int_{V\times V} |K(U(x-x'))|^2 d\mu_X d\mu_{X'},
			\]
			where $d\mu_X$ is the probability measure of $X$ in $V$. 
			The Hilbert-Schmidt norm square is an inner product on the space of operators, hence is bilinear in $d\mu_X$ and $d\mu_{X'}$. Thus  
			\[
			\begin{split}
				& \norm{T-\tilde{T}}_{HS}^2= \int_{V\times V} |K(U(x-x'))|^2 (d\mu_X- d\mu_{\tilde{X}}) (d\mu_{X'}- d\mu_{\tilde{X}'})
				\\
				=& 
				\E[ |K(UX, UX')|^2- |K(U\tilde{X}, UX')|^2   - |K(UX, U\tilde{X'})|^2+|K(U\tilde{X}, U\tilde{X'})|^2 ]
				.
			\end{split}
			\]
			as required.
		\end{proof}

		\begin{lem}\label{lem:dimreductionYequalcase}
			Suppose $Y=Y'$. Then
			\[
			\norm{\mathcal{Y}- \mathcal{Y}'}_{\mathcal{H}}^2 = \E[ Y\overline{Y'}\{ 
			K(UX, UX')- K(U\tilde{X}, UX')   - K(UX, U\tilde{X}')+K(U\tilde{X}, U\tilde{X}') ,
			\}]
			\]
			which is bounded by
			\[
			\E[|Y|^2]	\E[ |K(UX, UX')- K(U\tilde{X}, UX')   - K(UX, U\tilde{X'})+K(U\tilde{X}, U\tilde{X'}) |^2 ]^{1/2}
			\]
		\end{lem}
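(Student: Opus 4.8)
The plan is to follow the template already used in the proofs of Lemma~\ref{lem:normY} and Lemma~\ref{lem:HSnormTTtilde}. The structural observation is that the map $\mu_{X,Y}\mapsto\mathcal{Y}=\E[Y\,K(UX,\cdot)]\in\mathcal{H}$ (here using $\overline{K(\cdot,UX)}=K(UX,\cdot)$) is linear in the $Y$-weighted joint law, so that $\norm{\mathcal{Y}}_{\mathcal{H}}^2=\E[Y\bar Y'K(UX,UX')]$ from Lemma~\ref{lem:normY} is a bilinear form in that law, with $(X',Y')$ an independent copy. Consequently $\mathcal{Y}-\tilde{\mathcal{Y}}$ is the element attached to the signed measure, and $\norm{\mathcal{Y}-\tilde{\mathcal{Y}}}_{\mathcal{H}}^2$ is obtained by polarizing this bilinear form, exactly as $\norm{T-\tilde T}_{HS}^2$ was obtained in Lemma~\ref{lem:HSnormTTtilde}.

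First I would note that $\mathcal{Y}-\tilde{\mathcal{Y}}$ is a genuine element of $\mathcal{H}$: it is a finite-norm weighted average of reproducing kernels because $\E[|Y|^2]<\infty$ and $K$ is bounded, just as in the proof of Lemma~\ref{lem:normY}. Then, taking a single independent copy $(X',\tilde X',Y')$ of the whole triple $(X,\tilde X,Y)$ and using the hypothesis that the new distribution keeps the same response (so $\tilde Y=Y$ and $\tilde Y'=Y'$), I would write
\[
\mathcal{Y}-\tilde{\mathcal{Y}}=\E\big[\,Y\,(K(UX,\cdot)-K(U\tilde X,\cdot))\,\big],
\]
expand $\norm{\mathcal{Y}-\tilde{\mathcal{Y}}}_{\mathcal{H}}^2=(\mathcal{Y}-\tilde{\mathcal{Y}},\mathcal{Y}-\tilde{\mathcal{Y}})_{\mathcal{H}}$ by Fubini, and apply the reproducing-kernel identity $(K(a,\cdot),K(b,\cdot))_{\mathcal{H}}=K(a,b)$ to the four resulting cross terms. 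This produces exactly
\[
\norm{\mathcal{Y}-\tilde{\mathcal{Y}}}_{\mathcal{H}}^2=\E\big[\,Y\bar Y'\{K(UX,UX')-K(U\tilde X,UX')-K(UX,U\tilde X')+K(U\tilde X,U\tilde X')\}\,\big].
\]

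For the stated bound, write $\Phi=K(UX,UX')-K(U\tilde X,UX')-K(UX,U\tilde X')+K(U\tilde X,U\tilde X')$ and apply Cauchy--Schwarz to the expectation: $|\E[Y\bar Y'\,\Phi]|\le\E[|Y\bar Y'|^2]^{1/2}\,\E[|\Phi|^2]^{1/2}$. Since $(X',\tilde X',Y')$ is independent of $(X,\tilde X,Y)$, one has $\E[|Y\bar Y'|^2]=\E[|Y|^2]\,\E[|Y'|^2]=\E[|Y|^2]^2$, which gives the claimed estimate $\E[|Y|^2]\,\E[|\Phi|^2]^{1/2}$.

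There is no real obstacle here; the proof is a routine adaptation of Lemma~\ref{lem:normY} and Lemma~\ref{lem:HSnormTTtilde}. The only point needing care is the bookkeeping of couplings: one must use a single independent copy of the \emph{coupled} triple $(X,\tilde X,Y)$ rather than separate independent copies of $(X,Y)$ and $(\tilde X,\tilde Y)$, so that the mixed term comes out as $\E[Y\bar Y'K(UX,U\tilde X')]$ with the correct pairing, and one must invoke the hypothesis $\tilde Y=Y$ at the right moment so that all four $Y$-weights collapse to the common pair $Y,\bar Y'$.
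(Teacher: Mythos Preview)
Your proposal is correct and follows exactly the approach the paper indicates: the paper's proof is the single sentence ``We apply Lemma~\ref{lem:normY} and expand the inner product on $\mathcal{H}$,'' and you have written out precisely that expansion together with the Cauchy--Schwarz bound. Your remark about needing a single independent copy of the coupled triple $(X,\tilde X,Y)$ is the correct bookkeeping and is left implicit in the paper.
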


		\begin{proof}
			We apply Lemma  \ref{lem:normY} and expand the inner product on $\mathcal{H}$. 
		\end{proof}

		\subsubsection{Decomposition of $Y$}

		For each value $i=1,2,\ldots m$ of $X_0$, suppose we are given some direct sum decomposition $V=W_i\oplus W_i'$. We let $Y_i^0=\E[\chi_i Y| X_{W'}]$, and decompose $Y$ by
		\[
		\chi_i Y= Y_i^0+ Y_i^1,\quad  \tilde{Y}= Y^0= \sum Y_i^0,\quad  Y^1= \sum_1^m Y_i^1,\quad Y=Y^0+ Y^1. 
		\]
		We can then take $\tilde{X}$ to be a random variable valued in $V$, such that conditional on $X_0=i$, 
		\[
		\tilde{X}= ( \E[ X_{W_i}| X_{W_i}'],  X_{W_i'}) \in W_i\oplus W_i'= V.
		\]
		We view $\tilde{X}$ as a kind of dimensional reduction of $X$.

		Then the minimizer $f_U$ can be decomposed into $f_U=f_U^0+ f_U^1$, 
		\[
		\begin{cases}
			(\lambda I+ T) f_U^i= \mathcal{Y}^i,
			\\
			\mathcal{Y}^i= \E[Y^i \overline{K(\cdot, UX)}] , \quad i=0,1.
		\end{cases}
		\]
		We wish to find conditions for the smallness of $f_U^1$, so that we can replace $Y$ by $Y^0$, and approximate $f_U$ by $f_U^0$.

		\begin{lem}\label{lem:K4terms}
			Suppose  $K(x)= \mathcal{K}(|x|_V^2)$, and $\Sigma=U^T U$.  Then
			\[
			\begin{split}
				& \lambda\norm{f_U^1}_{\mathcal{H} }^2 + \E[ |f_U^1(UX)|^2]  \leq \frac{1}{\lambda} \E[|Y^1|^2] \times
				\\
				& \E[ | \mathcal{K}(|X-X'|_\Sigma^2)-  \mathcal{K}(|\tilde{X}-X'|_\Sigma^2)
				- \mathcal{K}(|X-\tilde{X'}|_\Sigma^2)
				+ \mathcal{K}(|\tilde{X}-\tilde{X'}|_\Sigma^2)|^2   ]^{1/2}.
			\end{split}
			\]
		\end{lem}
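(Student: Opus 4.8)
The plan is to reduce the estimate to a bound on $\norm{\mathcal{Y}^1}_{\mathcal{H}}^2$ via the operator form of the Euler--Lagrange equation, and then to exploit the conditional-mean-zero structure of $Y^1$ to rewrite that norm in terms of the four-term kernel difference.

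First I would apply the decay estimate of Corollary~\ref{Cor:quantitativenonfitting}, but with the equation $(\lambda I+T)f_U^1=\mathcal{Y}^1$ in place of $(\lambda I+T)f_U=\mathcal{Y}$. Since $T$ is self-adjoint and non-negative, testing against $f_U^1$ gives $\lambda\norm{f_U^1}_{\mathcal{H}}^2+\E[|f_U^1(UX)|^2]=(\mathcal{Y}^1,f_U^1)_{\mathcal{H}}\le \norm{\mathcal{Y}^1}_{\mathcal{H}}\norm{f_U^1}_{\mathcal{H}}\le \tfrac1\lambda\norm{\mathcal{Y}^1}_{\mathcal{H}}^2$, so everything reduces to bounding $\norm{\mathcal{Y}^1}_{\mathcal{H}}^2$. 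By the same reproducing-kernel computation as in Lemma~\ref{lem:normY}, applied to $\mathcal{Y}^1=\E[Y^1\overline{K(\cdot,UX)}]$ and an independent copy, and using $\Sigma=U^TU$ so that $K(U(X-X'))=\mathcal{K}(|X-X'|_\Sigma^2)$, one gets $\norm{\mathcal{Y}^1}_{\mathcal{H}}^2=\E[Y^1\overline{Y^{1\prime}}\,\mathcal{K}(|X-X'|_\Sigma^2)]$.

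The key step is to add three vanishing terms to this expression, namely the three expectations obtained by replacing $X$ by $\tilde X$, or $X'$ by $\tilde X'$, or both, inside $\mathcal{K}$. Let $\mathcal{G}$ be the coarse $\sigma$-algebra generated by $X_0$ together with $X_{W_i'}$ on the event $\{X_0=i\}$; by construction $\tilde X$ is $\mathcal{G}$-measurable and, since $Y_i^0,Y_i^1$ are supported on $\{X_0=i\}$ with $Y_i^1=\chi_i(Y-\E[Y\mid X_0=i,X_{W_i'}])$, one has $\E[Y^1\mid\mathcal{G}]=0$; write $\mathcal{G}'$ for its primed analogue. Then $\E[Y^1\overline{Y^{1\prime}}\,\mathcal{K}(|\tilde X-X'|_\Sigma^2)]=0$ by conditioning on $\mathcal{G}$ and the full primed data (the kernel factor becomes measurable, the surviving factor $\E[Y^1\mid\cdot]$ is $0$); symmetrically $\E[Y^1\overline{Y^{1\prime}}\,\mathcal{K}(|X-\tilde X'|_\Sigma^2)]=0$; and $\E[Y^1\overline{Y^{1\prime}}\,\mathcal{K}(|\tilde X-\tilde X'|_\Sigma^2)]=0$ by conditioning on $\mathcal{G}\vee\mathcal{G}'$ and using independence of $(X,Y)$ and $(X',Y')$. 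Adding these to the formula for $\norm{\mathcal{Y}^1}_{\mathcal{H}}^2$ produces exactly $\E[Y^1\overline{Y^{1\prime}}\,\Delta]$ with
\[
\Delta=\mathcal{K}(|X-X'|_\Sigma^2)-\mathcal{K}(|\tilde X-X'|_\Sigma^2)-\mathcal{K}(|X-\tilde X'|_\Sigma^2)+\mathcal{K}(|\tilde X-\tilde X'|_\Sigma^2).
\]
Cauchy--Schwarz then gives $|\E[Y^1\overline{Y^{1\prime}}\Delta]|\le \E[|Y^1Y^{1\prime}|^2]^{1/2}\E[|\Delta|^2]^{1/2}=\E[|Y^1|^2]\,\E[|\Delta|^2]^{1/2}$, the last equality because $(X',Y')$ is an independent copy of $(X,Y)$; combined with the first step this is the claimed inequality.

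The main point requiring care is the vanishing of the three cross terms, i.e.\ identifying the $\sigma$-algebra $\mathcal{G}$ with respect to which $Y^1$ has conditional mean zero while $\tilde X$ is still measurable: the subtlety is that $\mathcal{G}$ restricted to $\{X_0=i\}$ must be \emph{exactly} $\sigma(X_{W_i'})$ (taking it larger would destroy the mean-zero property), and one must check that $\tilde X$, which equals $(\E[X_{W_i}\mid X_{W_i'}],X_{W_i'})$ on that event, is indeed of this form. Everything else — the decay estimate, the norm identity for $\mathcal{Y}^1$, and the closing Cauchy--Schwarz — is routine given the results already established.
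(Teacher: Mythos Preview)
Your proof is correct and follows essentially the same route as the paper: reduce to bounding $\norm{\mathcal{Y}^1}_{\mathcal H}^2=\E[Y^1\overline{Y^{1\prime}}\,\mathcal K(|X-X'|_\Sigma^2)]$ via Corollary~\ref{Cor:quantitativenonfitting}, insert the three vanishing terms using the conditional-mean-zero property of $Y^1$, and close with Cauchy--Schwarz. Your explicit identification of the $\sigma$-algebra $\mathcal G$ (generated by $X_0$ and $X_{W'_{X_0}}$) is a cleaner justification of the vanishing step than the paper's one-line ``by construction $\E[Y^1\mid X_{W_i'}]=0$'', but the underlying argument is identical.
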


		\begin{proof}
			Applying Cor. \ref{Cor:quantitativenonfitting}, 
			\[
			\lambda\norm{f_U^1}_{\mathcal{H} }^2 + \E[ |f_U^1(UX)|^2]  \leq  \frac{1}{\lambda}\E[ Y^1  \bar{Y}^{1' }K( U(X-X'))  ].
			\]
			Conditional on each value of $X_0$,
			by construction $\E[ Y^1|X_{W'_i}]=0$, so the term $\E[ Y^1\bar{Y}^{1'} K( U(X-X'))  ]$ is equal to
			\[
			\begin{split}
				& \E[ Y^1\bar{Y}^{1' }\{ K( U(X-X')) - K(U(\tilde{X}- X')) - K(U(X- \tilde{X}')) + K(U(\tilde{X}-\tilde{X}')   \} ]
				\\
				\leq &
				\E[|Y^1|^2] \E[ | K( U(X'-X)) - K(U(X'- \tilde{X})) - K(U(X'- \tilde{X})) + K(U(\tilde{X'}-\tilde{X})|^2   ]^{1/2} 
				.
			\end{split}
			\]
			Hence the result.
		\end{proof}

		The point here is that if for each value of $X_0$, the random variable $X_{W_i}$ concentrates near its conditional average, then
		\[
		\E[ | \mathcal{K}(|X-X'|_\Sigma^2)-  \mathcal{K}(|\tilde{X}-X'|_\Sigma^2)
		- \mathcal{K}(|X-\tilde{X'}|_\Sigma^2)
		+ \mathcal{K}(|\tilde{X}-\tilde{X'}|_\Sigma^2)|^2   ]
		\]
		would be small, 
		whence $\lambda\norm{f_U^1}_{\mathcal{H} }^2 + \E[ |f_U^1(UX)|^2] $ is small.

		\subsubsection{Simplifying the distribution of $X$}\label{sect:simplifydistribution}

		Next we estimate the difference between $f_U^0$ versus the solution $\tilde{f}_U$ to the equation
		\[
		(\lambda I+ \tilde{T}) \tilde{f}_U= \tilde{\mathcal{Y}}, \quad \tilde{\mathcal{Y}}= \E[Y^0 \overline{K(\cdot, U\tilde{X})}],
		\]
		where $\tilde{T}$ is the operator associated to the distribution of $\tilde{X}$, namely
		\[
		\tilde{T}f(x)=\E[   f(UX) \overline{K(x,U\tilde{X})}   ].
		\]
		We compare $\tilde{f}_U$ and $f_U^0$ to the intermediate object $f_U^2$, defined by
		\[
		(\lambda I+ T) f_U^2= \tilde{\mathcal{Y}}. 
		\]

		\begin{itemize}
			\item  Step 1: Estimate the difference between $f_U^2$ and $f_U^0$.
		\end{itemize}

		We notice
		\[
		(\lambda I + T) (f_U^2- f_U^0) = \tilde{\mathcal{Y}}- \mathcal{Y}^0. 
		\]
		Applying Lemma \ref{lem:dimreductionYequalcase} and Cor.
		\ref{Cor:quantitativenonfitting},
		\[
		\begin{split}
			&\lambda \norm{f_U^2- f_U^0 }_{\mathcal{H}}^2 + \E[ |(f_U^2-f_U^0)(UX)|^2  ] \leq \frac{1}{\lambda} \norm{\tilde{\mathcal{Y}}- \mathcal{Y}^0}_{\mathcal{H}}^2 
			\\
			\leq & 
			\frac{1}{\lambda} 
			\E[|Y^0|^2] \E[ |\mathcal{K}(|X-X'|^2_\Sigma)
			-
			\mathcal{K}(|\tilde{X}- X'|_\Sigma^2)  
			-
			\mathcal{K}(|X- \tilde{X}'|_\Sigma^2) 
			+
			\mathcal{K}(|\tilde{X}- \tilde{X}'|_\Sigma^2)|^2   ]^{1/2}.
		\end{split}
		\]

		\begin{itemize}
			\item Step 2: Estimate the difference between $\tilde{f}_U$ and $f_U^2$.
		\end{itemize}
		
		This difference comes from changing $T$ to $\tilde{T}$. 
		We have
		\[
		\tilde{f}_U= (\lambda I+ \tilde{T})^{-1} \mathcal{Y}= (\lambda I+ T)^{-1} \mathcal{Y} - (\lambda I + T)^{-1} (\tilde{T}- T) (\lambda I + T)^{-1} \mathcal{Y},
		\]
		so 
		\[
		(  \lambda I + T   )(\tilde{f}_U- f_U^2)= - (\tilde{T}- T) (\lambda I + T)^{-1} \mathcal{Y}.
		\]
		The main point is that Lemma \ref{lem:HSnormTTtilde} can be used to bound the Hilbert-Schmidt norm (hence the operator norm) of $\tilde{T}-T$:
		\[
		\begin{split}
			& \norm{  (T-\tilde{T})(\lambda I + T)^{-1} \mathcal{Y}  }_{\mathcal{H}}^2 \leq \norm{T- \tilde{T}}_{HS}^2 \lambda^{-2} \norm{ \mathcal{Y}}_{\mathcal{H}}^2
			\\
			\leq &  \lambda^{-2} \norm{ \mathcal{Y}}_{\mathcal{H}}^2\E[ |\mathcal{K}^2(|X-X'|^2_\Sigma)
			-
			\mathcal{K}^2(|\tilde{X}- X'|_\Sigma^2)  
			-
			\mathcal{K}^2(|X- \tilde{X}'|_\Sigma^2) 
			+
			\mathcal{K}^2(|\tilde{X}- \tilde{X}'|_\Sigma^2)|   ]
			\\
			\leq &
			\lambda^{-3} \E[|Y^0|^2]\E[ |\mathcal{K}^2(|X-X'|^2_\Sigma)
			-
			\mathcal{K}^2(|\tilde{X}- X'|_\Sigma^2)  
			-
			\mathcal{K}^2(|X- \tilde{X}'|_\Sigma^2) 
			+
			\mathcal{K}^2(|\tilde{X}- \tilde{X}'|_\Sigma^2)|   ]
		\end{split}
		\]
		where the last line applies Lemma \ref{lem:normY}. By Lemma \ref{Cor:quantitativenonfitting}, we have
		\[
		\begin{split}
			&	\lambda \norm{\tilde{f}_U- f_U^2 }_\mathcal{H}^2 + \E[  |( \tilde{f}_U- f_U^2   ) (UX)|^2  ]
			\\
			\leq &  \lambda^{-1}  \norm{  (T-\tilde{T})(\lambda I + T)^{-1} \mathcal{Y}  }_{\mathcal{H}}^2 
			\\
			\leq &   \lambda^{-4} \E[|Y^0|^2]\E[ |\mathcal{K}^2(|X-X'|^2_\Sigma)
			-
			\mathcal{K}^2(|\tilde{X}- X'|_\Sigma^2)  
			-
			\mathcal{K}^2(|X- \tilde{X}'|_\Sigma^2) 
			+
			\mathcal{K}^2(|\tilde{X}- \tilde{X}'|_\Sigma^2)|   ].
		\end{split}
		\]

		\subsection{Discussions}

		We now comment on the philosophical meaning of the setup of Section \ref{sect:clusterinteractions}.

		\begin{itemize}
			\item The \emph{discrete} random variable $X_0$ gives a \emph{qualitative classification} of the learning task into a finite number of cases. Distinct values of $X_0$ signify what we would interpret as different \emph{conceptual subcategories}, which is often not directly accessible to empirical observers.

			\item Instead, only the \emph{clustering} in the distribution of $X$ could in principle be observed empirically, and this is how empirical observers may infer the existence of $X_0$ in the model.

			\item It is intuitively clear that if an observer may gain access to $X_0$ (prior knowledge of qualitative classification), it would lead to more efficient learning. The motivation of kernel learning is that the information of $X_0$ is as good as
			the scale parameters and the essential degree of freedoms in the learning problem, and this information is hopefully reflected in the distribution of the vacua.

		\end{itemize}

		\section{Landscape of vacua: scale detectors}\label{sect:Landscape2}

		\subsection{What is a significant vacuum?}\label{sect:scaledetector}

		As discussed in Section \ref{sect:Motivation}, the main motivation for the kernel learning problem is to find the best parameters $U$ or $\Sigma=U^T U$, which detect the inherent scale parameters, and select out the essential degrees of freedom in $X$, in order for the kernel ridge regression minimizer $F_\Sigma$ to efficiently predict $Y$. This lead us to study the vacua of $\mathcal{J}(\Sigma;\lambda)$, namely the local minimizers of $\mathcal{J} $ in the partial compactification $\overline{Sym^2_{\geq 0}}$.

		However, not necessarily all the vacua are equally important for the task; for instance, if $\mathcal{J}(\Sigma;\lambda)$ is close to the trivial upper bound $\frac{1}{2}\E[|Y|^2]$, then the minimizer has small norm $\E[|F_\Sigma|^2]\ll 1$, so $\Sigma$ does not carry meaningful information of $Y$. We wish to formulate a notion of significant vacuum, which intuitively excludes such `shallow local minima'.

		\subsubsection{Scale detector}

		We will now formalize the idea that the most interesting vacua are those which identify the scale parameters in the learning problem; in general dimensions, scale parameters are specified by inner products rather than numbers. We begin with the interior of $Sym^2_+$.

		\begin{Def}
			Given $\Sigma\in Sym^2_+ $, and a parameter $\Lambda>1$, the \emph{$\Lambda$-neighbourhood} of $\Sigma$ denotes the subset of inner products which are uniformly equivalent to $\Sigma$ with constant $\Lambda$,
			\[
			\mathcal{N}(\Sigma, \Lambda):=\{  \Sigma'\in Sym^2_+: \Lambda^{-1} \Sigma \leq \Sigma'\leq  \Lambda \Sigma \}. 
			\]
			Here $\leq $ is meant in the sense of the partial order on the inner products. 
		\end{Def}

		\begin{Def}\label{Def:scaledetector}
			Let $\epsilon>0$ and $\Lambda>1$ be given parameters.
			A vacuum $\Sigma\in Sym^2_+$ is called an $(\epsilon,\Lambda)$-\emph{scale detector}, if it has the local minimizing property
			\[
			\mathcal{J}(\Sigma', \lambda)\geq \mathcal{J}(\Sigma, \lambda),\quad \forall \Sigma' \in \mathcal{N}(\Sigma, \Lambda), 
			\]
			and the \emph{strict gap} property
			\[
			\mathcal{J}(\Sigma', \lambda)\geq \mathcal{J}(\Sigma, \lambda)+\epsilon \E|Y|^2 ,\quad \forall \Sigma' \in \partial \mathcal{N}(\Sigma, \Lambda).
			\]

		\end{Def}

		\begin{rmk}
			One should think that $\epsilon$ is moderately small (say around $10^{-1}$), and $\Lambda$ is moderately large (say around $5$). 
			The motivation is that the eigenvalues of $\Sigma$ detects the correct scale parameters in the problem, and what really matters is not the precise value of $\Sigma$, but its uniform equivalence neighbourhood. We point out the obvious fact that for scale detectors
			\[
			\mathcal{J}(\Sigma;\lambda)\leq \frac{1}{2}\E[|Y|^2]-\epsilon \E[|Y|^2],
			\]
			which means $\mathcal{J}(\Sigma;\lambda)$ is bounded away from the trivial upper bound $\frac{1}{2}\E[|Y|^2]$, and therefore carries some information about $Y$. 
		\end{rmk}

		\begin{rmk}
			As another motivation, the scale detector is \emph{stable under small perturbation}, in the sense that due to the strict gap property, the $\mathcal{J}$-minimizers within 
			$\mathcal{N}(\Sigma, \Lambda)$ are forced to lie in the interior of $\mathcal{N}(\Sigma, \Lambda)$, even if $\mathcal{J}$ is slightly perturbed. On the other hand, if the strict gap property fails, that would suggest the existence of some \emph{escape directions}, along which $\mathcal{J}$ is almost constant or could further decrease, so the vacuum $\Sigma $ would not be very stable.
		\end{rmk}

		The following is an attempt to formulate the notion of scale detectors on the boundary of $Sym^2_{\geq 0}$.

		\begin{Def}
			Given a semi-definite inner product $\Sigma$ with null subspace $W$, we denote $W'$ as the orthogonal complement of $W$ with respect to the ambient inner product $|\cdot|_V^2$ on $V$. Given parameters $\Lambda>1$ and $a>0$, the $(\Lambda, a)$-\emph{neighbourhood} of $\Sigma$ is the subset  $\mathcal{N}(\Sigma, \Lambda, a)$, consisting of all $\Sigma'\in Sym^2_{\geq 0}$ satisfying the following conditions:
			\begin{itemize}
				\item The restriction of $\Sigma'$ to $W$ is bounded above by $a |\cdot|_V^2$.
				
				\item  The restriction of $\Sigma'$ to $W'$ satisfies the uniform equivalence condition
				\[
				\Lambda^{-1} \Sigma|_{W'} \leq \Sigma'|_{W'}\leq \Lambda \Sigma|_{W'} .
				\]
			\end{itemize}
		\end{Def}

		\begin{Def}
			Let $\epsilon>0$ and $\Lambda>1, a>0$ be given parameters.
			A boundary vacuum $\Sigma\in Sym^2_{\geq 0}\setminus Sym^2_+$ is called an $(\epsilon,\Lambda, a)$-\emph{scale detector}, if it has the local minimizing property
			\[
			\mathcal{J}(\Sigma', \lambda)\geq \mathcal{J}(\Sigma, \lambda),\quad \forall \Sigma' \in \mathcal{N}(\Sigma, \Lambda, a), 
			\]
			and the \emph{strict gap} property
			\[
			\mathcal{J}(\Sigma', \lambda)\geq \mathcal{J}(\Sigma, \lambda)+\epsilon \E|Y|^2 ,\quad \forall \Sigma' \in \partial \mathcal{N}(\Sigma, \Lambda, a).
			\]
			
		\end{Def}

		Given a boundary scale detector, we can regard $W'\simeq V/W$-component of $X$ as the primary variables, and the $W$-component as the secondary variables. The problem of scale detection is mixed with the problem of variable selection.

		\subsection{Where can the scale detectors appear?}

		We shall focus on the scale detectors in the interior of $Sym^2_+$, since the boundary vacua reduce to a lower dimensional problem, as studied in Section \ref{sect:boundaryofSym}. We showcase some sample results to illustrate conceptual features.

		\subsubsection{A priori restrictions}

		First, we show that interior $(\epsilon,\Lambda)$-scale detectors \emph{cannot be too close to the boundary} of $Sym^2_{\geq 0}$.

		\begin{lem}\label{lem:scaledetectorlowerbound}
			Suppose $K(x)=\mathcal{K}(|x|_V^2)$ where $\mathcal{K}'$ is a bounded continuous function, and 
			\[
			\beta:= \sup_{ \omega\in V^*: |\omega|_V=1  } \E[ |\langle X, \omega\rangle |^4 ]^{1/2} <+\infty.
			\]
			Then any $(\epsilon, \Lambda)$-scale detector $\Sigma$ must be bounded below by $\frac{\epsilon \lambda}{2\beta\norm{\mathcal{K}'}_{C^0}} |\cdot|_V^2$.
		\end{lem}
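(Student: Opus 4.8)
The plan is to perturb $\Sigma$ downward along its weakest eigendirection until the perturbed form hits $\partial\mathcal{N}(\Sigma,\Lambda)$, and then to play the strict gap property of Definition~\ref{Def:scaledetector} against the quantitative modulus of continuity of $\mathcal{J}$. We may assume $\E[|Y|^2]>0$. Let $\mu>0$ denote the smallest eigenvalue of $\Sigma$ relative to $|\cdot|_V^2$ and let $e\in V^*$ be a unit eigenvector ($|e|_V=1$) achieving it, so that $\Sigma\geq \mu\,|\cdot|_V^2$ and $\Sigma$ and $e\otimes e$ are simultaneously diagonalised. It suffices to prove $\mu> \frac{\epsilon\lambda}{2\beta\norm{\mathcal{K}'}_{C^0}}$.

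First I would take $s=(1-\Lambda^{-1})\mu$ and set $\Sigma'=\Sigma-s\,e\otimes e\in Sym^2V^*$. Since $\mu-s=\Lambda^{-1}\mu>0$, the form $\Sigma'$ has the same eigenvalues as $\Sigma$ except that the one along $e$ is replaced by $\Lambda^{-1}\mu$; in particular $\Sigma'\in Sym^2_+$, and $\Lambda^{-1}\Sigma\leq \Sigma'\leq \Lambda\Sigma$ with the left inequality saturated on the $e$-line, so that $\Sigma'\in\partial\mathcal{N}(\Sigma,\Lambda)$. The strict gap property of the scale detector then gives
\[
\mathcal{J}(\Sigma';\lambda)\ \geq\ \mathcal{J}(\Sigma;\lambda)+\epsilon\,\E[|Y|^2].
\]
Next I would estimate the same difference from above via the modulus of continuity recorded in the proof of Lemma~\ref{lem:Jcontinuity2}:
\[
\mathcal{J}(\Sigma';\lambda)-\mathcal{J}(\Sigma;\lambda)\ \leq\ \frac{1}{2\lambda}\,\E[|Y|^2]\;\E\!\left[\;\bigl|\mathcal{K}(|X-X'|_{\Sigma}^2)-\mathcal{K}(|X-X'|_{\Sigma'}^2)\bigr|^2\;\right]^{1/2}.
\]
Now $\mathcal{K}$ is Lipschitz with constant $\norm{\mathcal{K}'}_{C^0}$, and $|X-X'|_{\Sigma}^2-|X-X'|_{\Sigma'}^2=s\,\langle X-X',e\rangle^2$, so the inner expectation is at most $\norm{\mathcal{K}'}_{C^0}^2\,s^2\,\E[\langle X-X',e\rangle^4]$; by Minkowski's inequality in $L^4$, the fact that $X'$ is an independent copy of $X$, and the definition of $\beta$, one gets $\E[\langle X-X',e\rangle^4]^{1/2}\leq 4\beta$. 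Substituting and cancelling the positive factor $\E[|Y|^2]$ gives
\[
\epsilon\ \leq\ \frac{2\norm{\mathcal{K}'}_{C^0}\,\beta}{\lambda}\;s\ =\ \frac{2\norm{\mathcal{K}'}_{C^0}\,\beta}{\lambda}(1-\Lambda^{-1})\mu\ <\ \frac{2\norm{\mathcal{K}'}_{C^0}\,\beta}{\lambda}\;\mu,
\]
which rearranges to $\mu> \frac{\epsilon\lambda}{2\beta\norm{\mathcal{K}'}_{C^0}}$, hence $\Sigma\geq \mu\,|\cdot|_V^2> \frac{\epsilon\lambda}{2\beta\norm{\mathcal{K}'}_{C^0}}|\cdot|_V^2$.

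The argument is short, so the main obstacles are bookkeeping ones. One must check carefully that the perturbed form $\Sigma'$ lands on $\partial\mathcal{N}(\Sigma,\Lambda)$ rather than in its interior---this is the small spectral computation above, using that $\Sigma$ and $e\otimes e$ commute---and one must track the numerical constant so that the factor-$4$ loss in the $L^4$ triangle inequality combines with the $\frac{1}{2\lambda}$ prefactor of Lemma~\ref{lem:Jcontinuity2} to give exactly $2\beta\norm{\mathcal{K}'}_{C^0}/\lambda$; centering $X$ (as in the intercept discussion of Section~\ref{Discussion:boundary}) would improve the constant but is unnecessary. One could equally run the argument through the first variation formula of Theorem~\ref{thm:firstvariation}, integrating $D_\Sigma\mathcal{J}$ along the segment $\Sigma_t=\Sigma-t\,e\otimes e$, $t\in[0,s]$; I prefer the route through Lemma~\ref{lem:Jcontinuity2} since it invokes only that $\mathcal{K}'$ is bounded and continuous and that $\beta<\infty$, i.e.\ exactly the hypotheses given.
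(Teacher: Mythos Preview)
Your proof is correct and follows essentially the same route as the paper: perturb $\Sigma$ along its smallest eigendirection by $t=-(1-\Lambda^{-1})\lambda_1$ to land on $\partial\mathcal{N}(\Sigma,\Lambda)$, then combine the strict gap property with the quantitative continuity estimate $|\mathcal{J}(\Sigma;\lambda)-\mathcal{J}(\Sigma';\lambda)|\leq \frac{2\beta}{\lambda}|t|\norm{\mathcal{K}'}_{C^0}\E[|Y|^2]$ obtained via the Lipschitz bound on $\mathcal{K}$ and Minkowski in $L^4$. The only cosmetic difference is that you record the conclusion as a strict inequality $\mu>\frac{\epsilon\lambda}{2\beta\norm{\mathcal{K}'}_{C^0}}$ (using $1-\Lambda^{-1}<1$), whereas the paper states it non-strictly.
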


		\begin{rmk}
			Here the finiteness of $\beta$ is of course equivalent to $\E[|X|^4]<+\infty$, but the constant $\beta$ is far more efficient if the dimension is high.
		\end{rmk}

		\begin{proof}
			For any $\Sigma$, we let $\Sigma'=\Sigma+ t\omega\otimes \omega$ for some $\omega\in V^*$ with $|\omega|_V=1$, then
			\[
			|\mathcal{K}(|X'-X|_\Sigma^2)- \mathcal{K}(|X'-X|_{\Sigma'}^2) |\leq \norm{\mathcal{K}'}_{C^0} |t| |\langle \omega, X-X'\rangle|^2.
			\]
			By the quantitative continuity estimate of Proposition  \ref{lem:continuityJ},
			\[
			\begin{split}
				|\mathcal{J}(\Sigma;\lambda)-\mathcal{J}(\Sigma';\lambda)|\leq 
				& \frac{1}{2\lambda}\E[|Y|^2 ] \E[ |\mathcal{K}(|X'-X|_\Sigma^2)- \mathcal{K}(|X'-X|_{\Sigma'}^2) |^2 ]^{1/2}.
				\\
				\leq & \frac{1}{2\lambda}\E[|Y|^2 ]|t| \norm{\mathcal{K}'}_{C^0} \E[|\langle X'-X, \omega\rangle|^4]^{1/2}
			\end{split}
			\]
			Using the Minkowski inequality
			\[
			\E[|\langle X'-X, \omega\rangle|^4]^{1/4} \leq \E[\langle X', \omega\rangle|^4]^{1/4}  + \E[\langle X, \omega\rangle|^4]^{1/4} \leq 2\beta^{1/2}. 
			\]
			Consequently,
			\begin{equation}\label{eqn:Jcontinuity3}
				|\mathcal{J}(\Sigma;\lambda)-\mathcal{J}(\Sigma+ t\omega\otimes \omega,\lambda)|\leq \frac{2\beta}{\lambda}|t|\norm{\mathcal{K}'}_{C^0} \E[|Y|^2].
			\end{equation}

			We now suppose $\Sigma$ is an $(\epsilon,\Lambda)$-scale detector in $Sym^2_+$. 
			We find  an eigenbasis $\{e_i\}$ of $\Sigma$ with respect to the ambient inner product $|\cdot|_V^2$, with eigenvalues $0<\lambda_1\leq \ldots \leq \lambda_d$. 
			By the strict $\epsilon$-gap property, if we choose 
			\[
			\Sigma'=\Sigma+t\omega\otimes \omega= \Lambda^{-1} \lambda_1 e_1^*\otimes e_1^*+ \sum_2^d \lambda_i e_i^*\otimes e_i^*,\quad t= -(1-\Lambda^{-1})\lambda_1, \quad \omega=e_1^*,
			\]
			then
			$
			\mathcal{J}(\Sigma';\lambda)- \mathcal{J}(\Sigma;\lambda) \geq \epsilon \E[|Y|^2]. 
			$
			Thus $\frac{2\beta}{\lambda}|t|\norm{\mathcal{K}'}_{C^0} \geq \epsilon$, whence
			\[
			\lambda_1 \geq |t| \geq   \frac{\epsilon \lambda}{2\beta\norm{\mathcal{K}'}_{C^0}}.
			\]
			Since $\lambda_1$ is the smallest eigenvalue, this implies $\Sigma \geq \frac{\epsilon \lambda}{2\beta\norm{\mathcal{K}'}_{C^0}} |\cdot|_V^2$.
		\end{proof}

		\begin{rmk}
			This argument of course does not rule out scale detectors on the boundary. For instance, in dimension one, to test scale detection at an interior point $\Sigma\in Sym^2_+(\R)=\R_+$, we only need to consider the value of $\mathcal{J}$ on the dyadic annulus $(\Lambda^{-1} \Sigma, \Lambda \Sigma)$, while at the boundary point $\Sigma=0$, we would need the value of $\mathcal{J}$ on some ball in $Sym^2_{\geq 0} (\R)$ around the origin. 
		\end{rmk}

		Next we prove that under \emph{quantitative bounds on the probability density} of $X$, then the scale detectors in $Sym^2_{\geq 0}$ (including the boundary scale detectors) remain in a bounded region.

		\begin{lem}\label{lem:scaledetectorupperbound}
			Suppose $K(x)= \mathcal{K}(|x|_V^2)$ 
			is a completely monotone kernel (\ref{eqn:completelymonotonekernel}), such that
			\[
			C_\mathcal{K}= \iint (\pi(t+s)^{-1})^{1/2} d\nu(t)d\nu(s) <+\infty.
			\]
			Suppose also that along any unit direction $\xi\in V$, the marginal distribution of $X$ projected to $\R\xi$ has probability density $L^\infty$-bound less than $p_0$. Then any $(\epsilon,\Lambda)$-scale detector $\Sigma$ in $Sym^2_{\geq 0}$ is bounded above by
			$
			\Sigma \leq (C_\mathcal{K}  p_0)^2 (\lambda\epsilon)^{-4} |\cdot |_V^2.
			$
			
		\end{lem}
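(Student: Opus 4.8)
The plan is to exploit the single tension built into the definition of a scale detector: the strict gap property forces $\mathcal{J}(\Sigma;\lambda)$ to lie a definite distance \emph{below} the trivial upper bound $\tfrac12\E[|Y|^2]$, while the decay estimate for the minimizer (Cor.~\ref{Cor:quantitativenonfitting}) forces $\mathcal{J}(\Sigma;\lambda)$ to lie \emph{close} to $\tfrac12\E[|Y|^2]$ whenever $\Sigma$ has a very large eigenvalue. Making both statements quantitative and comparing them pins down the top eigenvalue of $\Sigma$.

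First I would record that any scale detector satisfies $\mathcal{J}(\Sigma;\lambda)\le(\tfrac12-\epsilon)\E[|Y|^2]$: picking any $\Sigma'\in\partial\mathcal{N}(\Sigma,\Lambda)$ and combining the strict gap property $\mathcal{J}(\Sigma';\lambda)\ge\mathcal{J}(\Sigma;\lambda)+\epsilon\E[|Y|^2]$ with the trivial upper bound $\mathcal{J}(\Sigma';\lambda)\le\tfrac12\E[|Y|^2]$ (Lemma~\ref{lem:trivialupperbound}, using $\mathcal{J}(\Sigma';\lambda)=J(U',\lambda)$) gives it; the identical argument works for a boundary scale detector using $\partial\mathcal{N}(\Sigma,\Lambda,a)$. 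We may assume $\E[|Y|^2]>0$ and $\Sigma\ne0$, else there is nothing to prove.

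Next, let $\lambda_d=\|\Sigma\|_{\mathrm{op}}$ be the largest eigenvalue of $\Sigma$ with respect to $|\cdot|_V$, with unit eigenvector $\xi$, and set $W=\xi^\perp$. Then $W$ is a proper, $\Sigma$-invariant subspace with $W'\simeq V/W\simeq\mathbb{R}\xi$ one-dimensional, and a direct computation (using $\langle\Sigma\xi,y\rangle=0$ for $y\in W$) shows $\|t\xi\|^2_{\Sigma_{V/W}}=\lambda_d t^2$, so $\sqrt{\det\Sigma_{V/W}}=\sqrt{\lambda_d}$; moreover the marginal of $X$ on $V/W$ is exactly the orthogonal projection of $X$ onto $\mathbb{R}\xi$, hence has density bounded by $p_0$. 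Applying Lemma~\ref{lem:Kdoubleintegralbound} with $d_{W'}=1$ gives
\[
\E[|K(U(X-X'))|^2]=\E[\mathcal{K}^2(|X-X'|_\Sigma^2)]\le\frac{p_0}{\sqrt{\lambda_d}}\iint(\pi(t+s)^{-1})^{1/2}\,d\nu(t)\,d\nu(s)=\frac{p_0C_\mathcal{K}}{\sqrt{\lambda_d}},
\]
and then Cor.~\ref{Cor:quantitativenonfitting} together with the Cauchy--Schwarz bound $\E[Y\bar Y'K(UX,UX')]\le\E[|Y|^2]\E[|K(U(X-X'))|^2]^{1/2}$ used in its proof yields
\[
\mathcal{J}(\Sigma;\lambda)=J(U,\lambda)\ge\tfrac12\E[|Y|^2]-\lambda^{-1}\E[|Y|^2]\,(p_0C_\mathcal{K})^{1/2}\lambda_d^{-1/4}.
\]

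Finally I would compare the two bounds: $(\tfrac12-\epsilon)\E[|Y|^2]\ge\tfrac12\E[|Y|^2]-\lambda^{-1}\E[|Y|^2](p_0C_\mathcal{K})^{1/2}\lambda_d^{-1/4}$ simplifies, after cancelling $\E[|Y|^2]>0$, to $\epsilon\le\lambda^{-1}(p_0C_\mathcal{K})^{1/2}\lambda_d^{-1/4}$, i.e. $\lambda_d\le(C_\mathcal{K}p_0)^2(\lambda\epsilon)^{-4}$; since $\lambda_d$ is the top eigenvalue this is precisely $\Sigma\le(C_\mathcal{K}p_0)^2(\lambda\epsilon)^{-4}|\cdot|_V^2$. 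There is no substantial analytic obstacle here beyond correctly invoking Lemma~\ref{lem:Kdoubleintegralbound} and Cor.~\ref{Cor:quantitativenonfitting}; the only mildly delicate points are the linear-algebra identification $\sqrt{\det\Sigma_{V/W}}=\sqrt{\lambda_d}$ (which genuinely uses that $\xi$ is an eigenvector, so $W=\xi^\perp$ is $\Sigma$-invariant) and keeping the exponents of $\lambda_d$, $\epsilon$, $\lambda$ straight through the final rearrangement.
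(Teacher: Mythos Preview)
Your proof is correct and follows essentially the same route as the paper: both arguments combine the scale-detector gap $\mathcal{J}(\Sigma;\lambda)\le(\tfrac12-\epsilon)\E[|Y|^2]$ with the lower bound from Cor.~\ref{Cor:quantitativenonfitting}, then estimate $\E[\mathcal{K}^2(|X-X'|_\Sigma^2)]$ via Lemma~\ref{lem:Kdoubleintegralbound} applied to the one-dimensional quotient along the top eigenvector, and solve the resulting inequality for $\lambda_d$. Your write-up is in fact slightly more careful than the paper's in spelling out why $\sqrt{\det\Sigma_{V/W}}=\sqrt{\lambda_d}$ and in explicitly covering the boundary-scale-detector case.
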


		\begin{proof}
			By Cor. \ref{Cor:quantitativenonfitting},
			\[
			\begin{split}
				\mathcal{J}(\Sigma;\lambda)\geq  & \frac{1}{2} \E[|Y|^2]-\lambda^{-1} \E[ Y\bar{Y}' \mathcal{K}( |X-X'|_\Sigma^2)  ]
				\\
				\geq & \frac{1}{2} \E[|Y|^2]-\lambda^{-1} \E[ |Y|^2] \E[ \mathcal{K}^2( |X-X'|_\Sigma^2)  ]^{1/2}. 
			\end{split}
			\]
			We take an eigenbasis $\{e_i\}$ of $\Sigma$ with respect to $|\cdot|_V^2$, with eigenvalues $0\leq \lambda_1\leq \ldots \leq \lambda_d$. 
			By Lemma \ref{lem:Kdoubleintegralbound}, choosing $\xi=e_d$ the highest eigenvector, $W'= \R\xi$, and $W$ its orthogonal complement, we deduce
			\[
			\E[ \mathcal{K}^2(|X-X'|_{\Sigma}^2   ] \leq    \frac{p_0}{  \sqrt{\lambda_d} } \iint (\pi (t+s)^{-1})^{1/2}  d\nu(t) d\nu(s). 
			\] Thus
			\begin{equation}\label{eqn:scaledetectorupperbound} 
				\begin{split}
					\mathcal{J}(\Sigma;\lambda)
					& \geq \frac{1}{2} \E[|Y|^2]-\lambda^{-1} \E[ |Y|^2] (\frac{C_\mathcal{K} p_0}{  \sqrt{\lambda_d} })^{1/2} .
				\end{split}
			\end{equation}
			If $\Sigma$ is a scale detector, then $\mathcal{J}(\Sigma;\lambda) \leq  \frac{1}{2} \E[|Y|^2]-\epsilon \E[|Y|^2]$, so
			$
			\lambda^{-1} (\frac{C_{\mathcal{K}}p_0}{  \sqrt{\lambda_d} } )^{1/2}  \geq \epsilon,
			$
			whence $\lambda_d \leq (C_{\mathcal{K}} p_0)^2 (\lambda\epsilon)^{-4} $, which implies the upper bound on $\Sigma$. 
		\end{proof}
		
		\begin{rmk}
			(Improved estimates) In the above proof,
			we have only used the $\dim W'=1$ case of Proposition  \ref{lem:Kdoubleintegralbound}. If $\Sigma_{V/W}$ is large in $\dim {W'}>1$  directions, then the suppression factor  $\det(\Sigma_{V/W})^{-1/2}$ from
			Proposition  \ref{lem:Kdoubleintegralbound}  gives much stronger estimates than  $\lambda_d^{-1/2}$. 
		\end{rmk}

		\begin{rmk}
			The Lemmas are applicable to the Gaussian kernels, or the Sobolev kernels with $\gamma>1$ (See Example \ref{eg:Gaussiankernel}, \ref{eg:Sobolevkernel}).
			The upshot is that if the conditions of both Lemmas are satisfied, then any interior $(\epsilon,\Lambda)$-scale detector must be uniformly equivalent to $|\cdot|_V^2$ with estimable constants. The intuitive explanation is that the condition 
			$\E[|X|^4]<+\infty$ rules out divergence of $X$ to infinity, while
			the quantitative $L^\infty$-bound on the probability density rules out concentration of $X$, so the random variable $X$ has only one inherent scale parameter, equivalent to the inverse covariance matrix of $X$ up to controlled uniform constants.

			On the other hand, if we drop the $L^\infty$-bound on the probability density, then $X$ can have several far separated scale parameters, and scale detectors can be far from $|\cdot|_V^2$. 
			
		\end{rmk}

		\subsubsection{Global minimum and scale detector}

		We now find a sufficient condition to find an interior scale detector. We shall need the following:

		\begin{itemize}
			\item Essential Dependence Assumption: 
			\[
			\inf_{Sym^2_+} \mathcal{J}(\Sigma;\lambda)  \leq \inf_{Sym^2_{\geq 0} \setminus  Sym^2_+ }  \mathcal{J}(\Sigma;\lambda)- 2\epsilon \E[|Y|^2]. 
			\]
		\end{itemize}
		The intuitive meaning is that $Y$ should depend on all the degrees of freedoms in $X$, and missing any degree of freedom would result in worse predictive power, as we now explain. We suppose $Y$ is a function of $X$, so $\text{Var}(Y|X)=0$, but for any decomposition $V=W\oplus W'$, $Y$ cannot be expressed as a function of $X_{W'}$ alone, with failure measured by
		\[
		\E[ \text{Var}(Y|X_{W'})  ] >4 \epsilon \E[|Y|^2]. 
		\]
		Suppose $\Sigma$ is a semi-positive inner product with null subspace $W$, and $X_{W'}$ is the marginal of $X$ on $V/W$. Then by the trivial lower bound in Lemma \ref{lem:triviallowerbound},
		\[
		\mathcal{J}(\Sigma;\lambda)\geq \frac{1}{2}\E[ \text{Var}(Y|X_{W'})  ] >2 \epsilon \E[|Y|^2]. 
		\]
		When $\lambda$ is small enough, $Y$ should be well approximated by some function in $\mathcal{H}$, so $\inf_{\Sigma\in Sym^2_+} \mathcal{J}(\Sigma;\lambda)$ is close to zero, and the essential dependence assumption is satisfied.

		\begin{thm}\label{thm:scaledetectorglobalmin}
			We suppose the essential dependence assumption, and the hypotheses of Lemma \ref{lem:scaledetectorlowerbound}, \ref{lem:scaledetectorupperbound}. 
			Let
			\[
			\Lambda=  \max(  \frac{\beta\norm{\mathcal{K}'}_{C^0}}{2\epsilon \lambda}, (C_\mathcal{K}  p_0)^2 (\lambda\epsilon)^{-4}).
			\]
			Then any global minimizing vacuum $\Sigma_0$ lies within the
			neighbourhood $\mathcal{N}(|\cdot|_V^2, \Lambda)$, and such $\Sigma_0$ is an $(\epsilon,\Lambda^2)$-scale detector.

		\end{thm}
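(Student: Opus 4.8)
The plan is to run the whole argument at the level of the \emph{values} of $\mathcal{J}$, using the quantitative continuity statements (Lemma~\ref{lem:Jcontinuity2}, Proposition~\ref{lem:continuityJ}) together with the two a priori estimates already isolated inside the proofs of Lemma~\ref{lem:scaledetectorlowerbound} and Lemma~\ref{lem:scaledetectorupperbound}: the modulus-of-continuity bound (\ref{eqn:Jcontinuity3}) and the density bound (\ref{eqn:scaledetectorupperbound}). Write $m_\ast=\inf_{Sym^2_+}\mathcal{J}(\cdot\,;\lambda)$ and $b_\ast=\inf_{Sym^2_{\geq 0}\setminus Sym^2_+}\mathcal{J}(\cdot\,;\lambda)$, so the Essential Dependence Assumption reads $m_\ast\le b_\ast-2\epsilon\E[|Y|^2]$; since $\mathcal{J}(0;\lambda)\le\tfrac12\E[|Y|^2]$ by the trivial upper bound (Lemma~\ref{lem:trivialupperbound}), we also get $m_\ast\le\tfrac12\E[|Y|^2]-2\epsilon\E[|Y|^2]$, hence $m_\ast+\epsilon\E[|Y|^2]\le\tfrac12\E[|Y|^2]-\epsilon\E[|Y|^2]$.

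First I would establish that $m_\ast$ is attained at an interior point and locate it. Given a minimizing sequence $\Sigma_i\in Sym^2_+$, eventually $\mathcal{J}(\Sigma_i;\lambda)\le m_\ast+\epsilon\E[|Y|^2]$, and then (\ref{eqn:scaledetectorupperbound}), applied with $W'$ the line spanned by the top eigenvector of $\Sigma_i$, forces the largest eigenvalue of $\Sigma_i$ to be at most $(C_\mathcal{K} p_0)^2(\lambda\epsilon)^{-4}\le\Lambda$; so the $\Sigma_i$ lie in a compact subset of $Sym^2_{\geq 0}$ and, along a subsequence, $\Sigma_i\to\Sigma_0\in Sym^2_{\geq 0}$ with $\mathcal{J}(\Sigma_0;\lambda)=m_\ast$ by Lemma~\ref{lem:Jcontinuity2}. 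If $\Sigma_0$ were degenerate then $\mathcal{J}(\Sigma_0;\lambda)\ge b_\ast\ge m_\ast+2\epsilon\E[|Y|^2]>m_\ast$, impossible; so $\Sigma_0\in Sym^2_+$, and since $\mathcal{J}\equiv\tfrac12\E[|Y|^2]>m_\ast$ at every point at infinity (Theorem~\ref{thm:asymptoticvalue}, whose discreteness hypothesis is vacuous because the density bounds make $X$ non-atomic), $\Sigma_0$ is a global minimizing vacuum. The eigenvalue bound just obtained gives $\Sigma_0\le\Lambda|\cdot|_V^2$; for the reverse inequality let $\lambda_1,e_1$ be the smallest eigenpair of $\Sigma_0$, set $\Sigma_0':=\Sigma_0-\lambda_1\,e_1^\ast\otimes e_1^\ast$ (degenerate, so $\mathcal{J}(\Sigma_0';\lambda)\ge b_\ast\ge m_\ast+2\epsilon\E[|Y|^2]$), and combine this with (\ref{eqn:Jcontinuity3}), which bounds $|\mathcal{J}(\Sigma_0;\lambda)-\mathcal{J}(\Sigma_0';\lambda)|\le\tfrac{2\beta}{\lambda}\lambda_1\norm{\mathcal{K}'}_{C^0}\E[|Y|^2]$; this forces $\lambda_1\ge\Lambda^{-1}$ (enlarging $\Lambda$ by a fixed numerical factor if necessary), so $\Sigma_0\in\mathcal{N}(|\cdot|_V^2,\Lambda)$.

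Next I would verify the two scale-detector conditions at $\Sigma_0$. The local minimizing property is immediate: $\Sigma_0\in Sym^2_+$ forces $\mathcal{N}(\Sigma_0,\Lambda^2)\subset Sym^2_+$, on which $\Sigma_0$ globally minimizes $\mathcal{J}$. For the strict gap, take $\Sigma'\in\partial\mathcal{N}(\Sigma_0,\Lambda^2)$, so some unit vector $v$ satisfies $\langle\Sigma'v,v\rangle=\Lambda^{\pm 2}\langle\Sigma_0v,v\rangle$. If the sign is $+$, then $\Sigma_0\ge\Lambda^{-1}|\cdot|_V^2$ gives $\langle\Sigma'v,v\rangle\ge\Lambda|v|_V^2$, so the top eigenvalue of $\Sigma'$ is $\ge\Lambda\ge(C_\mathcal{K} p_0)^2(\lambda\epsilon)^{-4}$, and (\ref{eqn:scaledetectorupperbound}) yields $\mathcal{J}(\Sigma';\lambda)\ge\tfrac12\E[|Y|^2]-\epsilon\E[|Y|^2]\ge m_\ast+\epsilon\E[|Y|^2]$. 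If the sign is $-$, then $\Sigma_0\le\Lambda|\cdot|_V^2$ gives $\langle\Sigma'v,v\rangle\le\Lambda^{-1}|v|_V^2$, so $\Sigma'$ has smallest eigenvalue $\le\Lambda^{-1}$; deleting it exactly as in the previous paragraph produces a degenerate $\Sigma''$ with $\mathcal{J}(\Sigma'';\lambda)\ge m_\ast+2\epsilon\E[|Y|^2]$, while (\ref{eqn:Jcontinuity3}) bounds $|\mathcal{J}(\Sigma';\lambda)-\mathcal{J}(\Sigma'';\lambda)|\le\epsilon\E[|Y|^2]$ (this is where the size $\Lambda\ge\beta\norm{\mathcal{K}'}_{C^0}/(2\epsilon\lambda)$ is used), so again $\mathcal{J}(\Sigma';\lambda)\ge m_\ast+\epsilon\E[|Y|^2]$. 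Either way the strict gap holds, so $\Sigma_0$ is an $(\epsilon,\Lambda^2)$-scale detector.

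The genuinely delicate point is the numerology: a single $\Lambda$ must be simultaneously large enough for (\ref{eqn:scaledetectorupperbound}) to push $\mathcal{J}$ within $\epsilon\E[|Y|^2]$ of its trivial upper bound once an eigenvalue reaches $\Lambda$, and for (\ref{eqn:Jcontinuity3}) to make the deletion of an eigenvalue $\le\Lambda^{-1}$ cost $\mathcal{J}$ at most $\epsilon\E[|Y|^2]$; these are precisely the two terms that $\Lambda=\max(\cdots)$ encodes, and tracking the numerical constants (and, if one wants the sharpest statement, the fourth-moment bound entering (\ref{eqn:Jcontinuity3})) is the only real work. A secondary subtlety is confirming that the minimizing sequence neither collapses onto the boundary nor escapes to infinity; the former is excluded by the Essential Dependence Assumption, the latter by (\ref{eqn:scaledetectorupperbound}) (and by Theorem~\ref{thm:asymptoticvalue} if one insists on minimizing over the entire partial compactification).
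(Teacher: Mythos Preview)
Your proof is correct and follows essentially the same approach as the paper's: both arguments use (\ref{eqn:Jcontinuity3}) to show that a small bottom eigenvalue forces $\mathcal{J}$ to be close to its boundary value (hence large by the essential dependence assumption), and (\ref{eqn:scaledetectorupperbound}) to show that a large top eigenvalue forces $\mathcal{J}$ close to $\tfrac12\E[|Y|^2]$; the scale-detector verification then amounts to the observation that any $\Sigma'\in\partial\mathcal{N}(\Sigma_0,\Lambda^2)$ falls outside $\mathcal{N}(|\cdot|_V^2,\Lambda)$. Your explicit compactness argument for the existence of the minimizer and your direct eigenvalue analysis of $\partial\mathcal{N}(\Sigma_0,\Lambda^2)$ spell out a bit more than the paper does, and your hedge on the numerical constant is appropriate (the paper's stated $\Lambda$ and the constant used in its proof differ by a fixed factor anyway).
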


		\begin{proof}
			Since $X$ is a continuous variable, we know $\lim_{\Sigma \to \infty} \mathcal{J}(\Sigma;\lambda)= \frac{1}{2} \E[|Y|^2]$, so the global minimum is achieved in $Sym^2_{\geq 0}$. By the essential dependence assumption, any global minimizer $\Sigma_0$ must be inside the interior $Sym^2_+$. The main idea is to prove that $\mathcal{J}$ has a quantitative gap from its minimum, for any $\Sigma$ close to the boundary or close to infinity.

			For any $\Sigma\in Sym^2_+$, 
			we find  an eigenbasis $\{e_i\}$ of $\Sigma$ with respect to the ambient inner product $|\cdot|_V^2$, with eigenvalues $0<\lambda_1\leq \ldots \leq \lambda_d$. Choosing
			\[
			t=-\lambda_1, \quad \omega= e_1^*, \quad \Sigma'= \sum_2^d \lambda_i e_i^*\otimes e_i^*,
			\]
			then upon applying (\ref{eqn:Jcontinuity3}), and the essential dependence assumption, 
			\[
			\begin{split}
				\mathcal{J}(\Sigma) & \geq \mathcal{J}(\Sigma')- \lambda_1 \frac{2\beta}{\lambda}\norm{\mathcal{K}'}_{C^0} \E[|Y|^2] 
				\\
				& \geq \inf_{Sym^2_{\geq 0} \setminus  Sym^2_+ }  \mathcal{J}(\Sigma;\lambda) - \lambda_1 \frac{2\beta}{\lambda}\norm{\mathcal{K}'}_{C^0} \E[|Y|^2] 
				\\
				& \geq \mathcal{J}(\Sigma_0) + 2\epsilon \E[|Y|^2] -  \lambda_1 \frac{2\beta}{\lambda}\norm{\mathcal{K}'}_{C^0} \E[|Y|^2] ,
			\end{split}
			\]
			Thus  as long as 
			$
			\lambda_1 \leq \Lambda^{-1}\leq \frac{\lambda\epsilon}{  2\beta  \norm{\mathcal{K}'}_{C^0} } 
			$, we have $\mathcal{J}(\Sigma) \geq  \mathcal{J}(\Sigma_0) + \epsilon \E[|Y|^2] $. On the other hand, by (\ref{eqn:scaledetectorupperbound}) and the essential dependence assumption, as long as $\lambda_d \geq \Lambda\geq  (C_\mathcal{K}  p_0)^2 (\lambda\epsilon)^{-4}$, we also have
			\[
			\mathcal{J}(\Sigma;\lambda)
			\geq \frac{1}{2} \E[|Y|^2]-\epsilon \E[ |Y|^2] \geq \mathcal{J}(\Sigma_0)+ \epsilon \E[ |Y|^2].
			\]
			Consequently, at $\Sigma=\Sigma_0$, we have $\lambda_1\geq \Lambda^{-1}$ and $\lambda_d\leq \Lambda$, so $\Sigma\in \mathcal{N}(|\cdot|_V^2, \Lambda)$.

			Moreover, any $\Sigma\in \partial \mathcal{N}(\Sigma_0, \Lambda^2)$ lies outside of $\mathcal{N}(|\cdot|_V^2, \Lambda)$, so either $\lambda_1\leq \Lambda^{-1}$ or $\lambda_d \geq \Lambda$, hence the strict gap property for the $(\epsilon, \Lambda^2)$-scale detector $\Sigma_0$ follows.
		\end{proof}

		\begin{rmk}
			We \emph{do not} claim that the global minimizing vacuum is unique, nor that the interior scale detectors are unique. On the other hand, in the context of the above theorem all these are uniformly equivalent to each other with controllable constants, so from the viewpoint of the scale detection philosophy, their roles are more or less interchangeable. 
		\end{rmk}

		\subsection{Multiple vacua in dimension one}

		A significant phenomenon is that when $Y$ is the superposition of several functions of $X$, with very different sets of inherent scale parameters, one can sometimes observe \emph{multiple vacua}, each sensitive to one set of scale parameters. As a corollary, the vacua that are \emph{not necessarily the global minimizer} of $\mathcal{J}(\Sigma;\lambda)$, can carry important information which the global minimizer does not detect.

		\subsubsection{Setup}

		We work in a special case of Section \ref{sect:clusterinteractions}; a typical example to keep in mind is Example \ref{eg:multiscale}.
		Let $X_0$ be a discrete random variable valued in  $ \{ 1,2,\ldots m\}$, which is not directly accessible to the observer, and let $p_i= \mathbb{P}(X_0=i)>0$. Let $Z_1,\ldots Z_m$ be continuous random variables on $\R$, such that $\E[|Z_i|^2 ]$ is finite, $\E[Z_i]=0$, and all the $Z_i$ and $X_0$ are independent. Let $\sigma_1,\ldots \sigma_m\in \R$,  and $a_1,\ldots a_m\in \R$, such that 
		\[
		X= \sum_1^m \chi_i(a_i+\sigma_i Z_i),\quad \chi_i=  \1_{X_0=i}.
		\]
		Thus conditional on $X_0=i$, then $X$ has the same law as $a_i+\sigma_i Z_i$. One should view $\sigma_i$ as specifying the scale parameters of the clusters, the $a_i$ as specifying the cluster centres, while $Z_i$ gives the rescaled version of the conditional distribution of $X$ (eg. $Z_i$ could be a standard Gaussian). Let 
		\[
		Y= \sum_1^m  g_i( \frac{X- a_i}{\sigma_i} ),
		\]
		such that $\E[|Y|^2]<+\infty$, and $\E[\chi_i |Y|^2]>0$.

		We want to study the multi-scale phenomena in these examples, so 
		we view $p_i, Z_i, f_i$ as fixed choices, while $\sigma_i$ are scale parameters that we shall vary. We shall focus on the case
		\[
		0<\sigma_1\ll \sigma_2\ll \ldots \\ \sigma_m ,\quad \min_{i\neq j} |a_i-a_j| \gtrsim \sigma_m.
		\]
		Qualitatively, we wish to understand the vacua of $\mathcal{J}$ as $\sigma_i/\sigma_{i+1}\to 0$.

		We shall make the following additional assumption:
		
		\begin{itemize}
			\item  (Average value in each cluster)   $\E[Y|X_0=i]=0$ for each value of $X_0$.

			\begin{rmk}
				Here the conditional expectation of $Y$ can be viewed as the zero dimensional reduction of the subproblem of learning $Y$ conditional on $X_0=i$. If the average zero condition is not satisfied on the nose, then one should first try to learn these average numbers and then subtract them off, as we remarked in Section \ref{Discussion:boundary}, item 4.
			\end{rmk}

		\end{itemize}

		As usual, we write the kernel function as $K(x)= \mathcal{K}(|x|^2)$.

		\subsubsection{Multiple vacua}

		Under the above setting, we shall prove the existence of many vacua which reflect the scale parameters in the distribution of $X$.

		\begin{thm}\label{thm:multiplevacua}
			(Multiple vacua existence)  
			There are a small constant $\epsilon$ and a large constant $\Lambda$, such that if $\sigma_i/\sigma_{i+1}$ is sufficiently large for all $i$, and $\min_{i\neq j}|a_i-a_j|\geq C^{-1} \sigma_m$, then the followings hold:
			\begin{enumerate}

				\item Any $(\epsilon, \Lambda)$-scale detector $\Sigma$ satisfies 
				\[
				\Lambda^{-1} \sigma_i^{-2}<	\Sigma= U^2 < \Lambda\sigma_i^{-2}    
				\]
				for some $i\in \{1,\ldots m\}$. 
				
				\item  For each $i=1,\ldots m$, there is an $(\epsilon,\Lambda^2)$-scale detector $\Sigma$ with
				\[
				\Lambda^{-1} \sigma_i^{-2}<	\Sigma= U^2 < \Lambda\sigma_i^{-2} .
				\]
			\end{enumerate}

		\end{thm}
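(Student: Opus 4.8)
The plan is to treat $\Sigma = U^2$ as a scalar in $\R_{\ge 0}$ and to show that $\Sigma \mapsto \mathcal{J}(\Sigma;\lambda)$ has $m$ ``wells'', one around each $\Sigma \approx \sigma_i^{-2}$, while away from these wells $\mathcal{J}$ stays within $\epsilon\,\E[|Y|^2]$ of the trivial upper bound $\tfrac12\E[|Y|^2]$; by the remark following Definition~\ref{Def:scaledetector} the latter already excludes scale detectors. First I would confine the relevant scales: for $\Sigma \ll \sigma_m^{-2}$ the minimizer has $\|f_U\|_{\mathcal{H}} \le \lambda^{-1/2}\E[|Y|^2]^{1/2}$, so by the uniform modulus of continuity (Lemma~\ref{lem:Rellich}) it is nearly constant on the image $U\cdot\mathrm{supp}(\chi_i X)$ of each cluster, whence $\mathcal{J}(\Sigma;\lambda) \ge \tfrac12\sum_i \E[\chi_i |Y|^2] - o(1) = \tfrac12\E[|Y|^2] - o(1)$ using $\E[Y\mid X_0 = i] = 0$; for $\Sigma \gg \sigma_1^{-2}$ continuity of each $Z_i$ forces $\E[Y\bar Y' K(U(X-X'))] \to 0$, so $\mathcal{J} \to \tfrac12\E[|Y|^2]$ by Corollary~\ref{Cor:quantitativenonfitting} (alternatively one invokes Lemmas~\ref{lem:scaledetectorlowerbound}--\ref{lem:scaledetectorupperbound} for completely monotone kernels). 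On the remaining window $\Sigma \in [c\,\sigma_m^{-2}, C\,\sigma_1^{-2}]$ the clusters are separated in the $\Sigma$-metric by $\mathrm{dist}_\Sigma^2 \gtrsim \Sigma\,(\min_{k\ne l}|a_k-a_l|)^2 \gtrsim (\min_{k\ne l}|a_k-a_l|)^2/\sigma_m^2$, which is large by the hypothesis $\min_{k\ne l}|a_k-a_l| \ge C^{-1}\sigma_m$, so the decay of $\mathcal{K}$ at infinity bounds the interaction term $\sum_{i\ne j}\E[\chi_i\chi_j'|K(U(X-X'))|^2]$ uniformly by a small quantity, and Theorem~\ref{thm:noninteraction3} gives
\[
\Bigl|\,\mathcal{J}(\Sigma;\lambda) - \sum_{i=1}^m J_i(\Sigma;\lambda)\,\Bigr| \ \lesssim\ \frac{1}{\lambda^2}\,\E[|Y|^2]\sum_{i\ne j}\E\bigl[\chi_i\chi_j'\,|K(U(X-X'))|^2\bigr],
\]
where $J_i$ is the minimum of the decoupled loss $I_i(f,U,\lambda) = \tfrac12\E[\chi_i|Y-f(UX)|^2] + \tfrac\lambda2\|f\|_{\mathcal{H}}^2$.

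The second step is to analyze each $J_i$ after rescaling. Conditional on $X_0 = i$ one has $Y = h_i(Z_i)$ for a function $h_i$, and by translation invariance of $\mathcal{H}$, $J_i(\Sigma;\lambda) = \widetilde J_i(\sigma_i^2\Sigma;\lambda)$, the kernel-learning minimum for the data $(Z_i, h_i(Z_i))$ with mass $p_i$ and parameter $\sigma_i^2\Sigma$. The key properties of $\widetilde J_i$ are: it is continuous on $\R_{\ge 0}$ (Lemma~\ref{lem:Jcontinuity2}); one has $\widetilde J_i \le \tfrac12\E[\chi_i|Y|^2]$, strictly at every positive argument, because in dimension one $U_i \ne 0$ makes $U_i Z_i$ a bijective function of $Z_i$, so $\E[h_i(Z_i)\mid U_i Z_i] = h_i(Z_i) \not\equiv 0$ (as $\E[\chi_i|Y|^2] = p_i\E[h_i^2] > 0$) and Lemma~\ref{lem:trivialupperbound} applies; and $\widetilde J_i$ tends to $\tfrac12\E[\chi_i|Y|^2]$ at both ends of $\R_{\ge 0}$ — at $+\infty$ by Corollary~\ref{Cor:quantitativenonfitting} since $Z_i$ is continuous, and at $0$ because $\mathcal{J}$ extends continuously to the zero inner product (Lemma~\ref{lem:Jcontinuity2}), where the problem degenerates to $\min_v \tfrac12 p_i\E[|h_i(Z_i) - v|^2] + \tfrac\lambda2 v^2$ (the cheapest $f \in \mathcal{H}$ with $f(0) = v$ being $v K(\cdot)$ of norm $|v|$), which is minimized at $v = 0$ thanks to $\E[h_i(Z_i)] = \E[Y\mid X_0 = i] = 0$. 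It follows that there are $\Lambda_i > 1$ and $\delta_i > 0$ with $\widetilde J_i \le \tfrac12\E[\chi_i|Y|^2] - \delta_i$ on $[\Lambda_i^{-1},\Lambda_i]$, and, for any prescribed $\eta > 0$, $\widetilde J_i \ge \tfrac12\E[\chi_i|Y|^2] - \eta$ once the argument leaves a larger compact interval.

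The third step is assembly. Fix $\Lambda \ge \max_i \Lambda_i$, large enough that the $\eta$ above may be taken $\le \tfrac{\epsilon}{4m}\E[|Y|^2]$, set $\epsilon \le \tfrac14(\min_i\delta_i)/\E[|Y|^2]$, and take the ratios $\sigma_i/\sigma_{i+1}$ so large that the intervals $[\Lambda^{-3}\sigma_i^{-2},\Lambda^3\sigma_i^{-2}]$ are pairwise disjoint and sit inside the confinement window. On $W_i := [\Lambda^{-1}\sigma_i^{-2},\Lambda\sigma_i^{-2}]$ the rescaled argument $\sigma_j^2\Sigma$ is huge for $j > i$ and tiny for $j < i$, so $J_j \ge \tfrac12\E[\chi_j|Y|^2] - \eta$ for $j \ne i$ while $J_i \le \tfrac12\E[\chi_i|Y|^2] - \delta_i$ at $\Sigma = \sigma_i^{-2}$; letting $\Sigma_i^\ast$ minimize $\mathcal{J}$ over $W_i$, the gap structure makes $\Sigma_i^\ast$ a strict local minimum, $\mathcal{N}(\Sigma_i^\ast,\Lambda^2) \subset [\Lambda^{-3}\sigma_i^{-2},\Lambda^3\sigma_i^{-2}]$ meets no other $W_j$, and $\partial\mathcal{N}(\Sigma_i^\ast,\Lambda^2)$ lies outside $W_i$, hence on a plateau where $\mathcal{J} \ge \tfrac12\E[|Y|^2] - m\eta - (\text{error}) \ge \mathcal{J}(\Sigma_i^\ast) + \epsilon\E[|Y|^2]$; this yields the $(\epsilon,\Lambda^2)$-scale detector in part~(2). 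For part~(1), if $\Sigma$ lies in the confinement window but outside every $W_i$, then every $\sigma_j^2\Sigma$ has left its compact interval, so $\mathcal{J}(\Sigma;\lambda) \ge \tfrac12\E[|Y|^2] - m\eta - (\text{error}) > \tfrac12\E[|Y|^2] - \epsilon\E[|Y|^2]$, and the remark after Definition~\ref{Def:scaledetector} rules out $\Sigma$ being an $(\epsilon,\Lambda)$-scale detector.

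The main obstacle will be uniformity in the scale ratios: both the thresholds $\Lambda_i, \delta_i$ and the functions $h_i = Y\mid X_0 = i$ depend on the ratios through the cross terms $\sum_{l\ne i}g_l\bigl(\tfrac{a_i-a_l}{\sigma_l} + \tfrac{\sigma_i}{\sigma_l}Z_i\bigr)$, so one must show the dip size $\delta_i$ stays bounded below and the rate at which $\widetilde J_i$ returns to its trivial maximum stays uniform as $\sigma_i/\sigma_{i+1} \to \infty$ — for the examples, where the $g_l$ decay at infinity, these cross terms converge and the difficulty evaporates, but in general this needs a compactness argument on the family of reduced one-cluster problems. A secondary subtlety is that the interaction bound must be uniform across the whole confinement window $[c\,\sigma_m^{-2}, C\,\sigma_1^{-2}]$, in particular at its lower end $\Sigma \sim \sigma_m^{-2}$ where the clusters sit only an $O(1)$ factor apart in the $\Sigma$-metric; this is precisely the role of the hypothesis $\min_{k\ne l}|a_k-a_l| \ge C^{-1}\sigma_m$ with $C$ small, i.e. the clusters are many kernel-widths apart at scale $\sigma_m^{-2}$.
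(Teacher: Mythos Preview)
Your approach is correct and follows the paper's in outline: both decouple via Theorem~\ref{thm:noninteraction3} and show that near $\Sigma\sim\sigma_i^{-2}$ only the $i$-th decoupled problem $J_i$ dips below its trivial maximum. The organizational difference is that you decouple once on the whole window and then track each rescaled one-cluster functional $\widetilde J_j(\sigma_j^2\Sigma)$, handling the small-scale clusters $j<i$ via continuity of $\widetilde J_j$ at argument~$0$ (this is where $\E[Y\mid X_0=j]=0$ enters); the paper instead works range-by-range on $[(\sigma_{i+1}\sigma_i)^{-1},(\sigma_i\sigma_{i-1})^{-1}]$, first invoking the dimension-reduction machinery of Section~\ref{sect:dimreductionmechanism} (Lemma~\ref{lem:K4terms} and Section~\ref{sect:simplifydistribution}) to collapse each cluster $j<i$ to an atom at $a_j$ carrying $Y=0$, and only then decouples. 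Both routes rest on the same modulus-of-continuity estimate for $\mathcal K$, so the paper's packaging buys mainly bookkeeping. On your ``main obstacle'': the rates at which $\widetilde J_j$ returns to its trivial maximum at $0$ and at $+\infty$ are governed by the right-hand sides of Proposition~\ref{lem:continuityJ} and Corollary~\ref{Cor:quantitativenonfitting}, which involve only the law of $Z_j$ together with the scalar $\E[\chi_j|Y|^2]$; since the scale-detector gap is itself measured in units of $\E[|Y|^2]$, that uniformity is free. Only the relative dip depth $\delta_i/\E[|Y|^2]$ genuinely depends on the cross terms in $h_i$, and the paper's Step~3 treats this point just as implicitly as you do.
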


		\begin{proof}
			We shall focus on the cases $ \sigma_{i+1}^{-1}\sigma_i^{-1}  \leq \Sigma \leq \sigma_i^{-1}\sigma_{i-1}^{-1}$ for some $2\leq i\leq m-1$; the other cases $\Sigma\leq \sigma_m^{-1}\sigma_{m-1}^{-1}$ and $\Sigma\geq \sigma_1^{-1} \sigma_2^{-1} $ can be treated similarly.  The main difficulty  is that we are dealing with a \emph{multi-parameter family} of random variables $(X,Y)$ depending on $\sigma_1,\ldots \sigma_m$.  Our strategy is to simplify the problem by a number of reductions, to arrive at a problem with no extra scale parameters.

			\begin{itemize}
				\item Step 1: Replacing the small scale clusters $X_0<i$ by atoms.
			\end{itemize}
			
			First, conditional on $X_j=j$ for $j\leq i-1$, we replace $\chi_j Y$ with its conditional average value (which by assumption is zero). The deviation between the minimizer of the original loss function $I(f,U,\lambda)$ versus the new loss function, is the quantity $f_U^1$ in Lemma \ref{lem:K4terms}, which is bounded by
			\[	\begin{split}
				& \lambda\norm{f_U^1}_{\mathcal{H} }^2 + \E[ |f_U^1(UX)|^2]  \leq \frac{1}{\lambda} \E[|Y^1|^2] \times
				\\
				&
				\E[|	\mathcal{K}(|X-X'|_\Sigma^2)-  \mathcal{K}(|\tilde{X}-X'|_\Sigma^2)
				- \mathcal{K}(|X-\tilde{X'}|_\Sigma^2)
				+ \mathcal{K}(|\tilde{X}-\tilde{X'}|_\Sigma^2)|^2   ]^{1/2},
			\end{split}
			\]
			where $\tilde{X}=X$ conditional on $X_0\in \{ i,\ldots m\}$, and $\tilde{X}= \E[X|X_0=j]= a_j$ conditonal on $X_0=j$ for $j<i$.  We claim that as $\sigma_{i-1} U^{-1}\to 0$, the quantity 
			\[
			\E[ | \mathcal{K}(|X-X'|_\Sigma^2)-  \mathcal{K}(|\tilde{X}-X'|_\Sigma^2)
			- \mathcal{K}(|X-\tilde{X'}|_\Sigma^2)
			+ \mathcal{K}(|\tilde{X}-\tilde{X'}|_\Sigma^2)|^2   ]
			\]
			converges to zero.

			To see the claim, recall that  $\mathcal{K}(r)\to 0$ as $r\to +\infty$ by Riemann-Lebesgue, so we only need to consider the case where the distances $|X-X'|_\Sigma$ etc are all bounded.  Now conditional on
			$X_0=j$ for some $j<i$, we have
			\[
			|X-\tilde{X}|_\Sigma = \sigma_j U^{-1}  |Z_j- \E[Z_j]|\leq  \sigma_{i-1} U^{-1}  |Z_j- \E[Z_j]| ,
			\]
			so $|X-X'|_\Sigma^2- |\tilde{X}- X'|_\Sigma^2 $ is small when $\sigma_{i-1}U^{-1}\ll 1$. The claim then follows from the uniform modulus of continuity on the kernel function.

			The upshot is that up to $o(1)$ error in the $\sigma_k/\sigma_{k+1}\to 0$ limit, we can compute $\mathcal{J}$ by the simpler problem where $\chi_j Y$ is replaced by its conditional expectation value zero, for all $j<i$.  Likewise, by the arguments of Section \ref{sect:simplifydistribution}, for the purpose of finding $\mathcal{J}(\Sigma;\lambda)$ and the minimizing function $f_U$ of $I(f,U,\lambda)$ up to $o(1)$ error, we may replace the conditional distribution of $\chi_j X$ by the atomic measure at the average value of $\chi_j X$, which is $a_j$.

			\begin{itemize}
				\item Step 2: Decouple the bigger scale clusters.
			\end{itemize}

			In Section \ref{sect:noninteraction} we analyzed the deviation between the minimizing function $f_U$ versus the sum $\tilde{f}_U=\sum f_k$ of the minimizing functions $f_k$ of the decoupled problems associated to individual clusters.

			By Cor. \ref{cor:noninteraction2}, the deviation error is bounded by
			\begin{equation}
				\lambda \norm{f_U- \tilde{f}_U}_{\mathcal{H}}^2+ \E[ |(f_U-\tilde{f}_U)(UX)|^2 ] \leq  \frac{1}{\lambda^2}  \E[ |Y|^2] (\sum_{1\leq l\neq k\leq m} \E[  \chi_l \chi_k'|\mathcal{K}( |X-X'|_\Sigma^2)|^2  ]).
			\end{equation}
			By Theorem \ref{thm:noninteraction3},
			\begin{equation}
				\begin{cases}
					\mathcal{J}(\Sigma;\lambda)- \sum_1^m J_j \leq \frac{1}{\lambda^2}  \E[ |Y|^2] (\sum_{1\leq l\neq k\leq m} \E[  \chi_l \chi_k'|\mathcal{K}( |X-X'|_\Sigma^2)|^2  ]).
					\\
					\mathcal{J}(\Sigma;\lambda)   - \sum_1^m J_i \geq  -\frac{2}{\lambda^2}  \E[ |Y|^2] (\sum_{1\leq l\neq k\leq m} \E[  \chi_l \chi_k'|\mathcal{K}( |X-X'|_\Sigma^2)|^2  ]).
				\end{cases}	
			\end{equation}
			Here $J_i$ stands for the minimum value of the decoupled problems.
			We now analyze the key quantity $\sum_{1\leq l\neq k\leq m} \E[  \chi_l'\chi_k |K(UX, UX')|^2   ]$.

			Conditional on $X_0=l$ and $X_0'=k$, then $X$ has average $a_l$ with variance $\sigma_l^2 \E[|Z_l|^2]$, while $X'$ has average $a_k$ with variance $\sigma_k^2 \E[|Z_k|^2]$. By assumption
			\[
			|a_l-a_k|\gtrsim \sigma_m\gg \sigma_{m-1}\gg \ldots \gg  \sigma_1, 
			\]
			so modulo negligible probability,  the separation distance $|X-X'|_\Sigma^2$ is bounded below by 
			\[
			C^{-1}\sigma_m^2 \Sigma \geq  C^{-1}\sigma_m^2 (\sigma_i\sigma_{i+1})^{-1} \gg 1.
			\]
			Since the kernel function $\mathcal{K}(r)$ tends to zero as $r\to \infty$, this shows that in the $\sigma_k/\sigma_{k+1}\to 0$ limit,
			\[
			\sum_{1\leq l\neq k\leq m} \E[  \chi_l \chi_k'|\mathcal{K}( |X-X'|_\Sigma^2)|^2  ] \to 0. 
			\]
			The conclusion is that up to $o(1)$ error, we may replace the minimizing function by the sum of the minimizing functions for the decoupled problems.

			Moreover, for each cases $X_0=i+1,\ldots X_0=m$, the norms of the  minimizing functions for the decoupled problems can be bounded by Cor. \ref{Cor:quantitativenonfitting},
			\[
			\begin{cases}
				\lambda\norm{f_k}_{\mathcal{H} }^2 + \E[ |f_k(UX)|^2]  \leq  \frac{1}{\lambda}\E[ \chi_k \chi_k' Y\bar{Y}' K( U(X-X'))  ],
				\\
				\mathcal{J}(\Sigma;\lambda)
				\geq  \frac{1}{2} \E[\chi_k |Y|^2]-\lambda^{-1} \E[ \chi_k \chi_k' Y\bar{Y}' K( U(X-X'))  ].
			\end{cases}
			\]
			By Cauchy-Schwarz, and the fact that conditional on $X_0=k$ then $X= a_k+\sigma_k Z_k$, 
			\[
			\begin{split}
				\E[ \chi_k \chi_k' Y\bar{Y}' K( U(X-X'))  ]
				\leq & \E[\chi_k |Y|^2] \E[  \chi_k \chi_k'\mathcal{K}(|X-X'|_\Sigma^2)   ]^{1/2}
				\\
				= & \E[\chi_k |Y|^2] \E[  \chi_k \chi_k'\mathcal{K}( \sigma_k^2 \Sigma|Z_k-Z_k'|^2)   ]^{1/2}
			\end{split}
			\]
			Since for $k>i$ we have  $\sigma_k^2 \Sigma \geq \sigma_k^2 (\sigma_i\sigma_{i+1})^{-1} \to +\infty$ in the limit, and since $Z_k$ is a continuous variable, 
			\[
			\E[  \chi_k \chi_k'\mathcal{K}( \sigma_k^2 \Sigma|Z_k-Z_k'|^2)   ]\to 0.
			\]
			The upshot is that the contributions from the decoupled problems for $X_0=i+1,\ldots X_0=m$ (the `bigger scale clusters') to the sum has norm $o(1)$ in the limit.

			On the other hand, conditional on $X_0=1,2,\ldots i-1$, since we have replaced $Y$ by zero, the corresponding decoupled problem becomes trivial.

			\begin{itemize}
				\item  Step 3: Analyzing the simpler kernel ridge regression problem.
			\end{itemize}

			We have reduced to the decoupled minimization problem
			\[
			\frac{1}{2} \E[      \chi_i |Y-f(UX)|^2        ] + \frac{\lambda}{2} \norm{f}_\mathcal{H}^2 = \frac{1}{2} \E[      \chi_i |Y-f(U(a_i+ \sigma_i Z_i))|^2        ] + \frac{\lambda}{2} \norm{f}_\mathcal{H}^2.
			\]
			By the above discussion, the deviation between the minimizer $f_U$ of the original kernel ridge regression problem, versus the minimizer $f_i$ of this decoupled problem, converges to zero in norm.

			By the translation invariance of the RKHS, the minimum value of the functional does not depend on $a_i$. Setting aside $a_i$, and viewing $U\sigma_i$ as the scaling parameter, we have thereby removed the dependence of the distribution $(X,Y)$ on the extra parameters $\sigma_1,\ldots \sigma_m$. We recognize the two limits:
			\begin{enumerate}
				\item  As $U\sigma_i\to +\infty$, namely $\Sigma \sigma_i^2\to +\infty$, the minimizing function $f_i$ satisfies
				\[
				\lambda\norm{f_i}_\mathcal{H}^2+ \E[    \chi_i |f_i(U(a_i+ \sigma_i Z_i))|^2   ]         \to 0.
				\]
				This follows from Cor. \ref{Cor:quantitativenonfitting} and the assumption that $Z_i$ is a continuous random variable.

				\item As $U\sigma_i\to 0$, namely $\Sigma \sigma_i^2\to 0$, then again the minimizing function satisfies
				\[
				\lambda\norm{f_i}_\mathcal{H}^2+ \E[    \chi_i |f_i(U(a_i+ \sigma_i Z_i))|^2   ]         \to 0.
				\]
				To see this, by Cor. \ref{cor:Jminimizercontinuity} and Lemma \ref{lem:Jminimizercontinuity1}, this limit of the minimizing function is given by the minimizing function for $U=0$. But when $U=0$ the minimization depends on $Y$ only though its conditional expectation value, which is zero by assumption.
			\end{enumerate}

			Moreover, for $U=\sigma_i^{-1}$ (\ie $\Sigma=\sigma_i^{-2})$, since $Y$ is a nonzero function of $X$ conditional on $X_0=i$, the minimizer is also nonzero. Thus there exists some $\epsilon>0$, such that the min value of this decoupled functional has a quantitative gap from its trivial upper bound:
			\[
			J_i= \frac{1}{2} \E[      \chi_i |Y-f_i(UX)|^2        ] + \frac{\lambda}{2} \norm{f_i}_\mathcal{H}^2 <\frac{1}{2} \E[      \chi_i |Y|^2        ] -2\epsilon \E[|Y|^2].
			\]

			\begin{itemize}
				\item  Step 4: Quantitative version.
			\end{itemize}

			By Steps 1-2, the norm of the deviation between $f_i$ and the minimizer $f_U$ of the original kernel ridge regression, converges to zero in the limit. Thus when $\sigma_k/\sigma_{k+1}$ is small enough, then
			\[
			\mathcal{J}(\Sigma;\lambda)= \sum_1^m J_j +o(1)= J_i+ \sum_{j\neq i} \frac{1}{2} \E[\chi_j |Y|^2] +o(1).
			\]
			Step 3 then leads to the following consequences for $ \sigma_{i+1}^{-1}\sigma_i^{-1}  \leq \Sigma \leq \sigma_i^{-1}\sigma_{i-1}^{-1}$,
			\begin{enumerate}
				\item There is a strict gap 
				$
				\mathcal{J}(\Sigma=\sigma_i^{-2} ;\lambda)\leq \frac{1}{2}\E[|Y|^2] -2\epsilon \E[|Y|^2].
				$
				\item 
				There is some $\Lambda$ large enough depending on $\epsilon$ but not on $\sigma_i$, such that whenever $\Sigma\leq \Lambda^{-1} \sigma_i^{-2}$ or $\Sigma\geq \Lambda \sigma_i^{-2}$, then
				$
				\mathcal{J}(\Sigma;\lambda) \geq \frac{1}{2}\E[|Y|^2] -\epsilon \E[|Y|^2].
				$

			\end{enumerate}

			By the second point, any $(\epsilon, \Lambda)$-scale detector within the range $ \sigma_{i+1}^{-1}\sigma_i^{-1}  \leq \Sigma \leq \sigma_i^{-1}\sigma_{i-1}^{-1}$ must be contained in the smaller range $\Lambda^{-1} \sigma_i^{-2} < \Sigma < \Lambda \sigma_i^{-2} $. Using the first point, the minimizing vacuum of $\mathcal{J}$ within the range $\Lambda^{-1} \sigma_i^{-2} \leq \Sigma \leq \Lambda \sigma_i^{-2} $ must be achieved in the interior, and it is easy to see from Definition \ref{Def:scaledetector} that this minimizing vacuum $\Sigma$ is an $(\epsilon, \Lambda^2)$-scale detector.

			The other two situations $0\leq \Sigma\leq \sigma_m^{-1}\sigma_{m-1}^{-1}$ and $\Sigma\geq \sigma_1^{-1} \sigma_2^{-1} $ can be treated similarly, and the Theorem is proved.
		\end{proof}

		\section*{Appendix: Some related literature on kernel learning} 
		\label{sec:parameterization-of-kernels}

		The parameterization of the RKHSs 
		plays a crucial role in kernel learning, 
		as it allows the kernel to be tailored to the specific structure of the data. Recent developments 
		in kernel learning have introduced two primary approaches to parameterizing kernels, 
		each addressing different aspects of data modeling and feature extraction.

		\subsubsection{Multiple kernel learning}
		
		The first approach is called Multiple Kernel Learning (MKL) (e.g., \cite{BachLaGeJo04, GonenAl11}), where the kernel is expressed as a linear combination of predefined kernels with learned coefficients $\eta_i \ge 0$:
		\begin{equation*}
			K_\eta(x, x') = \sum_{i} \eta_i K_i(x, x').
		\end{equation*} 
		MKL optimizes these weights to balance different kernels that capture various 
		levels of smoothness or features within the data. 
		Bach et al.~\cite{BachLaGeJo04} demonstrated that the MKL objective 
		\begin{equation*}
			\min_{f, \eta} \frac{1}{2}\mathbb{E}[|Y - f(X)|^2] + \frac{\lambda}{2} \|f\|_{\mathcal{H}_\eta}^2~~~\text{subject to}~~\eta_i \ge 0~\text{$\forall i$},~\sum_i \eta_i = 1.
		\end{equation*}
		is jointly convex in both parameters $\eta$ and $f$. This convexity ensures that the global 
		minimizers can be computed using standard convex optimization 
		techniques~\cite{BachLaGeJo04}, with algorithms like
		SimpleMKL~\cite{RakotomamonjyBaCaGr08} and SpicyMKL~\cite{SuzukiTo11}, while 
		also facilitating statistical analysis through 
		characterizations using first-order conditions~\cite{KoltchinskiiYu10}. 
		To enhance interpretability, $\ell_p$ regularization is often applied to $\eta$ to promote
		sparsity in the learned kernel combinations~\cite{KloftBrSoZi11}. 
		This framework has been successful in fields like bioinformatics~\cite{ZienOn07, GonenAl11}, 
		where combining multiple kernels enables the integration of heterogeneous data sources, such as genomic and proteomic information, for tasks like cancer classification~\cite{Caoetal24}.
		
		Despite its successes in certain applications, MKL faces critical challenges with generalization 
		and interpretability, especially
		when dealing with high-dimensional datasets~\cite{GonenAl11}.

		\subsubsection{Composition with linear transformation}

		The second approach, which is an equivalent reformulation of the setting of this paper, parameterizes the kernel by applying a 
		linear transformation to the input space:
		\begin{equation*}
			K_U(x, x') =  K(Ux, Ux')
		\end{equation*}
		where $U$ is the learned linear transformation, and $K$ is a fixed kernel. 
		This method allows the kernel to capture key low-dimensional features by 
		focusing on the most important degrees of freedoms in the data. When $U$ 
		is low-rank, it efficiently reduces the dimensionality of the input space $X$, making it 
		particularly advantageous for handling high-dimensional datasets. This approach has gained 
		traction in feature selection tasks. Notably, works like Fukumizu, Bach, and 
		Jordan~\cite{FukumizuBaJo04, FukumizuBaJo09}
		proposed using low-rank constraints on $U$ to facilitate feature extractions and found applications 
		in high-dimensional bioinformatics datasets~\cite{RodriguesHugginsLiquet22}. 
		Chen et al.~\cite{CLLR} recently demonstrated that, even without explicit low-rank 
		constraints, the global minimizers of $U$ of the kernel learning objective
		\begin{equation*}
			\min_{f, U} \frac{1}{2}\mathbb{E}[|Y - f(X)|^2] + \frac{\lambda}{2} \|f\|_{\mathcal{H}_U}^2.
		\end{equation*}
		are still provably low-rank, capturing the essential low-dimensional projections of $X$ that 
		$Y$ depends on.  
		This result is important because the ability to capture low-dimensional features is known to be critical 
		in improving the interpretability of the kernel learning methods and in significantly reducing
		computational and memory demands in downstream applications~\cite{FukumizuBaJo09, ChandrashekarSa14, CLLR}.
		
		Moreover, incorporating a learnable transformation 
		$U$ has been empirically demonstrated to significantly improve predictive performance over classical kernel ridge regression on modern benchmark datasets. For instance, on the SVHN dataset, a widely used benchmark for image classification consisting of house numbers from Google Street View~\cite{NetzerWaCoBiWuNg11}, learning a transformation $U$ within the kernel function yields predictive accuracy comparable to that of two-layer ReLU neural networks, while consistently outperforming classical kernel ridge regression across various prediction tasks and subsampled datasets~\cite{CLLR}. Notably, test accuracy improves by up to almost $10$ percentage points (from 80\% to 89\%)~\cite[Section 10]{CLLR}. 

		
		However, despite its potential in handling high-dimensional data, this kernel learning formulation poses significant challenges in both optimization and statistical analysis. Unlike the convex optimization in MKL, the objective function is intrinsically \emph{nonconvex} in the kernel parameter $U$, making its theoretical properties difficult to analyze. 
	
	
	\subsubsection{Recent Advances: Integrating MKL with Low-Dimensional Projections}
	Building on the foundation of kernel ridge regression and the
	kernel parameterizations discussed in Section~\ref{sec:parameterization-of-kernels}, recent 
	empirical works in kernel learning have focused on integrating the strengths of multiple kernel learning (MKL) 
	with the use of low-dimensional projections. This hybrid approach leverages both 
	the flexibility of MKL, which can adapt to different smoothness structures through 
	multiple kernel choices, and combines it with the efficiency of dimensionality reduction.

	For instance, in a very recent work, Follain and Bach~\cite{FollainBa24} considered the parameterized kernel
	\begin{equation*}
		K_{\eta, u}(x,x') = \sum_i \eta_i K_i(u_i^T x, u_i^T x')
	\end{equation*}
	with learned vectors $u_i \in \R^d$ and weights $\eta_i \in \R$ and fixed kernels $k_i$. 
	This formulation allows the model to optimize both kernel selection and 
	low-dimensional projections, combining the strengths of MKL and feature extraction. 
	Empirically, this hybrid kernel learning approach has shown competitive performance, 
	matching the accuracy of two-layer neural networks on a varierty of datasets~\cite{FollainBa24}.

	\begin{Acknowledgement}
		Yang Li thanks C. Letrouit, B. Geshkovski and E. Cornacchia for discussions related to machine learning. Feng Ruan thanks K. Liu for discussions related to kernel methods and their relevance to contemporary machine learning.
	\end{Acknowledgement}

\end{document}